\definecolor{DarkGreen}{rgb}{0.1,0.5,0.1} 
\definecolor{DarkRed}{rgb}{0.5,0.1,0.1}
\definecolor{DarkBlue}{rgb}{0.1,0.1,0.5}
\definecolor{DarkYellow}{rgb}{.79,.79,0}
\definecolor{unitednationsblue}{rgb}{0.36, 0.57, 0.9}
\definecolor{blue_ppt}{rgb}{0,0.6,0.93}
\definecolor{darkblue_ppt}{rgb}{0.05,0.4,0.8}
\definecolor{orange_ppt}{rgb}{0.82,0.5,0}
\definecolor{yc}{RGB}{200,0,0} 
\definecolor{yxc}{RGB}{255,0,0}
\definecolor{dacong}{RGB}{10,103,68}
\definecolor{ytw}{RGB}{255,192,203}
\newcommand{\alg}{FedNAC}
\newcommand{\critic}{Critic}
\newcommand{\sampler}{Q-Sampler}
\newcommand{\staterror}{\varepsilon_{\text{stat}}}
\newcommand{\approxerror}{\varepsilon_{\text{approx}}}
\newcommand{\meanstaterror}{\bar \varepsilon_{\text{stat}}}
\newcommand{\meanapproxerror}{\bar \varepsilon_{\text{approx}}}
\newcolumntype{?}{!{\vrule width 1pt}}
\theoremstyle{plain}
\newtheorem{lm}{Lemma} 
\newtheorem{definition}{Definition}
\newtheorem{thm}{Theorem}
\newtheorem{asmp}{Assumption}
\newtheorem{crl}{Corollary}
\newtheorem{rmk}{Remark}
\def\1{\bm{1}}
\def\vzero{{\bm{0}}}
\def\vone{{\bm{1}}}
\def\vxi{{\bm{\xi}}}
\def\vpi{{\bm{\pi}}}
\def\va{{\bm{a}}}
\def\vb{{\bm{b}}}
\def\vc{{\bm{c}}}
\def\vd{{\bm{d}}}
\def\ve{{\bm{e}}}
\def\vh{{\bm{h}}}
\def\vp{{\bm{p}}}
\def\vq{{\bm{q}}}
\def\vs{{\bm{s}}}
\def\vv{{\bm{v}}}
\def\vw{{\bm{w}}}
\def\vx{{\bm{x}}}
\def\vz{{\bm{z}}}
\def\mI{{\bm{I}}}
\def\mA{{\bm{A}}}
\def\mB{{\bm{B}}}
\def\mC{{\bm{C}}}
\def\mM{{\bm{M}}}
\def\mA{{\bm{A}}}
\def\mB{{\bm{B}}}
\def\mC{{\bm{C}}}
\def\mI{{\bm{I}}}
\def\mM{{\bm{M}}}
\def\mP{{\bm{P}}}
\def\mQ{{\bm{Q}}}
\def\mT{{\bm{T}}}
\def\mW{{\bm{W}}}
\DeclareMathAlphabet{\mathsfit}{\encodingdefault}{\sfdefault}{m}{sl}
\SetMathAlphabet{\mathsfit}{bold}{\encodingdefault}{\sfdefault}{bx}{n}
\newcommand{\E}{\mathbb{E}}
\newcommand{\R}{\mathbb{R}}
\newcommand{\softmax}{\mathrm{softmax}}
\DeclareMathOperator{\diag}{diag}
\definecolor{mydarkblue}{rgb}{0,0.08,0.45}
\definecolor{mygreen}{rgb}{0.032, 0.6392, 0.2039}
\definecolor{mypurple}{HTML}{B266FF}
\def\RR{{\mathbb R}}
\def\NN{{\mathbb N}}
\def\vv{{\boldsymbol v}}
\def\S{{\mathcal S}}
\def\A{{\mathcal A}}
\def\F{{\mathrm F}}
\def\RR{{\mathbb R}}
\def\NN{{\mathbb N}}
\def\vv{{\boldsymbol v}}
\def\Bar{\overline}
\def\le{{\leqslant}}
\newcommand{\prnbig}[1]{\big({#1}\big)} 
\newcommand{\norm}[1]{\left\| #1 \right\|}
\newcommand{\normbig}[1]{\big\| #1 \big\|}
\newcommand{\ex}[2]{\mathbb{E}_{#1}\left[#2\right]}
\newcommand{\exlim}[2]{\mathop\mathbb{E}\limits_{#1}\left[#2\right]}
\newcommand{\KL}[2]{\mathsf{KL}\prnbig{{#1}\,\|\,{#2}}}
\title{Federated Natural Policy Gradient and Actor Critic Methods\\
for Multi-task Reinforcement Learning}
\author{Tong Yang\thanks{Department of Electrical and Computer Engineering, Carnegie Mellon University; email: \texttt{tongyang@andrew.cmu.edu}. } \\
CMU    \\
	\and
Shicong Cen\thanks{Department of Electrical and Computer Engineering, Carnegie Mellon University; email: \texttt{shicongc@andrew.cmu.edu}. } \\
CMU 
 	\and
	Yuting Wei\thanks{Department of Statistics and Data Science, Wharton School, University of Pennsylvania; email: \texttt{ytwei@wharton.upenn.edu}.}\\
	UPenn\\
	\and
	Yuxin Chen\thanks{Department of Statistics and Data Science, Wharton School, University of Pennsylvania; email: \texttt{yuxinc@wharton.upenn.edu}. } \\
 UPenn  \\
	\and
	Yuejie Chi\thanks{Department of Electrical and Computer Engineering, Carnegie Mellon University; email: \texttt{yuejiechi@cmu.edu}. }\\
	CMU\\
	}
\date{October 2023; revised August 2024}
\begin{document}

\maketitle

\begin{abstract}

Federated reinforcement learning (RL) enables collaborative decision making of multiple distributed agents without sharing local data trajectories. In this work, we consider a multi-task setting, in which each agent has its own private reward function corresponding to different tasks, while sharing the same transition kernel of the environment. Focusing on infinite-horizon Markov decision processes, the goal is to learn a globally optimal policy that maximizes the sum of the discounted total rewards of all the agents in a decentralized manner, where each agent only communicates with its neighbors over some prescribed graph topology.

We develop federated vanilla and entropy-regularized natural policy gradient (NPG) methods in the tabular setting under softmax parameterization, where gradient tracking is applied to estimate the global Q-function to mitigate the impact of imperfect information sharing. We establish non-asymptotic global convergence guarantees under exact policy evaluation, where the rates are nearly independent of the size of the state-action space and illuminate the impacts of network size and connectivity. To the best of our knowledge, this is the first time that near dimension-free global convergence is established for federated multi-task RL using policy optimization. 
We further go beyond the tabular setting by proposing a federated natural actor critic (NAC) method for multi-task RL with function approximation, and establish its finite-time sample complexity taking the errors of function approximation into account.


    
    
\end{abstract}

\noindent \textbf{Keywords:} federated reinforcement learning, multi-task reinforcement learning, natural policy gradient methods, entropy regularization, dimension-free global convergence

	\tableofcontents

\section{Introduction}\label{sec:intro}

Federated reinforcement learning (FRL) is an emerging paradigm that combines the advantages of federated learning (FL) and reinforcement learning (RL)~\citep{qi2021federated,zhuo2019federated}, allowing multiple agents to learn a shared policy from local experiences, without exposing their private data to a central server nor other agents. FRL is poised to enable collaborative and efficient decision making in scenarios where data is distributed, heterogeneous, and sensitive, which arise frequently in applications such as edge computing, smart cities, and healthcare~\citep{wang2023federated,wang2020optimizing,zhuo2019federated}, to name just a few. As has been observed \citep{lian2017can}, decentralized training can lead to performance improvements in FL by avoiding communication congestions at busy nodes such as the server, especially under high-latency scenarios. This motivates us to design algorithms for the fully decentralized setting, a scenario where the agents can only communicate with their local neighbors over a prescribed network topology.\footnote{Our work seamlessly handles the server-client setting as a special case, by assuming the network topology as a fully connected network.}

In this work, we study the problem of 
\textit{federated multi-task reinforcement learning}~\citep{anwar2021multi,qi2021federated,yu2020learning}, where each agent collects its own reward --- possibly unknown to other agents --- corresponding to the local task at hand, while having access to the same dynamics (i.e., transition kernel) of the environment. The collective goal is to learn a shared policy that maximizes the total rewards accumulated from all the agents; in other words, one seeks a policy that performs well in terms of overall benefits, rather than biasing towards any individual task, achieving the Pareto frontier in a multi-objective context. There is no shortage of application scenarios where federated multi-task RL becomes highly relevant. For instance, in healthcare \citep{zerka2020systematic}, different hospitals may be interested in finding an optimal treatment for all patients without disclosing private data, where the effectiveness of the treatment can vary across different hospitals due to demographical differences. As another potential application, to enhance ChatGPT's performance across different tasks or domains~\citep{m2022exploring,rahman2023chatgpt}, one might consult domain experts to chat and rate ChatGPT's outputs for solving different tasks, and train ChatGPT in a federated manner without exposing private data or feedback of each expert. 


Nonetheless, despite the promise, provably efficient algorithms for federated multi-task RL remain substantially under-explored, especially in the fully decentralized setting. The heterogeneity of local tasks leads to a higher degree of disagreements between the global value function and  local value functions of individual agents. Due to the lack of global information sharing, care needs to be taken to judiciously balance the use of neighboring information (to facilitate consensus) and local data (to facilitate learning) when updating the policy. To the best of our knowledge, very limited  algorithms are currently available to find the global optimal policy with non-asymptotic convergence guarantees  even for tabular infinite-horizon Markov decision processes.

Motivated by the connection with decentralized optimization, it is tempting to take a policy optimization perspective to tackle this challenge. Policy gradient (PG) methods, which seek to learn the policy of interest via first-order optimization methods, play an eminent role in RL due to their simplicity and scalability. In particular, natural policy gradient (NPG) methods \citep{amari1998natural,kakade2001natural}  are among the most popular variants of PG methods, underpinning default methods used in practice such as trust region policy optimization (TRPO) \citep{schulman2015trust} and proximal policy optimization (PPO)  \citep{schulman2017proximal}. On the theoretical side, it has also been established recently that the NPG method enjoys fast global convergence to the optimal policy in an almost dimension-free manner \citep{agarwal2021theory,cen2021fast}, where the iteration complexity is nearly independent of the size of the state-action space. These benefits can be translated to their sample-based counterparts such as the natural actor critic (NAC) method \citep{xu2020improving}, where the value functions are learned via temporal difference learning. Inspired by the efficacy of NPG methods, it is natural to ask:
\begin{center}
\emph{Can we develop  \textbf{federated} variants of NPG and NAC methods in the fully decentralized setting, that come with \textbf{non-asymptotic and finite-sample global convergence} guarantees for multi-task RL?}
\end{center}


\subsection{Our contributions}

Focusing on infinite-horizon Markov decision processes (MDPs),
we provide an affirmative answer to the above question, by developing federated NPG (FedNPG) methods for solving both the vanilla and entropy-regularized multi-task RL problems with finite-time global convergence guarantees. While entropy regularization is often incorporated as an effective strategy to encourage exploration during policy learning, solving the entropy-regularized RL problem is of interest in its own right, as the optimal regularized policy possesses desirable robust properties with respect to reward perturbations \citep{eysenbach2021maximum,mckelvey1995quantal}.

Due to the multiplicative update nature of NPG methods under softmax parameterization, 
it is more convenient to work with the logarithms of local policies in the decentralized setting. In each iteration of the proposed FedNPG method, the logarithms of local policies are updated by a weighted linear combination of two terms (up to normalization): a gossip mixing \citep{nedic2009distributed} of the logarithms of neighboring local policies, and a local estimate of the global Q-function tracked via the technique of dynamic average consensus \citep{zhu2010discrete}, a prevalent idea in decentralized optimization that allows for the use of large constant learning rates \citep{di2016next,nedic2017achieving,qu2017harnessing} to accelerate convergence. We further develop sample-efficient federated NAC (FedNAC) methods that allow for both stochastic updates and function approximation. Our contributions are as follows.
\begin{itemize}
    \item We propose FedNPG methods for both the vanilla and entropy-regularized multi-task RL problems, where each agent only communicates with its neighbors and performs local computation using its own reward or task information.

    \item Assuming access to exact policy evaluation, we establish that the average iterate of vanilla FedNPG converges globally at a rate of $\mathcal{O}(1/T^{2/3})$ in terms of the sub-optimality gap for the multi-task RL problem, and that the last iterate of entropy-regularized FedNPG converges globally at a linear rate to the regularized optimal policy. Our convergence theory highlights the impacts of all salient problem parameters (see Table~\ref{tb:iteration_complexity} for details), such as the size and connectivity of the communication network. In particular, the iteration complexities of FedNPG are again almost independent  of the size of the state-action space, 
	    which recover prior results on the centralized NPG methods when the network is fully connected.
     
    \item We further demonstrate the stability of the proposed FedNPG methods when policy evaluations are only available in an inexact manner. To be specific, we prove that their convergence rates remain unchanged as long as the approximation errors are sufficiently small in the $\ell_\infty$ sense.

    \item We go beyond the tabular setting by proposing \alg --- a federated actor critic method for multi-task RL with function approximation --- and establish a finite-sample sample complexity on the order of $\mathcal{O}(1/\varepsilon^{7/2})$ for each agent in terms of the expected sub-optimality gap.

\end{itemize}

To the best of our knowledge, the proposed federated NPG and NAC methods are the first policy optimization methods for multi-task RL that achieve near dimension-free global convergence guarantees in terms of iteration and sample complexities, allowing for fully decentralized communication without any need to share local reward/task information. 

\begin{table*}[!tb]
\centering
 \resizebox{0.99\textwidth}{!}{
\begin{tabular}{cccc} 
\toprule
setting & algorithms & iteration complexity & optimality criteria \\
\midrule
\multirow{2}{*}
{
\makecell[c]
{~\\unregularized}
}&
\makecell[c]{NPG \citep{agarwal2021theory}} & $\mathcal{O}\left(\frac{1}{(1-\gamma)^2\varepsilon}+\frac{\log|\A|}{\eta\varepsilon}\right)$ $\vphantom{\frac{1^{7^{7^{7^{7^7}}}}}{1^{7^{7^{7^{7^{7^7}}}}}}}$ & $V^\star-V^{\pi^{(t)}}\leq\varepsilon$    \\ 
\cline{2-4}
&
\makecell[c]{ FedNPG (ours)} & \makecell[c]
{$\mathcal{O}\left(\frac{\sigma \sqrt{ N}\log|\A|}{(1-\gamma)^{\frac{9}{2}}(1-\sigma)\varepsilon^{\frac{3}{2}}} + \frac{1}{ (1-\gamma)^2\varepsilon} \right)$ $\vphantom{\frac{1^{7^{7^{7^{7^7}}}}}{1^{7^{7^{7^{7^{7^7}}}}}}}$ }
& $\frac{1}{T}\sum_{t=0}^{T-1}\big(V^{\star}-V^{\Bar\pi^{(t)}}\big)\leq\varepsilon$  \\ 
\hline
\multirow{2}{*}
{
\makecell[c]
{~\\regularized}
}&
\makecell[c]{NPG \citep{cen2021fast}} & $\mathcal{O}\left(\frac{1}{\tau\eta}\log \left(\frac{1}{\varepsilon}\right)\right)$ $\vphantom{\frac{1^{7^{7^{7^{7^7}}}}}{1^{7^{7^{7^{7^{7^7}}}}}}}$ & $V_\tau^\star-V_\tau^{\pi^{(t)}}\leq\varepsilon$   \\ 
\cline{2-4}
&
\makecell[c]{FedNPG (ours)} & $\mathcal O\left(\max\left\{\frac{1}{\tau\eta},\frac{1}{1-\sigma}\right\}\log\left(\frac{1}{\varepsilon}\right)\right)$ $\vphantom{\frac{1^{7^{7^{7^{7^7}}}}}{1^{7^{7^{7^{7^{7^7}}}}}}}$ & $V_\tau^\star-V_\tau^{\Bar\pi^{(t)}}\leq\varepsilon$   \\ 
\bottomrule
\end{tabular}
}
\caption{Iteration complexities of NPG and FedNPG (ours) methods to reach $\varepsilon$-accuracy of the vanilla and entropy-regularized problems, where we assume exact gradient evaluation, and only keep the dominant terms w.r.t. $\varepsilon$. The policy estimates in the $t$-iteration are $\pi^{(t)}$ and $\bar{\pi}^{(t)}$ for NPG and FedNPG, respectively, where $T$ is the number of iterations. Here, $N$ is the number of agents, $\tau\leq 1$ is the regularization parameter, $\sigma\in[0,1]$ is the spectral radius of the network, $\gamma \in [0,1)$ is the discount factor, $|\A|$ is the size of the action space, and $\eta>0$ is the learning rate. 
The iteration complexities of FedNPG reduce to their centralized counterparts when $\sigma=0$.  For vanilla FedNPG, the learning rate is set as $\eta =  \eta_1=\mathcal O \left(\frac{(1-\gamma)^9(1-\sigma)^2 \log |\A|}{  TN\sigma}\right)^{1/3}$; for entropy-regularized FedNPG, the learning rate satisfies $0<\eta< \eta_0 =\mathcal O\left(\frac{(1-\gamma)^7(1-\sigma)^2\tau}{\sigma N}\right)$.  }\label{tb:iteration_complexity}

\end{table*}

\subsection{Related work}\label{sec_app:related_work}

\paragraph{Global convergence of NPG methods for tabular MDPs.} 
\cite{agarwal2021theory} first establishes a $\mathcal O(1/T)$ last-iterate convergence rate of the  NPG method under softmax parameterization with constant step size, 
assuming access to exact policy evaluation. When entropy regularization is in place, 
\cite{cen2021fast} establishes a global linear convergence to the optimal regularized policy for the entire range of admissible constant learning rates using softmax parameterization and exact policy evaluation, 
which is further shown to be stable in the presence of $\ell_\infty$ policy evaluation errors. 
The iteration complexity of NPG methods is nearly independent with the size of the state-action space, 
which is in sharp contrast to softmax policy gradient methods that may take exponential time to converge \citep{li2021softmax,mei2020global}. 
\cite{lan2021policy} proposed a more general framework through the lens of mirror descent for regularized RL with global linear convergence guarantees, 
which is further generalized in \cite{zhan2021policy,lan2023block}. Earlier analysis of regularized MDPs can be found in \cite{shani2019adaptive}. 
Besides, \cite{xiao2022convergence} proves that vanilla NPG also achieves linear convergence when geometrically increasing learning rates are used; 
see also \cite{khodadadian2021linear,bhandari2020note}. \cite{zhou2022anchor} developed an anchor-changing NPG method for multi-task RL under various optimality criteria in the centralized setting.

\paragraph{Convergence and sample complexity bounds of NAC.}  
The convergence and sample complexity of a variety of natural actor-critic methods (NACs) are extensively studied in the literature~\citep{bhatnagar2009natural,wang2019neural,khodadadian2022finite,agarwal2021theory,yuan2022linear}.  
More pertinent to our work, \citet{agarwal2021theory} introduced Q-NPG --- a sample version of the NPG method with function approximation under softmax parameterization --- and obtained a  convergence rate of $\mathcal{O}(1/\sqrt{T})$. 
\citet{yuan2022linear} weakens some of its assumptions and improves the convergence rate to $\mathcal{O}(1/T)$ and gives the $\widetilde{\mathcal{O}}(1/\varepsilon^3)$ sample complexity using a constant actor learning rate. 
The FedNAC method we propose in this paper can be seen as a decentralized version of Q-NPG, and in the server-client setting where the network is fully connected, 
our convergence rate and sample complexity match those in \citet{yuan2022linear}. 

\paragraph{Distributed and federated RL.} There have been a variety of settings being set forth for distributed and federated RL. 
\cite{mnih2016asynchronous,espeholt2018impala,assran2019gossip,khodadadian2022federated,woo2023blessing} focused on developing federated versions of RL algorithms to accelerate training, 
assuming all agents share the same transition kernel and reward function; in particular, \cite{khodadadian2022federated,woo2023blessing,woo2024federated} established the provable benefits of federated learning in terms of linear speedup. 
More pertinent to our work, \cite{zhao2023federated,anwar2021multi} considered the federated multi-task framework, allowing different agents having private reward functions. 
\cite{zhao2023federated} proposed an empirically probabilistic algorithm that can seek an optimal policy under the server-client setting, while \cite{anwar2021multi} developed new attack methods in the presence of adversarial agents. Recently \cite{lan2023improved} discussed how to avoid transmitting the Hessian matrix during communication in the server-client setting where all agents share the same reward function.
Different from the FRL framework, \cite{chen2021communication,chen2021multi,omidshafiei2017deep,kar2012qd,chen2022decentralized,zeng2021decentralized}
considered the distributed multi-agent RL setting where the agents interact with a dynamic environment through a multi-agent Markov decision process, where each agent can have their own state or action spaces. \cite{zeng2021decentralized} developed a decentralized policy gradient method where different agents have different MDPs, where a special case of their setting recovers ours. However, the convergence rate developed in  \cite{zeng2021decentralized} has rather pessimistic dependencies with the size of the state-action space, together with other parameters, without leveraging natural policy gradients and gradient tracking techniques.

\paragraph{Decentralized first-order optimization algorithms.} Early work of consensus-based first-order optimization algorithms for the fully decentralized setting include but are not limited to \cite{lobel2008convergence,nedic2009distributed,duchi2011dual}. Gradient tracking, which leverages the idea of dynamic average consensus \citep{zhu2010discrete} to track the gradient of the global objective function, is a popular method to improve the convergence speed \citep{qu2017harnessing,nedic2017achieving,di2016next,pu2021distributed,li2020communication}.

%

\paragraph{Notation.} Boldface small and capital letters denote vectors and matrices, respectively. Sets are denoted with curly capital letters, e.g., $\S,\A$. We let $(\RR^d,\norm{\cdot})$ denote the $d$-dimensional real coordinate space equipped with norm $\norm{\cdot}$. The $\ell^p$-norm of $\vv$ is denoted by $\norm{\vv}_p$, where $1\leq p\leq \infty$, and the spectral norm and the Frobenius norm of a matrix $\mM$ are denoted by $\norm{\mM}_2$ and $\norm{\mM}_F$, resp. 
We let $[N]$ denote $\{1, \dots, N\}$, use $\vone_N$ to represent the all-one vector of length $N$, and denote by $\vzero$ a vector or a matrix consisting of all 0's. We allow the application of functions such as $\log(\cdot)$ and $\exp(\cdot)$ to vectors or matrices, with the understanding that they are applied in an element-wise manner.


\section{Model and backgrounds}\label{sec:preliminaries}


  
\subsection{Markov decision processes}

\paragraph{Markov decision processes.} We consider an infinite-horizon discounted Markov decision process (MDP) denoted by $ \mathcal{M} = (\mathcal{S}, \mathcal{A}, P, r, \gamma)$, where $\mathcal{S}$ and
$\mathcal{A}$ denote the state space and the action space, respectively, $\gamma\in [0, 1) $
indicates the discount factor, $P: \mathcal{S}\times \mathcal{A} \rightarrow \Delta(\mathcal{S})$ is the transition kernel, and $r: \mathcal{S}\times \mathcal{A} \rightarrow [0, 1]$ stands for the reward function. To be more specific, for each state-action pair $(s, a) \in \mathcal{S}\times \mathcal{A}$ and any state $s' \in \mathcal{S}$,
we denote by $P(s'|s, a)$ the transition probability from state $s$ to state $s'$ when action $a$ is taken, and $r(s, a) $ the instantaneous reward received in state $s$ when action $a$ is taken. Furthermore, a policy $\pi : \mathcal{S} \rightarrow \Delta(\mathcal{A})$ specifies an action selection rule, where $\pi(a|s)$ specifies the probability of taking action $a$ in state $s$
for each $(s, a) \in \mathcal{S}\times \mathcal{A}$.

For any given policy $\pi$, we denote by $V^\pi : \mathcal{S} \mapsto \RR$ the corresponding
value function, which is the expected discounted cumulative reward with an initial state $s_0 = s$, given by 
\begin{equation}\label{eq:V}
    \forall s\in \mathcal{S}: \quad V^\pi (s) \coloneqq \E\left[\sum_{t=0}^\infty \gamma^t r(s_t,a_t)| s_0=s\right],
\end{equation}
where the randomness is over the trajectory generated following the policy $a_t \sim \pi(\cdot|s_t)$ and the MDP dynamic $s_{t+1} \sim P(\cdot|s_t, a_t)$.
We also overload the notation $V^\pi (\rho)$ to indicate the expected value function of policy $\pi$ when the
initial state follows a distribution $\rho$ over $\mathcal{S}$, namely,
   $ V^\pi (\rho) \coloneqq \mathbb{E}_{s\sim \rho}\left[V^\pi(s)\right]$.
Similarly, the Q-function $Q^\pi : \mathcal{S} \times \mathcal{A} \mapsto \mathbb{R}$ of policy $\pi$ is defined by
\begin{equation}\label{eq:Q}
    \forall (s,a)\in \mathcal{S}\times \mathcal{A}:\quad  Q^\pi (s,a)\coloneqq  \E \left[\sum_{t=0}^\infty \gamma^t r(s_t,a_t)| s_0=s, a_0=a\right],
\end{equation}
which measures the expected discounted cumulative
reward with an initial state $s_0 = s$ and an initial action $a_0 = a$, with expectation taken over the randomness of the trajectory. The optimal policy $\pi^{\star}$ refers to the policy that maximizes the value function $V^{\pi}(s)$ for all states $s\in\S$, which is guaranteed to exist \citep{puterman2014markov}. The corresponding optimal value function and Q-function are denoted as $V^{\star}$ and $Q^{\star}$, respectively.
 
    
\subsection{Entropy-regularized RL}

Entropy regularization \citep{williams1991function,ahmed2019understanding} is a popular technique in practice that encourages stochasticity of the policy to promote exploration, as well as robustness against reward uncertainties. Mathematically, this can be viewed as adjusting the instantaneous reward based the current policy in use as
\begin{equation}\label{eq:reward_reg}
    \forall (s,a)\in \mathcal{S}\times \mathcal{A}:\quad r_\tau(s,a)\coloneqq r(s,a)-\tau \log{\pi (a|s)}\,,
\end{equation}
where $\tau\geq 0$ denotes the regularization parameter. Typically, $\tau$ should not be too large to outweigh the actual rewards; for ease of presentation, we assume $\tau\leq \min\left\{1, \, \frac{1}{\log|  \mathcal{A}|} \right\}$ \citep{cen2022faster}. Equivalently, this amounts to the entropy-regularized (also known as ``soft'') value function, defined as
\begin{equation}\label{eq:V_reg_s} 
     \forall s\in \mathcal{S}: \quad    V_\tau^\pi (s) \coloneqq V^\pi (s) +\tau \mathcal{H}(s,\pi). 
\end{equation}
Here, we define 
\begin{align}
 \mathcal{H}(s,\pi)& \coloneqq \E\left[\sum_{t=0}^\infty -\gamma^t \log{\pi (a_t|s_t)}\big | s_0=s\right] =\frac{1}{1-\gamma}\mathbb{E}_{s'\sim d_s^\pi}\left[ - \sum_{a\in\mathcal{A}}\pi(a|s')\log \pi(a|s')\right]   , \label{eq:entropy_reg_s} 
\end{align}
where  $d^\pi_{s_0}$ is the discounted state visitation distribution of policy $\pi$ given an initial state $s_0 \in \mathcal{S}$, denoted by
\begin{equation}\label{eq:d_s0} 
    \forall s\in\mathcal{S}: \quad d^\pi_{s_0}(s)\coloneqq (1-\gamma)\sum_{t=0}^\infty \gamma^t\mathbb{P}(s_t=s|s_0)\,,
\end{equation}
with the trajectory  generated by following policy $\pi$ in the MDP $\mathcal{M}$ starting from state $s_0$.  
Analogously, the regularized (or soft) Q-function $Q_\tau^\pi$ of policy $\pi$ is related to the soft value function $V_\tau^\pi(s)$ as
\begin{subequations}
\begin{align}
    \forall (s,a)\in \mathcal{S}\times \mathcal{A}:\quad Q_\tau^\pi(s,a)&=r(s,a)+\gamma\mathbb{E}_{s'\in P(\cdot|s,a)}\left[V_\tau^\pi(s')\right]\,,\label{eq:Q_V}\\ 
    \forall s\in \mathcal{S}:\quad V_\tau^\pi (s)&=\mathbb{E}_{a\sim \pi(\cdot|s)}\left[-\tau\pi(a|s)+Q_\tau^\pi(s,a)\right]\,.\label{eq:V_Q} 
\end{align}
\end{subequations}
The optimal regularized policy, the optimal regularized value function, and the Q-function 
are denoted by $\pi^\star_\tau$, $V^\star_\tau$, and $Q^\star_\tau$, respectively.

For a distribution $\rho\in\Delta(\S)$, we define $d_\rho^\pi(s)=\E_{s_0\sim\rho}[d_{s_0}^\pi(s)]$. We also define the \textit{state-action visitation distribution} $\bar{d}^\pi_\rho$ as
\begin{equation}\label{eq:sa_vis}
    \forall (s,a)\in\S\times \A:\quad  \bar{d}^\pi_{\rho}(s,a)\coloneqq d^\pi_{\rho}(s)\pi(a|s)=(1-\gamma)\E_{s_0\sim\rho}\left[\sum_{t=0}^\infty \gamma^t\mathbb{P}(s_t=s, a_t=a|s_0)\right].
\end{equation}
Furthermore, we define the {state-action visitation distribution} induced by an initial state-action distribution $\nu\in\Delta(\S\times\A)$, i.e.,
\begin{equation}\label{eq:sa_vis_extend}
    \forall (s,a)\in\S\times \A:\quad  \tilde{d}^\pi_{\nu}(s,a)\coloneqq (1-\gamma)\E_{(s_0,a_0)\sim\nu}\left[\sum_{t=0}^\infty \gamma^t\mathbb{P}(s_t=s, a_t=a|s_0,a_0)\right].
\end{equation}

The following simple fact holds for all $(s,a)\in\S\times\A$:
\begin{equation}\label{eq:d_facts}
    d_\rho^\pi(s)\geq (1-\gamma)\rho(s),\quad
    \bar{d}^\pi_{\rho}(s,a)\geq (1-\gamma)\rho(s)\pi(a|s),\quad \tilde{d}^\pi_{\nu}(s,a)\geq (1-\gamma)\nu(s,a).
\end{equation}

\subsection{Natural policy gradient methods}



Natural policy gradient (NPG) methods lie at the heart of policy optimization, serving as the backbone of popular heuristics such as TRPO \citep{schulman2015trust} and PPO \citep{schulman2017proximal}. Instead of directly optimizing the policy over the probability simplex, one often adopts the softmax parameterization, which parameterizes the policy as 
\begin{equation}\label{eq:softmax_policy} 
    \pi_\theta \coloneqq \softmax (\theta) \quad \text{or} \quad \forall (s,a)\in \mathcal{S}\times \mathcal{A}: \quad \pi_\theta(a|s)\coloneqq \frac{\exp{\theta(s,a)}}{\sum_{a'\in \mathcal{A}}\exp{\theta(s,a')}}
\end{equation}
for any $\theta$: $\mathcal{S}\times \mathcal{A} \rightarrow \mathbb{R}$. 

%


\paragraph{Vanilla NPG method.} In the tabular setting, the update rule of vanilla NPG at the $t$-th iteration can be concisely represented as 
\begin{equation}\label{eq:update_npg_vanilla}
                \forall (s,a)\in \mathcal{S}\times \mathcal{A}:\quad \pi^{(t+1)}(a|s) \propto \pi^{(t)}(a|s) \exp{\left(\frac{\eta Q^{(t)}(s,a)}{1-\gamma}\right)}\,,
\end{equation}
where $\eta>0$ denotes the learning rate, and $Q^{(t)} = Q^{\pi^{(t)}}$ is the Q-function under policy $\pi^{(t)}$. \citet{agarwal2021theory} shows that: in order to find an $\varepsilon$-optimal policy, NPG takes at most
$\mathcal{O}\left(\frac{1}{(1-\gamma)^2\varepsilon} \right)$
iterations, assuming exact policy evaluation.

\paragraph{Entropy-regularized NPG method.} Turning to the regularized problem, we note that the update rule of entropy-regularized NPG becomes
\begin{equation}\label{eq:update_npg}
                \forall (s,a)\in \mathcal{S}\times \mathcal{A}:\quad \pi^{(t+1)}(a|s) \propto (\pi^{(t)}(a|s))^{1-\frac{\eta\tau}{1-\gamma}}\exp{\left(\frac{\eta Q_\tau^{(t)}(s,a)}{1-\gamma}\right)}\,,
\end{equation}
where $\eta\in (0,\frac{1-\gamma}{\tau}]$ is the learning rate, and $Q_{\tau}^{(t)} = Q_{\tau}^{\pi^{(t)}}$ is the soft Q-function of policy $\pi^{(t)}$. 
\citet{cen2022fast} proves that entropy-regularized NPG enjoys fast global linear convergence to the optimal regularized policy: to find an $\varepsilon$-optimal regularized policy, entropy-regularized NPG takes no more than $\mathcal{O}\left(\frac{1}{\eta\tau}\log \left(\frac{1}{\varepsilon}\right)\right)$ iterations. 

\section{Federated NPG methods for multi-task RL}\label{sec:formulation}

\subsection{Federated multi-task RL}

In this paper, we consider the federated multi-task RL setting, where a set of agents learn collaboratively a single policy that maximizes its average performance over all the tasks using only local computation and communication.

\paragraph{Multi-task RL.} Each agent $n\in [N]$ has its own private reward function $r_n(s,a)$ --- corresponding to different tasks --- while sharing the same transition kernel of the environment. The goal is to collectively learn a single policy $\pi$ that maximizes the global value function given by
\begin{equation} \label{eq:global_V}
  V^{\pi}(s) = \frac{1}{N}\sum_{n=1}^N V_n^{\pi}(s) ,
\end{equation}
where $V_n^\pi$ is the value function of agent $n\in[N]$, defined by
\begin{equation}\label{eq:V_n} 
    \forall s\in \mathcal{S}: \quad V_n^\pi (s)\coloneqq  \E \left[\sum_{t=0}^\infty \gamma^t r_n(s_t,a_t)| s_0=s\right]\,.
\end{equation}
Clearly, the global value function \eqref{eq:global_V} corresponds to using the average reward of all agents
\begin{equation} \label{eq:global_r}
r(s, a)=\frac{1}{N}\sum_{n=1}^N r_n(s,a).
\end{equation}
The global Q-function $Q^{\pi}(s,a)$ and the agent Q-functions $Q_n^{\pi}(s,a)$ can be defined in a similar manner obeying $Q^\pi(s,a)=\frac{1}{N}\sum_{n=1}^N Q_n^\pi(s,a)$.
 
In parallel, we are interested in the entropy-regularized setting, where each agent $n\in[N]$ is equipped with a regularized reward function given by
\begin{equation}\label{eq:reward_reg_n}
    r_{\tau,n}(s,a) \coloneqq r_n(s,a)-\tau \log{\pi (a|s)}\,,
\end{equation}
and we define similarly the regularized value function and the global regularized value function as
\begin{equation}\label{eq:V_reg_n}
     \forall s\in \mathcal{S}: \quad   V_{\tau,n}^\pi (s) \coloneqq \mathbb{E}\left[\sum_{t=0}^\infty \gamma^t r_{\tau,n}(s_t,a_t)| s_0=s\right]\,, \quad\mbox{and}\quad   V_\tau^\pi(s)=\frac{1}{N}\sum_{n=1}^N V_{\tau,n}^\pi(s).
\end{equation}
The soft Q-function of agent $n$ is given by
\begin{equation}\label{eq:Q_reg_n}
    Q_{\tau,n}^\pi(s,a)=r_{n}(s,a)+\gamma\mathbb{E}_{s'\in P(\cdot|s,a)}\left[V_{\tau,n}^\pi(s')\right]\,,
\end{equation}
and  the global soft Q-function is given by $Q_\tau^\pi(s,a)=\frac{1}{N}\sum_{n=1}^N Q_{\tau,n}^\pi(s,a)$.

\paragraph{Federated policy optimization in the fully decentralized setting.} We consider a federated setting with fully decentralized communication, that is, all the agents are synchronized to perform information exchange over some prescribed network topology denoted by  an undirected weighted graph $\mathcal{G}([N],E)$. Here, $E$ stands for the edge set of the graph with $N$ nodes --- each corresponding to an agent --- and two agents can communicate with each other if and only if there is an edge connecting them. The information sharing over the graph is best described by a mixing matrix \citep{nedic2009distributed}, denoted by $\mW=[w_{ij}]\in[0,1]^{N\times N}$, where $w_{ij}$ is a positive number if $(i,j)\in E$ and 0 otherwise. We also make the following standard assumptions on the mixing matrix.
 \begin{asmp}[double stochasticity]\label{asmp:mixing matrix}
     The mixing matrix $\mW=[w_{ij}]\in[0,1]^{N\times N}$ is symmetric (i.e., $\mW^\top = \mW$) and
doubly stochastic (i.e., $\mW \vone_N = \vone_N$, $\vone_N^\top \mW = \vone_N^\top$). 
 \end{asmp}

The following standard metric measures how fast information propagates over the graph.
\begin{definition}[spectral radius]\label{lm:W}
  The spectral radius of $\mW$ is defined as
  \begin{equation}\label{eq:radius}
  \sigma \coloneqq \Big\| \mW-\frac{1}{N}\vone_N\vone_N^\top\Big\|_2 \in [0,1).
  \end{equation}  
\end{definition}

The spectral radius $\sigma$ determines how fast information propagate over the network. 
For instance, in a fully-connected network, we can achieve $\sigma = 0$ by setting $\mW = \frac{1}{N}\vone_N\vone_N^\top$. 
For control of $1/(1 - \sigma)$ regarding different graphs, we refer the readers to paper \cite{nedic2018network}. In an Erd\"{o}s-R\'{e}nyi random graph, as long as the graph is connected, one has  with high probability $\sigma \asymp 1$.
Another immediate consequence is that for any $\vx\in\R^N$, letting $\overline{x}=\frac{1}{N}\vone_N^\top\vx$ be its average, we have
    \begin{equation}\label{eq:property_W}
        \norm{\mW\vx-\Bar{x}\vone_N}_2\leq\sigma \norm{\vx-\Bar{x}\vone_N}_2\,,
    \end{equation}
where the consensus error contracts by a factor of $\sigma$. 

\subsection{Proposed federated NPG algorithms}

Assuming softmax parameterization, the problem can be formulated as decentralized optimization,
\begin{align}
\textsf{(unregularized)} \qquad  \max_{\theta} \; V^{\pi_\theta}(s) = \frac{1}{N}\sum_{n=1}^N V_n^{\pi_\theta}(s) ,  \label{eq:unreg_banana}   \\
\textsf{(regularized)} \qquad \max_{\theta} \; V_{\tau}^{\pi_\theta}(s) = \frac{1}{N}\sum_{n=1}^N V_{\tau,n}^{\pi_\theta}(s) ,  \label{eq:reg_banana} 
\end{align}
where $\pi_\theta \coloneqq \softmax (\theta) $ subject to communication constraints. Motivated by the success of NPG methods, we aim to develop federated NPG methods to achieve our goal. For notational convenience, let
$\vpi^{(t)} \coloneqq \big(\pi_1^{(t)},\cdots,\pi_N^{(t)} \big)^\top $
be the collection of policy estimates at all agents in the $t$-th iteration. Let 
\begin{equation}\label{eq:average_policy}
\overline{\pi}^{(t)} \coloneqq \softmax\left(\frac{1}{N}\sum_{n=1}^N \log \pi_n^{(t)}\right),
\end{equation}
which satisfies that $\overline{\pi}^{(t)}(a|s)\propto \left(\prod_{n=1}^N \pi_n^{(t)}(a|s)\right)^{1/N}$  for each $(s,a)\in \mathcal{S}\times \mathcal{A}$. Therefore, $\overline{\pi}^{(t)}$ could be seen as the normalized geometric mean of $\{\pi_n^{(t)}\}_{n\in[N]}$. Define the collection of Q-function estimates as
\begin{align*}
\mQ^{(t)}  \coloneqq \Big( Q_{1}^{\pi_1^{(t)}},\cdots, Q_{N}^{\pi_N^{(t)}}\Big)^\top , \qquad 
\mQ_\tau^{(t)}  \coloneqq \Big( Q_{\tau,1}^{\pi_1^{(t)}},\cdots, Q_{\tau,N}^{\pi_N^{(t)}}\Big)^\top.
\end{align*}
 We shall often abuse the notation and treat $\vpi^{(t)}$, $\mQ_\tau^{(t)}$ as matrices in $\R^{N\times |\mathcal{S}||\mathcal{A}|}$, and treat $\vpi^{(t)}(a|s)$, $\mQ_\tau^{(t)}(a|s)$ as vectors in $\R^N$, for all $ (s,a)\in \mathcal{S}\times\mathcal{A}$.  

\paragraph{Vanilla federated NPG methods.} To motivate the algorithm development, observe that the NPG method (cf.~\eqref{eq:update_npg_vanilla}) applied to \eqref{eq:unreg_banana} 
adopts the update rule
\begin{align*}
 \pi^{(t+1)}(a|s) \propto \pi^{(t)}(a|s) \exp{\left(\frac{\eta Q^{\pi^{(t)}}(s,a)}{1-\gamma}\right)} = \pi^{(t)}(a|s) \exp{\left(\frac{\eta \sum_{n=1}^N Q_n^{\pi^{(t)}}(s,a)}{N(1-\gamma)}\right)} \,
\end{align*}
for all $(s,a)\in \mathcal{S}\times \mathcal{A}$.
Two challenges arise when executing this update rule: the policy estimates are maintained locally without consensus, and the global Q-function are unavailable in the decentralized setting. To address these challenges, we apply the idea of dynamic average consensus \citep{zhu2010discrete}, where each agent maintains its own estimate $T_n^{(t)}(s,a)$ of the global Q-function, which are collected as vector 
$$  \mT^{(t)} =   \big( T_{1}^{(t)},\cdots, T_{N}^{ (t)}\big)^\top.$$ At each iteration, each agent updates its policy estimates
based on its neighbors' information via gossip mixing, in addition to a correction term that tracks the difference $Q_n^{\pi_n^{(t+1)}}(s,a) - Q_n^{\pi_n^{(t)}}(s,a) $ of the local Q-functions between consecutive policy updates. Note that the mixing is applied linearly to the logarithms of local policies, which translates into a multiplicative mixing of the local policies. Algorithm~\ref{alg:DNPG_exact} summarizes the detailed procedure of the proposed algorithm written in a compact matrix form, which we dub as federated NPG (FedNPG).
Note that the agents do not need to share their reward functions with others, and agent $n\in [N]$  will only be responsible to evaluate the local policy $\pi_n^{(t)}$ using the local reward $r_n$.

\begin{algorithm}[t]
    \caption{Federated NPG (FedNPG)}
    \label{alg:DNPG_exact}
    \begin{algorithmic}[1] 
        \STATE \textbf{Input:}  learning rate $\eta>0$, iteration number $T\in\NN_+$, mixing matrix $\mW\in\R^{N\times N}$.

        \STATE \textbf{Initialize:} $\vpi^{(0)}$, $\mT^{(0)}=\mQ^{(0)}$.
        
        \FOR{$t=0, 1, \cdots T-1$} 

            \STATE Update the policy for each $(s,a)\in \mathcal{S}\times \mathcal{A}$:
            \begin{equation}\label{eq:update_marl_0}
                \log\vpi^{(t+1)}(a|s)=\mW\left(\log\vpi^{(t)}(a|s)+\frac{\eta}{1-\gamma}\mT^{(t)}(s,a)\right)- \log\vz^{(t)}(s)\,,
            \end{equation}
            where $\vz^{(t)}(s)=\sum_{a'\in\mathcal{A}}\exp\left\{\mW\left(\log\vpi^{(t)}(a'|s)+\frac{\eta}{1-\gamma}\mT^{(t)}(s,a')\right)\right\}$.
            \STATE Evaluate $\mQ^{(t+1)}$.
            \STATE Update the global Q-function estimate for each $(s,a)\in \mathcal{S}\times \mathcal{A}$:
            \begin{equation}\label{eq:Q_tracking_0}
                \mT^{(t+1)}(s,a)=\mW\Big(\mT^{(t)}(s,a)+\underbrace{\mQ^{(t+1)}(s,a)-\mQ^{(t)}(s,a)}_{\text{Q-tracking}} \Big)\,.
            \end{equation}
        \ENDFOR 
    \end{algorithmic}
\end{algorithm}


\paragraph{Entropy-regularized federated NPG methods.} Moving onto the entropy regularized case, we adopt similar algorithmic ideas to decentralize \eqref{eq:update_npg}, and propose the federated NPG (FedNPG) method with entropy regularization, summarized in Algorithm~\ref{alg:EDNPG_exact}.   Clearly, the entropy-regularized FedNPG method reduces to the vanilla FedNPG in the absence of the regularization (i.e., when $\tau = 0$).

 \begin{algorithm}[h]
    \caption{Federated NPG (FedNPG) with entropy regularization}
    \label{alg:EDNPG_exact}
    \begin{algorithmic}[1] 
        \STATE \textbf{Input:}  learning rate $\eta>0$, iteration number $T\in\NN_+$, mixing matrix $\mW\in\R^{N\times N}$, regularization coefficient $\tau>0$.

        \STATE \textbf{Initialize:} $\vpi^{(0)}$, $\mT^{(0)}=\mQ_\tau^{(0)}$.
        
        \FOR{$t=0, 1, \cdots$} 

            \STATE Update the policy for each $(s,a)\in \mathcal{S}\times \mathcal{A}$:
            \begin{equation}\label{eq:update_marl}
                \log\vpi^{(t+1)}(a|s)=\mW
                \left(\left(1-\frac{\eta \tau}{1-\gamma}\right)\log\vpi^{(t)}(a|s)+\frac{\eta}{1-\gamma}\mT^{(t)}(s,a)\right)- \log\vz^{(t)}(s)\,,
            \end{equation}
            where $\vz^{(t)}(s)=\sum_{a'\in\mathcal{A}}\exp\left\{\mW\left(\left(1-\frac{\eta \tau}{1-\gamma}\right)\log\vpi^{(t)}(a'|s)+\frac{\eta}{1-\gamma}\mT^{(t)}(s,a')\right)\right\}$.
            \STATE Evaluate $\mQ_\tau^{(t+1)}$.
            
            \STATE Update the global Q-function estimate for each $(s,a)\in \mathcal{S}\times \mathcal{A}$:
            \begin{equation}\label{eq:Q_tracking}
                \mT^{(t+1)}(s,a)=\mW \Big(\mT^{(t)}(s,a)+\underbrace{\mQ_\tau^{(t+1)}(s,a)-\mQ_\tau^{(t)}(s,a)}_{\text{Q-tracking}} \Big)\,.
            \end{equation}
        \ENDFOR 
    \end{algorithmic}
\end{algorithm}



\subsection{Global convergence of FedNPG}

\paragraph{Convergence with exact policy evaluation.} We begin with the global convergence of FedNPG (cf.~Algorithm~\ref{alg:DNPG_exact}), stated in the following theorem.
The formal statement and proof of this result can be found in Appendix~\ref{sec_app:analysis_DNPG_exact}.

\begin{thm}[Global sublinear convergence of exact FedNPG (informal)]\label{thm:DNPG_exact_rate}
 Suppose $\pi_n^{(0)},n\in[N]$ are set as the uniform distribution. Then for $ 0<\eta\leq \eta_1\coloneqq \frac{(1-\sigma)^2(1-\gamma)^3}{16\sqrt{N}\sigma}$, we have
    \begin{equation}\label{eq:convergence_rate_exact_0_informal} 
        \frac{1}{T}\sum_{t=0}^{T-1}\left(V^{\star}(\rho)-V^{\Bar\pi^{(t)}}(\rho)\right) \leq  \frac{V^\star(d_\rho^{\pi^\star})}{(1-\gamma)T}+\frac{\log |\A|}{\eta T}+\frac{32 N\sigma^2 \eta^2}{(1-\gamma)^9(1-\sigma)^2}\,.
    \end{equation}
Furthermore, the consensus error satisfies
     \begin{equation}\label{eq:consensus_error_thm}
\forall n\in[N]:\quad \norm{\log\pi_n^{(t)}-\log\bar\pi^{(t)}}_\infty\leq\frac{32N\sigma}{3(1-\gamma)^4(1-\sigma)}\eta\,.
 \end{equation}
\end{thm}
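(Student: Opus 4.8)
The plan is to reduce the federated dynamics to an \emph{inexact} centralized NPG recursion on the geometric-mean policy $\Bar\pi^{(t)}$, and to control the two sources of inexactness --- disagreement among the local policies and imperfect $Q$-tracking --- through the contraction property \eqref{eq:property_W} of the mixing matrix. First I would exploit double stochasticity (Assumption~\ref{asmp:mixing matrix}): left-multiplying the tracking update \eqref{eq:Q_tracking_0} by $\tfrac1N\vone_N^\top$ and using $\vone_N^\top\mW=\vone_N^\top$ gives the gradient-tracking invariant $\Bar{T}^{(t)}:=\tfrac1N\sum_{n}T_n^{(t)}=\tfrac1N\sum_n Q_n^{\pi_n^{(t)}}$ for all $t$, by induction from $\mT^{(0)}=\mQ^{(0)}$. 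Averaging the policy update \eqref{eq:update_marl_0} in the same way makes the mixing matrix disappear and leaves the clean recursion $\Bar\pi^{(t+1)}(a|s)\propto \Bar\pi^{(t)}(a|s)\exp\!\big(\tfrac{\eta}{1-\gamma}\Bar{T}^{(t)}(s,a)\big)$, i.e.\ exactly a centralized NPG step driven by the \emph{estimated} global $Q$-function $\Bar{T}^{(t)}$ in place of the true $Q^{\Bar\pi^{(t)}}$.

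Second, I would quantify the resulting estimation error. By the invariant, $\|\Bar{T}^{(t)}-Q^{\Bar\pi^{(t)}}\|_\infty \le \tfrac1N\sum_n\|Q_n^{\pi_n^{(t)}}-Q_n^{\Bar\pi^{(t)}}\|_\infty$, and a standard Lipschitz bound on each $Q_n^\pi$ with respect to $\log\pi$ (with constant $\mathcal{O}(1/(1-\gamma)^2)$) controls this by the consensus error $\max_n\|\log\pi_n^{(t)}-\log\Bar\pi^{(t)}\|_\infty$. At this point the problem reduces to (i) bounding the consensus error and (ii) running an Agarwal-type inexact NPG analysis with the above error feeding in.

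Third --- the crux --- I would control the consensus and tracking errors jointly. Introducing the centered quantities $\Omega^{(t)}:=\max_{s,a}\|\log\vpi^{(t)}(a|s)-\vone_N\overline{\log\pi}^{(t)}(a|s)\|_2$ and $\Theta^{(t)}:=\max_{s,a}\|\mT^{(t)}(s,a)-\vone_N\Bar{T}^{(t)}(s,a)\|_2$, the contraction \eqref{eq:property_W} yields a coupled linear recursion of the schematic form $\Omega^{(t+1)}\le \sigma\big(\Omega^{(t)}+\tfrac{\eta}{1-\gamma}\Theta^{(t)}\big)$ and $\Theta^{(t+1)}\le\sigma\big(\Theta^{(t)}+\|\mQ^{(t+1)}-\mQ^{(t)}\|\big)$, where the $Q$-drift $\|\mQ^{(t+1)}-\mQ^{(t)}\|$ is itself bounded by the per-step policy movement, of order $\mathcal{O}(\eta(\Omega^{(t)}+\Theta^{(t)}+\|Q\|))$. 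For $\eta\le\eta_1$ the associated $2\times2$ iteration matrix is a contraction, so the errors stay uniformly $\mathcal{O}(\eta)$; carrying the constants (and accounting for the per-state normalization) gives precisely the consensus bound \eqref{eq:consensus_error_thm}. The main obstacle is exactly this circular coupling --- the tracking error drives the policy update, the policy drift drives the $Q$-change, and the $Q$-change drives the tracking error --- so the delicate part is choosing the Lyapunov combination and the step-size threshold $\eta_1$ that make the joint map contractive while keeping its fixed point proportional to $\eta$.

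Finally, I would assemble the rate. Feeding the $\mathcal{O}(\eta)$ error bounds into the inexact NPG descent inequality for $\Bar\pi^{(t)}$ (using the performance-difference lemma and telescoping the per-state $\KL{\pi^\star}{\Bar\pi^{(t)}}$ potential, as in \citet{agarwal2021theory}) produces the three terms in \eqref{eq:convergence_rate_exact_0_informal}: the $\tfrac{V^\star(d_\rho^{\pi^\star})}{(1-\gamma)T}$ and $\tfrac{\log|\A|}{\eta T}$ terms are the usual exact-NPG contributions, while the consensus/tracking errors (each $\mathcal{O}(\eta)$), combined with the step-size weighting that multiplies $\Bar{T}^{(t)}$ in the update, contribute the residual $\mathcal{O}(\eta^2)$ term. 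Optimizing the bound over $\eta$ then balances $\tfrac{\log|\A|}{\eta T}$ against the $\eta^2$ term and yields the advertised $\mathcal{O}(T^{-2/3})$ rate of Table~\ref{tb:iteration_complexity}.
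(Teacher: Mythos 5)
Your overall architecture --- the gradient-tracking invariant, the reduction to an inexact NPG update on $\bar\pi^{(t)}$, the Lipschitz control of $\|\widehat{Q}^{(t)}-Q^{\bar\pi^{(t)}}\|_\infty$ by the policy-consensus error, and the coupled $2\times 2$ error recursion --- is exactly the paper's (its $u^{(t)},v^{(t)}$, iteration matrix $\mB(\eta)$ and drift $\vd(\eta)$ in Lemma~\ref{lm:linear_sys}). The gap is in your final accounting. In the descent inequality \eqref{eq:induction_phi}, the error enters as $\frac{2(1+\gamma)\gamma}{(1-\gamma)^4}\eta\,\|u^{(t)}\|_\infty$ while the value gap enters as $\eta\,\big(V^\star(\rho)-V^{\bar\pi^{(t)}}(\rho)\big)$; after telescoping and dividing by $\eta T$, the factor of $\eta$ on the error term cancels, so the residual in the averaged bound is proportional to the policy-consensus error itself, with \emph{no} extra factor of $\eta$. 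Your claim that ``errors of order $\eta$, combined with the step-size weighting that multiplies $\Bar{T}^{(t)}$,'' produce the $\mathcal{O}(\eta^2)$ term therefore does not go through: with only a uniform $\mathcal{O}(\eta)$ bound on the joint error vector, you obtain an $\mathcal{O}(\eta)$ residual, and balancing $\frac{\log|\A|}{\eta T}$ against $c\eta$ yields a $T^{-1/2}$ rate, not the advertised $T^{-2/3}$ rate of Table~\ref{tb:iteration_complexity}.

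What produces the genuine $\eta^2$ is structure that your ``fixed point proportional to $\eta$'' statement discards: the inhomogeneous drift $\vd(\eta)=\big(0,\,\mathcal{O}(N\sigma\eta/(1-\gamma)^4)\big)^\top$ feeds only the tracking error $v^{(t)}$, and $v$ couples into the policy-consensus error $u$ only through the entry $\frac{\eta\sigma}{1-\gamma}$ of $\mB(\eta)$. Hence at steady state $\|v^{(t)}\|_\infty=\mathcal{O}(\eta)$ but $\|u^{(t)}\|_\infty=\mathcal{O}(\eta^2)$ --- and it is $u^{(t)}$, not $v^{(t)}$, that enters the descent inequality. (This also explains why the stated consensus bound \eqref{eq:consensus_error_thm}, proved through the joint norm, is $\mathcal{O}(\eta)$ even though the rate residual in \eqref{eq:convergence_rate_exact_0_informal} is $\mathcal{O}(\eta^2)$.) The paper implements this cancellation automatically via the weighted Lyapunov function $\Phi^{(t)}(\eta)=\phi^{(t)}(\eta)+\vp(\eta)^\top\mathbf{\Omega}^{(t)}$ in \eqref{eq:Lyapunov}, with $\vp(\eta)^\top=\frac{2(1+\gamma)\gamma\eta}{(1-\gamma)^4}\,(1,0)\,\big(\mI-\mB(\eta)\big)^{-1}$ chosen so that the consensus terms vanish identically from the recursion (see \eqref{eq:magic}); its second component $p_2(\eta)\propto\eta^2$ then multiplies the drift to give exactly the $\frac{32N\sigma^2\eta^2}{(1-\gamma)^9(1-\sigma)^2}$ term. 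Your proof is repaired either by tracking $u^{(t)}$ and $v^{(t)}$ separately with their different $\eta$-scalings, or by adopting this weighted-Lyapunov cancellation.
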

Theorem~\ref{thm:DNPG_exact_rate} characterizes the average-iterate convergence of the average policy $\Bar\pi^{(t)}$ (cf.~\eqref{eq:average_policy}) across the agents, which  depends logarithmically on the size of the action space, and independently on the size of the state space. In addition, the consensus error of the local policies $\pi_n^{(t)}$ towards the average policy $\Bar\pi^{(t)}$ is characterized in \eqref{eq:consensus_error_thm}. When $T\geq \frac{128 \sqrt{N}\log |\A|\sigma^{4}}{(1-\sigma)^4}$, by optimizing the learning rate
     $\eta=\left(\frac{(1-\gamma)^9(1-\sigma)^2 \log |\A|}{ 32 TN\sigma^2}\right)^{1/3}$ to balance the latter two terms, we arrive at
 \begin{subequations} \label{eq:convergence_rate_exact_opt_0_inexact} 
    \begin{align}   
        \frac{1}{T}\sum_{t=0}^{T-1}\left(V^{\star}(\rho)-V^{\Bar\pi^{(t)}}(\rho)\right) & \lesssim  
        \frac{V^\star(d_\rho^{\pi^\star})}{(1-\gamma)T}+\frac{ N^{1/3}\sigma^{2/3}}{(1-\gamma)^3(1-\sigma)^{2/3}}\left(\frac{\log |\A|}{T}\right)^{2/3}\,. \\
        \norm{\log\pi_n^{(t)}-\log\bar\pi^{(t)}}_\infty & \lesssim \frac{ N^{2/3}\sigma^{1/3}}{(1-\gamma)(1-\sigma)^{1/3}}\left(\frac{\log |\A|}{T}\right)^{1/3}.
    \end{align}
\end{subequations}
    A few comments are in order.
\begin{itemize}
\item {\em Server-client setting.} When the network is fully connected, i.e., $\sigma =0$, the convergence rate of FedNPG recovers the $\mathcal{O}(1/T)$ rate, matching that of the centralized NPG established in \citet{agarwal2021theory}. 
\item {\em Well-connected networks.} When the network is relatively well-connected in the sense of $\frac{\sigma^2}{(1-\sigma)^2}\lesssim \frac{1-\gamma}{N^{1/2}}$, FedNPG first converges at the rate of $\mathcal{O}(1/T)$, and then at the slower $\mathcal{O}(1/T^{2/3})$ rate after $T \gtrsim  \frac{(1-\gamma)^3(1-\sigma)^2}{N\sigma^2}$. 

\item {\em Poorly-connected networks.} In addition, when the network is poorly connected in the sense of $\frac{\sigma^2}{(1-\sigma)^2}\gtrsim \frac{1-\gamma}{N^{1/2}}$, we see that FedNPG converges at the slower $\mathcal{O}(1/T^{2/3})$ rate.
\end{itemize}

We state the iteration complexity in Corollary~\ref{crl:DNPG_iteration_complexity}.

\begin{crl}[Iteration complexity of exact FedNPG]\label{crl:DNPG_iteration_complexity}
To reach 
$$\frac{1}{T}\sum_{t=0}^{T-1}\big(V^{\star}(\rho)-V^{\Bar\pi^{(t)}}(\rho)\big)\leq\varepsilon,$$ the iteration complexity of FedNPG is at most 
        $ \mathcal{O}\left(\left(\frac{\sigma}{(1-\gamma)^{9/2}(1-\sigma)\varepsilon^{3/2}}+\frac{\sigma^{2}}{(1-\sigma)^4}\right)\sqrt{N}\log|\A|+\frac{1}{\varepsilon (1-\gamma)^2}\right)$. 
\end{crl}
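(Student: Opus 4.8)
The plan is to obtain this corollary as a purely algebraic consequence of the optimized convergence bound \eqref{eq:convergence_rate_exact_opt_0_inexact}, which was already derived by substituting the balancing learning rate $\eta=\big(\frac{(1-\gamma)^9(1-\sigma)^2\log|\A|}{32TN\sigma^2}\big)^{1/3}$ into Theorem~\ref{thm:DNPG_exact_rate}. First I would simplify the leading term: since the per-step reward lies in $[0,1]$ we have $V^\star(d_\rho^{\pi^\star})\le \frac{1}{1-\gamma}$, so the first term on the right-hand side of \eqref{eq:convergence_rate_exact_opt_0_inexact} is at most $\frac{1}{(1-\gamma)^2 T}=\mathcal O(1/T)$, while the second term is of order $\frac{N^{1/3}\sigma^{2/3}}{(1-\gamma)^3(1-\sigma)^{2/3}}\big(\frac{\log|\A|}{T}\big)^{2/3}=\mathcal O(1/T^{2/3})$.

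Next I would force each of the two terms to fall below $\varepsilon/2$ and solve for $T$. Requiring $\frac{1}{(1-\gamma)^2 T}\lesssim\varepsilon$ gives $T\gtrsim \frac{1}{(1-\gamma)^2\varepsilon}$, which is precisely the last summand in the claimed complexity. Requiring the $\mathcal O(1/T^{2/3})$ term to be $\lesssim\varepsilon$ amounts to inverting a $T^{2/3}$ dependence: raising both sides to the $3/2$ power yields $T\gtrsim \frac{\sigma\sqrt N\log|\A|}{(1-\gamma)^{9/2}(1-\sigma)\varepsilon^{3/2}}$, matching the first summand. Tracking the powers of $1-\gamma$, $1-\sigma$, $N$, and $\sigma$ correctly through the $3/2$ exponentiation is the most error-prone part of the derivation, though it is entirely routine.

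The one nonobvious contribution comes from the admissibility constraint on the learning rate. The bound of Theorem~\ref{thm:DNPG_exact_rate} holds only for $\eta\le\eta_1=\frac{(1-\sigma)^2(1-\gamma)^3}{16\sqrt N\sigma}$, so the chosen balancing $\eta$ must itself satisfy $\eta\le\eta_1$. I would substitute the explicit expression for $\eta$ into this inequality, cube both sides, cancel the common $(1-\gamma)^9$ factor, and solve for $T$; this reproduces a threshold of the form $T\gtrsim \frac{\sigma^2\sqrt N\log|\A|}{(1-\sigma)^4}$ (equivalently the condition $T\ge\frac{128\sqrt N\log|\A|\sigma^4}{(1-\sigma)^4}$ quoted after the theorem), which supplies the remaining network-dependent summand. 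This term is invisible in the convergence rate itself and arises purely from keeping the learning rate in its admissible range, so it must be added rather than dropped.

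Finally I would combine the three lower bounds on $T$. Since any $T$ exceeding all three simultaneously makes the sub-optimality gap at most $\varepsilon$, and $\max\{a,b,c\}\le a+b+c$, the iteration complexity can be reported as the sum of the three thresholds, which is exactly
\[
\mathcal{O}\big(\big(\tfrac{\sigma}{(1-\gamma)^{9/2}(1-\sigma)\varepsilon^{3/2}}+\tfrac{\sigma^2}{(1-\sigma)^4}\big)\sqrt N\log|\A|+\tfrac{1}{\varepsilon(1-\gamma)^2}\big).
\]
There is no genuine analytical obstacle here, as the heavy lifting is already done in Theorem~\ref{thm:DNPG_exact_rate}; the only point requiring vigilance is not to overlook the admissibility threshold, since optimizing the rate alone would silently omit the $\frac{\sigma^2\sqrt N\log|\A|}{(1-\sigma)^4}$ term.
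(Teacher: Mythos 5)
Your proposal is correct and follows essentially the same route by which the paper obtains this corollary: the paper gives no separate proof, and the statement is read off directly from the optimized rate \eqref{eq:convergence_rate_exact_opt_0_inexact} (with $V^\star(d_\rho^{\pi^\star})\le \tfrac{1}{1-\gamma}$), the inversion of the $T^{-2/3}$ term, and the learning-rate admissibility threshold quoted just before the corollary — the same three lower bounds on $T$ that you combine via $\max\{a,b,c\}\le a+b+c$. One small imprecision: cubing the admissibility constraint with the paper's formal $\eta_1\propto \frac{(1-\sigma)^2(1-\gamma)^3}{\sqrt{N}\sigma^2}$ gives a threshold proportional to $\sigma^4/(1-\sigma)^4$ (not $\sigma^2$ as you state), but since $\sigma^4\le\sigma^2$ for $\sigma\in[0,1)$, the $\sigma^2$ term you (and the corollary) report is a valid, slightly looser upper bound, so the conclusion is unaffected.
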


\paragraph{Convergence with inexact policy evaluation.} In practice, the policies need to be evaluated using samples collected by the agents, where the Q-functions are only estimated approximately. We are interested in gauging how the approximation error impacts the performance of FedNPG, as demonstrated in the following theorem.

\begin{thm}[Global sublinear convergence of inexact FedNPG (informal)]\label{thm:DNPG_inexact_convergence_informal}
    Suppose that an estimate $q_{n}^{\pi_n^{(t)}}$ are used in replace of $Q_n^{\pi_n^{(t)}}$ in Algorithm~\ref{alg:DNPG_exact}. Under the assumptions of Theorem~\ref{thm:DNPG_exact_rate}, we have
    \begin{equation}\label{eq:convergence_rate_inexact_0_informal} 
        \frac{1}{T}\sum_{t=0}^{T-1}\left(V^{\star}(\rho)-V^{\overline\pi^{(t)}}(\rho)\right)\leq  \frac{V^\star(d_\rho^{\pi^\star})}{(1-\gamma)T}+\frac{\log |\A|}{\eta T}+\frac{32 N\sigma^2 \eta^2}{(1-\gamma)^9(1-\sigma)^2}
        +C_3\max_{n\in[N], t\in [T]}\norm{Q_{n}^{\pi_n^{(t)}}-q_{n}^{\pi_n^{(t)}}}_\infty\,,
    \end{equation}
    where $C_3\coloneqq\frac{32\sqrt{N}\sigma\eta}{(1-\gamma)^5(1-\sigma)^2}\left(\frac{  \eta\sqrt{N}}{(1-\gamma)^3}+1\right)+\frac{2}{(1-\gamma)^2}$.

\end{thm}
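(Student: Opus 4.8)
The plan is to treat the inexact algorithm as a perturbed instance of exact FedNPG and re-run the analysis underlying Theorem~\ref{thm:DNPG_exact_rate}, tracking how the evaluation error $\varepsilon\coloneqq\max_{n\in[N],t\in[T]}\norm{Q_n^{\pi_n^{(t)}}-q_n^{\pi_n^{(t)}}}_\infty$ propagates through the coupled dynamics. First I would introduce the matrix $\mathbf{q}^{(t)}$ collecting the inexact local estimates $q_n^{\pi_n^{(t)}}$, so that $\norm{\mQ^{(t)}-\mathbf{q}^{(t)}}_\infty\le\varepsilon$ for all $t$, and pinpoint the two places it enters: the policy update \eqref{eq:update_marl_0}, whose tracking variable $\mT^{(t)}$ is now built from the $\mathbf{q}$'s, and the Q-tracking update \eqref{eq:Q_tracking_0}, which uses the inexact differences $\mathbf{q}^{(t+1)}-\mathbf{q}^{(t)}$.

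Next I would exploit the defining invariant of gradient tracking: by double stochasticity of $\mW$ (Assumption~\ref{asmp:mixing matrix}) and the initialization $\mT^{(0)}=\mQ^{(0)}$, telescoping \eqref{eq:Q_tracking_0} gives $\frac{1}{N}\vone_N^\top\mT^{(t)}=\frac{1}{N}\vone_N^\top\mathbf{q}^{(t)}$, so the effective global Q-function driving $\Bar\pi^{(t)}$ differs from the true $\frac{1}{N}\sum_n Q_n^{\pi_n^{(t)}}$ by at most $\varepsilon$. This splits the analysis cleanly: the consensus- and tracking-error recursions from the exact proof each pick up an extra inhomogeneous driving term of order $\varepsilon$ (the tracking recursion because $\mathbf{q}^{(t+1)}-\mathbf{q}^{(t)}$ deviates from $\mQ^{(t+1)}-\mQ^{(t)}$ by at most $2\varepsilon$), while the descent inequality for the optimality gap inherits a direct perturbation from substituting the inexact Q-function inside the softmax update.

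I would then re-solve the perturbed linear system. Since the step-size constraint $\eta\le\eta_1$ keeps the coupled consensus/tracking recursion contractive at a rate governed by $\sigma$ via \eqref{eq:property_W}, each $\varepsilon$ driving term is amplified only by a factor of order $1/(1-\sigma)$ upon summing the geometric series, reproducing inflated consensus and tracking bounds; feeding these back into the per-step performance-difference estimate and averaging over $t=0,\dots,T-1$ contributes the piece $\frac{32\sqrt{N}\sigma\eta}{(1-\gamma)^5(1-\sigma)^2}\big(\frac{\eta\sqrt{N}}{(1-\gamma)^3}+1\big)\varepsilon$ of $C_3\varepsilon$. Separately, a standard performance-difference-lemma argument for inexact NPG shows that replacing $Q$ by $q$ in the update costs an additional $\frac{2}{(1-\gamma)^2}\varepsilon$ in the averaged gap, yielding the second piece of $C_3$. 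Assembling these with the three unperturbed terms inherited verbatim from Theorem~\ref{thm:DNPG_exact_rate} gives \eqref{eq:convergence_rate_inexact_0_informal}.

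The main obstacle I anticipate is controlling the feedback loop among the three quantities — optimality gap, consensus error, and tracking error — in the presence of the new $\varepsilon$ terms. In the exact case these form a closed contractive system, and the delicate part is verifying that injecting bounded perturbations does not break contractivity under the same budget $\eta\le\eta_1$, and that the accumulated amplification is exactly the $1/(1-\sigma)$ and $\sqrt{N}$ dependence recorded in $C_3$ rather than something larger. Matching the constants will require the same careful bookkeeping as the exact proof, now carried out for the perturbed recursions.
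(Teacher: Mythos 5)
Your proposal is correct and follows essentially the same route as the paper: the paper likewise treats inexactness as a bounded perturbation of the exact analysis, defines the tracking and auxiliary sequences relative to the inexact averages $\widehat{q}^{(t)}$ (using exactly the tracking invariant you identify), derives the perturbed two-dimensional linear system for consensus/tracking errors with an inhomogeneous $\mathcal{O}(\|\ve\|_\infty)$ driving term (Lemma~\ref{lm:linear_sys_inexact}), adds the $\frac{2\eta}{(1-\gamma)^2}\|\ve\|_\infty$ perturbation to the performance-improvement inequality, and resolves the coupled system via the same Lyapunov weights $\vp(\eta)$ — which is the paper's concrete implementation of the geometric-series amplification by $1/(1-\sigma)$ factors that you describe. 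No gaps; the bookkeeping you anticipate is exactly what Appendix~\ref{sec_app:analysis_DNPG_inexact} carries out.
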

\noindent The formal statement and proof of this result is given in Appendix~\ref{sec_app:analysis_DNPG_inexact}.

As long as 
   $ \max_{n\in[N], t\in[T]}\big\| Q_{n}^{\pi_n^{(t)}}-q_{n}^{\pi_n^{(t)}}\big\|_\infty\leq \frac{\varepsilon}{C_3}$,
 inexact FedNPG reaches $\frac{1}{T}\sum_{t=0}^{T-1}\big(V^{\star}(\rho)-V^{\Bar\pi^{(t)}}(\rho)\big)\leq 2\varepsilon$ at the same iteration complexity as predicted in 
Corollary~\ref{crl:DNPG_iteration_complexity}. Equipped with existing sample complexity bounds on policy evaluation, e.g. using a simulator as in \citet{li2020breaking} and \cite{li2023q}, this immediate leads to a sample complexity bound for a federated actor-critic type algorithm for multi-task RL. We detail this in the following remark.  
 
\begin{rmk}[sample complexity bound of inexact FedNPG]\label{rmk:sample_complexity_FedNPG}
Recall that
\citet{li2020breaking} shows that for any fixed policy $\pi$, model-based policy evaluation achieves $\norm{q_\tau^\pi-Q_\tau^\pi}_\infty\leq \varepsilon_{\mathsf{eval}}$
 with high probability if the number of samples per state-action pair exceeds the order of
$\widetilde{\mathcal{O}}\left(\frac{1}{(1-\gamma)^3  \varepsilon_{\mathsf{eval}}^2}\right)$.
When $T\gtrsim  \frac{  \sqrt{N}\log |\A|\sigma^{4}}{(1-\sigma)^4}$ and $\eta=\left(\frac{(1-\gamma)^9(1-\sigma)^2 \log |\A|}{32 TN\sigma^2}\right)^{1/3}$, we have $C_3 \asymp  1/(1-\gamma)^2 $. By employing fresh samples for the policy evaluation of each agent at every iteration, we can set $\varepsilon_{\mathsf{eval}}: = \max_{n\in[N], t\in [T]}\norm{Q_{n}^{\pi_n^{(t)}}-q_{n}^{\pi_n^{(t)}}}_\infty \asymp  \frac{\varepsilon}{C_3} \asymp (1-\gamma)^2\varepsilon $,  
and invoke the union bound over all iterations 
to give a loose upper bound of sample complexity of FedNPG per state-action pair at each agent as follows:
\begin{align*}
& \underbrace{ \widetilde{\mathcal{O}}\left(\left(\frac{\sigma}{(1-\gamma)^{9/2}(1-\sigma)\varepsilon^{3/2}}+\frac{\sigma^{2}}{(1-\sigma)^4}\right)\sqrt{N}+\frac{1}{\varepsilon (1-\gamma)^2}\right) }_{\mathsf{iteration~complexity}} \cdot \hspace{-0.1in}\underbrace{ \widetilde{\mathcal{O}}\left(\frac{1}{(1-\gamma)^7  \varepsilon^2}\right)}_{\mathsf{sample~complexity~per~iteration}}  \\
& =\widetilde{\mathcal{O}}\left(\frac{1}{(1-\gamma)^{7}\varepsilon^2}\cdot\left[\left(\frac{\sigma}{(1-\gamma)^{9/2}(1-\sigma)\varepsilon^{3/2}}+\frac{\sigma^{2}}{(1-\sigma)^4}\right)\sqrt{N}+\frac{1}{\varepsilon (1-\gamma)^2}\right]\right)\,.
\end{align*}
Hence, the total sample complexity scales linearly with respect to the size of the state-action space up to logarithmic factors. When $\sigma$ is close to 1, which corresponds to the case where the network exhibits a high degree of locality, the above sample complexity becomes
$$\widetilde{\mathcal{O}}\left(\frac{\sqrt{N}}{(1-\gamma)^{7}\varepsilon^2}\cdot\left[\left(\frac{1}{(1-\gamma)^{9/2}(1-\sigma)\varepsilon^{3/2}}+\frac{1}{(1-\sigma)^4}\right)\right]\right)\,,$$
which further simplifies to 
\begin{equation} \label{eq:sample_complexity_FedNPG}
    \widetilde{\mathcal{O}}\left(\frac{\sqrt{N}}{(1-\gamma)^{11.5} (1-\sigma) \varepsilon^{3.5}} \right)
\end{equation}
for sufficiently small $\varepsilon$.
\end{rmk}


\subsection{Global convergence of FedNPG with entropy regularization}

\paragraph{Convergence with exact policy evaluation.}  Next, we present our global convergence guarantee of entropy-regularized FedNPG with exact policy evaluation (cf.~Algorithm~\ref{alg:EDNPG_exact}). 

\begin{thm}[Global linear convergence of exact entropy-regularized FedNPG 
 (informal)]\label{thm:EDNPG_exact_rate}
    For any $\gamma\in(0,1)$ and $0<\tau\leq 1$, there exists $\eta_0  = \min \left\{\frac{1-\gamma}{\tau}, \mathcal O\left(\frac{(1-\gamma)^7(1-\sigma)^2\tau}{\sigma^2 N}\right)\right\} $, such that if $0<\eta\leq\eta_0$, then we have
\begin{equation}
\begin{split}
    \big\| \Bar{Q}_\tau^{(t)}-Q_\tau^\star \big\|_\infty\leq 2\gamma C_1\rho(\eta)^t\,,\quad \big\| \log\pi_\tau^\star-\log\Bar\pi^{(t)}\big\|_\infty\leq \frac{2C_1}{\tau}\rho(\eta)^t\,,
    \label{eq:EDNPG_linear_convergence} 
\end{split}
\end{equation}
where $\Bar{Q}_\tau^{(t)}:=Q_\tau^{\Bar\pi^{(t)}}$, $\rho(\eta)\leq\max\{1-\frac{\tau\eta}{2}, \frac{3+\sigma}{4}\}< 1$, and $C_1$ is some problem-dependent constant. Furthermore, the consensus error satisfies
\begin{equation}\label{eq:consensus_error_entropy_thm}
    \forall n\in[N]:\quad \normbig{\log\pi_n^{(t)}-\log\overline\pi^{(t)}}_\infty \leq 2 C_1 \rho(\eta)^t  .
\end{equation}
\end{thm}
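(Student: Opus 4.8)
The plan is to reduce the decentralized recursion to a \emph{perturbed} centralized entropy-regularized NPG iteration, and then to reabsorb the perturbation through a coupled contraction of the consensus and Q-tracking errors. First I would average the log-policy update~\eqref{eq:update_marl} across agents: multiplying by $\tfrac1N\vone_N^\top$ and using double stochasticity (Assumption~\ref{asmp:mixing matrix}, so that $\vone_N^\top\mW=\vone_N^\top$) eliminates the mixing matrix. Writing $\tfrac1N\sum_n\log\pi_n^{(t)}$, which equals $\log\overline\pi^{(t)}$ up to an $a$-independent constant, one obtains
\[
\log\overline\pi^{(t+1)}(a|s)=\Big(1-\tfrac{\eta\tau}{1-\gamma}\Big)\log\overline\pi^{(t)}(a|s)+\tfrac{\eta}{1-\gamma}\,\overline T^{(t)}(s,a)-c^{(t)}(s),
\]
where $\overline T^{(t)}:=\tfrac1N\vone_N^\top\mT^{(t)}$ and $c^{(t)}(s)$ is a normalization independent of $a$. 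A short induction on~\eqref{eq:Q_tracking}, again using $\vone_N^\top\mW=\vone_N^\top$ and the initialization $\mT^{(0)}=\mQ_\tau^{(0)}$, yields the gradient-tracking invariant $\overline T^{(t)}=\tfrac1N\sum_n Q_{\tau,n}^{\pi_n^{(t)}}$. Hence $\overline\pi^{(t)}$ follows exactly the centralized update~\eqref{eq:update_npg}, except that the driving term is the average of \emph{local} soft Q-functions at \emph{local} policies rather than the true $Q_\tau^{\overline\pi^{(t)}}$.

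Second, I would quantify the mismatch $\overline T^{(t)}-Q_\tau^{\overline\pi^{(t)}}$. Since $\tfrac1N\sum_n Q_{\tau,n}^{\pi_n^{(t)}}$ and $Q_\tau^{\overline\pi^{(t)}}=\tfrac1N\sum_n Q_{\tau,n}^{\overline\pi^{(t)}}$ differ only through the gaps $\pi_n^{(t)}$ vs.\ $\overline\pi^{(t)}$, a Lipschitz estimate for soft Q-functions (in the $\ell_\infty$ log-policy metric, with a $1/(1-\gamma)$-type factor) gives $\|\overline T^{(t)}-Q_\tau^{\overline\pi^{(t)}}\|_\infty\lesssim \tfrac{1}{1-\gamma}\max_n\|\log\pi_n^{(t)}-\log\overline\pi^{(t)}\|_\infty$, so the perturbation is governed by the policy consensus error. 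With this estimate I would invoke the centralized linear-convergence machinery of \citet{cen2022fast}: defining optimality surrogates such as $\|Q_\tau^\star-Q_\tau^{\overline\pi^{(t)}}\|_\infty$ together with the log-policy error $\|\tau\log\pi_\tau^\star-\tau\log\overline\pi^{(t)}\|_\infty$, one shows these contract by a factor $1-\tfrac{\eta\tau}{2}$ per step, up to an additive term proportional to the consensus error.

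In parallel I would control two consensus quantities, the policy dispersion $\Theta^{(t)}:=\max_n\|\log\pi_n^{(t)}-\log\overline\pi^{(t)}\|_\infty$ and the Q-tracking dispersion $\Lambda^{(t)}:=\|\mT^{(t)}-\vone_N(\overline T^{(t)})^\top\|$ in a suitable norm. Applying the contraction property~\eqref{eq:property_W}, each gossip step shrinks the deviation-from-average by the factor $\sigma$; the policy dispersion is additionally fed by the $\tfrac{\eta}{1-\gamma}$-scaled tracking dispersion, while the tracking dispersion is driven by the per-iteration change of the local Q-functions, which is itself $O(\eta)$ because the policies move by $O(\eta)$ along the NPG path (smoothness of the soft Q-functions). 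This produces recursions of the schematic form $\Theta^{(t+1)}\le \sigma\,\Theta^{(t)}+O(\eta)\,\Lambda^{(t)}$ and $\Lambda^{(t+1)}\le \sigma\,\Lambda^{(t)}+O(\eta)\,(\Theta^{(t)}+\text{optimality error})$, with constants polynomial in $1/(1-\gamma)$ and $N$.

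Finally I would collect the optimality error, $\Theta^{(t)}$, and $\Lambda^{(t)}$ into a nonnegative vector $\vu^{(t)}$ obeying an entrywise linear recursion $\vu^{(t+1)}\preceq\mA(\eta)\vu^{(t)}$, whose matrix has diagonal entries $\approx 1-\tfrac{\eta\tau}{2}$ (optimality mode) and $\approx\sigma$ (the two consensus modes) and off-diagonal entries of order $\eta$. Iterating and bounding $\|\mA(\eta)^t\|$ would then deliver~\eqref{eq:EDNPG_linear_convergence} and~\eqref{eq:consensus_error_entropy_thm}, with $C_1$ absorbing the initial errors and a change-of-basis constant. The main obstacle is precisely the spectral-radius bound $\rho(\mA(\eta))\le\max\{1-\tfrac{\eta\tau}{2},\tfrac{3+\sigma}{4}\}$: the optimality mode contracts only at the near-unit rate $1-\tfrac{\eta\tau}{2}$, whereas the consensus modes contract at rate $\approx\sigma$ but are excited by $O(\eta)$ cross terms, so one must prove that these $O(\eta)$ couplings cannot lift any eigenvalue above the slower of the two rates. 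Pinning down the step-size threshold $\eta_0=\mathcal O\!\big(\tfrac{(1-\gamma)^7(1-\sigma)^2\tau}{\sigma^2 N}\big)$ --- with exactly this dependence on $(1-\sigma)^2$, $\tau$, $N$ and the $(1-\gamma)$ powers, so that the Step-2 perturbation is folded into the contraction instead of breaking it --- is the delicate, calculation-heavy core of the argument.
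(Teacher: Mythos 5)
Your proposal follows essentially the same route as the paper's own proof: average the update across agents using double stochasticity together with the tracking invariant $\tfrac1N\vone_N^\top\mT^{(t)}=\widehat Q_\tau^{(t)}$ (the paper's Lemma~\ref{lm:auxiliary_seq}), bound the mismatch between $\widehat Q_\tau^{(t)}$ and $Q_\tau^{\overline\pi^{(t)}}$ by the policy consensus error via a Lipschitz estimate for soft Q-functions (Lemma~\ref{lm:difference_soft_Q}), reuse the centralized contraction machinery of \citet{cen2022fast} for the optimality error, and close the argument with a nonnegative linear system $\mathbf{\Omega}^{(t+1)}\leq\mA(\eta)\,\mathbf{\Omega}^{(t)}$ whose spectral radius is shown to be at most $\max\{1-\tfrac{\tau\eta}{2},\tfrac{3+\sigma}{4}\}$ once $\eta\leq\eta_0$ (Lemmas~\ref{lm:linear_sys_ent} and~\ref{lm:bound_rho}; the paper's error vector is four-dimensional because the Cen-et-al.\ machinery itself carries two coupled optimality surrogates, $\Omega_3$ and $\Omega_4$, the latter handled through the performance-improvement lemma).

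One intermediate claim in your sketch is wrong as stated, although it does not break the architecture. The coupling from the policy dispersion $\Theta^{(t)}$ into the Q-tracking dispersion $\Lambda^{(t)}$ is \emph{not} $O(\eta)$: the per-iteration change of the local Q-functions is driven by the per-iteration movement of the local policies, and in the decentralized setting that movement contains the gossip term $(\mW-\mI_N)\log\vxi^{(t)}$, which is proportional to the current consensus error with an $O(M\sqrt{N})$ constant, not to $\eta$. This is visible in the paper's matrix $\mA(\eta)$, whose $(2,1)$ entry is $S\sigma$ with $S\geq 2\alpha M\sqrt{N}$ (cf.~\eqref{eq:S}), bounded away from zero as $\eta\to 0$. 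Because one off-diagonal coupling is $O(1)$ while the other is $O(\eta)$, the eigenvalue perturbation of the consensus block scales as $O(\sqrt{\eta})$ rather than $O(\eta)$, and this is precisely why the admissible step size must scale as $(1-\sigma)^2$ — the dependence you correctly wrote down — rather than $1-\sigma$. So your final threshold is right, but the schematic recursion $\Lambda^{(t+1)}\leq\sigma\Lambda^{(t)}+O(\eta)\big(\Theta^{(t)}+\cdots\big)$ is not provable as written; you would need to carry the $O(1)$ coupling through the spectral-radius computation, exactly as the paper does in Lemma~\ref{lm:bound_rho}.
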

The exact expressions of $C_1$ and $\eta_0$ are specified in Appendix~\ref{sec_app:analysis_EDNPG_exact}. Theorem~\ref{thm:EDNPG_exact_rate} confirms that entropy-regularized FedNPG converges at a linear rate to the optimal regularized policy, which is almost independent of the size of the state-action space, highlighting the positive role of entropy regularization in federated policy optimization. When the network is fully connected, i.e. $\sigma =0$, the iteration complexity of entropy-regularized FedNPG reduces to $\mathcal{O}\Big( \frac{1}{\eta\tau}\log\frac{1}{\varepsilon}\Big)$, matching that of the centralized entropy-regularized NPG established in \citet{cen2021fast}. When the network is less connected, one needs to be more conservative in the choice of learning rates, leading to a higher iteration complexity, as described in the following corollary.

\begin{crl}[Iteration complexity of exact entropy-regularized FedNPG]\label{crl:EDNPFG_iteration_complexity}
  To  reach $\norm{\log\pi_\tau^\star-\log\Bar\pi^{(t)}}_\infty\leq\varepsilon$,  the iteration complexity of entropy-regularized FedNPG is at most 
    \begin{equation}\label{eq:EDNPG_iteration_complexity_eta}
      \widetilde{\mathcal{O}} \left( \max\left\{\frac{2}{\tau\eta},\frac{4}{1-\sigma}\right\}\log\frac{1}{ \varepsilon} \right) 
    \end{equation}
up to logarithmic factors. 
    Especially, when $\eta=\eta_0$, the best iteration complexity becomes 
    $$\widetilde{\mathcal{O}}\left(\left( \frac{N \sigma^2}{(1-\gamma)^7(1-\sigma)^2\tau^2} +\frac{1}{1-\gamma}\right) \log\frac{1}{\tau\varepsilon}\right) .$$
    %
%
\end{crl}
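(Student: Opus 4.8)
The plan is to derive the iteration complexity directly from the last-iterate linear convergence bound on the policy established in Theorem~\ref{thm:EDNPG_exact_rate}, namely $\|\log\pi_\tau^\star-\log\bar\pi^{(t)}\|_\infty\leq\frac{2C_1}{\tau}\rho(\eta)^t$. To guarantee $\|\log\pi_\tau^\star-\log\bar\pi^{(t)}\|_\infty\leq\varepsilon$, it suffices to force $\frac{2C_1}{\tau}\rho(\eta)^t\leq\varepsilon$, which upon taking logarithms and rearranging yields $t\geq\frac{1}{-\log\rho(\eta)}\log\frac{2C_1}{\tau\varepsilon}$. The task thus reduces to lower bounding the contraction exponent $-\log\rho(\eta)$, after which the complexity is read off directly.

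Second, I would invoke the explicit bound $\rho(\eta)\leq\max\{1-\frac{\tau\eta}{2},\frac{3+\sigma}{4}\}$ from Theorem~\ref{thm:EDNPG_exact_rate} together with the elementary inequality $-\log(1-x)\geq x$ valid for $x\in[0,1)$. Writing $\frac{3+\sigma}{4}=1-\frac{1-\sigma}{4}$ and applying this inequality to each branch of the maximum gives $-\log\rho(\eta)\geq\min\{\frac{\tau\eta}{2},\frac{1-\sigma}{4}\}$, hence $\frac{1}{-\log\rho(\eta)}\leq\max\{\frac{2}{\tau\eta},\frac{4}{1-\sigma}\}$. Absorbing $\log\frac{2C_1}{\tau\varepsilon}=\widetilde{\mathcal{O}}(\log\frac{1}{\varepsilon})$ into the $\widetilde{\mathcal{O}}$ notation then yields the general iteration complexity \eqref{eq:EDNPG_iteration_complexity_eta}.

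Third, for the sharpened bound at $\eta=\eta_0$, I would substitute $\eta_0=\min\{\frac{1-\gamma}{\tau},\mathcal{O}(\frac{(1-\gamma)^7(1-\sigma)^2\tau}{\sigma^2 N})\}$ and expand $\frac{1}{\eta_0}=\max\{\frac{\tau}{1-\gamma},\mathcal{O}(\frac{\sigma^2 N}{(1-\gamma)^7(1-\sigma)^2\tau})\}$, so that $\frac{2}{\tau\eta_0}=\max\{\frac{2}{1-\gamma},\mathcal{O}(\frac{\sigma^2 N}{(1-\gamma)^7(1-\sigma)^2\tau^2})\}$. Combining with the $\frac{4}{1-\sigma}$ term inside the outer maximum, I would argue that in the relevant parameter regimes the network-dependent term $\frac{\sigma^2 N}{(1-\gamma)^7(1-\sigma)^2\tau^2}$ dominates $\frac{1}{1-\sigma}$ (since $\frac{1}{1-\sigma}\lesssim\frac{\sigma^2 N}{(1-\gamma)^7(1-\sigma)^2\tau^2}$ once $\sigma$ is bounded away from $0$, and otherwise $\frac{1}{1-\sigma}=\mathcal{O}(1)\lesssim\frac{1}{1-\gamma}$), so the $\frac{4}{1-\sigma}$ contribution is subsumed. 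Rewriting the remaining maximum as a sum then delivers $\widetilde{\mathcal{O}}((\frac{N\sigma^2}{(1-\gamma)^7(1-\sigma)^2\tau^2}+\frac{1}{1-\gamma})\log\frac{1}{\tau\varepsilon})$.

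Overall the computation is essentially routine, following the standard recipe of converting a linear rate into a logarithmic iteration count. The only points requiring care are bookkeeping of the logarithmic factors (the constants $\log C_1$ and $\log\frac{1}{\tau}$ absorbed by $\widetilde{\mathcal{O}}$) and the case split induced by the minimum defining $\eta_0$, which is precisely what produces the additive $\frac{1}{1-\gamma}$ term alongside the leading network-dependent term. The one mild obstacle I anticipate is verifying cleanly that the consensus-driven term $\frac{4}{1-\sigma}$ is indeed dominated after substituting $\eta_0$, so that it does not surface as a separate summand in the final complexity.
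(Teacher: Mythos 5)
Your proposal is correct and follows essentially the same route the paper intends: the corollary is an immediate consequence of the linear convergence bound $\norm{\log\pi_\tau^\star-\log\Bar\pi^{(t)}}_\infty\leq \frac{2}{\tau}\rho(\eta)^t\|\mathbf{\Omega}^{(0)}\|_2$ from Theorem~\ref{thm:distributed_exact_convergence}, converted to an iteration count via $-\log\rho(\eta)\geq\min\{\tau\eta/2,(1-\sigma)/4\}$ (i.e., $-\log(1-x)\geq x$), followed by substitution of $\eta_0$ and the observation that $\frac{4}{1-\sigma}$ is absorbed by the two surviving terms. Your case split showing the consensus term is dominated (for $\sigma$ bounded away from $0$ it is dwarfed by the network term since $N\geq 1$, $\tau\leq 1$, $(1-\gamma)^7\leq 1$; otherwise it is $\mathcal{O}(1)\lesssim\frac{1}{1-\gamma}$) is exactly the bookkeeping needed, so there is no gap.
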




\paragraph{Convergence with inexact policy evaluation.} 
Last but not the least, we present the informal convergence results of entropy-regularized FedNPG with inexact policy evaluation, whose formal version can be found in Appendix~\ref{sec_app:analysis_EDNPG_inexact}.

 \begin{thm}[Global linear convergence of inexact entropy-regularized FedNPG (informal)]\label{thm:EDNPG_inexact_convergence_informal}
     Suppose that an estimate $q_{\tau,n}^{\pi_n^{(t)}}$ are used in replace of $Q_{\tau,n}^{\pi_n^{(t)}}$ in Algorithm~\ref{alg:EDNPG_exact}. Under the assumptions of Theorem~\ref{thm:EDNPG_exact_rate}, we have
\begin{equation}\label{eq:EDNPG_linear_convergence_inexact} 
\begin{split}
\big\| \Bar{Q}_\tau^{(t)}-Q_\tau^\star\big\|_\infty &\leq 2\gamma \Big(C_1\rho(\eta)^t+C_2\max_{n\in[N], t\in[T]} \big\|Q_{\tau,n}^{\pi_n^{(t)}}-q_{\tau,n}^{\pi_n^{(t)}} \big\|_\infty \Big)\,,\\
    \big\| \log\pi_\tau^\star-\log\Bar\pi^{(t)} \big\|_\infty& \leq \frac{2}{\tau}\Big(C_1\rho(\eta)^t+C_2\max_{n\in[N],t\in[T]} \big\| Q_{\tau,n}^{\pi_n^{(t)}}-q_{\tau,n}^{\pi_n^{(t)}} \big\|_\infty \Big)\,,
\end{split}
\end{equation}
where  $\Bar{Q}_\tau^{(t)}:=Q_\tau^{\Bar\pi^{(t)}}$, $\rho(\eta)\leq\max\{1-\frac{\tau\eta}{2}, \frac{3+\sigma}{4}\}< 1$, and $C_1$, $C_2$ are problem-dependent constants.
\end{thm}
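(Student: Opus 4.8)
The plan is to reduce the inexact guarantee to a controlled perturbation of the linear-system argument underpinning the exact result in Theorem~\ref{thm:EDNPG_exact_rate}. The exact proof tracks the nonnegative vector
$$\vv^{(t)} \coloneqq \begin{pmatrix} \normbig{\log\pi_\tau^\star - \log\Bar\pi^{(t)}}_\infty \\ \max_{n\in[N]}\normbig{\log\pi_n^{(t)}-\log\Bar\pi^{(t)}}_\infty \\ \max_{n\in[N]}\normbig{T_n^{(t)}-\Bar{T}^{(t)}}_\infty \end{pmatrix},$$
whose entries are the optimality gap, the policy consensus error, and the Q-tracking consensus error (here $\Bar{T}^{(t)}=\frac{1}{N}\sum_{n}T_n^{(t)}$), and establishes an entrywise contraction $\vv^{(t+1)} \le \mA(\eta)\,\vv^{(t)}$ for a $3\times 3$ nonnegative matrix $\mA(\eta)$ with spectral radius $\rho(\mA(\eta)) = \rho(\eta) \le \max\{1-\tfrac{\tau\eta}{2},\tfrac{3+\sigma}{4}\}<1$. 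First I would reconstruct this recursion, isolating the two places where the true soft Q-functions enter: the log-policy mixing update~\eqref{eq:update_marl} and the Q-tracking update~\eqref{eq:Q_tracking}.

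Next I would write $q_{\tau,n}^{\pi_n^{(t)}} = Q_{\tau,n}^{\pi_n^{(t)}} + \Delta_n^{(t)}$ with $\normbig{\Delta_n^{(t)}}_\infty \le \varepsilon_{\mathsf{eval}} \coloneqq \max_{n\in[N],t\in[T]}\normbig{Q_{\tau,n}^{\pi_n^{(t)}}-q_{\tau,n}^{\pi_n^{(t)}}}_\infty$ and substitute it wherever the Q-estimate is used, re-deriving each of the three recursion inequalities. The crucial structural point is that the perturbation is \emph{purely additive}: the $\mW$-mixing and the soft-max geometry that generate $\mA(\eta)$ are untouched, so $\mA(\eta)$ and hence $\rho(\eta)$ coincide with the exact case, while each inequality merely acquires an extra term proportional to $\varepsilon_{\mathsf{eval}}$. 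This produces the perturbed system $\vv^{(t+1)} \le \mA(\eta)\,\vv^{(t)} + \vb\,\varepsilon_{\mathsf{eval}}$ for an explicit nonnegative vector $\vb$.

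Unrolling gives $\vv^{(t)} \le \mA(\eta)^{t}\vv^{(0)} + \big(\sum_{k=0}^{t-1}\mA(\eta)^{k}\big)\vb\,\varepsilon_{\mathsf{eval}}$. Since $\rho(\mA(\eta))<1$, the homogeneous term decays like $\rho(\eta)^{t}$ and contributes the $C_1\rho(\eta)^t$ piece, while the geometric series is bounded by $(\mI-\mA(\eta))^{-1}\vb$, a finite nonnegative vector whose magnitude defines $C_2$. Reading off the first coordinate yields $\normbig{\log\pi_\tau^\star-\log\Bar\pi^{(t)}}_\infty \le \frac{2}{\tau}(C_1\rho(\eta)^t + C_2\varepsilon_{\mathsf{eval}})$ after absorbing the $1/\tau$ temperature scaling that relates log-policies to soft Q-values via $\pi_\tau^\star(a|s)\propto \exp(Q_\tau^\star(s,a)/\tau)$; propagating this through the performance-difference relation linking $\Bar{Q}_\tau^{(t)}=Q_\tau^{\Bar\pi^{(t)}}$ to $\log\Bar\pi^{(t)}$ then gives the companion bound on $\normbig{\Bar{Q}_\tau^{(t)}-Q_\tau^\star}_\infty$.

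The main obstacle I anticipate is controlling how $\varepsilon_{\mathsf{eval}}$ propagates through the gradient-tracking recursion~\eqref{eq:Q_tracking}, since $\mT^{(t)}$ accumulates successive differences $q_{\tau,n}^{\pi_n^{(t+1)}}-q_{\tau,n}^{\pi_n^{(t)}}$ and one must ensure these contribute a bounded, non-accumulating additive term rather than a drift that grows with $t$. The resolution exploits double stochasticity of $\mW$: summing~\eqref{eq:Q_tracking} telescopes so that the tracking average satisfies $\Bar{T}^{(t)}=\frac{1}{N}\sum_n q_{\tau,n}^{\pi_n^{(t)}}$ exactly, which deviates from the true averaged soft Q-function by at most $\varepsilon_{\mathsf{eval}}$ at every $t$, while the $\sigma$-contraction~\eqref{eq:property_W} keeps the consensus component of $\mT^{(t)}$ bounded. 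Verifying that the resulting $\vb$ and $C_2$ stay finite and scale correctly in $N$, $\sigma$, $1-\gamma$, and $\tau$ under the same step-size ceiling $\eta\le\eta_0$ is the delicate bookkeeping that the formal proof in the appendix must carry out.
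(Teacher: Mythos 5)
Your outer framework coincides with the paper's: keep the exact-case matrix $\mA(\eta)$ (hence $\rho(\eta)$) intact, show the evaluation error enters only as an additive nonnegative vector proportional to $\norm{\ve}_\infty$, unroll $\mathbf{\Omega}^{(t)}\leq \mA(\eta)^t\mathbf{\Omega}^{(0)}+\sum_{s=1}^{t}\mA(\eta)^{t-s}\vb(\eta)$, and bound the geometric series by $1/(1-\rho(\eta))$ to obtain $C_2$. Your observation that double stochasticity forces the tracking average to equal the average of the \emph{inexact} estimates, so that no drift accumulates in $\mT^{(t)}$, is also exactly how Lemma~\ref{lm:linear_sys_ent_approx} is organized (the tracking consensus $v^{(t)}$ is measured against $\widehat q_\tau^{(t)}$ rather than $\widehat Q_\tau^{(t)}$).

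There is, however, a genuine gap in the base recursion you intend to ``reconstruct'': with \emph{normalized} log-policies as coordinates, the claimed one-step linear contraction does not hold, in the exact case or the inexact one. The update \eqref{eq:update_marl} carries a softmax normalization, and comparing two softmax transforms in log-space costs a factor of $2$ (Lemma~\ref{lm:log_policy}); the best direct bound on your first coordinate is therefore of the form $\normbig{\log\bar\pi^{(t+1)}-\log\pi_\tau^\star}_\infty\leq 2\alpha\normbig{\log\bar\pi^{(t)}-\log\pi_\tau^\star}_\infty+\frac{2(1-\alpha)}{\tau}\normbig{\widehat Q_\tau^{(t)}-Q_\tau^\star}_\infty$ with $\alpha=1-\frac{\eta\tau}{1-\gamma}$, and $2\alpha>1$ throughout the admissible step-size range, so this coordinate does not contract (the agent-consensus coordinate similarly picks up $2\sigma$ in place of $\sigma$). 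This is precisely why the paper works with the unnormalized auxiliary sequences $\vxi^{(t)},\overline\xi^{(t)}$ of \eqref{eq:def_vxi}--\eqref{eq:def_avg_xi}, whose logarithms evolve linearly with no normalization, and tracks a \emph{four}-dimensional error vector: the auxiliary consensus errors $\Omega_1,\Omega_2$, the distance $\Omega_3=\normbig{Q_\tau^\star-\tau\log\overline\xi^{(t)}}_\infty$, and the one-sided error $\Omega_4=\max\{0,-\min_{s,a}(\overline q_\tau^{(t)}(s,a)-\tau\log\overline\xi^{(t)}(s,a))\}$. The fourth coordinate is not bookkeeping: closing the recursion for $\Omega_3$ requires $\normbig{Q_\tau^\star-\overline Q_\tau^{(t)}}_\infty\leq\gamma(\Omega_3^{(t)}+\Omega_4^{(t)})$, and the recursion for $\Omega_4$ rests on the performance-improvement lemma, whose inexact version (Lemma~\ref{lm:performance_improvement_inexact}) contributes an additional $\frac{2\gamma}{1-\gamma}\norm{\ve}_\infty$ term --- an error channel your plan misses entirely, since it enters neither through the mixing step nor through Q-tracking. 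The final bounds are then read off as $\normbig{\log\pi_\tau^\star-\log\bar\pi^{(t)}}_\infty\leq\frac{2}{\tau}\Omega_3^{(t)}$ and $\normbig{\overline Q_\tau^{(t)}-Q_\tau^\star}_\infty\leq\gamma(\Omega_3^{(t)}+\Omega_4^{(t)})$, not by propagating the policy bound through a Lipschitz or performance-difference relation as you propose; that route would inflate the constants by $M\asymp 1/(1-\gamma)^2$ and, more importantly, presupposes the policy bound that your three-coordinate system cannot deliver.
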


\section{Federated NAC methods for multi-task RL}
\label{sec:nac_ext}

In this section, motivated by the design and analysis of FedNPG, we go beyond the tabular setting and exact policy evaluation, by proposing a federated natural actor-critic (\alg) method  with function approximation and stochastic policy evaluation. Specifically,   we consider the policy with function approximation under softmax parameterization of the following form:
\begin{equation}\label{eq:param_fa}
    f_\xi(a|s)=\frac{\exp(\phi^\top(s,a)\vxi)}{\sum_{a'\in\A}\exp(\phi^\top(s,a')\vxi)},
\end{equation}
for all $(s,a)\in\S\times\A$ and $\vxi\in\R^p$, where
$\phi: \S\times\A\rightarrow \R^p$ is a known feature map. We assume $\phi$ is bounded over $\S\times\A$, i.e., there exists $C_\phi>0$ such that 
$$\norm{\phi(s,a)}_2\leq C_\phi$$ holds for all $(s,a)\in\S\times\A$. Following~~\citet{agarwal2021theory,yuan2022linear}, given any $\vw\in\R^p$, $Q:\S\times\A\rightarrow \R$ and  probability distribution $\zeta\in\Delta(\S\times\A)$ over the state-action space, we define the \textit{function approximation error} $\ell(\vw,Q,\zeta)$ as follows:
\begin{equation}\label{eq:l}
    \ell(\vw,Q,\zeta)\coloneqq  \E_{(s,a)\sim \zeta}\left[\left(\vw^\top\phi(s,a)-Q(s,a)\right)^2\right].
\end{equation}
By searching for $\vw$ that minimizes $\ell(\vw,Q,\zeta)$, it approximates $Q(s,a)$ using the feature map $\phi$ with respect to the distribution $\zeta$.

\begin{algorithm}[t]
    \caption{Federated Natural Actor-Critic (\alg)}
    \label{alg:actor_critic}
    \begin{algorithmic}[1] 
        \STATE \textbf{Input:}  
        number of actor iterations $T$, number of critic iterations $K$,
        actor learning rate $\alpha$, 
        critic learning rate $\beta$, discounted factor $\gamma\in [0,1)$
        \STATE \textbf{Initialization:} initial state-action distribution $\nu$, actor parameter $\vxi^{(0)}=(\vxi^{(0)\top}_{1},\cdots,\vxi^{(0)\top}_{N})^\top\in\R^{N\times p}$, $\vh^{(-1)}=\vw^{(-1)}=\vzero\in\R^{N\times p}$
        
        \FOR{$t=0, \cdots, T-1$} 
            \STATE Critic update: $\vw^{(t)}_n=$ \, \critic ($K$, $\nu$, $\vxi^{(t)}_n$, $\gamma$, $\beta$, $r_n$), $n\in[N]$ (Algorithm \ref{alg:critic})
            
            \STATE Update the critic parameter for estimating the global Q-function:
            \begin{equation}\label{eq:GT}
                \vh^{(t)}=\mW\left(\vh^{(t-1)}+\vw^{(t)}-\vw^{(t-1)}\right)
            \end{equation}
            
            \STATE Actor update:
            \begin{equation}\label{eq:actor_update}
                \vxi^{(t+1)}=\mW\left(\vxi^{(t)}+\alpha \vh^{(t)}\right)
            \end{equation}
        \ENDFOR 
    \end{algorithmic}
\end{algorithm}

\subsection{Algorithm design}

Our proposed federated NAC method {\alg} could be seen as a decentralized version of Q-NPG method~\citep{agarwal2021theory,yuan2022linear}, which we briefly review as follows.

\paragraph{Q-NPG method.} Q-NPG is a sample version of NPG with function approximation which is suitable for the case where $\S$ or $\A$ is large or infinite. 
We consider the policy with function approximation under softmax parameterization \eqref{eq:param_fa}.

Given an approximate solution $\vw^{(t)}$ for minimizing the function approximation error $\ell(\vw,Q^{f_{\xi^{(t)}}},\tilde d_\nu^{f_{\xi^{(t)}}})$ (see \eqref{eq:l}), the Q-NPG update rule $\vxi^{(t+1)}=\vxi^{(t)}+\alpha \vw^{(t)}$, when plugged in parameterization~\eqref{eq:param_fa}, results in the following policy update rule when we set $\alpha=\eta/(1-\gamma)$:
\begin{equation}\label{eq:Q_NPG_policy_update}
    f^{(t+1)}(a|s)\propto f^{(t)}(a|s)\exp\left(\frac{\eta\phi^\top(s,a)\vw^{(t)}}{1-\gamma}\right)\,,
\end{equation}
which could be seen as the function approximation version of the update rule~\eqref{eq:update_npg_vanilla} of vanilla NPG method.

\paragraph{Federated NAC method.}
Let us now discuss the high-level design of FedNAC, which is presented in Algorithm~\ref{alg:actor_critic}. At the $t$-th iteration ($t =0,\ldots,T-1$), denote the actor (concerning the policies) parameters of all agents as $\vxi^{(t)} =( \vxi_1^{(t)}, \ldots, \vxi_N^{(t)} )^{\top}\in \mathbb{R}^{N\times p}$, and the critic parameters of all agents as $\vw^{(t)} =( \vw_1^{(t)}, \ldots, \vw_N^{(t)} )^{\top}\in \mathbb{R}^{N\times p}$ (concerning the local Q-values) and $\vh^{(t)} =( \vh_1^{(t)}, \ldots, \vh_N^{(t)} )^{\top}\in \mathbb{R}^{N\times p}$ (concerning the global Q-values).  
\begin{itemize}
\item First, the critic parameter $\vw_n^{(t)}$ is locally updated at each agent by aiming to minimize $ \ell(\vw,Q_n^{(t)}, \tilde{d}_{n}^{(t)})$ (cf.~\eqref{eq:l}) with gradient descent, where $Q_n^{(t)}$ is the local Q-function of the local policy $f_{\xi_n^{(t)}}$, and $\tilde{d}_n^{(t)}$ is the state-action visitation distribution induced by the local policy $f_{\xi_n^{(t)}}$ and an initial state-action distribution $\nu$ (determined from the data sampling mechanism, cf.~\eqref{eq:sa_vis_extend}). However, since $Q_n^{(t)}$ is not directly available, it needs to be estimated from samples. Therefore, the critic update takes $K$ steps of stochastic gradient descent with critic learning rate $\beta$, given by
$$\widetilde{\vw}_{k+1}=\widetilde{\vw}_k-\beta \big(\widetilde{\vw}_k^\top \phi(s_k,a_k)-\widehat{Q}_{\xi}(s_k,a_k)\big)\phi(s_k,a_k), $$
for $k=0,\ldots, K-1$, where $(s_k, a_k)$ is sampled on the local policy $f_{\xi_n^{(t)}}$, and $\widehat{Q}_{\xi}(s_k,a_k)$ is a careful estimate of the Q-value using a trajectory with expected length $1/(1-\gamma)$ (see Algorithm~\ref{alg:sampler}  adopted from \citet[Lemma~4]{yuan2022linear}), and $\widetilde{\vw}_{0} = \bf{0}$ for simplicity. As a consequence, in line~4 of Algorithm~\ref{alg:critic}, we have 
\begin{equation}\label{eq:unbias_gd}
    \E \left[\widehat{\nabla}_w \ell(\widetilde{\vw}_k,\widehat{Q}^\pi,\tilde{d}^{f_\xi})\right]=\nabla_w \ell(\widetilde{\vw}_k,\widehat{Q}^\pi,\tilde{d}^{f_\xi})\,.
\end{equation}
The final critic is updated as $\vw_n^{(t)} = \frac{1}{K}\sum_{k=1}^K \widetilde{\vw}_k$. The total sample complexity of the critic update per iteration is then on the order of $K/(1-\gamma)$.

\item Next, the critic parameter $\vh_n^{(t)}$ for estimating the global Q-function can then be estimated by averaging with the neighbors with the Q-tracking term, given by
$                \vh^{(t)}=\mW\left(\vh^{(t-1)}+\vw^{(t)}-\vw^{(t-1)}\right). $

\item Finally, the actor parameter $\vxi_n^{(t)}$ can be updated via averaging with the neighbors along with the policy gradient informed by $\vh_n^{(t)}$, given by
$    \vxi^{(t+1)}=\mW\left(\vxi^{(t)}+\alpha \vh^{(t)}\right), $
where $\alpha$ is the learning rate of the actor. 
\end{itemize}
Note that the sample complexity of FedNAC is on the order of $KT/(1-\gamma)$.  An important aspect of the FedNAC method is that the policy is updated using trajectory data collected via executing the learned policy, which is closer to practice and more challenging to learn than using  the generative model. 

\begin{algorithm}[ht]
    \caption{\critic $(K, \nu, \vxi, \gamma, \beta, r)$: sample-based regression solver to minimize $\ell(\vw,Q_n^{(t)},\tilde d_n^{(t)})$ }
    \label{alg:critic}
    \begin{algorithmic}[1] 
        \STATE \textbf{Initialize:} critic parameter $w_0\in\R^p$
        \FOR{$k=0,\cdots, K-1$}
            \STATE Sampling: $(s_k, a_k), \widehat{Q}^\pi(s_k,a_k)=$\sampler($\nu, f_\xi, \gamma$, $r$) (Algorithm~\ref{alg:sampler}) 
            \STATE Compute the stochastic gradient estimator of $L_Q$: 
            \begin{equation}\label{eq:sg_LQ}
                \widehat{\nabla}_w \ell(\widetilde{\vw}_k,\widehat{Q}^\pi,\tilde{d}^{f_\xi})=2\left(\widetilde{\vw}_k^\top \phi(s_k,a_k)-\widehat{Q}^\pi(s_k,a_k)\right)\phi(s_k,a_k)
            \end{equation}
            \STATE Critic Update: $\widetilde{\vw}_{k+1}=\widetilde{\vw}_k-\beta \widehat{\nabla}_w \ell(\widetilde{\vw}_k,\widehat{Q}^\pi,\tilde{d}^{f_\xi})$
        \ENDFOR
        \STATE \textbf{Output:} $\vw_{\text{out}}=\frac{1}{K}\sum_{k=1}^K \widetilde{\vw}_k$
    \end{algorithmic}
\end{algorithm}

\begin{algorithm}[ht]
    \caption{\sampler $(\nu, \pi, \gamma, r)$}
    \label{alg:sampler}
    \begin{algorithmic}[1] 
        \STATE \textbf{Initialize:} $(s_0,a_0)\sim \nu$, time step $h,t=0$, variable $X\sim$ Bernoulli($\gamma$)
        \WHILE{$X=1$}
            \STATE Sample $s_{h+1}\sim P(\cdot|s_h,a_h)$
            \STATE Sample $a_{h+1}\sim \pi(\cdot|s_{h+1})$
            \STATE $h\leftarrow h+1$
            \STATE $X\sim$ Bernoulli($\gamma$)
        \ENDWHILE
        \STATE Set $\widehat{Q}^\pi(s_h,a_h)=r(s_h,a_h)$, $X\sim$ Bernoulli($\gamma$), $t=h$
        \WHILE{$X=1$}
            \STATE Sample $s_{t+1}\sim P(\cdot|s_t,a_t)$
            \STATE Sample $a_{t+1}\sim \pi(\cdot|s_{t+1})$
            \STATE $\widehat{Q}^\pi(s_h,a_h)\leftarrow \widehat{Q}^\pi(s_h,a_h)+r(s_{t+1},a_{t+1})$
            \STATE $t\leftarrow t+1$
            \STATE $X\sim$ Bernoulli($\gamma$)
        \ENDWHILE
        \STATE \textbf{Output:} $(s_h,a_h)$ and $\widehat{Q}^\pi(s_h,a_h)$
    \end{algorithmic}
\end{algorithm}

\subsection{Theoretical guarantees}
\label{sec:results_FedNAC}
We first state the assumptions that are needed to guarantee the convergence of Algorithm~\ref{alg:actor_critic}, which are all commonly used in the literature, e.g., \citet{yuan2022linear,agarwal2021theory}. To begin, we require the covariance matrix of the feature map induced by the initial state-action distribution $\nu$ satisfies the following assumption to guarantee the convergence of the critic.
\begin{asmp}[PSD of the covariance matrix of the feature map]\label{asmp:psd}
There exists $\mu>0$ such that
   \begin{equation}\label{eq:qsd}
       \E_{(s,a)\sim\nu}\left[\phi(s,a)\phi^\top(s,a)\right]=\Sigma_\nu\geq\mu\mI.
    \end{equation}
\end{asmp}

We also need to ensure that the Q-values can be well approximated by the linear function approximation using feature map $\phi(s,a)$, which is captured next.
\begin{asmp}[Bounded approximation error]\label{asmp:eps_approx}
    For each $n\in[N]$, there exists $\approxerror^n\geq 0$ such that for all $t\in\NN$, it holds that
     $   \E\left[\ell\left(\vw_{\star,n}^{(t)}, Q_n^{(t)},\tilde{d}_n^{(t)}\right)\right]\leq \approxerror^n$,
where $ \vw_{\star,n}^{(t)}\coloneqq\arg\min_{\vw} \ell\left(\vw_{\star,n}^{(t)}, Q_n^{(t)},\tilde{d}_n^{(t)}\right)$.
\end{asmp}
We denote the average approximation error as $\bar\varepsilon_{\text{approx}}=\frac{1}{N}\sum_{n=1}^N\varepsilon_{\text{approx}}^n$.
Similar as \citet{yuan2022linear}, we need the following assumption that bounds the transfer errors  due to distribution shifts. 
 

\begin{asmp}[Bounded transfer error]\label{asmp:bound_transfer} There exists $C_\nu>0$ such that for all $n\in[N]$ and $t\in\NN$, it holds that
   $     \E_{(s,a)\sim \tilde d_n^{(t)}}\left[\left(\frac{h^{\pi}(s,a)}{\tilde d_n^{(t)}(s,a)}\right)^2\right]\leq C_\nu$,
  where $h^{\pi}(s,a)$ is the state-action visitation distribution induced by any policy $\pi$ from initial state distribution $\rho$.
\end{asmp}
Note that if we choose $\nu(s,a)>0$ for all $(s,a)\in\S\times\A$, then Assumption~\ref{asmp:bound_transfer} is guaranteed to hold true (see Lemma~\ref{lm:ub_C_nu} in Appendix~\ref{sec_app:extension}).
We are now ready to state the convergence guarantee, whose formal version and proof could be found in Appendix~\ref{sec_app:extension}.

\begin{thm}[Convergence rate of Algorithm~\ref{alg:actor_critic} (informal)]\label{thm:FAC_convergence_informal}
Let $\vxi_1^{(0)}=\cdots=\vxi_N^{(0)}$ in \alg. Denoting  $\bar\vxi^{(t)}\coloneqq \frac{1}{N}\sum_{n=1}^N \vxi_n^{(t)}$, and $\bar f^{(t)} \coloneqq f_{\bar\xi^{(t)}}$ as the average policy. 
Then under Assumption~\ref{asmp:mixing matrix}, \ref{asmp:psd}, \ref{asmp:eps_approx} and \ref{asmp:bound_transfer}, with appropriately chosen learning rates $\alpha$ and $\beta$, as long as the number of actor iterations satisfies
\small
\begin{align*}
    T\gtrsim\max\bigg\{\frac{\sigma}{\varepsilon^{3/2}(1-\gamma)^{17/4}(1-\sigma)^{3/2}},\frac{1}{\varepsilon(1-\gamma)},
    \frac{\sigma^{1/4}}{\varepsilon^{3/4}(1-\sigma)^{3/8}(1-\gamma)^{7/8}N^{3/8}},\frac{\sigma^4}{(1-\gamma)^2(1-\sigma)^6}\bigg\}
\end{align*}
\normalsize
and the number of critic iterations satisfies
$K=\mathcal{O}\left(\frac{1}{(1-\gamma)^6\varepsilon^2}\right)$,
it holds that \vspace{-0.1in}
\begin{equation}\label{eq:fednac_opt}
V^{\star}(\rho)-\frac{1}{T}\sum_{t=0}^{T-1} V^{\bar f^{(t)}}(\rho)\lesssim \varepsilon+\frac{\bar\varepsilon_{approx}}{1-\gamma}. \vspace{-0.1in}
\end{equation}
\end{thm}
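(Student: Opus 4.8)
The plan is to reduce the decentralized FedNAC dynamics to an \emph{inexact centralized} Q-NPG recursion on the averaged iterate $\bar f^{(t)}=f_{\bar\vxi^{(t)}}$, and then to control the three resulting error sources---function approximation, critic sampling, and consensus---separately. First I would exploit the double stochasticity of $\mW$ (Assumption~\ref{asmp:mixing matrix}). Writing $\bar\vw^{(t)}=\frac1N\sum_n\vw_n^{(t)}$ and $\bar\vh^{(t)}=\frac1N\sum_n\vh_n^{(t)}$ and left-multiplying \eqref{eq:GT} and \eqref{eq:actor_update} by $\frac1N\vone_N^\top$, the mixing factor drops out; the tracking recursion then telescopes to $\bar\vh^{(t)}=\bar\vw^{(t)}$ (using $\vh^{(-1)}=\vw^{(-1)}=\vzero$), and the averaged actor update becomes $\bar\vxi^{(t+1)}=\bar\vxi^{(t)}+\alpha\bar\vw^{(t)}$. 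Hence $\bar f^{(t)}$ evolves exactly like a centralized Q-NPG step of the form \eqref{eq:Q_NPG_policy_update} driven by the averaged critic direction $\bar\vw^{(t)}$, with $\alpha=\eta/(1-\gamma)$. This structural reduction is what lets the NPG machinery run on a single ``virtual'' policy.

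Next I would invoke the mirror-descent (three-point) regret lemma for the softmax Q-NPG update. Comparing against $\pi^\star$ with $s\sim d_\rho^{\pi^\star}$ and summing the telescoping KL potential yields a bound of the shape
\[
\frac1T\sum_{t=0}^{T-1}\big(V^{\star}(\rho)-V^{\bar f^{(t)}}(\rho)\big)\lesssim \frac{\log|\A|}{\alpha(1-\gamma)T}+\frac1T\sum_{t=0}^{T-1}\mathsf{err}^{(t)},
\]
where $\mathsf{err}^{(t)}$ measures, under the comparator distribution, the gap between the true global advantage at $\bar f^{(t)}$ and its linear surrogate $\phi^\top\bar\vw^{(t)}$. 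I would then split $\mathsf{err}^{(t)}$ into three pieces. (i) The \emph{approximation} gap between $\phi^\top\vw_{\star,n}^{(t)}$ and $Q_n^{(t)}$, bounded in the $\tilde d_n^{(t)}$-weighted sense by Assumption~\ref{asmp:eps_approx} and transported to $d_\rho^{\pi^\star}$ through the transfer-error constant $C_\nu$ of Assumption~\ref{asmp:bound_transfer}, producing the $\meanapproxerror/(1-\gamma)$ term. (ii) The \emph{statistical} gap $\vw_n^{(t)}-\vw_{\star,n}^{(t)}$: the PSD condition $\Sigma_\nu\geq\mu\mI$ (Assumption~\ref{asmp:psd}) makes $\ell(\cdot,Q,\nu)$ strongly convex, so the $K$ averaged SGD steps of Algorithm~\ref{alg:critic}---with unbiased gradients guaranteed by \eqref{eq:unbias_gd} and the $\sampler$ of Algorithm~\ref{alg:sampler}---contribute a weighted-$\ell_2$ error of order $\mathcal{O}(1/\sqrt{K})$, driven below $\varepsilon$ by $K=\mathcal{O}(1/((1-\gamma)^6\varepsilon^2))$. (iii) The \emph{consensus} gap due to decentralization.

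The heart of the argument---and the step I expect to be the main obstacle---is the consensus analysis, because it is doubly coupled. I would track the deviations $\|\vxi_n^{(t)}-\bar\vxi^{(t)}\|_2$ and $\|\vh_n^{(t)}-\bar\vh^{(t)}\|_2$ via a coupled linear recursion, using the contraction $\|\mW\vx-\bar x\vone_N\|_2\leq\sigma\|\vx-\bar x\vone_N\|_2$ from \eqref{eq:property_W}. The subtlety is that each agent's critic fits $Q_n$ under its \emph{own} policy $f_{\vxi_n^{(t)}}$ rather than under $\bar f^{(t)}$, so $\bar\vw^{(t)}$ is a biased surrogate for the best global direction; I would control this bias by showing $f_{\vxi_n^{(t)}}$ and its Q-function are Lipschitz in $\vxi_n^{(t)}$ (softmax smoothness together with $\|\phi\|_2\leq C_\phi$), so the bias is itself $\mathcal{O}(\text{actor consensus error})$. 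Feeding this back closes a small linear system whose spectral radius is below one provided $\alpha$ is chosen small relative to $1-\sigma$; solving it bounds both consensus errors by $\mathcal{O}(\alpha\sigma/(1-\sigma))$ up to geometrically decaying transients.

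Finally I would assemble the three contributions into the regret bound, choose $\alpha$ (equivalently $\eta$) small enough to guarantee both the consensus contraction and the step-size conditions of the NPG lemma, take $K$ large enough for the critic, and read off the stated lower bound on $T$ by balancing the leading $1/(\alpha T)$ term against the consensus terms (which carry the $\sigma/(1-\sigma)$ and $N$ dependence) and the critic transient---arriving at \eqref{eq:fednac_opt}. Beyond the consensus coupling, the remaining care is in the trajectory-based critic analysis, where the random-length rollouts of Algorithm~\ref{alg:sampler} must be shown to yield bounded-variance unbiased estimates so that the strongly convex SGD rate applies; this part largely follows the centralized template, whereas the coupled consensus recursion and the non-consensual critic targets are the genuinely new difficulties.
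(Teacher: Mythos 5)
Your proposal follows essentially the same route as the paper's own proof: averaging the updates via double stochasticity to obtain a centralized Q-NPG recursion on $\bar\vxi^{(t)}$ (Lemma~\ref{lm:consensus}), a three-point/mirror-descent analysis whose error is split into approximation, statistical, and consensus parts transported via $C_\nu$ and the mismatch coefficient (Lemma~\ref{lm:performance_improvement_fednac}), a coupled linear consensus recursion closed through Lipschitzness of the Q-function in $\vxi$ (Lemmas~\ref{lm:L-Q} and~\ref{lm:linear_sys_nac}), and a final Lyapunov assembly with step-size tuning. The only cosmetic difference is the critic: the paper invokes the Bach--Moulines constant-stepsize averaged-SGD result for least squares (Theorem~\ref{thm:rate_critic}) to get $\mathcal{O}(1/K)$ excess risk rather than a generic strongly-convex SGD argument, but both yield the required $\sqrt{\bar\varepsilon_{\text{stat}}}=\mathcal{O}(1/\sqrt{K})$ contribution.
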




In the server-client setting when $\sigma=0$, to reach \eqref{eq:fednac_opt}, it suffices to choose $T=\mathcal{O}\left(\frac{1}{(1-\gamma)\varepsilon}\right)$ and $K=\mathcal{O}\left(\frac{1}{(1-\gamma)^6\varepsilon^2}\right)$, leading to a total sample complexity of $KT/(1-\gamma) = \mathcal{O}\left(\frac{1}{(1-\gamma)^8\varepsilon^3}\right)$ per agent, and $T=\mathcal{O}\left(\frac{1}{(1-\gamma)\varepsilon}\right)$ rounds of communication. The sample complexity matches that of (centralized) Q-NPG established in~\citet{yuan2022linear} with a single agent.
%
On the other end, in the fully decentralized setting when $\sigma$ is not close to 0, \alg~requires $\mathcal{O}\left(\frac{1}{(1-\gamma)^{45/4}\varepsilon^{7/2}(1-\sigma)^{3/2}}\right)$ samples for each agent and $\mathcal{O}\left(\frac{1}{\varepsilon^{3/2}(1-\gamma)^{17/4}(1-\sigma)^{3/2}}\right)$ rounds of communication to  reach \eqref{eq:fednac_opt}, for sufficiently small $\varepsilon$. Encouragingly, the dependency on the accuracy level $\varepsilon$ --- the dominating factor --- in the sample complexity matches that of FedNPG given in \eqref{eq:sample_complexity_FedNPG} when assuming access to the generative model, which allows query of arbitrary state-action pairs. In contrast, FedNAC only collects on-policy samples, and therefore is much more challenging to guarantee its convergence.

\section{Conclusions}\label{sec:conclusion}

This work proposes the first provably efficient federated NPG (FedNPG) methods for solving vanilla and entropy-regularized multi-task RL problems in the fully decentralized setting. The established finite-time global convergence guarantees are almost independent of the size of the state-action space up to some logarithmic factor, and illuminate the impacts of the size and connectivity of the network. Furthermore, the proposed FedNPG methods are provably robust vis-a-vis inexactness of local policy evaluations. 
Last but not least, we also propose~\alg, which can be viewed as an extension of FedNPG with function approximation and stochastic policy evaluation, and establish its finite-time sample complexity. Future directions include generalizing the framework of federated policy optimization to allow personalized policy learning in a shared environment. 


\section*{Acknowledgments}
	 
The work of T. Yang, S.~Cen and Y. Chi are supported in part by the grants ONR N00014-19-1-2404, NSF CCF-1901199, CCF-2106778,  AFRL FA8750-20-2-0504, and a CMU Cylab seed grant. 
The work of Y.~Wei is supported in part by the the NSF grants DMS-2147546/2015447, CAREER award DMS-2143215, CCF-2106778, and the Google Research Scholar Award. The work of Y.~Chen is supported in part by the Alfred P.~Sloan Research Fellowship, the Google Research Scholar Award, the AFOSR grant FA9550-22-1-0198, 
the ONR grant N00014-22-1-2354,  and the NSF grants CCF-2221009 and CCF-1907661. 
S.~Cen is also gratefully supported by Wei Shen and Xuehong Zhang Presidential Fellowship, Boeing Scholarship, and JP Morgan Chase PhD Fellowship.	 

{
\bibliographystyle{apalike}
\bibliography{bibfileRL,biblio}
}


\appendix
 


\section{Convergence analysis of FedNPG}\label{sec:analysis}


For technical convenience, we present first the analysis for entropy-regularized FedNPG and then for vanilla FedNPG.

 
\subsection{Analysis of entropy-regularized FedNPG with exact policy evaluation}
\label{sec_app:analysis_EDNPG_exact}

To facilitate analysis, we introduce several notation below. For all $t\geq 0$, we recall $\overline{\pi}^{(t)}$ as the normalized geometric mean of $\{\pi_n^{(t)}\}_{n\in[N]}$:
\begin{equation}\label{eq:overline_pi}
    \overline{\pi}^{(t)}\coloneqq \softmax\left(\frac{1}{N}\sum_{n=1}^N \log \pi_n^{(t)}\right)\,,
\end{equation}
from which we can easily see that for each $(s,a)\in \mathcal{S}\times \mathcal{A}$, $\overline{\pi}^{(t)}(a|s)\propto \left(\prod_{n=1}^N \pi_n^{(t)}(a|s)\right)^{\frac{1}{N}}$. We denote the soft $Q$-functions of $\overline{\pi}^{(t)}$ by $\overline{\mQ}_\tau^{(t)}$:
\begin{equation}\label{eq:overline_Q_tau_t_vec}
    \overline{\mQ}_\tau^{(t)}\coloneqq 
    \begin{pmatrix} Q_{\tau,1}^{\overline{\pi}^{(t)}}\\
    \vdots\\
    Q_{\tau,N}^{\overline{\pi}^{(t)}}\end{pmatrix}\,.
\end{equation}

In addition, we define $\widehat{Q}_\tau^{(t)}$, $\overline{Q}_\tau^{(t)}\in \R^{|\mathcal{S}||\mathcal{A}|}$ and $\overline{V}_\tau^{(t)}\in \R^{|\mathcal{S}|}$ as follows
\begin{subequations} \label{eq:QV_tautau}
\begin{align}
    \widehat{Q}_\tau^{(t)}&\coloneqq\frac{1}{N}\sum_{n=1}^N Q_{\tau,n}^{\pi_n^{(t)}}\,, \label{eq:hat_Q_tau_t}\\
    \overline{Q}_\tau^{(t)}&\coloneqq Q_\tau^{\overline{\pi}^{(t)}}=\frac{1}{N}\sum_{n=1}^N Q_{\tau,n}^{\overline{\pi}^{(t)}}\,. \label{eq:overline_Q_tau_t}\\
    \overline{V}_\tau^{(t)}&\coloneqq V_\tau^{\overline{\pi}^{(t)}}=\frac{1}{N}\sum_{n=1}^N V_{\tau,n}^{\overline{\pi}^{(t)}}\,.\label{eq:overline_V_tau_t}
\end{align}
\end{subequations}
For notational convenience, we also denote
\begin{equation}\label{eq:alpha}
    \alpha\coloneqq 1-\frac{\eta\tau}{1-\gamma}\,.
\end{equation}

Following \citet{cen2022fast}, we introduce the following auxiliary sequence 
$\{\vxi^{(t)}=(\xi_1^{(t)},\cdots,\xi_N^{(t)})^\top\in\R^{N\times|\mathcal{S}||\mathcal{A}|}\}_{t=0,1,\cdots}$, 
each recursively defined as
\begin{subequations} \label{eq:def_vxi}
\begin{align}
    \forall (s,a)\in \mathcal{S}\times \mathcal{A}:\quad\vxi^{(0)}(s,a)&\coloneqq \frac{\norm{\exp \left(Q_\tau^\star(s,\cdot)/\tau\right)}_1}{\norm{\exp \left(\frac{1}{N}\sum_{n=1}^N\log\pi_n^{(0)}(\cdot|s)\right)}_1}\cdot \vpi^{(0)}(a|s)\,, \label{eq:xi_0}\\
    \log\vxi^{(t+1)}(s,a)&=\mW\left(\alpha\log\vxi^{(t)}(s,a)+(1-\alpha)\mT^{(t)}(s,a)/\tau\right)\,,\label{eq:update_auxiliary}
\end{align}
\end{subequations}
where $\mT^{(t)}(s,a)$ is updated via \eqref{eq:Q_tracking_0}. Similarly, we introduce an averaged auxiliary sequence $\{\overline{\xi}^{(t)}\in\R^{|\mathcal{S}||\mathcal{A}|}\}$ given by
\begin{subequations}\label{eq:def_avg_xi}
\begin{align}
    \forall (s,a)\in \mathcal{S}\times \mathcal{A}:\quad \overline{\xi}^{(0)}(s,a)&\coloneqq \norm{\exp \left(Q_\tau^\star(s,\cdot)/\tau\right)}_1\cdot \overline{\pi}^{(0)}(a|s)\,,\\
    \log\overline{\xi}^{(t+1)}(s,a)&=
    \alpha\log\overline{\xi}^{(t)}(s,a)+(1-\alpha)\widehat{Q}_\tau^{(t)}(s,a)/\tau.  \label{eq:update_auxiliary_avg}
\end{align}
\end{subequations}

We introduces four error metrics defined as
\begin{subequations}
\begin{align}
    \Omega_1^{(t)}&\coloneqq\normbig{u^{(t)}}_\infty\,,\label{eq:Omega_1}\\
    \Omega_2^{(t)}&\coloneqq\normbig{v^{(t)}}_\infty\,,\label{eq:Omega_2}\\
    \Omega_3^{(t)}&\coloneqq\normbig{Q_\tau^\star -\tau \log \overline{\xi}^{(t)}}_\infty\,,\label{eq:Omega_3}\\
    \Omega_4^{(t)}&\coloneqq\max\left\{0, -\min_{s,a}\left(\overline Q_\tau^{(t)}(s,a)-\tau\log\overline\xi^{(t)}(s,a)\right)\right\},\label{eq:Omega_4}
\end{align}
\end{subequations}
where $u^{(t)},v^{(t)}\in\mathbb{R}^{|\S||\A|}$ are defined as
\begin{align}
    u^{(t)}(s,a)&\coloneqq \normbig{\log\vxi^{(t)}(s,a)-\log\overline\xi^{(t)}(s,a)\vone_N}_2\,,\label{eq:u}\\
    v^{(t)}(s,a)&\coloneqq \normbig{\mT^{(t)}(s,a)-\widehat{Q}_\tau^{(t)}(s,a)\vone_N}_2\,.\label{eq:v}
\end{align}
We collect the error metrics above in a vector $\mathbf{\Omega}^{(t)}\in\mathbb{R}^4$:
\begin{equation}\label{eq:Omega}
    \mathbf{\Omega}^{(t)}\coloneqq\left(\Omega_1^{(t)},\Omega_2^{(t)}, \Omega_3^{(t)}, \Omega_4^{(t)}\right)^\top\,.
\end{equation}

With the above preparation, we are ready to state the convergence guarantee of Algorithm~\ref{alg:EDNPG_exact} in Theorem~\ref{thm:distributed_exact_convergence} below, which is the formal version of Theorem~\ref{thm:EDNPG_exact_rate}.

\begin{thm}\label{thm:distributed_exact_convergence}
    For any $N\in\NN_+, \tau>0,\gamma\in(0,1)$, there exists $\eta_0>0$ which depends only on $N,\gamma,\tau,\sigma,|\A|$, such that if $0<\eta\leq\eta_0$ and $1-\sigma>0$, then the updates of Algorithm~\ref{alg:EDNPG_exact} satisfy
\begin{align}
    \normbig{\overline{Q}_\tau^{(t)}-Q_\tau^\star}_\infty & \leq 2\gamma\rho(\eta)^t\normbig{\mathbf{\Omega}^{(0)}}_2\,,\label{eq:linear_convergence_Q}\\
    \normbig{\log\pi_\tau^\star-\log\overline\pi^{(t)}}_\infty & \leq \frac{2}{\tau}\rho(\eta)^t\normbig{\mathbf{\Omega}^{(0)}}_2\,,\label{eq:linear_convergence_policy}
\end{align}
where 
$$\rho(\eta)\leq\max\Big\{1-\frac{\tau\eta}{2}, \frac{3+\sigma}{4}\Big\}< 1\,.$$
Moreover, the consensus errors satisfy:
\begin{equation}\label{eq:consensus_error_entropy}
    \forall n\in[N]:\quad \normbig{\log\pi_n^{(t)}-\log\overline\pi^{(t)}}_\infty \leq 2\rho(\eta)^t\normbig{\mathbf{\Omega}^{(0)}}_2\,.
\end{equation}
\end{thm}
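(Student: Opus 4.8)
The plan is to reduce the convergence of all three target quantities to the decay of the composite error vector $\mathbf{\Omega}^{(t)}$, and to establish that decay through a four-dimensional linear-system (matrix recursion) argument in the spirit of the centralized analysis of \citet{cen2022fast}, augmented with the consensus/gradient-tracking machinery. Concretely, I would first show that the three bounds \eqref{eq:linear_convergence_Q}--\eqref{eq:consensus_error_entropy} are each dominated by a fixed multiple of $\|\mathbf{\Omega}^{(t)}\|_2$. The consensus error $\|\log\pi_n^{(t)}-\log\overline\pi^{(t)}\|_\infty$ follows from $\Omega_1^{(t)}$ via the softmax correspondence between $\log\vxi^{(t)}$ and $\log\vpi^{(t)}$ (the per-agent normalizers contribute a second copy of the same quantity, explaining the prefactor $2$). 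The policy gap $\|\log\pi_\tau^\star-\log\overline\pi^{(t)}\|_\infty$ follows since $\pi_\tau^\star\propto\exp(Q_\tau^\star/\tau)$ while $\overline\pi^{(t)}$ tracks $\exp(\log\overline\xi^{(t)})$ up to normalization, so the gap is controlled by a multiple of $\frac{1}{\tau}\Omega_3^{(t)}$, again picking up a factor of $2$ from the normalizer and yielding the prefactor $2/\tau$. Finally, the soft-$Q$ gap $\|\overline Q_\tau^{(t)}-Q_\tau^\star\|_\infty$ is bounded through the soft Bellman consistency relation by a combination of $\Omega_3^{(t)}$ and $\Omega_4^{(t)}$, where the discount $\gamma$ produces the factor $2\gamma$.

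The core of the argument is the one-step contraction $\mathbf{\Omega}^{(t+1)}\le A(\eta)\mathbf{\Omega}^{(t)}$ (entrywise), where $A(\eta)\in\R^{4\times4}$ is entrywise nonnegative, and I would derive its rows separately. For the two consensus rows $\Omega_1,\Omega_2$ I would invoke the mixing contraction \eqref{eq:property_W}: applying $\mW$ to the centered iterates contracts by $\sigma$, so the $\Omega_1$-row reads $\Omega_1^{(t+1)}\le\sigma\bigl(\alpha\,\Omega_1^{(t)}+\tfrac{1-\alpha}{\tau}\Omega_2^{(t)}\bigr)$ from \eqref{eq:update_auxiliary} with $\alpha=1-\frac{\eta\tau}{1-\gamma}$ (cf.~\eqref{eq:alpha}), and the $\Omega_2$-row reads $\Omega_2^{(t+1)}\le\sigma\Omega_2^{(t)}+(\text{increment})$, where the tracking increment $\mQ_\tau^{(t+1)}-\mQ_\tau^{(t)}$ in \eqref{eq:Q_tracking} is of size $\mathcal O(\eta)$ and must be bounded via a Lipschitz property of the soft $Q$-function in the log-policy. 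For the optimality rows $\Omega_3,\Omega_4$ I would use the averaged auxiliary recursion \eqref{eq:update_auxiliary_avg} to write $Q_\tau^\star-\tau\log\overline\xi^{(t+1)}=\alpha\bigl(Q_\tau^\star-\tau\log\overline\xi^{(t)}\bigr)+(1-\alpha)\bigl(Q_\tau^\star-\widehat Q_\tau^{(t)}\bigr)$, and then decompose $Q_\tau^\star-\widehat Q_\tau^{(t)}=(Q_\tau^\star-\overline Q_\tau^{(t)})+(\overline Q_\tau^{(t)}-\widehat Q_\tau^{(t)})$; the first summand is handled by the centralized soft-NPG contraction (contributing the $1-\frac{\eta\tau}{1-\gamma}$ factor), while the second is the consensus-induced discrepancy between the average of local soft-$Q$ functions and the soft-$Q$ of the average policy, which I would bound by a multiple of $\Omega_1^{(t)}$ using the same Lipschitz control.

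Having assembled $A(\eta)$, I would bound its spectral radius. The matrix is block-structured — a consensus block contracting at $\approx\sigma$ and an optimality block contracting at $\approx 1-\frac{\eta\tau}{1-\gamma}$, coupled only through $\mathcal O(\eta)$ off-diagonal terms — so a perturbation/Gershgorin-type argument shows that for $\eta\le\eta_0$, with $\eta_0$ small enough that the $\mathcal O(\eta)$ couplings cannot close the gap to $1$, the dominant eigenvalue obeys $\rho(A(\eta))\le\rho(\eta)\le\max\{1-\frac{\tau\eta}{2},\frac{3+\sigma}{4}\}$; note $\frac{3+\sigma}{4}-\sigma=\frac{3(1-\sigma)}{4}$ is precisely the room left for the perturbation, which forces $\eta_0$ to scale with $(1-\sigma)^2$. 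Iterating the entrywise recursion then yields $\|\mathbf{\Omega}^{(t)}\|_2\le\rho(\eta)^t\|\mathbf{\Omega}^{(0)}\|_2$, and combining with the first paragraph delivers the three stated bounds.

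The hard part will be the precise derivation of the off-diagonal entries of $A(\eta)$ and the accompanying choice of $\eta_0$: one must track how the $\mathcal O(\eta)$-size policy and $Q$-tracking increments propagate into every coordinate, and in particular establish the Lipschitz control of $\widehat Q_\tau^{(t)}$ and of the discrepancy $\overline Q_\tau^{(t)}-\widehat Q_\tau^{(t)}$ in terms of the consensus error $\Omega_1^{(t)}$. Getting these constants right is what determines the admissible threshold $\eta_0=\mathcal O\bigl(\frac{(1-\gamma)^7(1-\sigma)^2\tau}{\sigma^2 N}\bigr)$ and the explicit form of $\rho(\eta)$; by contrast, the spectral-radius bound and the final back-translation to $\|\overline Q_\tau^{(t)}-Q_\tau^\star\|_\infty$ and $\|\log\pi_\tau^\star-\log\overline\pi^{(t)}\|_\infty$ are comparatively routine once the recursion is in hand.
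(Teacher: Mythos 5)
Your proposal follows the paper's route essentially step for step: the same four-component error vector $\mathbf{\Omega}^{(t)}$, the same entrywise linear system $\mathbf{\Omega}^{(t+1)}\le\mA(\eta)\,\mathbf{\Omega}^{(t)}$ (the paper's Lemma~\ref{lm:linear_sys_ent}), a spectral-radius bound for small $\eta$ (Lemma~\ref{lm:bound_rho}), and the same three back-translation inequalities with prefactors $2\gamma$, $2/\tau$, and $2$; your derivations of the $\Omega_1$, $\Omega_2$, $\Omega_3$ rows match the paper's Steps 1--5 in Appendix~\ref{sec:pf_lm_lin_sys_ent}. However, you never derive the $\Omega_4$ row, and this is a genuine missing idea, not routine constant-chasing: since $\Omega_4^{(t+1)}$ compares the soft Q-function of the \emph{new} averaged policy against $\tau\log\overline\xi^{(t+1)}$, closing its recursion requires a lower bound on $\overline Q_\tau^{(t+1)}-\overline Q_\tau^{(t)}$, i.e., a quasi-monotonicity (performance-improvement) statement for the averaged policy update. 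The paper supplies exactly this in Lemma~\ref{lm:performance_improvement} (adapted from the centralized analysis of Cen et al.), which yields $\overline Q_\tau^{(t+1)}(s,a)-\overline Q_\tau^{(t)}(s,a)\ge-\frac{2\gamma}{1-\gamma}\normbig{\widehat Q_\tau^{(t)}-\overline Q_\tau^{(t)}}_\infty$ and hence $\Omega_4^{(t+1)}\le\alpha\,\Omega_4^{(t)}+\frac{2\gamma+\eta\tau}{1-\gamma}M\,\Omega_1^{(t)}$. This component cannot be dropped from your plan either, because the soft-Bellman bound you invoke gives $\normbig{Q_\tau^\star-\overline Q_\tau^{(t)}}_\infty\le\gamma\Omega_3^{(t)}+\gamma\Omega_4^{(t)}$, so $\Omega_4$ feeds into your $\Omega_3$ row; without its recursion the linear system does not close.

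Second, your justification of the spectral-radius bound rests on the claim that the two blocks are ``coupled only through $\mathcal O(\eta)$ off-diagonal terms,'' which is false for the matrix this argument produces: in the paper's $\mA(\eta)$, the entries $[\mA(\eta)]_{2,1}=S\sigma$ (with $S\to 2M\sqrt{N}$ as $\eta\to 0$) and $[\mA(\eta)]_{4,1}=\frac{2\gamma+\eta\tau}{1-\gamma}M$ remain bounded away from zero as $\eta\to 0$, so a plain Gershgorin row-sum argument fails on rows 2 and 4. What is actually true --- and what the paper exploits --- is that every dependency cycle crossing between the consensus block $(\Omega_1,\Omega_2)$ and the optimality block $(\Omega_3,\Omega_4)$ contains at least one $\mathcal O(\eta)$ edge (e.g., $[\mA]_{1,2}$, $[\mA]_{2,3}$, $[\mA]_{2,4}$, $[\mA]_{3,1}$, $[\mA]_{3,4}$ are all $\mathcal O(\eta)$), so the large one-way couplings enter the spectrum only through $\mathcal O(\eta)$ products. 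The paper handles this by a direct characteristic-polynomial computation in Lemma~\ref{lm:bound_rho}, showing $f(\lambda)>0$ for all $\lambda\ge\lambda^\star=\max\{\frac{3+\sigma}{4},1-\frac{\tau\eta}{2}\}$; to salvage your sketch you would need either that computation or a diagonally-scaled (weighted) Gershgorin/Perron-vector construction, since the unscaled perturbation heuristic as stated does not deliver $\rho(\eta)\le\max\{1-\frac{\tau\eta}{2},\frac{3+\sigma}{4}\}$.
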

The dependency of $\eta_0$ on $N,\gamma,\tau,\sigma,|\A|$ is made clear in Lemma~\ref{lm:bound_rho} that will be presented momentarily in this section.
The rest of this section is dedicated to the proof of Theorem \ref{thm:distributed_exact_convergence}. We first state a key lemma that tracks the error recursion of Algorithm~\ref{alg:EDNPG_exact}.


\begin{lm}
\label{lm:linear_sys_ent}
The following linear system holds for all $t \geq 0$:
\begin{equation}\label{eq:matrix}
    \mathbf{\Omega}^{(t+1)}
    \leq\underbrace{\begin{pmatrix}
        \sigma\alpha & \frac{\eta\sigma}{1-\gamma} & 0 & 0\\
         S\sigma & \left(1+\frac{\eta M\sqrt{N}}{1-\gamma}\sigma\right)\sigma & \frac{(2+\gamma)\eta MN}{1-\gamma}\sigma & \frac{\gamma\eta MN}{1-\gamma}\sigma\\
         (1-\alpha)M & 0 & (1-\alpha)\gamma+\alpha & (1-\alpha)\gamma\\
        \frac{2\gamma+\eta\tau}{1-\gamma}M & 0 & 0 & \alpha
    \end{pmatrix}}_{=:\mA(\eta)}
    \mathbf{\Omega}^{(t)}\,,
\end{equation}
where we let
\begin{equation}\label{eq:S}
    S\coloneqq M\sqrt{N}\left(2\alpha+(1-\alpha)\cdot \sqrt{2N}+\frac{1-\alpha}{\tau}\cdot\sqrt{N}M\right)\,,
\end{equation}
and
$$M\coloneqq \frac{1+\gamma+2\tau(1-\gamma)\log|\A|}{(1-\gamma)^2}\cdot\gamma\,.$$
In addition, it holds for all $t \geq 0$ that
\begin{align}
    \norm{\overline{Q}_\tau^{(t)}-Q_\tau^\star}_\infty &\leq \gamma\Omega_3^{(t)} + \gamma\Omega_4^{(t)}\,, \label{eq:ineq_omega1} \\
    \normbig{\log\overline\pi^{(t)}-\log\pi_\tau^\star}_\infty &\leq \frac{2}{\tau} \Omega_3^{(t)}\,.
\end{align}
\end{lm}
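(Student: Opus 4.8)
The plan is to prove the three claims in order of their logical dependency: first the two ``static'' bounds at the end of the lemma (\eqref{eq:ineq_omega1} and the second displayed inequality), and then the linear recursion \eqref{eq:matrix}, since Row~3 of $\mA(\eta)$ invokes \eqref{eq:ineq_omega1}. Everything rests on three structural invariants that I would establish up front by induction on $t$ using only Assumption~\ref{asmp:mixing matrix} and the initialization: (i) the gradient-tracking identity $\frac1N\vone_N^\top\mT^{(t)}(s,a)=\widehat{Q}_\tau^{(t)}(s,a)$, which follows by telescoping \eqref{eq:Q_tracking} from $\mT^{(0)}=\mQ_\tau^{(0)}$; (ii) the averaging identity $\log\overline\xi^{(t)}=\frac1N\vone_N^\top\log\vxi^{(t)}$, immediate from \eqref{eq:def_vxi}--\eqref{eq:def_avg_xi} together with (i) and double stochasticity; and (iii) the proportionality $\pi_n^{(t)}(\cdot|s)=\softmax(\log\xi_n^{(t)}(s,\cdot))$ and $\overline\pi^{(t)}(\cdot|s)=\softmax(\log\overline\xi^{(t)}(s,\cdot))$, obtained by observing that the policy update \eqref{eq:update_marl} and the auxiliary update \eqref{eq:update_auxiliary} differ only by an $a$-independent per-state normalization, so the two log-sequences remain equal up to an additive constant in $a$.

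For the static bounds I would use the fixed-point characterization $\pi_\tau^\star(\cdot|s)=\softmax(Q_\tau^\star(s,\cdot)/\tau)$. Since invariant (iii) gives $\overline\pi^{(t)}(\cdot|s)=\softmax(\log\overline\xi^{(t)}(s,\cdot))$, the difference $\log\overline\pi^{(t)}-\log\pi_\tau^\star$ equals $(\log\overline\xi^{(t)}-Q_\tau^\star/\tau)$ minus the corresponding difference of log-normalizers; bounding the latter by the $1$-Lipschitzness of log-sum-exp in $\ell_\infty$ yields the claimed bound with factor $\tfrac2\tau$. For \eqref{eq:ineq_omega1} I would use the soft Bellman relation $\overline{Q}_\tau^{(t)}-Q_\tau^\star=\gamma P(\overline{V}_\tau^{(t)}-V_\tau^\star)$ (the two share the average reward and kernel), reducing matters to bounding $\norm{\overline{V}_\tau^{(t)}-V_\tau^\star}_\infty\le\Omega_3^{(t)}+\Omega_4^{(t)}$; expanding the soft value gap in terms of $Q_\tau^\star-\tau\log\overline\xi^{(t)}$ (which is $\Omega_3^{(t)}$) and its negative part (which is $\Omega_4^{(t)}$) via the softmax/KL structure gives the one-sided contraction, and the prefactor $\gamma$ comes from $\gamma P$.

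Turning to \eqref{eq:matrix}, Rows~1 and~3 are the tractable ones. For Row~1, subtracting $\log\overline\xi^{(t+1)}\vone_N$ from \eqref{eq:update_auxiliary} and applying the mean-contraction \eqref{eq:property_W} of $\mW$ gives $u^{(t+1)}(s,a)\le\sigma\big(\alpha\,u^{(t)}(s,a)+\tfrac{1-\alpha}{\tau}v^{(t)}(s,a)\big)$; taking $\norm{\cdot}_\infty$ and substituting $\tfrac{1-\alpha}{\tau}=\tfrac{\eta}{1-\gamma}$ reproduces the first row exactly. For Row~3, the averaged update \eqref{eq:update_auxiliary_avg} gives $Q_\tau^\star-\tau\log\overline\xi^{(t+1)}=\alpha(Q_\tau^\star-\tau\log\overline\xi^{(t)})+(1-\alpha)(Q_\tau^\star-\widehat{Q}_\tau^{(t)})$; splitting $Q_\tau^\star-\widehat{Q}_\tau^{(t)}=(Q_\tau^\star-\overline{Q}_\tau^{(t)})+(\overline{Q}_\tau^{(t)}-\widehat{Q}_\tau^{(t)})$, bounding the first difference by \eqref{eq:ineq_omega1} and the second by $M\Omega_1^{(t)}$ through a soft-$Q$ Lipschitz lemma (the consensus of the local policies controls the gap between $\widehat{Q}_\tau^{(t)}$ and $\overline{Q}_\tau^{(t)}$) produces the third row, with $M$ the Lipschitz constant of the map $\pi\mapsto Q_{\tau,n}^\pi$ in log-policy. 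I would state and prove this Lipschitz lemma separately; its constant carries exactly the $\tfrac{\gamma}{(1-\gamma)^2}$ horizon factor and the $\tau\log|\A|$ entropy contribution appearing in $M$.

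The crux, and the main obstacle, is Row~2 (with Row~4 analogous but easier). Applying \eqref{eq:property_W} to the tracking update \eqref{eq:Q_tracking} yields $v^{(t+1)}(s,a)\le\sigma\big(v^{(t)}(s,a)+\|(\mQ_\tau^{(t+1)}-\mQ_\tau^{(t)})(s,a)-(\widehat{Q}_\tau^{(t+1)}-\widehat{Q}_\tau^{(t)})(s,a)\vone_N\|_2\big)$, so the difficulty is to bound the per-agent soft-$Q$ increments $Q_{\tau,n}^{\pi_n^{(t+1)}}-Q_{\tau,n}^{\pi_n^{(t)}}$ by the four error metrics. I would invoke the soft-$Q$ Lipschitz lemma to pass from this increment to the log-policy change $\log\pi_n^{(t+1)}-\log\pi_n^{(t)}$, and then, using invariants (ii)--(iii), decompose it as $(\log\overline\xi^{(t+1)}-\log\overline\xi^{(t)})+(\log\overline\xi^{(t)}-\log\xi_n^{(t)})+[\text{mixing of the consensus terms}]$. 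The first piece equals $(1-\alpha)(\widehat{Q}_\tau^{(t)}/\tau-\log\overline\xi^{(t)})$, which after a further split $\widehat{Q}_\tau^{(t)}-\tau\log\overline\xi^{(t)}=(\widehat{Q}_\tau^{(t)}-\overline{Q}_\tau^{(t)})+(\overline{Q}_\tau^{(t)}-Q_\tau^\star)+(Q_\tau^\star-\tau\log\overline\xi^{(t)})$ is controlled by $\Omega_1^{(t)},\Omega_3^{(t)},\Omega_4^{(t)}$ and \eqref{eq:ineq_omega1}, yielding the $\Omega_3^{(t)}$- and $\Omega_4^{(t)}$-entries; the remaining pieces are controlled by $\Omega_1^{(t)}$ and $\Omega_2^{(t)}$, producing the $S$ coefficient and the $\big(1+\frac{\eta M\sqrt{N}}{1-\gamma}\sigma\big)$ self-term. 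The delicate bookkeeping here is tracking the $\ell_2$-over-agents versus $\ell_\infty$-over-$(s,a)$ norms (the source of the $\sqrt{N}$ and $N$ factors in $S$ and the precise $(2+\gamma)$/$\gamma$ constants). Row~4 follows the same template, one-sidedly: expand $\overline{Q}_\tau^{(t+1)}-\tau\log\overline\xi^{(t+1)}=\alpha(\overline{Q}_\tau^{(t)}-\tau\log\overline\xi^{(t)})+\alpha(\overline{Q}_\tau^{(t+1)}-\overline{Q}_\tau^{(t)})+(1-\alpha)(\overline{Q}_\tau^{(t+1)}-\widehat{Q}_\tau^{(t)})$, take negative parts to keep $\alpha\Omega_4^{(t)}$ from the first term, and bound the two $\overline{Q}$-increments by $M\Omega_1^{(t)}$-type quantities to obtain the $\frac{2\gamma+\eta\tau}{1-\gamma}M$ coefficient.
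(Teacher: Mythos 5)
Your skeleton for Rows~1--3 and the two static bounds matches the paper's proof: the same auxiliary-sequence invariants (gradient tracking, averaging, softmax proportionality), the mixing contraction \eqref{eq:property_W} for Row~1, the soft-$Q$ Lipschitz lemma (Lemma~\ref{lm:difference_soft_Q}) plus the three-way split of $\widehat{Q}_\tau^{(t)}-\tau\log\overline\xi^{(t)}$ for Row~3, and the soft Bellman equation combined with the $1$-Lipschitzness of log-sum-exp for \eqref{eq:ineq_omega1}. Your Row~2 bookkeeping would produce slightly different (in fact partly smaller) constants than $S$ and $(2+\gamma)$, which is harmless to the extent the resulting entries are dominated by those in $\mA(\eta)$.

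The genuine gap is Row~4. You propose to handle the increments $\overline{Q}_\tau^{(t+1)}-\overline{Q}_\tau^{(t)}$ and $\overline{Q}_\tau^{(t+1)}-\widehat{Q}_\tau^{(t)}$ by ``the same template'' as Row~2, i.e.\ the soft-$Q$ Lipschitz lemma applied to the log-policy change. This cannot yield the stated row $\big(\tfrac{2\gamma+\eta\tau}{1-\gamma}M,\,0,\,0,\,\alpha\big)$: the Lipschitz route is two-sided and bounds the increment by $M$ times the size of the policy step, and the step $\log\overline\xi^{(t+1)}-\log\overline\xi^{(t)}=(1-\alpha)\big(\widehat{Q}_\tau^{(t)}/\tau-\log\overline\xi^{(t)}\big)$ is \emph{not} controlled by $\Omega_1^{(t)}$ alone --- it carries $\Omega_3^{(t)}$ and $\Omega_4^{(t)}$ terms, so your Row~4 would acquire nonzero entries of order $\eta M/(1-\gamma)$ in columns~3 and~4, i.e.\ a strictly weaker linear system than \eqref{eq:matrix}. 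To see that no Lipschitz-type argument can close this, take $\pi_1^{(0)}=\cdots=\pi_N^{(0)}$ far from $\pi_\tau^\star$: then $\Omega_1^{(t)}=0$ while the policy step, hence $\overline{Q}_\tau^{(t+1)}-\overline{Q}_\tau^{(t)}$, is bounded away from zero in magnitude --- what saves Row~4 is that this increment is \emph{nonnegative} up to consensus error. That one-sided control is exactly the performance-improvement property of the entropy-regularized NPG update, Lemma~\ref{lm:performance_improvement} (specifically \eqref{eq:Q_improvement}, giving $\overline{Q}_\tau^{(t+1)}-\overline{Q}_\tau^{(t)}\geq-\tfrac{2\gamma}{1-\gamma}\|\widehat{Q}_\tau^{(t)}-\overline{Q}_\tau^{(t)}\|_\infty\geq-\tfrac{2\gamma}{1-\gamma}M\Omega_1^{(t)}$), proved via a mirror-descent/KL argument adapted from the centralized analysis, not via Lipschitz continuity. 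This lemma is absent from your proposal, and without it the claimed zero pattern and the exact coefficient $\tfrac{2\gamma+\eta\tau}{1-\gamma}M$ in Row~4 (see \eqref{eq:geq}--\eqref{eq:D}) are out of reach; since the downstream spectral-radius analysis is carried out for this specific $\mA(\eta)$, the substitution is not cosmetic.
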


\begin{proof}
    See Appendix~\ref{sec:pf_lm_lin_sys_ent}.
\end{proof}


Let $\rho(\eta)$ denote the spectral norm of $\mA(\eta)$. As $\mathbf{\Omega}^{(t)} \geq 0$, it is immediate from \eqref{eq:matrix} that
\begin{equation*}
    \normbig{\mathbf{\Omega}^{(t)}}_2\leq \rho(\eta)^t\normbig{\mathbf{\Omega}^{(0)}}_2\,,
\end{equation*}
and therefore we have
\begin{equation*}
    \norm{\overline{Q}_\tau^{(t)}-Q_\tau^\star}_\infty\leq 2\gamma\normbig{\mathbf{\Omega}^{(t)}}_\infty \leq 2\gamma \rho(\eta)^t\normbig{\mathbf{\Omega}^{(0)}}_2\,,
\end{equation*}
and 
\begin{equation*}
    \normbig{\log\overline\pi^{(t)}-\log\pi_\tau^\star}_\infty\leq \frac{2}{\tau}\normbig{\mathbf{\Omega}^{(t)}}_\infty \leq \frac{2}{\tau} \rho(\eta)^t\normbig{\mathbf{\Omega}^{(0)}}_2\,.
\end{equation*}
It remains to bound the spectral radius $\rho(\eta)$, which is achieved by the following lemma.
\begin{lm}[Bounding the spectral norm of $\mA(\eta)$]\label{lm:bound_rho}
    Let 
    \begin{equation}\label{eq: zeta}
        \zeta\coloneqq \frac{(1-\gamma)(1-\sigma)^2\tau}{8\left(\tau S_0\sigma^2+10Mc\sigma^2/(1-\gamma)+(1-\sigma)^2\tau^2/16\right)}\,,
    \end{equation}
    where $S_0\coloneqq M\sqrt{N}\left(2+\sqrt{2N}+\frac{M\sqrt{N}}{\tau}\right)$, $c\coloneqq MN/(1-\gamma)$.
    For any $N\in\NN_+, \tau>0,\gamma\in(0,1)$, if 
    \begin{equation}\label{eq:eta_bound}
        0<\eta\leq \eta_0 \coloneqq {\min\Big\{\frac{1-\gamma}{\tau}, \zeta\Big\}}\,,
    \end{equation}
    then we have 
    \begin{equation}\label{eq:rho_bound}
        \rho(\eta)\leq\max\Big\{\frac{3+\sigma}{4}, \frac{1+(1-\alpha)\gamma+\alpha}{2}\Big\}< 1\,.
    \end{equation}
\end{lm}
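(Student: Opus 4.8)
The plan is to control $\rho(\eta)$ --- the Perron (largest-modulus) eigenvalue of the \emph{nonnegative} iteration matrix $\mA(\eta)$ in \eqref{eq:matrix} --- via the Collatz--Wielandt characterization of the Perron root: for an entrywise nonnegative matrix $\mA$ and any strictly positive $x\in\R^4$, one has $\rho(\mA)\le\max_{1\le i\le 4}\tfrac{(\mA x)_i}{x_i}$, so that exhibiting a positive $x$ and a scalar $r$ with $\mA(\eta)\,x\le r\,x$ \emph{componentwise} certifies $\rho(\mA(\eta))\le r$. A preliminary observation is that every entry of $\mA(\eta)$ is nonnegative exactly when $\alpha=1-\frac{\eta\tau}{1-\gamma}\ge 0$ (cf.\ \eqref{eq:alpha}), i.e.\ when $\eta\le\frac{1-\gamma}{\tau}$; this is precisely the first term of $\eta_0$ in \eqref{eq:eta_bound}. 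Throughout I would simplify using $1-\alpha=\frac{\eta\tau}{1-\gamma}$ and $(1-\alpha)\gamma+\alpha=1-\eta\tau$ --- the latter showing the target quantity equals $1-\frac{\eta\tau}{2}$, matching \eqref{eq:rho_bound} --- and replace the $\alpha$-dependent scalar $S$ of \eqref{eq:S} by its uniform upper bound $S_0$.

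The structural feature to exploit is that $\mA(\eta)$ is a small perturbation of a block triangular matrix. The consensus coordinates $(\Omega_1,\Omega_2)$ form a block whose base contraction factor is $\sigma$, while the optimization coordinates $(\Omega_3,\Omega_4)$ form an upper triangular block with eigenvalues $1-\eta\tau$ and $1-\frac{\eta\tau}{1-\gamma}$, hence base factor $1-\eta\tau$ and dominant direction $(1,0)$. With $r=\max\{\frac{3+\sigma}{4},\,1-\frac{\eta\tau}{2}\}$, the consensus block keeps a slack of order $1-\sigma$ and the optimization block a slack of order $\eta\tau$, and the goal is to show these slacks can absorb the off-diagonal couplings once $\eta$ is small. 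I would therefore take a test vector $x=(x_1,x_2,x_3,x_4)$ with strongly separated scales $x_1\ll x_2,x_4\ll x_3$: the reference is $x_3$; $x_4$ is a fixed fraction (of order $\frac{1-\gamma}{\gamma}$) of $x_3$, so that the off-diagonal $(1-\alpha)\gamma$ entry in row $3$ fits inside the optimization slack; and $x_1$ is taken $O(\eta)$ smaller still, so that the $O(1)$ entry $\frac{2\gamma}{1-\gamma}M$ feeding $\Omega_1$ into $\Omega_4$ in row $4$ is controlled.

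With this ansatz I would verify the four scalar inequalities $(\mA(\eta)x)_i\le r x_i$ row by row. Rows $3$ and $4$ fix the scale of $x_1$ (yielding $x_1\lesssim\frac{\eta\tau}{M}x_4$ and $x_4\lesssim\frac{1-\gamma}{\gamma}x_3$); row $1$ then imposes an \emph{upper} bound on $x_2$, namely $x_2\lesssim\frac{(1-\sigma)(1-\gamma)}{\eta\sigma}x_1$ (it must not overwhelm the tiny $x_1$ through the $\frac{\eta\sigma}{1-\gamma}$ entry), while row $2$ imposes a \emph{lower} bound $x_2\gtrsim\frac{\eta\sigma MN}{(1-\gamma)(1-\sigma)}x_3$ (it must be large enough to absorb the $O(\eta\sigma)$ couplings from the large coordinate $x_3$). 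Substituting the row-$3$/row-$4$ scale of $x_1$ into the row-$1$ upper bound and comparing with the row-$2$ lower bound is what bounds $\eta$ from above; carrying the $\tau,M,N,S_0,c$ factors through this comparison collapses it to $\eta\le\zeta$ with $\zeta$ as in \eqref{eq: zeta}. Together with $\eta\le\frac{1-\gamma}{\tau}$ from nonnegativity, this gives $\mA(\eta)\,x\le r\,x$ and hence $\rho(\eta)\le r<1$, which is \eqref{eq:rho_bound}.

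The main obstacle is precisely this balancing of the test-vector scales. The coupling is genuinely bidirectional --- optimization error drives consensus error through row $2$, while consensus error feeds back into optimization error through the $O(1)$ entry in row $4$ --- so no monotone single-parameter scaling suffices and the three scales $x_1\ll x_2,x_4\ll x_3$ must be separated simultaneously. Making the constants close \emph{exactly} (rather than merely up to order), so that the two-sided sandwich on $x_2$ is consistent, is what forces the particular combination $\tau S_0\sigma^2+10Mc\sigma^2/(1-\gamma)+(1-\sigma)^2\tau^2/16$ in the denominator of \eqref{eq: zeta}; essentially all the effort lies in this bookkeeping rather than in any conceptual step.
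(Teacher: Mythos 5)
Your proposal is correct in spirit but takes a genuinely different route from the paper. The paper's proof computes the characteristic polynomial $f(\lambda)$ of $\mA(\eta)$ explicitly (exploiting its sparsity pattern), invokes Perron--Frobenius only to assert that the spectral radius is a real eigenvalue, and then establishes \eqref{eq:rho_bound} by showing $f(\lambda^\star)>0$ at the claimed threshold $\lambda^\star=\max\big\{\tfrac{3+\sigma}{4},\tfrac{1+(1-\alpha)\gamma+\alpha}{2}\big\}$ and that $f$ is non-decreasing on $[\lambda^\star,\infty)$; the particular constants in \eqref{eq: zeta} fall out of the lower bound on $f(\lambda^\star)$. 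Your Collatz--Wielandt certification --- exhibiting $x>0$ with $\mA(\eta)x\le r x$ componentwise, which is valid for any nonnegative matrix and needs exactly the condition $\alpha\ge 0$, i.e.\ $\eta\le\tfrac{1-\gamma}{\tau}$, that you identify --- replaces the quartic-polynomial bookkeeping by four scalar row inequalities with transparent meaning (diagonal slack absorbing off-diagonal coupling). Your reading of the matrix is accurate: $(1-\alpha)\gamma+\alpha=1-\eta\tau$ so the target is indeed $1-\tfrac{\eta\tau}{2}$; rows $3$--$4$ force $x_4\lesssim\tfrac{1-\gamma}{\gamma}x_3$ and $x_1\lesssim\tfrac{\eta\tau}{M}x_4$; row $1$ caps $x_2\lesssim\tfrac{(1-\sigma)(1-\gamma)}{\eta\sigma}x_1$ while row $2$ requires $x_2\gtrsim\tfrac{\eta\sigma MN}{(1-\gamma)(1-\sigma)}x_3$, and the resulting consistency condition is of the same order, $\eta\lesssim\tfrac{(1-\sigma)^2(1-\gamma)^3\tau}{M^2N\sigma^2}$, as the dominant term of $\zeta$. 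A side benefit of your route: an explicit $x$ with $\mA(\eta)x\le rx$ gives geometric decay of $\mathbf{\Omega}^{(t)}$ in the weighted norm $\max_i\Omega_i^{(t)}/x_i$, which is a cleaner contraction statement than bounding the spectral radius and then using it as if it were an operator norm.

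The one point you should not gloss over is the assertion that the bookkeeping ``collapses to $\eta\le\zeta$ with $\zeta$ as in \eqref{eq: zeta}.'' A test-vector argument produces a structurally different admissible range, and with natural (unoptimized) budget splits it can come out \emph{smaller} than the paper's $\zeta$ by constant factors in some regimes (e.g.\ when the $M^2N\sigma^2/(1-\gamma)^2$ term dominates and $\gamma$ is close to $1$), in which case the lemma as literally stated --- for the specific $\zeta$ of \eqref{eq: zeta} --- is not yet proven on all of $(0,\zeta]$. Since $\mA(\eta)$ is irreducible for $\sigma,\eta>0$, the Collatz--Wielandt bound is in principle tight, so there is enough freedom to tune the scales of $x$ until your certified range contains $(0,\zeta]$; but that tuning is exactly where the remaining work lies, and you should either carry it out or state your lemma with your own (same-order) threshold, which would suffice for every downstream use in the paper.
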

\begin{proof}
    See Appendix~\ref{sec:pf_lm_bound_rho}.
\end{proof}


\subsection{Analysis of entropy-regularized FedNPG with inexact policy evaluation}
\label{sec_app:analysis_EDNPG_inexact}

We define the collection of {\em inexact} Q-function estimates as
\begin{align*}
\vq_\tau^{(t)}  \coloneqq \Big( q_{\tau,1}^{\pi_1^{(t)}},\cdots, q_{\tau,N}^{\pi_N^{(t)}}\Big)^\top,
\end{align*}
and then the update rule \eqref{eq:Q_tracking} should be understood as
    \begin{equation}\label{eq:Q_tracking_inexact}
                \mT^{(t+1)}(s,a)=\mW \left(\mT^{(t)}(s,a)+\vq_\tau^{(t+1)}(s,a)-\vq_\tau^{(t)}(s,a) \right)
            \end{equation}
in the inexact setting. For notational simplicity, we define
$e_n\in\R$ as
\begin{equation}\label{eq:delta_n}
    e_n:= \max_{t\in[T]}\norm{Q_{\tau,n}^{\pi_n^{(t)}}-q_{\tau,n}^{\pi_n^{(t)}}}_\infty\,,\quad n\in[N]\,,
\end{equation}
and let $\ve=(e_1,\cdots,e_n)^\top$.
Define $\widehat{q}_\tau^{(t)}$, the approximation of $\widehat{Q}_\tau^{(t)}$ as
\begin{align}
    \widehat{q}_\tau^{(t)}&\coloneqq\frac{1}{N}\sum_{n=1}^N q_{\tau,n}^{\pi_n^{(t)}}\,. \label{eq:hat_q_tau_t}
\end{align}

With slight abuse of notation, we adapt the auxiliary sequence $\{\overline{\xi}^{(t)}\}_{t=0,\cdots}$ to the inexact updates as
\begin{subequations}
\begin{align}
    \overline{\xi}^{(0)}(s,a)&\coloneqq \norm{\exp \left(Q_\tau^\star(s,\cdot)/\tau\right)}_1\cdot \overline{\pi}^{(0)}(a|s)\,, \label{eq:avg_xi_0_e}\\
    \overline{\xi}^{(t+1)}(s,a)&\coloneqq \left[\overline{\xi}^{(t)}(s,a)\right]^\alpha \exp\left((1-\alpha)\frac{\widehat{q}_\tau^{(t)}(s,a)}{\tau}\right)\,,\quad \forall (s,a)\in \mathcal{S}\times \mathcal{A}, \,\,t\geq 0\,.\label{eq:avg_xi_t+1_e}
\end{align}
\end{subequations}
In addition, we define
\begin{subequations}
\begin{align}
    \Omega_{1}^{(t)}&\coloneqq\norm{u^{(t)}}_\infty\,,\label{eq:Omega_1_inexact}\\
    \Omega_{2}^{(t)}&\coloneqq\norm{v^{(t)}}_\infty\,,\label{eq:Omega_2_inexact}\\
    \Omega_{3}^{(t)}&\coloneqq\norm{Q_\tau^\star -\tau \log \overline{\xi}^{(t)}}_\infty\,,\label{eq:Omega_3_inexact}\\
    \Omega_{4}^{(t)}&\coloneqq\max\left\{0, -\min_{s,a}\left(\overline q_\tau^{(t)}(s,a)-\tau\log\overline\xi^{(t)}(s,a)\right)\right\}\,,\label{eq:Omega_4_inexact}
\end{align}
\end{subequations}
where  
\begin{align}
    u^{(t)}(s,a)&\coloneqq \norm{\log\vxi^{(t)}(s,a)-\log\overline\xi^{(t)}(s,a)\vone_N}_2\,,\label{eq:u_inexact}\\
    v^{(t)}(s,a)&\coloneqq \norm{\mT^{(t)}(s,a)-\widehat{q}_\tau^{(t)}(s,a)\vone_N}_2\,.\label{eq:v_inexact}
\end{align}
We let $\mathbf{\Omega}^{(t)}$ be
\begin{equation}\label{eq:Omega_inexact}
    \mathbf{\Omega}^{(t)}\coloneqq\left(\Omega_{1}^{(t)},\Omega_{2}^{(t)}, \Omega_{3}^{(t)}, \Omega_{4}^{(t)}\right)^\top\,.
\end{equation}

With the above preparation, we are ready to state the inexact convergence guarantee of Algorithm~\ref{alg:EDNPG_exact} in Theorem~\ref{thm:EDNPG_inexact_convergence} below, which is
 the formal version of Theorem~\ref{thm:EDNPG_inexact_convergence_informal}.

\begin{thm}\label{thm:EDNPG_inexact_convergence}
    Suppose that $q_{\tau,n}^{\pi_n^{(t)}}$ are used in replace of $Q_{\tau,n}^{\pi_n^{(t)}}$ in Algorithm~\ref{alg:EDNPG_exact}. For any $N\in\NN_+, \tau>0,\gamma\in(0,1)$, there exists $\eta_0>0$ which depends only on $N,\gamma,\tau,\sigma,|\A|$, such that if $0<\eta\leq\eta_0$ and $1-\sigma>0$,  we have
\begin{align}
    \norm{\overline{Q}_\tau^{(t)}-Q_\tau^\star}_\infty & \leq 2\gamma\left(\rho(\eta)^t\norm{\mathbf{\Omega}^{(0)}}_2+ C_2\max_{n\in[N],t\in[T]}\norm{Q_{\tau,n}^{\pi_n^{(t)}}-q_{\tau,n}^{\pi_n^{(t)}}}_\infty\right)\,,\label{eq:linear_convergence_Q_inexact}\\
    \norm{\log\pi_\tau^\star-\log\overline\pi^{(t)}}_\infty &\leq \frac{2}{\tau}\left(\rho(\eta)^t\norm{\mathbf{\Omega}^{(0)}}_2+ C_2\max_{n\in[N],t\in[T]}\norm{Q_{\tau,n}^{\pi_n^{(t)}}-q_{\tau,n}^{\pi_n^{(t)}}}_\infty\right)\,.\label{eq:linear_convergence_policy_inexact}
\end{align} 
Moreover, the consensus errors satisfy:
\begin{equation}\label{eq:consensus_error_entropy_inexact}
    \forall n\in[N]:\quad \normbig{\log\pi_n^{(t)}-\log\overline\pi^{(t)}}_\infty \leq 2\left(\rho(\eta)^t\norm{\mathbf{\Omega}^{(0)}}_2+ C_2\max_{n\in[N],t\in[T]}\norm{Q_{\tau,n}^{\pi_n^{(t)}}-q_{\tau,n}^{\pi_n^{(t)}}}_\infty\right)\,,
\end{equation}
where $\rho(\eta)\leq\max\{1-\frac{\tau\eta}{2}, \frac{3+\sigma}{4}\}< 1$ is the same as in Theorem~\ref{thm:distributed_exact_convergence}, and $C_2\coloneqq\frac{\sigma\sqrt{N}(2(1-\gamma)+M\sqrt{N}\eta)+2\gamma^2+\eta\tau}{(1-\gamma)(1-\rho(\eta))}$.
\end{thm}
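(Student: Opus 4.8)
The plan is to mirror the exact-case argument behind Theorem~\ref{thm:distributed_exact_convergence}, which rested on the contracting linear system \eqref{eq:matrix} for the error vector $\mathbf{\Omega}^{(t)}$. In the inexact setting the same four error metrics \eqref{eq:Omega_1_inexact}--\eqref{eq:Omega_4_inexact} are tracked, but the recursion acquires an inhomogeneous term driven by the evaluation errors $\ve$.

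First I would re-derive the inexact analog of Lemma~\ref{lm:linear_sys_ent}. The key observation is that replacing $Q_{\tau,n}^{\pi_n^{(t)}}$ by $q_{\tau,n}^{\pi_n^{(t)}}$ perturbs every occurrence of a local Q-function by at most $e_n\le\max_{n\in[N]} e_n$; since $\widehat{q}_\tau^{(t)}-\widehat{Q}_\tau^{(t)}$ and $\overline{q}_\tau^{(t)}-\overline{Q}_\tau^{(t)}$ are averages of such differences, they too are controlled by $\max_{n\in[N]} e_n$. Propagating these perturbations through the Q-tracking update \eqref{eq:Q_tracking_inexact} and the auxiliary-sequence recursion \eqref{eq:avg_xi_t+1_e} yields a system of the form
\begin{equation*}
    \mathbf{\Omega}^{(t+1)}\le \mA(\eta)\,\mathbf{\Omega}^{(t)}+\vb\max_{n\in[N]} e_n,
\end{equation*}
in which the coefficient matrix $\mA(\eta)$ is \emph{identical} to that of the exact case---this is precisely what preserves the contraction rate $\rho(\eta)$ and explains why the theorem asserts the same $\rho(\eta)$ as in Theorem~\ref{thm:distributed_exact_convergence}---while $\vb\in\R^4_{\ge 0}$ collects the coefficients by which the evaluation error enters each of the four components.

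Second, I would unroll this recursion. Since $\rho(\eta)<1$ by Lemma~\ref{lm:bound_rho}, iterating gives
\begin{equation*}
    \mathbf{\Omega}^{(t)}\le \mA(\eta)^t\mathbf{\Omega}^{(0)}+\Big(\sum_{k=0}^{t-1}\mA(\eta)^k\Big)\vb\max_{n\in[N]} e_n,
\end{equation*}
and bounding the geometric sum in spectral norm by $\tfrac{1}{1-\rho(\eta)}$ produces the additive term $C_2\max_{n\in[N]} e_n$, with $C_2$ absorbing the $\|\vb\|_2/(1-\rho(\eta))$ factor together with the entrywise coefficients; this is how $(1-\rho(\eta))$ lands in the denominator of the stated $C_2$. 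Finally I would convert the bound on $\mathbf{\Omega}^{(t)}$ back to the quantities of interest via the inexact analogs of \eqref{eq:ineq_omega1} and the policy inequality, which contribute further $\max_{n\in[N]} e_n$ terms because $\overline{Q}_\tau^{(t)}$ differs from $\overline{q}_\tau^{(t)}$ by the evaluation error. The consensus-error bound \eqref{eq:consensus_error_entropy_inexact} then follows from the same control on $\Omega_1^{(t)}$ exactly as in the exact case.

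The main obstacle will be the first step: faithfully tracking how the evaluation errors enter each of the four sub-recursions---especially the Q-tracking and auxiliary updates, where errors from consecutive iterates interact---while verifying that the homogeneous part of the recursion remains literally $\mA(\eta)$, so that the contraction rate is unchanged. Matching the precise coefficients in $\vb$ against the closed-form numerator $\sigma\sqrt{N}\big(2(1-\gamma)+M\sqrt{N}\eta\big)+2\gamma^2+\eta\tau$ of $C_2$ requires re-examining each bound in the proof of Lemma~\ref{lm:linear_sys_ent} and isolating the extra additive terms one recursion at a time.
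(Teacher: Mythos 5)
Your proposal follows essentially the same route as the paper's own proof: the paper's Lemma~\ref{lm:linear_sys_ent_approx} establishes exactly the perturbed linear system $\mathbf{\Omega}^{(t+1)}\leq\mA(\eta)\mathbf{\Omega}^{(t)}+\vb(\eta)$ with the unchanged coefficient matrix $\mA(\eta)$, and the paper then unrolls it and bounds the geometric sum by $1/(1-\rho(\eta))$ (using $\norm{\vb(\eta)}_2\leq\norm{\vb(\eta)}_1$) to obtain $C_2$, precisely as you outline. The only minor imprecision is your remark that converting back to $\norm{\overline{Q}_\tau^{(t)}-Q_\tau^\star}_\infty$ and $\norm{\log\pi_\tau^\star-\log\overline\pi^{(t)}}_\infty$ contributes further evaluation-error terms; in the paper these conversion inequalities are bounded purely by $\gamma(\Omega_3^{(t)}+\Omega_4^{(t)})$ and $\frac{2}{\tau}\Omega_3^{(t)}$, with all error terms already absorbed into the recursion.
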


From Theorem~\ref{thm:EDNPG_inexact_convergence}, we can conclude that if 
\begin{equation}\label{eq:error_bound_E-DNPG}
    \max_{n\in[N],t\in[T]}\norm{Q_{\tau,n}^{\pi_n^{(t)}}-q_{\tau,n}^{\pi_n^{(t)}}}_\infty\leq \frac{(1-\gamma)(1-\rho(\eta))\varepsilon}{2\gamma\left(\sigma\sqrt{N}(2(1-\gamma)+M\sqrt{N}\eta)+2\gamma^2+\eta\tau\right)}\,,
\end{equation}
then inexact entropy-regularized FedNPG could still achieve 2$\varepsilon$-accuracy (i.e. $\norm{\overline{Q}_\tau^{(t)}-Q_\tau^\star}_\infty\leq 2\varepsilon$) within $\max\left\{\frac{2}{\tau\eta},\frac{4}{1-\sigma}\right\}\log\frac{2\gamma \norm{\mathbf{\Omega}^{(0)}}_2}{\varepsilon}$ iterations.

\begin{rmk}
When $\eta=\eta_0$ (cf.~\eqref{eq:eta_bound} and \eqref{eq: zeta}) and $\tau\leq 1$, the RHS of \eqref{eq:error_bound_E-DNPG} is of the order
$$\mathcal{O}\left(\frac{(1-\gamma)\tau\eta_0\varepsilon}{\gamma(\gamma^2+\sigma\sqrt{N}(1-\gamma))}\right)=\mathcal{O}\left(\frac{(1-\gamma)^8\tau^2(1-\sigma)^2\varepsilon}{\gamma(\gamma^2+\sigma\sqrt{N}(1-\gamma))(\gamma^2 N\sigma^2+(1-\sigma)^2\tau^2(1-\gamma)^6)}\right)\,,$$
which can be translated into a crude sample complexity bound when using fresh samples to estimate the soft Q-functions in each iteration. 
\end{rmk}

The rest of this section outlines the proof of Theorem~\ref{thm:EDNPG_inexact_convergence}.  We first state a key lemma that tracks the error recursion of Algorithm~\ref{alg:EDNPG_exact} with inexact policy evaluation, which is a modified version of Lemma~\ref{lm:linear_sys_ent}. 
\begin{lm}
\label{lm:linear_sys_ent_approx}
The following linear system holds for all $t\geq 0$:
\begin{equation}\label{eq:matrix_inexact}
    \mathbf{\Omega}^{(t+1)}
    \leq \mA(\eta)
    \mathbf{\Omega}^{(t)}
    +\underbrace{\begin{pmatrix}
        0\\
        \sigma\sqrt{N}\left(2+\frac{M\sqrt{N}\eta}{1-\gamma}\right)\\
        \frac{\eta\tau}{1-\gamma}\\
        \frac{2\gamma^2}{1-\gamma}
    \end{pmatrix}\norm{\ve}_\infty}_{=: \vb(\eta)}\,,
\end{equation}
where $\mA(\eta)$ is provided in Lemma~\ref{lm:linear_sys_ent}.
In addition, it holds for all $t \geq 0$ that
\begin{align} 
    \norm{\overline{Q}_\tau^{(t)}-Q_\tau^\star}_\infty & \leq \gamma\Omega_3^{(t)} + \gamma\Omega_4^{(t)}\,, \label{eq:watermelon}\\
    \normbig{\log\overline\pi^{(t)}-\log\pi_\tau^\star}_\infty & \leq \frac{2}{\tau} \Omega_3^{(t)}\,. \label{eq:firemelon}
\end{align}
\end{lm}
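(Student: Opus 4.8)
The plan is to retrace the proof of the exact recursion in Lemma~\ref{lm:linear_sys_ent}, treating the per-agent evaluation error as an additive perturbation and tracking how it feeds into each of the four coupled metrics $\Omega_1^{(t)},\dots,\Omega_4^{(t)}$. The single recurring device is the decomposition
$$\vq_\tau^{(t)} = \mQ_\tau^{(t)} + \big(\vq_\tau^{(t)}-\mQ_\tau^{(t)}\big),$$
together with the uniform bound $\norm{q_{\tau,n}^{\pi_n^{(t)}}-Q_{\tau,n}^{\pi_n^{(t)}}}_\infty\le e_n\le\norm{\ve}_\infty$ from \eqref{eq:delta_n}. Since the updates for $\mT^{(t)}$ (cf.~\eqref{eq:Q_tracking_inexact}), $\vxi^{(t)}$, and $\overline\xi^{(t)}$ (cf.~\eqref{eq:avg_xi_t+1_e}) are formally identical to the exact case once $Q_\tau$ is replaced by $q_\tau$, each recursion inherits the very same contraction/coupling coefficients---namely the matrix $\mA(\eta)$---and only acquires an extra inhomogeneous term proportional to $\norm{\ve}_\infty$, which is exactly $\vb(\eta)$. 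The task therefore reduces to verifying, metric by metric, where $\ve$ enters and with what coefficient.

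First I would treat the two consensus-type metrics. For $\Omega_1$, the disagreement of $\log\vxi^{(t)}$ about its mean, the argument is unchanged: because $\overline\xi^{(t)}$ is centered at the average of $\mT^{(t)}$, the perturbation cancels in the centered quantity and the $\mW$-mixing supplies the same $\sigma$-contraction, so the first entry of $\vb(\eta)$ is $0$. The tracking error $\Omega_2=\norm{\mT^{(t)}-\widehat q_\tau^{(t)}\vone_N}$ is where the bulk of the new error appears: expanding the increment $\vq_\tau^{(t+1)}-\vq_\tau^{(t)}$ as the genuine drift $\mQ_\tau^{(t+1)}-\mQ_\tau^{(t)}$ plus the two raw evaluation errors at consecutive iterates, the genuine drift is controlled as before via the smoothness constant $M$ and the effective step $\eta/(1-\gamma)$ (which now also feels the perturbation through the policy update), while the two leftover error terms contribute directly; after $\mW$-mixing and the conversion from a per-agent bound to the $\ell_2$-over-agents norm, these assemble into the second entry $\sigma\sqrt N\big(2+\frac{M\sqrt N\eta}{1-\gamma}\big)\norm{\ve}_\infty$.

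Next come the two optimality metrics in the averaged system. For $\Omega_3=\norm{Q_\tau^\star-\tau\log\overline\xi^{(t)}}_\infty$, the averaged update \eqref{eq:avg_xi_t+1_e} uses $\widehat q_\tau^{(t)}$ in place of $\widehat Q_\tau^{(t)}$; since the new information enters with weight $1-\alpha=\frac{\eta\tau}{1-\gamma}$ and $\norm{\widehat q_\tau^{(t)}-\widehat Q_\tau^{(t)}}_\infty\le\norm{\ve}_\infty$, this gives precisely the third entry. For $\Omega_4$, the nonnegativity surrogate, one repeats the exact comparison of $\overline q_\tau^{(t)}-\tau\log\overline\xi^{(t)}$ against a soft-Bellman backup; the inexactness enters both through $\overline q_\tau^{(t)}$ versus $\overline Q_\tau^{(t)}$ and through the backup, each discounted by $\gamma$, producing the $\frac{2\gamma^2}{1-\gamma}\norm{\ve}_\infty$ fourth entry. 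The two output inequalities \eqref{eq:watermelon}--\eqref{eq:firemelon} are then obtained exactly as in Lemma~\ref{lm:linear_sys_ent}: the bound $\norm{\overline Q_\tau^{(t)}-Q_\tau^\star}_\infty\le\gamma(\Omega_3^{(t)}+\Omega_4^{(t)})$ follows from the soft-Bellman relation between $\overline Q_\tau^{(t)}$ and $\overline\xi^{(t)}$, and $\normbig{\log\overline\pi^{(t)}-\log\pi_\tau^\star}_\infty\le\frac2\tau\Omega_3^{(t)}$ from the log-sum-exp stability estimate for the softmax, neither of which sees the perturbation directly.

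The main obstacle will be the $\Omega_2$ recursion. Unlike the exact case, the increment $\vq_\tau^{(t+1)}-\vq_\tau^{(t)}$ is no longer a clean function of the policy drift, so one must simultaneously control the true Q-drift---which itself depends on the perturbed policy update through $\mT^{(t)}$---and the two raw evaluation errors, and then show that after gradient-tracking mixing they combine into a single $\norm{\ve}_\infty$-proportional term with the stated $\sigma\sqrt N$ and $M$ dependence. Keeping the perturbation strictly additive here, rather than letting it feed back multiplicatively into the contraction factors, is the crux that preserves $\mA(\eta)$ and makes the clean affine recursion \eqref{eq:matrix_inexact} possible.
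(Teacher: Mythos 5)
Your proposal is correct and follows essentially the same route as the paper's proof: retrace the exact recursion of Lemma~\ref{lm:linear_sys_ent} with the evaluation error treated as an additive perturbation, observing that $\Omega_1$ sees no error (the perturbation cancels in the centered quantity), that $\Omega_2$ picks up both the two raw evaluation errors ($2\sigma\sqrt{N}\|\ve\|_\infty$ after agent-wise $\ell_2$ conversion) and the error fed through the drift bound via $M$ and $\eta/(1-\gamma)$, that $\Omega_3$ acquires the error with weight $1-\alpha=\frac{\eta\tau}{1-\gamma}$, and that $\Omega_4$ acquires it through the (inexact) performance-improvement comparison, while \eqref{eq:watermelon}--\eqref{eq:firemelon} carry over verbatim from the exact case. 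The only cosmetic difference is in the fourth entry, where the paper's own derivation (via Lemma~\ref{lm:performance_improvement_inexact}) produces a $\frac{2\gamma}{1-\gamma}\|\ve\|_\infty$ term rather than the stated $\frac{2\gamma^2}{1-\gamma}\|\ve\|_\infty$; your "each discounted by $\gamma$" gloss mirrors the statement, and this discrepancy lies in the paper, not in your argument.
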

\begin{proof}
    See Appendix~\ref{sec_app:lm_linear_sys_ent_approx}.
\end{proof}
By \eqref{eq:matrix_inexact}, we have
\begin{equation*}
    \forall t\in N_+:\quad \mathbf{\Omega}^{(t)}\leq\mA(\eta)^t \mathbf{\Omega}^{(0)}+\sum_{s=1}^t\mA(\eta)^{t-s}\vb(\eta)\,,
\end{equation*}
which gives
\begin{align}
    \norm{\mathbf{\Omega}^{(t)}}_2 &\leq  \rho(\eta)^t\norm{\mathbf{\Omega}^{(0)}}_2+\sum_{s=1}^t\rho(\eta)^{t-s}\norm{\vb(\eta)}_2 \norm{\ve}_\infty \notag\\
   & \leq  \rho(\eta)^t\norm{\mathbf{\Omega}^{(0)}}_2+ \frac{\sigma\sqrt{N}(2(1-\gamma)+M\sqrt{N}\eta)+2\gamma^2+\eta\tau}{(1-\gamma)(1-\rho(\eta))}\norm{\ve}_\infty\,.\label{eq:estimate_t_inexact}
\end{align}
Here,  \eqref{eq:estimate_t_inexact} follows from $\norm{\vb(\eta)}_2\leq \norm{\vb(\eta)}_1=\frac{\sigma\sqrt{N}(2(1-\gamma)+M\sqrt{N}\eta)+2\gamma^2+\eta\tau}{1-\gamma}\norm{\ve}_\infty$ and $\sum_{s=1}^t\rho(\eta)^{t-s}\leq 1/(1-\rho(\eta))$. 
Recall that the bound on $\rho(\eta)$ has already been established in Lemma~\ref{lm:bound_rho}. Therefore we complete the proof of Theorem~\ref{thm:EDNPG_inexact_convergence} by combining the above inequality with \eqref{eq:watermelon} and \eqref{eq:firemelon} in a similar fashion as before. We omit further details for conciseness.

\subsection{Analysis of FedNPG with exact policy evaluation}
\label{sec_app:analysis_DNPG_exact}

We state the formal version of Theorem~\ref{thm:DNPG_exact_rate} below.

\begin{thm}\label{thm:distributed_exact_convergence_0}
    Suppose all $\pi_n^{(0)}$ in Algorithm~\ref{alg:DNPG_exact} are initialized as uniform distribution. When $$0<\eta\leq\eta_1\coloneqq \frac{(1-\sigma)^2(1-\gamma)^3}{8(1+\gamma)\gamma\sqrt{N}\sigma^2}\,,$$
   we have
    \begin{equation}\label{eq:convergence_rate_exact_0}
    \begin{split}
        \frac{1}{T}\sum_{t=0}^{T-1}\left(V^{\star}(\rho)-V^{\overline\pi^{(t)}}(\rho)\right) &\leq  \frac{V^\star(d_\rho^{\pi^\star})}{(1-\gamma)T}+\frac{\log |\A|}{\eta T}+\frac{8(1+\gamma)^2\gamma^2 N\sigma^2}{(1-\gamma)^9(1-\sigma)^2}\eta^2
    \end{split}
    \end{equation}
 for any fixed state distribution $\rho$.
Furthermore, we have
     \begin{equation}\label{eq:consensus_error_0}
\forall n\in[N]:\quad \norm{\log\pi_n^{(t)}-\log\overline\pi^{(t)}}_\infty\leq\frac{32N\sigma}{3(1-\gamma)^4(1-\sigma)}\eta\,.
 \end{equation}
\end{thm}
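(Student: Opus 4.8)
The plan is to reduce the federated dynamics to an \emph{inexact} centralized NPG update on the averaged policy $\overline{\pi}^{(t)}$, and then to run the mirror-descent convergence analysis of \citet{agarwal2021theory} while carefully accounting for the perturbation induced by imperfect consensus. First I would establish the key structural identity: since $\mW$ is doubly stochastic and the tracking update \eqref{eq:Q_tracking_0} preserves the invariant $\frac1N\vone_N^\top\mT^{(t)}(s,a)=\widehat{Q}^{(t)}(s,a)$ with $\widehat{Q}^{(t)}\coloneqq\frac1N\sum_{n}Q_n^{\pi_n^{(t)}}$ (proved by induction from $\mT^{(0)}=\mQ^{(0)}$), averaging the log-policy update \eqref{eq:update_marl_0} over agents collapses the gossip mixing and yields, up to a state-dependent normalization,
$$\overline{\pi}^{(t+1)}(a|s)\propto\overline{\pi}^{(t)}(a|s)\exp\left(\tfrac{\eta}{1-\gamma}\widehat{Q}^{(t)}(s,a)\right).$$
Thus $\overline{\pi}^{(t)}$ evolves exactly as centralized NPG driven by $\widehat{Q}^{(t)}$ in place of the true global Q-function $Q^{\overline{\pi}^{(t)}}$, and the whole error is captured by $\delta^{(t)}\coloneqq\normbig{\widehat{Q}^{(t)}-Q^{\overline{\pi}^{(t)}}}_\infty$.

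Next I would control consensus. Define the $\ell_2$-over-agents deviations $u^{(t)}(s,a)=\normbig{\log\vpi^{(t)}(s,a)-\overline{\log\pi}^{(t)}(s,a)\vone_N}_2$ and the tracking deviation $v^{(t)}(s,a)=\normbig{\mT^{(t)}(s,a)-\widehat{Q}^{(t)}(s,a)\vone_N}_2$, and set $\Omega_1^{(t)}=\norm{u^{(t)}}_\infty$, $\Omega_2^{(t)}=\norm{v^{(t)}}_\infty$. Using the contraction \eqref{eq:property_W}, the fact that the softmax normalizer is a common (action-independent) shift, and the Lipschitz continuity of the local Q-functions in the log-policies with a constant of order $M\coloneqq\frac{(1+\gamma)\gamma}{(1-\gamma)^2}$ (so that consecutive increments $\mQ^{(t+1)}-\mQ^{(t)}$ are controlled by the per-step policy drift, itself $O(\eta)$ plus $O(\Omega_1^{(t)})$), I would derive a two-dimensional linear recursion $(\Omega_1^{(t+1)},\Omega_2^{(t+1)})^\top\le\mA(\eta)\,(\Omega_1^{(t)},\Omega_2^{(t)})^\top+\vb(\eta)$, where $\mA(\eta)$ has spectral radius below one once $\eta\le\eta_1$ and $\vb(\eta)=O(\eta)$. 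Summing the resulting geometric series gives uniform-in-$t$ bounds $\Omega_1^{(t)},\Omega_2^{(t)}\lesssim\eta$, from which \eqref{eq:consensus_error_0} follows after converting the $\ell_2$-over-agents norm to the per-agent $\ell_\infty$ consensus error and bounding $\delta^{(t)}\lesssim M\,\Omega_1^{(t)}$.

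With the inexact-NPG viewpoint in hand, I would carry out the descent argument on $\overline{\pi}^{(t)}$. Writing the one-step change of $\E_{s\sim d^\star}[\KL{\pi^\star(\cdot|s)}{\overline{\pi}^{(t)}(\cdot|s)}]$ (with $d^\star=d^{\pi^\star}_\rho$) in terms of $\widehat{Q}^{(t)}$, upper bounding $\log\overline{z}^{(t)}(s)$ through the next-iterate identity (to avoid the lossy second-order Taylor term), and invoking the performance-difference lemma with the \emph{true} $Q^{\overline{\pi}^{(t)}}$, I obtain a per-step inequality of the form
$$V^\star(d^\star)-V^{\overline{\pi}^{(t)}}(d^\star)\le\tfrac1\eta\big(\mathrm{KL}_t-\mathrm{KL}_{t+1}\big)+\tfrac{1}{1-\gamma}\big(V^{\overline{\pi}^{(t+1)}}(d^\star)-V^{\overline{\pi}^{(t)}}(d^\star)\big)+(\text{error in }\delta^{(t)}).$$
Telescoping the KL terms (which, under uniform initialization, contributes $\frac{\log|\A|}{\eta T}$) and the value terms (contributing $\frac{V^\star(d^\star)}{(1-\gamma)T}$ since the values are bounded by $\frac{1}{1-\gamma}$ and $V^{\overline{\pi}^{(T)}}\le V^\star$), then averaging over $t$, reproduces the centralized rate plus an additive error governed by $\max_t\delta^{(t)}$.

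The main obstacle is obtaining the sharp $\eta^2$ (rather than $\eta$) dependence of the error term, which is precisely what produces the $\mathcal O(1/T^{2/3})$ rate after optimizing $\eta$ subject to $\eta\le\eta_1$. Feeding the crude estimate $\delta^{(t)}\lesssim M\,\Omega_1^{(t)}\lesssim\eta$ linearly into the descent would only give an $O(\eta)$ error, hence at best an $O(1/T)$ rate at the boundary $\eta=\eta_1$ and $O(1/\sqrt T)$ for the optimized interior $\eta$; the gain comes from recognizing that the log-policy consensus $\Omega_1^{(t)}$ is forced \emph{only} through the $\eta$-scaled Q-tracking deviation $\Omega_2^{(t)}$, so that the Q-approximation error effectively carries an additional power of $\eta$. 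Making this rigorous — coupling the fixed-point analysis of the linear system in Step~2 with the descent of Step~3 so that the first-order consensus contribution is absorbed and only the genuinely second-order residual survives — is the delicate part, together with the careful bookkeeping of the $\sqrt N$ and $(1-\sigma)^{-1}$ factors when passing between the $\ell_2$-over-agents metrics of the linear system and the $\ell_\infty$ quantities in the final bound.
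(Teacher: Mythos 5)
Your architecture coincides with the paper's: the averaged policy $\overline\pi^{(t)}$ evolves as inexact NPG driven by $\widehat Q^{(t)}$ (the paper's auxiliary-sequence lemma), the consensus errors obey a $2\times 2$ linear recursion $\mathbf{\Omega}^{(t+1)}\le \mB(\eta)\mathbf{\Omega}^{(t)}+\vd(\eta)$ with $\vd(\eta)=\big(0,\tfrac{(1+\gamma)\gamma N\sigma}{(1-\gamma)^4}\eta\big)^\top$ (Lemma~\ref{lm:linear_sys}), and the descent inequality \eqref{eq:induction_phi} on $\phi^{(t)}(\eta)$ is exactly your per-step mirror-descent bound. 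The genuine gap is the step you yourself flag as ``the delicate part'': everything that separates the claimed $\eta^2$ error term (hence the $1/T^{2/3}$ rate after tuning $\eta$) from a trivial $O(\eta)$ error is left unproved. As written, your Step~2 only asserts $\Omega_1^{(t)},\Omega_2^{(t)}\lesssim\eta$, which, fed into the descent, yields the weaker bound you acknowledge; since the theorem \emph{is} precisely the statement that this loss can be avoided, a write-up that stops there has not proved it.

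For the record, the paper closes this step with a Lyapunov function $\Phi^{(t)}(\eta)=\phi^{(t)}(\eta)+\vp(\eta)^\top\mathbf{\Omega}^{(t)}$, where $\vp(\eta)$ is chosen to satisfy the annihilation identity \eqref{eq:magic}, $\vp(\eta)^\top\left(\mB(\eta)-\mI\right)+\big(\tfrac{2(1+\gamma)\gamma}{(1-\gamma)^4}\eta,0\big)=(0,0)$; the linear-in-$\mathbf{\Omega}$ terms then cancel exactly under telescoping, and the surviving residual $p_2(\eta)\tfrac{(1+\gamma)\gamma N\sigma}{(1-\gamma)^4}$ is $O(\eta^2)$ because $p_2(\eta)=O\big(\eta^2/(1-\sigma)^2\big)$. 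Your own sketched mechanism can in fact be made rigorous more directly, by exploiting the theorem's hypothesis: under uniform initialization $\mathbf{\Omega}^{(0)}=\vzero$, and since $\mB(\eta)$, $\vd(\eta)$ are entrywise nonnegative and $\rho(\mB(\eta))<1$ for $\eta\le\eta_1$, induction gives the fixed-point domination $\mathbf{\Omega}^{(t)}\le(\mI-\mB(\eta))^{-1}\vd(\eta)$ for all $t$. Because $\vd$ has zero first coordinate and the $(1,2)$ entry of $(\mI-\mB(\eta))^{-1}$ equals $\tfrac{\eta\sigma}{(1-\gamma)\det(\mI-\mB(\eta))}$, the first coordinate of this fixed point is
$$\Omega_1^{(t)}\;\le\;\frac{(1+\gamma)\gamma N\sigma^2\eta^2}{(1-\gamma)^5\det(\mI-\mB(\eta))}\;\le\;\frac{4(1+\gamma)\gamma N\sigma^2\eta^2}{(1-\gamma)^5(1-\sigma)^2}\,,$$
where the last inequality is the paper's determinant lower bound \eqref{eq:lb_denominator}. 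Substituting this into \eqref{eq:induction_phi} and telescoping $\phi^{(t)}$ (which contributes $\tfrac{\log|\A|}{\eta T}+\tfrac{V^\star(d_\rho^{\pi^\star})}{(1-\gamma)T}$, as in \eqref{eq:ub_term1}) reproduces \eqref{eq:convergence_rate_exact_0} with exactly the constant $\tfrac{8(1+\gamma)^2\gamma^2N\sigma^2}{(1-\gamma)^9(1-\sigma)^2}$, and the $O(\eta)$ bound on $\norm{\mathbf{\Omega}^{(t)}}_2$ gives \eqref{eq:consensus_error_0}. So your plan is completable along the lines you indicate — either via this fixed-point argument or the paper's Lyapunov weights — but that completion is the heart of the proof and is missing from your attempt.
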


The rest of this section is dedicated to prove Theorem~\ref{thm:distributed_exact_convergence_0}.
Similar to \eqref{eq:overline_Q_tau_t_vec}, we denote the $Q$-functions of $\overline{\pi}^{(t)}$ by $\overline{\mQ}^{(t)}$:
\begin{equation}\label{eq:overline_Q_t_vec}
    \overline{\mQ}^{(t)}\coloneqq 
    \begin{pmatrix} Q_{1}^{\overline{\pi}^{(t)}}\\
    \vdots\\
    Q_{N}^{\overline{\pi}^{(t)}}\end{pmatrix}\,.
\end{equation}

In addition, similar to \eqref{eq:QV_tautau}, we define $\widehat{Q}^{(t)}$, $\overline{Q}^{(t)}\in \R^{|\mathcal{S}||\mathcal{A}|}$ and $\overline{V}^{(t)}\in \R^{|\mathcal{S}|}$ as follows
\begin{subequations} \label{eq:QV}
\begin{align}
    \widehat{Q}^{(t)}&\coloneqq\frac{1}{N}\sum_{n=1}^N Q_{n}^{\pi_n^{(t)}}\,, \label{eq:hat_Q_t}\\
    \overline{Q}^{(t)}&\coloneqq Q^{\overline{\pi}^{(t)}}=\frac{1}{N}\sum_{n=1}^N Q_{n}^{\overline{\pi}^{(t)}}\,. \label{eq:overline_Q_t}\\
    \overline{V}^{(t)}&\coloneqq V^{\overline{\pi}^{(t)}}=\frac{1}{N}\sum_{n=1}^N V_{n}^{\overline{\pi}^{(t)}}\,.\label{eq:overline_V_t}
\end{align}
\end{subequations}

Following the same strategy in the analysis of entropy-regularized FedNPG, we introduce the auxiliary sequence $\{\vxi^{(t)}=(\xi_1^{(t)},\cdots,\xi_N^{(t)})^\top\in\R^{N\times|\mathcal{S}||\mathcal{A}|}\}$ recursively:
\begin{subequations}
\begin{align}
    \vxi^{(0)}(s,a)&\coloneqq \frac{1}{\norm{\exp \left(\frac{1}{N}\sum_{n=1}^N\log\pi_n^{(0)}(\cdot|s)\right)}_1}\cdot \vpi^{(0)}(a|s)\,, \label{eq:zeta_0}\\
    \log\vxi^{(t+1)}(s,a)&=\mW\left(\log\vxi^{(t)}(s,a)+\frac{\eta}{1-\gamma}\mT^{(t)}(s,a)\right),\label{eq:zeta_t+1}
\end{align}
\end{subequations}
as well as the averaged auxiliary sequence $\{\overline{\xi}^{(t)}\in\R^{|\mathcal{S}||\mathcal{A}|}\}$:
\begin{subequations}
\begin{align}
   \overline{\xi}^{(0)}(s,a)&\coloneqq \overline{\pi}^{(0)}(a|s)\,, \label{eq:avg_xi_0}\\
    \log\overline{\xi}^{(t+1)}(s,a)&\coloneqq \log\overline{\xi}^{(t)}(s,a) + \frac{\eta}{1-\gamma}\widehat{Q}^{(t)}(s,a)\,,\quad \forall (s,a)\in \mathcal{S}\times \mathcal{A}, \,\,t\geq 0\,.\label{eq:avg_zeta_t+1}
    \end{align}
    \end{subequations}
As usual, we collect the consensus errors in a vector $\mathbf{\Omega}^{(t)} = (\normbig{u^{(t)}}_\infty,\normbig{v^{(t)}}_\infty)^\top$, where $u^{(t)},v^{(t)}\in \R^{|\S||\A|}$ are defined as:
\begin{align}
    u^{(t)}(s,a)& \coloneqq  \normbig{\log \vxi^{(t)}(s,a)-\log\overline\xi^{(t)}(s,a)\vone_N}_2\,,\label{eq:u_0}\\
    v^{(t)}(s,a)& \coloneqq  \normbig{\mT^{(t)}(s,a)-\widehat{Q}^{(t)}(s,a)\vone_N}_2\,.\label{eq:v_0}
\end{align}

\paragraph{Step 1: establishing the error recursion.} The next key lemma establishes the error recursion of Algorithm~\ref{alg:DNPG_exact}.
\begin{lm}
\label{lm:linear_sys}
The updates of FedNPG satisfy
\begin{equation}\label{eq:matrix_0}
    \mathbf{\Omega}^{(t+1)}
    \leq
    \underbrace{\begin{pmatrix}
        \sigma & \frac{\eta}{1-\gamma}\sigma\\ 
        J\sigma & \sigma\left(1+\frac{(1+\gamma)\gamma\sqrt{N}\eta}{(1-\gamma)^3}\sigma\right)
    \end{pmatrix}}_{=: \mB(\eta)}
    \mathbf{\Omega}^{(t)}
    +
    \underbrace{\begin{pmatrix}
        0\\
        \frac{(1+\gamma)\gamma N\sigma}{(1-\gamma)^4}\eta
    \end{pmatrix}}_{=:\vd(\eta)}
\end{equation}
for all $t\geq 0$, where
\begin{equation}\label{eq:J}
    J\coloneqq \frac{2(1+\gamma)\gamma}{(1-\gamma)^2}\sqrt{N}\,.
\end{equation}
In addition, we have
\begin{equation} \label{eq:induction_phi}
    \phi^{(t+1)}(\eta)\leq \phi^{(t)}(\eta)+\frac{2(1+\gamma)\gamma}{(1-\gamma)^4}\eta\normbig{u^{(t)}}_\infty-\eta\left({V}^{\star}(\rho)-\overline{V}^{(t)}(\rho)\right)\,,
\end{equation}
where
\begin{equation} \label{eq:def_phit}
    \phi^{(t)}(\eta)\coloneqq \ex{s\sim d_\rho^{\pi^\star}}{\KL{\pi^\star(\cdot|s)}{\overline\pi^{(t)}(\cdot|s)}}-\frac{\eta}{1-\gamma}\overline{V}^{(t)}(d_\rho^{\pi^\star})\,,\quad \forall t\geq 0\,.
\end{equation}
Moreover, when $\eta\leq\eta_1$, we have
\begin{equation}\label{eq:consensus_error_vanilla}
\forall n\in[N]:\quad \norm{\log\pi_n^{(t)}-\log\overline\pi^{(t)}}_\infty\leq2\left(\frac{3}{8}\sigma+\frac{5}{8}\right)^t\norm{\mathbf{\Omega}^{(0)}}_2+\frac{32N\sigma}{3(1-\gamma)^4(1-\sigma)}\eta\,.
\end{equation}
\end{lm}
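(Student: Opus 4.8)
The plan is to prove the three assertions of Lemma~\ref{lm:linear_sys} in turn: the consensus recursion \eqref{eq:matrix_0}, the one-step potential descent \eqref{eq:induction_phi}, and the explicit consensus bound \eqref{eq:consensus_error_vanilla}, the last being deduced from the first. The backbone throughout is the contraction property \eqref{eq:property_W} of the mixing matrix together with the gradient-tracking invariant. First I would record that invariant: since $\mT^{(0)}=\mQ^{(0)}$ and $\mW$ is doubly stochastic, induction on \eqref{eq:Q_tracking_0} gives $\tfrac1N\vone_N^\top\mT^{(t)}=\widehat{Q}^{(t)}$ for all $t$, and applying the same averaging to \eqref{eq:zeta_t+1} shows $\tfrac1N\vone_N^\top\log\vxi^{(t)}=\log\overline\xi^{(t)}$, so $\overline\xi^{(t)}$ is genuinely the per-agent mean of $\vxi^{(t)}$ and, after the per-state normalization, $\overline\pi^{(t)}=\softmax(\log\overline\xi^{(t)})$. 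With these identities, the $u$-row of $\mB(\eta)$ follows by subtracting the $\overline\xi$-recursion from \eqref{eq:zeta_t+1}: the argument of $\mW$ has per-agent mean equal to the $\overline\xi$-update, so \eqref{eq:property_W} yields $u^{(t+1)}(s,a)\le\sigma\big(u^{(t)}(s,a)+\tfrac{\eta}{1-\gamma}v^{(t)}(s,a)\big)$. For the $v$-row I would expand \eqref{eq:Q_tracking_0}, note that the mean of $\mT^{(t)}+\mQ^{(t+1)}-\mQ^{(t)}$ is exactly $\widehat{Q}^{(t+1)}$, apply \eqref{eq:property_W}, and reduce to bounding $\max_{s,a}\normbig{\mQ^{(t+1)}(s,a)-\mQ^{(t)}(s,a)-(\widehat{Q}^{(t+1)}-\widehat{Q}^{(t)})(s,a)\vone_N}_2$ by $\sqrt N\,\max_n\norm{Q_n^{\pi_n^{(t+1)}}-Q_n^{\pi_n^{(t)}}}_\infty$.

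The heart of the $v$-row is a Lipschitz step that I would handle next. Invoking the Lipschitz property of $Q^\pi$ in the log-policy (the $\tau=0$ instance of the bound underlying $M$ in Lemma~\ref{lm:linear_sys_ent}, here $M=\tfrac{(1+\gamma)\gamma}{(1-\gamma)^2}$), each per-agent increment obeys $\norm{Q_n^{\pi_n^{(t+1)}}-Q_n^{\pi_n^{(t)}}}_\infty\le M\cdot(\text{one-step change of }\log\pi_n)$. Using $\mW\vone_N=\vone_N$ and the zero-mean of $\mT^{(t)}-\widehat Q^{(t)}\vone_N$, this change decomposes into (i) a gossip term bounded by $(1+\sigma)\norm{u^{(t)}}_\infty\le2\norm{u^{(t)}}_\infty$, (ii) a tracking-consensus term bounded by $\tfrac{\eta\sigma}{1-\gamma}\norm{v^{(t)}}_\infty$ — the extra factor $\sigma$ arising precisely because $\mT^{(t)}$ is mixed by $\mW$ before it enters the update — and (iii) a drift term $\tfrac{\eta}{1-\gamma}\widehat Q^{(t)}$ bounded by $\tfrac{\eta}{(1-\gamma)^2}$. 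Assembling these pieces, multiplying by $\sigma\sqrt N M$, and using $2\sqrt N M=J$ reproduces the second row of $\mB(\eta)$ together with the constant vector $\vd(\eta)$; the only care needed is tracking the powers of $N$, $\sigma$ and $(1-\gamma)$.

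For the descent \eqref{eq:induction_phi} I would exploit that the averaged update reads $\overline\pi^{(t+1)}(a|s)\propto\overline\pi^{(t)}(a|s)\exp\!\big(\tfrac{\eta}{1-\gamma}\widehat Q^{(t)}(s,a)\big)$, i.e.\ an \emph{inexact} NPG step whose driving vector differs from the true $\overline Q^{(t)}=Q^{\overline\pi^{(t)}}$ by $\norm{\widehat Q^{(t)}-\overline Q^{(t)}}_\infty=\normbig{\tfrac1N\sum_n(Q_n^{\pi_n^{(t)}}-Q_n^{\overline\pi^{(t)}})}_\infty\le M\norm{u^{(t)}}_\infty$, again by the $Q$-Lipschitz bound applied to the per-agent consensus error. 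I would then replay the centralized NPG potential argument of \citet{agarwal2021theory}: expand $\KL{\pi^\star}{\overline\pi^{(t)}}-\KL{\pi^\star}{\overline\pi^{(t+1)}}$ through the soft-max update in terms of $\widehat Q^{(t)}$ and the log-normalizer, invoke the performance-difference lemma to produce $-\eta\big(V^\star(\rho)-\overline V^{(t)}(\rho)\big)$, and use the value-improvement inequality so that the value terms telescope into $\phi^{(t)}$ from \eqref{eq:def_phit}. Every appearance of $\overline Q^{(t)}$ in the exact argument is replaced by $\widehat Q^{(t)}$ at the cost of the $M\norm{u^{(t)}}_\infty$ mismatch, which enters both the KL-difference and the value-improvement step (the latter carrying an extra $\tfrac1{1-\gamma}$ discounting) and accumulates to $\tfrac{2(1+\gamma)\gamma}{(1-\gamma)^4}\eta\norm{u^{(t)}}_\infty$.

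Finally, for \eqref{eq:consensus_error_vanilla} I would first observe $\norm{\log\pi_n^{(t)}-\log\overline\pi^{(t)}}_\infty\le2\norm{u^{(t)}}_\infty\le2\normbig{\mathbf{\Omega}^{(t)}}_2$ (a single agent's deviation is at most the $\ell_2$ deviation, and the passage from $\vxi$ to $\vpi$ costs a log-sum-exp normalizer, giving the factor $2$), then iterate \eqref{eq:matrix_0} entrywise to get $\mathbf{\Omega}^{(t)}\le\mB(\eta)^t\mathbf{\Omega}^{(0)}+\sum_{s<t}\mB(\eta)^s\vd(\eta)$. The decisive estimate is on the spectral radius of the $2\times2$ nonnegative $\mB(\eta)$: writing its entries as $a,b,c,d$, the choice $\eta\le\eta_1$ forces $4bc\le(1-\sigma)^2$ and the $(2,2)$-excess $\le\tfrac{(1-\sigma)^2}{8}$, which fed into the closed form $\tfrac12\big(a+d+\sqrt{(a-d)^2+4bc}\big)$ gives radius $\le\tfrac38\sigma+\tfrac58<1$; combining the resulting decay $(\tfrac38\sigma+\tfrac58)^t$ with the geometric series $\tfrac{\norm{\vd(\eta)}_2}{1-(\frac38\sigma+\frac58)}\le\tfrac{8\norm{\vd(\eta)}_2}{3(1-\sigma)}$ and $(1+\gamma)\gamma\le2$ produces the stated constant $\tfrac{32N\sigma}{3(1-\gamma)^4(1-\sigma)}\eta$. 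The hardest points I anticipate are twofold: making the spectral-radius bound hold \emph{uniformly} over $\sigma\in[0,1)$, where the margin in $\tfrac38\sigma+\tfrac58$ is tight and pins down the precise $\eta_1$; and legitimately turning the entrywise recursion into the $\ell_2$ bound $\normbig{\mathbf{\Omega}^{(t)}}_2\le\rho^t\normbig{\mathbf{\Omega}^{(0)}}_2+\cdots$ despite $\mB(\eta)$ being strongly non-normal (its large $(2,1)$ entry $J\sigma$ makes the operator norm exceed $1$), so that the argument must lean on the nonnegativity/Perron–Frobenius structure rather than a naive spectral-norm estimate.
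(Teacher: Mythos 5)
Your plan reproduces the paper's proof essentially step for step: the $u$-row and $v$-row of \eqref{eq:matrix_0} are derived exactly as you describe (mixing contraction \eqref{eq:property_W}, the Q-tracking identity, and the $\tau=0$ Lipschitz constant $M=\frac{(1+\gamma)\gamma}{(1-\gamma)^2}$ of Lemma~\ref{lm:difference_soft_Q} applied to suitably centered logits); the descent inequality \eqref{eq:induction_phi} is precisely the paper's Lemma~\ref{lm:performance_improvement_0} combined with $\normbig{\widehat Q^{(t)}-\overline Q^{(t)}}_\infty\le M\normbig{u^{(t)}}_\infty$; and the consensus bound follows the same route (characteristic polynomial of $\mB(\eta)$, the estimate $\rho(\mB(\eta))\le\frac38\sigma+\frac58$ under $\eta\le\eta_1$, a geometric sum, and the factor $2$ from Lemma~\ref{lm:log_policy}). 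One minor difference works in your favor: assembling the drift agent by agent (the common term $\frac{\eta}{1-\gamma}\big(\widehat Q^{(t)}-V^\star\big)$ contributes $\frac{\eta}{(1-\gamma)^2}$ per agent, and the $\sqrt N$ enters only once, through the $\ell_2$ aggregation of the per-agent Q-increments) yields $\sqrt N$ in place of the paper's $N$ in $\vd(\eta)$; the paper instead bounds each agent's logit change by the $\ell_2$-over-agents quantity $\normbig{w_0^{(t)}}_\infty$, which double-counts the common drift. A tighter $\vd(\eta)$ of course still implies \eqref{eq:matrix_0} as stated.

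The difficulty you flag at the end is genuine, and you should know the paper does not resolve it: its Step 4 literally iterates $\normbig{\mathbf{\Omega}^{(t)}}_2\le\rho(\mB(\eta))\normbig{\mathbf{\Omega}^{(t-1)}}_2+d_2(\eta)$, using the spectral radius as if it were the $\ell_2$ operator norm --- exactly the ``naive spectral-norm estimate'' you call illegitimate, and indeed invalid here since $\|\mB(\eta)\|_2\ge J\sigma$ typically exceeds $1$. The clean repair, in the setting the downstream Theorem~\ref{thm:distributed_exact_convergence_0} actually uses ($\mathbf{\Omega}^{(0)}=\vzero$ under uniform initialization), is the monotone one you hint at: iterate \eqref{eq:matrix_0} entrywise, observe that the comparison sequence increases to the fixed point $(\mI-\mB(\eta))^{-1}\vd(\eta)$, and bound its entries via $\det(\mI-\mB(\eta))\ge\frac58(1-\sigma)^2$ for $\eta\le\eta_1$. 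Two caveats then surface. First, this route bounds $\Omega_1^{(t)}$ by $\frac{\eta\sigma}{1-\gamma}\cdot\frac{d_2(\eta)}{\det(\mI-\mB(\eta))}$ rather than by $\frac{d_2(\eta)}{1-\rho(\mB(\eta))}$; the two differ by a factor of order $\frac{(1+\gamma)\gamma\,\eta\sigma}{(1-\gamma)(1-\sigma)}$, which is $O(1)$ over most of the admissible range but can exceed $1$ when $\eta$ is taken all the way up to $\eta_1$ with $\sigma$ very small, so the exact constant in \eqref{eq:consensus_error_vanilla} is only recovered under a (mild) further restriction on $\eta$. Second, for general $\mathbf{\Omega}^{(0)}\neq\vzero$ the transient term $\rho(\mB(\eta))^t\normbig{\mathbf{\Omega}^{(0)}}_2$ additionally picks up the eigenvector-conditioning factor of the non-normal $\mB(\eta)$. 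In short, your Steps 1--3 are sound and identical in substance to the paper's; your Step 4, carried out through the nonnegativity/fixed-point structure as you propose, is the correct way to finish, and it is the paper's write-up, not your plan, that glosses over this point.
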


\begin{proof}
    See Appendix~\ref{sec:pf_lm_linear_sys}.
\end{proof}

Note that when all $\pi_n^{(0)}$ in Algorithm~\ref{alg:DNPG_exact} are initialized as uniform distribution, $\mathbf{\Omega}^{(0)}=\vzero$ and \eqref{eq:consensus_error_vanilla} indicates \eqref{eq:consensus_error_0} in Theorem~\ref{thm:distributed_exact_convergence_0}.

\paragraph{Step 2: bounding the value functions.} 
Let $\vp\in\R^2$ be defined as:
\begin{equation}\label{eq:p}
    \vp(\eta)=
    \begin{pmatrix}
        p_1(\eta)\\
        p_2(\eta)
    \end{pmatrix}
    \coloneqq
    \frac{2(1+\gamma)\gamma}{(1-\gamma)^4}
    \begin{pmatrix}
    \frac{\sigma(1-\gamma)\left(1-\sigma-(1+\gamma)\gamma\sqrt{N}\sigma\eta/(1-\gamma)^3\right)\eta}{(1-\gamma)\left(1-\sigma-(1+\gamma)\gamma\sqrt{N}\sigma^2\eta/(1-\gamma)^3\right)(1-\sigma)-J\sigma^2\eta}\\
    \frac{\sigma\eta^2}{(1-\gamma)\left(1-\sigma-(1+\gamma)\gamma\sqrt{N}\sigma^2\eta/(1-\gamma)^3\right)(1-\sigma)-J\sigma^2\eta} 
    \end{pmatrix};
\end{equation}
the rationale for this choice will be made clear momentarily. We define the following Lyapunov function 
\begin{equation}\label{eq:Lyapunov}
    \Phi^{(t)}(\eta)=\phi^{(t)}(\eta)+\vp(\eta)^\top \mathbf{\Omega}^{(t)}\,,\quad \forall t\geq 0\,,
\end{equation}
which satisfies
\begin{align}
    \Phi^{(t+1)}(\eta)&=\phi^{(t+1)}(\eta)+\vp(\eta)^\top \mathbf{\Omega}^{(t+1)}\notag\\
    &\leq \phi^{(t)}(\eta)+\frac{2(1+\gamma)\gamma}{(1-\gamma)^4}\eta\normbig{u^{(t)}}_\infty-\eta\left(V^{\star}(\rho)-\overline{V}^{(t)}(\rho)\right) + \vp(\eta)^\top\left(\mB(\eta)\mathbf{\Omega}^{(t)}+\vd(\eta)\right) \notag \\
    &= \Phi^{(t)}(\eta)+\left[\vp(\eta)^\top\left(\mB(\eta)-\mI \right)+\left(\frac{2(1+\gamma)\gamma}{(1-\gamma)^4}\eta, 0\right)\right]\mathbf{\Omega}^{(t)}-\eta\left(V^{\star}(\rho)-\overline{V}^{(t)}(\rho)\right)\notag\\
    &\qquad +p_2(\eta)\frac{(1+\gamma)\gamma N\sigma}{(1-\gamma)^4}\eta\,.\label{eq:ineq_lyap}
\end{align}
Here, the second inequality follows from \eqref{eq:induction_phi}.
One can verify that the second term vanishes due to the choice of $\vp(\eta)$:
\begin{equation} \label{eq:magic}
\vp(\eta)^\top\left(\mB(\eta)-\mI \right)+\left(\frac{2(1+\gamma)\gamma}{(1-\gamma)^4}\eta, 0\right)=(0,0)\,.
\end{equation}
Therefore, we conclude that
$$V^{\star}(\rho)-\overline{V}^{(t)}(\rho)\leq\frac{\Phi^{(t)}(\eta)-\Phi^{(t+1)}(\eta)}{\eta}+p_2(\eta)\frac{(1+\gamma)\gamma N\sigma}{(1-\gamma)^4}\,.$$
Averaging over $t = 0,\cdots, T-1$,
\begin{align}
    &\frac{1}{T}\sum_{t=0}^{T-1}\left(V^{\star}(\rho)-\overline{V}^{(t)}(\rho)\right)\notag\\
    &\leq \frac{\Phi^{(0)}(\eta)-\Phi^{(T)}(\eta)}{\eta T}+\frac{2(1+\gamma)^2\gamma^2}{(1-\gamma)^8}\cdot\frac{N\sigma^2\eta^2 }{(1-\gamma)(1-\sigma-(1+\gamma)\gamma\sqrt{N}\sigma^2\eta/(1-\gamma)^3)(1-\sigma)-\sigma ^2 J\eta}\,.\label{eq:ub_to_be_refined}
\end{align}


\paragraph{Step 3: simplifying the expression.}
We first upper bound the first term in the RHS of \eqref{eq:ub_to_be_refined}. 
Assuming uniform initialization for all $\pi_n^{(0)}$ in Algorithm~\ref{alg:DNPG_exact}, we have $\norm{u^{(0)}}_\infty=\norm{v^{(0)}}_\infty=0$, and 
$$\mathbb{E}_{s\sim d_\rho^{\pi^\star}}\left[\KL{\pi^\star(\cdot|s)}{\overline\pi^{(0)}(\cdot|s)}\right]\leq\log|\A|.$$
Therefore, putting together relations \eqref{eq:Lyapunov} and \eqref{eq:phi} we have
\begin{align}
    \frac{\Phi^{(0)}(\eta)-\Phi^{(T)}(\eta)}{\eta T} & \leq  \frac{\log |\A|}{T\eta}+\frac{1}{T}\left(\vp(\eta)^\top\mathbf{\Omega}^{(0)}/\eta+\frac{V^\star(d_\rho^{\pi^\star})}{1-\gamma}\right)=\frac{\log |\A|}{T\eta}+\frac{V^\star (d_\rho^{\pi^\star})}{T(1-\gamma)}\,,\label{eq:ub_term1}
\end{align}

To continue, we upper bound the second term in the RHS of \eqref{eq:ub_to_be_refined}. 
Note that
\begin{equation*}
    \eta\leq \eta_1\leq \frac{(1-\sigma)(1-\gamma)^3}{2(1+\gamma)\gamma\sqrt{N}\sigma^2}\,,
\end{equation*}
which gives
\begin{equation}\label{eq:bd_denominator_1}
    \frac{(1+\gamma)\gamma\sqrt{N}\sigma^2}{(1-\gamma)^3}\eta\leq \frac{1-\sigma}{2}.
\end{equation}

Thus we have
\begin{align}
    &(1-\gamma)(1-\sigma-(1+\gamma)\gamma\sqrt{N}\sigma^2\eta/(1-\gamma)^3)(1-\sigma)-J\sigma^2\eta\notag\\
   & \geq  (1-\gamma)(1-\sigma)^2/2-J\sigma^2\eta_1\notag\\
   & \geq  (1-\gamma)(1-\sigma)^2/4\,,\label{eq:lb_denominator}
\end{align}
where the first inequality follows from \eqref{eq:bd_denominator_1} and the second inequality follows from the definition of $\eta_1$ and $J$.
By \eqref{eq:lb_denominator}, we deduce
\begin{equation}\label{eq:ub_term2}
    \frac{2(1+\gamma)^2\gamma^2}{(1-\gamma)^8}\cdot\frac{N\sigma^2\eta^2}{(1-\gamma)(1-\sigma-(1+\gamma)\gamma\sqrt{N}\sigma^2\eta/(1-\gamma)^3)(1-\sigma)-J\sigma^2\eta}\leq \frac{8(1+\gamma)^2\gamma^2 N\sigma^2}{(1-\gamma)^9(1-\sigma)^2}\eta^2\,,
\end{equation}
and our advertised bound~\eqref{eq:convergence_rate_exact_0} thus follows from plugging \eqref{eq:ub_term1} and \eqref{eq:ub_term2} into \eqref{eq:ub_to_be_refined}.


\subsection{Analysis of FedNPG with inexact policy evaluation}
\label{sec_app:analysis_DNPG_inexact}

We state the formal version of Theorem~\ref{thm:DNPG_inexact_convergence_informal} below.
\begin{thm}\label{thm:DNPG_inexact_convergence}
    Suppose that $q_{n}^{\pi_n^{(t)}}$ are used in replace of $Q_n^{\pi_n^{(t)}}$ in Algorithm~\ref{alg:DNPG_exact}. Suppose all $\pi_n^{(0)}$ in Algorithm~\ref{alg:DNPG_exact} set to uniform distribution. Let $$0<\eta\leq\eta_1\coloneqq \frac{(1-\sigma)^2(1-\gamma)^3}{8(1+\gamma)\gamma\sqrt{N}\sigma^2}\,,$$
 we have
    \begin{equation}\label{eq:convergence_rate_inexact_0}
    \begin{split}
        &\frac{1}{T}\sum_{t=0}^{T-1}\left(V^{\star}(\rho)-V^{\overline\pi^{(t)}}(\rho)\right)\notag\\
        &\leq \frac{V^\star(d_\rho^{\pi^\star})}{(1-\gamma)T}+\frac{\log |\A|}{\eta T}+\frac{8(1+\gamma)^2\gamma^2 N\sigma^2}{(1-\gamma)^9(1-\sigma)^2}\eta^2\\
        &\qquad+\left[\frac{8(1+\gamma)\gamma}{(1-\gamma)^5(1-\sigma)^2}\sqrt{N}\sigma\eta\left(\frac{(1+\gamma)\gamma\eta\sqrt{N}}{(1-\gamma)^3}+2\right)+\frac{2}{(1-\gamma)^2}\right]\max_{n\in[N],t\in[T]}\norm{Q_{n}^{\pi_n^{(t)}}-q_{n}^{\pi_n^{(t)}}}_\infty
    \end{split}
   \end{equation}
 for any fixed state distribution $\rho$.  

Furthermore, we have
\begin{equation}\label{eq:consensus_error}
\forall n\in[N]:\quad \norm{\log\pi_n^{(t)}-\log\overline\pi^{(t)}}_\infty\leq\frac{32}{3(1-\sigma)}\left(\frac{N\sigma}{(1-\gamma)^4}\eta+\sqrt{N}\sigma\left(\frac{\eta \sqrt{N}}{(1-\gamma)^3}+1\right)\max_{n\in[N],t\in[T]}\norm{Q_{n}^{\pi_n^{(t)}}-q_{n}^{\pi_n^{(t)}}}_\infty\right)\,.
\end{equation}
\end{thm}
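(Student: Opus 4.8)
The plan is to mirror the three-step analysis of the exact case in Section~\ref{sec_app:analysis_DNPG_exact}, treating the inexact evaluation as a bounded $\ell_\infty$ perturbation that enters both the gradient-tracking dynamics and the policy-ascent inequality. Throughout, let $\varepsilon_{\mathsf{eval}}\coloneqq\max_{n\in[N],t\in[T]}\norm{Q_n^{\pi_n^{(t)}}-q_n^{\pi_n^{(t)}}}_\infty$ denote the worst-case evaluation error. Since Algorithm~\ref{alg:DNPG_exact} now tracks the average of the \emph{inexact} local Q-functions, the tracking variable $\mT^{(t)}$ converges toward $\widehat{q}^{(t)}\coloneqq\frac{1}{N}\sum_{n=1}^N q_n^{\pi_n^{(t)}}$ rather than $\widehat{Q}^{(t)}$, the two differing by at most $\varepsilon_{\mathsf{eval}}$ in $\ell_\infty$. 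I would redefine the auxiliary and averaged auxiliary sequences $\vxi^{(t)},\overline{\xi}^{(t)}$ accordingly (using $\widehat{q}^{(t)}$ in place of $\widehat{Q}^{(t)}$), so that the consensus metrics $u^{(t)},v^{(t)}$ measure deviation from the inexact averages.

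First, I would establish the inexact analog of Lemma~\ref{lm:linear_sys}. The consensus recursion acquires an additive term proportional to $\varepsilon_{\mathsf{eval}}$, giving $\mathbf{\Omega}^{(t+1)}\leq\mB(\eta)\mathbf{\Omega}^{(t)}+\vd(\eta)+\vb(\eta)$ with $\vb(\eta)$ an error vector whose entries scale like $\sqrt{N}\sigma\eta\cdot\varepsilon_{\mathsf{eval}}$ (coming from the tracking update), exactly analogous to \eqref{eq:matrix_inexact} in the entropy-regularized case. More delicately, the descent inequality \eqref{eq:induction_phi} for $\phi^{(t)}(\eta)$ must be re-derived with $q_n$ replacing $Q_n$. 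Applying the standard performance-difference / NPG-ascent argument to the inexact update and inserting $q_n=Q_n+(q_n-Q_n)$ introduces an additional term of order $\varepsilon_{\mathsf{eval}}/(1-\gamma)$ on the right-hand side, while the leading $-\eta(V^{\star}(\rho)-\overline{V}^{(t)}(\rho))$ term is preserved, since the global value gap is still measured against the \emph{true} value functions.

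Second, I would run the same Lyapunov argument with $\Phi^{(t)}(\eta)=\phi^{(t)}(\eta)+\vp(\eta)^\top\mathbf{\Omega}^{(t)}$ and the identical choice of $\vp(\eta)$ in \eqref{eq:p}. The cancellation identity \eqref{eq:magic} still eliminates the $\mathbf{\Omega}^{(t)}$ term, so telescoping $\Phi^{(t)}$ over $t=0,\ldots,T-1$ reproduces \eqref{eq:ub_to_be_refined} plus two new contributions: one from $\vp(\eta)^\top\vb(\eta)$ (propagated tracking error) and one from the perturbation of the descent inequality. Collecting these yields a coefficient on $\varepsilon_{\mathsf{eval}}$ of the form $\frac{8(1+\gamma)\gamma}{(1-\gamma)^5(1-\sigma)^2}\sqrt{N}\sigma\eta\big(\tfrac{(1+\gamma)\gamma\eta\sqrt{N}}{(1-\gamma)^3}+2\big)+\frac{2}{(1-\gamma)^2}$ after bounding denominators via $\eta\leq\eta_1$ exactly as in \eqref{eq:lb_denominator}--\eqref{eq:ub_term2}, while the first two (non-error) terms are unchanged from the exact bound \eqref{eq:convergence_rate_exact_0}. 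The consensus bound \eqref{eq:consensus_error} then follows by iterating the perturbed consensus recursion, as in \eqref{eq:consensus_error_vanilla}, with the geometric sum $\sum_s\rho^{t-s}$ converting the per-step error $\vb(\eta)$ into the $1/(1-\sigma)$ prefactor.

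The main obstacle is the perturbation of the policy-ascent inequality. Unlike the tracking dynamics, where the evaluation error enters linearly and transparently, the descent step couples the inexact global Q-estimate (which only tracks $\widehat{q}^{(t)}$, itself a consensus-corrupted proxy for the true global Q) with the KL-based potential $\phi^{(t)}$. The care required is to show that replacing $Q_n$ by $q_n$ contributes only an additive $\mathcal{O}(\varepsilon_{\mathsf{eval}}/(1-\gamma))$ term that does not compound across iterations, rather than entering multiplicatively or accumulating; this is what guarantees the convergence rate is unchanged and the error enters as a single stationary offset. Verifying this requires carefully separating the three sources of error in $\mT^{(t)}-\overline{Q}^{(t)}$ --- gradient-tracking lag, policy heterogeneity/consensus, and evaluation inexactness --- and bounding each in $\ell_\infty$.
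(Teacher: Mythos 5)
Your proposal matches the paper's proof essentially step for step: the paper likewise redefines the averaged auxiliary sequence around the inexact average $\widehat{q}^{(t)}$, derives the perturbed linear system and perturbed descent inequality (Lemma~\ref{lm:linear_sys_inexact}), reuses the same Lyapunov function $\Phi^{(t)}(\eta)$ with the identical $\vp(\eta)$ and the cancellation \eqref{eq:magic}, telescopes, bounds denominators via $\eta\leq\eta_1$, and obtains the consensus bound from a geometric sum over the perturbed recursion. One sketch-level imprecision worth flagging (though it does not affect your final, correctly stated coefficients): the tracking perturbation actually contains a term $2\sqrt{N}\sigma\norm{\ve}_\infty$ that is \emph{not} multiplied by $\eta$, and the descent-inequality perturbation is $\frac{2\eta}{(1-\gamma)^2}\norm{\ve}_\infty$ rather than order $\varepsilon_{\mathsf{eval}}/(1-\gamma)$.
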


We next outline the proof of Theorem~\ref{thm:DNPG_inexact_convergence}. With slight abuse of notation,
we again define $e_n\in\R$ as
\begin{equation}\label{eq:delta_n_vanilla}
    e_n:= \max_{t\in[T]}\norm{Q_{n}^{\pi_n^{(t)}}-q_{n}^{\pi_n^{(t)}}}_\infty\,,\quad n\in[N]\,,
\end{equation}
and let $\ve=(e_1,\cdots,e_n)^\top$. We define the collection of {\em inexact} Q-function estimates as
\begin{align*}
\vq^{(t)}  \coloneqq \Big( q_{1}^{\pi_1^{(t)}},\cdots, q_{N}^{\pi_N^{(t)}}\Big)^\top,
\end{align*}
and then the update rule \eqref{eq:Q_tracking_0} should be understood as
    \begin{equation}\label{eq:Q_tracking_0_inexact}
                \mT^{(t+1)}(s,a)=\mW \left(\mT^{(t)}(s,a)+\vq^{(t+1)}(s,a)-\vq^{(t)}(s,a) \right)
            \end{equation}
in the inexact setting.
Define $\widehat{q}^{(t)}$, the approximation of $\widehat{Q}^{(t)}$ as
\begin{align}
    \widehat{q}^{(t)}&\coloneqq\frac{1}{N}\sum_{n=1}^N q_{n}^{\pi_n^{(t)}}\,, \label{eq:hat_q_t}
\end{align}
we adapt the averaged auxiliary sequence $\{\overline{\xi}^{(t)}\in\R^{|\mathcal{S}||\mathcal{A}|}\}$ to the inexact updates as follows:
\begin{subequations}
\begin{align}
    \overline{\xi}^{(0)}(s,a)&\coloneqq \overline{\pi}^{(0)}(a|s)\,, \label{eq:avg_zeta_0_inexact}\\
    \overline{\xi}^{(t+1)}(s,a)&\coloneqq \overline{\xi}^{(t)}(s,a)\exp\left(\frac{\eta}{1-\gamma}\widehat{q}^{(t)}(s,a)\right)\,,\quad \forall (s,a)\in \mathcal{S}\times \mathcal{A}, \,\,t\geq 0\,.\label{eq:avg_zeta_t+1_inexact}
\end{align}
\end{subequations}

As usual, we define the consensus error vector as $\mathbf{\Omega}^{(t)} = (\normbig{u^{(t)}}_\infty,\normbig{v^{(t)}}_\infty)^\top$, where $u^{(t)},v^{(t)}\in \R^{|\S||\A|}$ are given by  
\begin{align}
    u^{(t)}(s,a)\coloneqq & \norm{\log\vxi^{(t)}(s,a)-\log\overline\xi^{(t)}(s,a)\vone_N}_2\,,\label{eq:u_0_inexact}\\
    v^{(t)}(s,a)\coloneqq & \norm{\mT^{(t)}(s,a)-\widehat{q}^{(t)}(s,a)\vone_N}_2\,.\label{eq:v_0_inexact}
\end{align}
The following lemma characterizes the dynamics of the error vector $\mathbf{\Omega}^{(t)}$, perturbed by additional approximation error.
\begin{lm}
    \label{lm:linear_sys_inexact}
    The updates of inexact FedNPG satisfy
    \begin{equation}\label{eq:matrix_0_inexact}
        \mathbf{\Omega}^{(t+1)}
        \leq
        \mB(\eta)
        \mathbf{\Omega}^{(t)}
        +
        \vd(\eta)
        +
        \underbrace{\begin{pmatrix}
            0\\
            \sqrt{N}\sigma\left(\frac{(1+\gamma)\gamma \eta \sqrt{N}}{(1-\gamma)^3}+2\right)
        \end{pmatrix}\normbig{\ve}_\infty}_{=: \vc(\eta)}\,.
    \end{equation}
    In addition, we have
    \begin{equation}\label{eq:A0_inexact}
        \phi^{(t+1)}(\eta)\leq \phi^{(t)}(\eta)+\frac{2(1+\gamma)\gamma}{(1-\gamma)^4}\eta\norm{u^{(t)}}_\infty+\frac{2\eta}{(1-\gamma)^2}\norm{\ve}_\infty-\eta\left(V^{\star}(\rho)-\overline{V}^{(t)}(\rho)\right)\,,
    \end{equation}    
    where $  \phi^{(t)}(\eta)$ is defined in \eqref{eq:def_phit}.

    Moreover, when $\eta\leq\eta_1$, we have
\begin{equation}\label{eq:consensus_error_0_inexact}
\forall n\in[N]:\quad \norm{\log\pi_n^{(t)}-\log\overline\pi^{(t)}}_\infty\leq2\left(\frac{3}{8}\sigma+\frac{5}{8}\right)^t\norm{\mathbf{\Omega}^{(0)}}_2+\frac{32}{3(1-\sigma)}\left(\frac{N\sigma}{(1-\gamma)^4}\eta+\sqrt{N}\sigma\left(\frac{\eta \sqrt{N}}{(1-\gamma)^3}+1\right)\norm{\ve}_\infty\right)\,.
\end{equation}
\end{lm}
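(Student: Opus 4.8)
The plan is to mirror the proof of the exact recursion in Lemma~\ref{lm:linear_sys}, substituting the inexact estimates $\vq^{(t)}$ for the true $\mQ^{(t)}$ and tracking the additional perturbation they introduce at each place they enter. Throughout I exploit the uniform bound $\norm{Q_n^{\pi_n^{(t)}}-q_n^{\pi_n^{(t)}}}_\infty\le e_n\le\norm{\ve}_\infty$ for every $n$ and $t$, the contraction \eqref{eq:property_W} of $\mW$ on the orthogonal complement of $\vone_N$, and the double stochasticity of Assumption~\ref{asmp:mixing matrix}, which preserves the dynamic-average-consensus identity $\tfrac1N\vone_N^\top\mT^{(t)}=\widehat q^{(t)}$. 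Since $\widehat q^{(t)}-\widehat Q^{(t)}=\tfrac1N\sum_n(q_n^{\pi_n^{(t)}}-Q_n^{\pi_n^{(t)}})$, we immediately get the basic estimate $\norm{\widehat q^{(t)}-\widehat Q^{(t)}}_\infty\le\tfrac1N\sum_n e_n\le\norm{\ve}_\infty$, which is what converts Q-estimation errors into the scalar $\norm{\ve}_\infty$ driving every perturbation.

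First I would establish the matrix recursion \eqref{eq:matrix_0_inexact}. The first coordinate $\Omega_1^{(t)}=\norm{u^{(t)}}_\infty$ tracks the consensus error of the auxiliary variable $\vxi^{(t)}$, whose update \eqref{eq:zeta_t+1} (now with $\mT^{(t)}$ the inexact tracking variable) involves the Q-estimates only through $\mT^{(t)}$, whose own deviation from consensus is already measured by $\Omega_2^{(t)}$; hence the first row is identical to the exact case and the first entry of $\vc(\eta)$ vanishes. For $\Omega_2^{(t)}=\norm{v^{(t)}}_\infty$, I would expand $\mT^{(t+1)}-\widehat q^{(t+1)}\vone_N$ via the inexact tracking update \eqref{eq:Q_tracking_0_inexact} and $\mW\vone_N=\vone_N$, obtaining the contraction $\sigma\norm{\mT^{(t)}-\widehat q^{(t)}\vone_N+(\vq^{(t+1)}-\vq^{(t)})-(\widehat q^{(t+1)}-\widehat q^{(t)})\vone_N}_2$. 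I then split $\vq^{(t+1)}-\vq^{(t)}=(\mQ^{(t+1)}-\mQ^{(t)})+[(\vq^{(t+1)}-\mQ^{(t+1)})-(\vq^{(t)}-\mQ^{(t)})]$: the first bracket is bounded exactly as in Lemma~\ref{lm:linear_sys} by the Lipschitz dependence of the local Q-functions on the policy increment, except that the increment is now produced by the inexact tracking variable, which contributes the extra summand $\tfrac{(1+\gamma)\gamma\eta N\sigma}{(1-\gamma)^3}\norm{\ve}_\infty$ through the same estimate; the second bracket has each coordinate bounded by $2e_n$, so its deviation from mean has $\ell_2$-norm at most $2\sqrt N\norm{\ve}_\infty$, producing the remaining $2\sqrt N\sigma\norm{\ve}_\infty$. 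Together these give the coefficient $\sqrt N\sigma\bigl(\tfrac{(1+\gamma)\gamma\eta\sqrt N}{(1-\gamma)^3}+2\bigr)$ of $\vc(\eta)$.

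Next I would prove the descent inequality \eqref{eq:A0_inexact}. Because double stochasticity keeps the averaged auxiliary update \eqref{eq:avg_zeta_t+1_inexact} running centralized NPG driven by $\widehat q^{(t)}$ rather than by the true average $\widehat Q^{(t)}$, I repeat the one-step NPG descent argument behind \eqref{eq:induction_phi} verbatim (performance-difference plus the elementary mirror-descent three-point inequality), substituting $\widehat q^{(t)}$ for $\widehat Q^{(t)}$; the discrepancy $\norm{\widehat q^{(t)}-\widehat Q^{(t)}}_\infty\le\norm{\ve}_\infty$ enters through the $\tfrac{\eta}{1-\gamma}$ step size and a further $\tfrac{1}{1-\gamma}$ horizon factor from converting the advantage-level error to the value gap, yielding the extra $\tfrac{2\eta}{(1-\gamma)^2}\norm{\ve}_\infty$. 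Finally, the consensus bound \eqref{eq:consensus_error_0_inexact} follows by unrolling \eqref{eq:matrix_0_inexact}: for $\eta\le\eta_1$ the spectral radius of $\mB(\eta)$ is below $\tfrac38\sigma+\tfrac58$, so the homogeneous part decays geometrically while the inhomogeneous part $\vd(\eta)+\vc(\eta)$ saturates at a steady-state level of order $\tfrac{1}{1-\sigma}$; relating $\norm{\log\pi_n^{(t)}-\log\overline\pi^{(t)}}_\infty$ to the auxiliary consensus error $\Omega_1^{(t)}$ (they differ only by an $a$-independent normalization, costing a factor $2$) and using $\Omega_1^{(t)}\le\norm{\mathbf{\Omega}^{(t)}}_2$ gives the stated bound.

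The main obstacle I anticipate is the second coordinate of the matrix recursion: I must verify that the Lipschitz bound on $\norm{\mQ^{(t+1)}-\mQ^{(t)}}$, which in the exact case hinges on controlling the size of the policy increment through $\mT^{(t)}$, remains valid when the increment is generated by the inexact tracking variable, so that the estimation error enters only additively and collapses cleanly into the single perturbation $\vc(\eta)\norm{\ve}_\infty$ with the advertised constants, rather than compounding multiplicatively across the coupled tracking and policy updates.
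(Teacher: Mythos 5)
Your proposal is correct and follows essentially the same route as the paper's proof: the first row of the error recursion is untouched, the second row collects exactly the two perturbations you identify (the $2\sqrt{N}\sigma\norm{\ve}_\infty$ term from replacing $\mQ^{(t+1)}-\mQ^{(t)}$ by $\vq^{(t+1)}-\vq^{(t)}$ in the tracking update, and the $\tfrac{(1+\gamma)\gamma\eta N\sigma}{(1-\gamma)^3}\norm{\ve}_\infty$ term entering through the policy-increment bound $w_0^{(t)}$ scaled by the Lipschitz constant of the Q-functions), the descent inequality comes from rerunning the exact performance-improvement lemma with $\widehat{q}^{(t)}$ as the driving signal and splitting $\norm{\widehat{q}^{(t)}-\overline{Q}^{(t)}}_\infty$ into consensus and estimation parts, and the consensus bound follows by unrolling with $\rho(\mB(\eta))\le\tfrac{3}{8}\sigma+\tfrac{5}{8}$. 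The only (immaterial) differences are the order in which you apply the mean-removal contraction versus the triangle-inequality split, and that you bound $|\widehat{q}^{(t)}(s,a)-V^\star(s)|$ via $\widehat{Q}^{(t)}$ whereas the paper routes through $\overline{Q}^{(t)}$; both yield the advertised constants.
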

\begin{proof}
    See Appendix~\ref{sec:pf_lm_linear_sys_inexact}.
\end{proof}
Similar to \eqref{eq:ineq_lyap}, we can recursively bound $\Phi^{(t)}(\eta)$ (defined in \eqref{eq:Lyapunov}) as
\begin{align}
    \Phi^{(t+1)}(\eta)&=\phi^{(t+1)}(\eta)+\vp(\eta)^\top \mathbf{\Omega}^{(t+1)}\notag\\
 &   \overset{\eqref{eq:A0_inexact}}{\leq}  \phi^{(t)}(\eta)+\frac{2(1+\gamma)\gamma}{(1-\gamma)^4}\eta\norm{u^{(t)}}_\infty+\frac{2\eta}{(1-\gamma)^2}\norm{\ve}_\infty-\eta\left(V^{\star}(\rho)-\overline{V}^{(t)}(\rho)\right) \nonumber \\
    &\qquad\qquad   +\vp(\eta)^\top\left(\mB(\eta)\mathbf{\Omega}^{(t)}+\vd(\eta)+\vc(\eta)\right) \nonumber \\
    &= \Phi^{(t)}(\eta)+\underbrace{\left[\vp(\eta)^\top\left(\mB(\eta)-\mI \right)+\left(\frac{2(1+\gamma)\gamma}{(1-\gamma)^4}\eta, 0\right)\right]}_{ = (0,0)~\sf{via~}\eqref{eq:magic} }\mathbf{\Omega}^{(t)}-\eta\left(V^{\star}(\rho)-\overline{V}^{(t)}(\rho)\right)\notag\\
    &\qquad\qquad +p_2(\eta)\frac{(1+\gamma)\gamma N\sigma}{(1-\gamma)^4}\eta + \left[p_2(\eta)\sqrt{N}\sigma\left(\frac{(1+\gamma)\gamma \eta \sqrt{N}}{(1-\gamma)^3}+2\right)+\frac{2\eta}{(1-\gamma)^2}\right]\norm{\ve}_\infty\,. \label{eq:ineq_lyap_inexact}
\end{align}
From the above expression we know that
$$V^{\star}(\rho)-\overline{V}^{(t)}(\rho)\leq\frac{\Phi^{(t)}(\eta)-\Phi^{(t+1)}(\eta)}{\eta}+p_2(\eta)\frac{(1+\gamma)\gamma N\sigma}{(1-\gamma)^4}+\left[p_2(\eta)\sqrt{N}\sigma\left(\frac{(1+\gamma)\gamma\sqrt{N}}{(1-\gamma)^3}+\frac{2}{\eta}\right)+\frac{2}{(1-\gamma)^2}\right]\norm{\ve}_\infty\,,$$
which gives
\begin{align}
    \frac{1}{T}\sum_{t=0}^{T-1}\left(V^{\star}(\rho)-\overline{V}^{(t)}(\rho)\right)&\leq \frac{\Phi^{(0)}(\eta)-\Phi^{(T)}(\eta)}{\eta T}+p_2(\eta)\frac{(1+\gamma)\gamma N\sigma}{(1-\gamma)^4}\notag\\
    &\qquad +\left[p_2(\eta)\sqrt{N}\sigma\left(\frac{(1+\gamma)\gamma\sqrt{N}}{(1-\gamma)^3}+\frac{2}{\eta}\right)+\frac{2}{(1-\gamma)^2}\right]\norm{\ve}_\infty\,\label{eq:ub_to_be_refined_inexact}
\end{align}
via telescoping. Combining the above expression with \eqref{eq:ub_term1}, \eqref{eq:lb_denominator} and \eqref{eq:ub_term2}, we have
\begin{align}
    \frac{1}{T}\sum_{t=0}^{T-1}\left(V^{\star}(\rho)-\overline{V}^{(t)}(\rho)\right) 
    &\leq \frac{\log |\A|}{T\eta}+\frac{V^\star (d_\rho^{\pi^\star})}{T(1-\gamma)}+\frac{8(1+\gamma)^2\gamma^2 N\sigma}{(1-\gamma)^9(1-\sigma)^2}\eta^2\notag\\
    &\qquad +\left[\frac{8(1+\gamma)\gamma}{(1-\gamma)^5(1-\sigma)^2}\sqrt{N}\sigma\eta\left(\frac{(1+\gamma)\gamma\eta\sqrt{N}}{(1-\gamma)^3}+2\right)+\frac{2}{(1-\gamma)^2}\right]\norm{\ve}_\infty\,,\label{eq:ub_inexact}
\end{align}
which establishes \eqref{eq:convergence_rate_inexact_0}. 

\section{Convergence analysis of FedNAC}\label{sec_app:extension}
Let $\pi^\star$ be 
an optimal policy and does not need to belong to the log-linear policy class. Fix a state distribution $\rho\in\Delta(\S)$ and a state-action distribution $\nu$. To simplify the notation, we denote $d^{\pi^\star}_\rho$ as $d_\star$, $d^{f_{\bar\vxi^{(t)}}}$ as $d^{(t)}$,  $\tilde{d}_n^{(t)}$ as $\tilde{d}_\nu^{f_{\xi_n^{(t)}}}$, and define $d_n^{(t)}$ and $\bar d_n^{(t)}$ analogously. We also let $Q_n^{(t)}$ denote $Q_n^{\xi_n^{(t)}}$.

Define
\begin{equation}\label{eq:vartheta_rho}
    \vartheta_\rho\coloneqq\frac{1}{1-\gamma}\norm{\frac{d_\star}{\rho}}_\infty\geq \frac{1}{1-\gamma}
\end{equation}
and assume $\vartheta_\rho<\infty$.

We also introduce a weighted KL divergence given by
\begin{equation}\label{eq:weighted_KL}
    D_\star^{(t)}\coloneqq\E_{s\sim d_\star}\left[\KL{\pi^\star(\cdot|s)} {\pi^{(t)}(\cdot|s)}\right]\,,
\end{equation}
where $\KL{\cdot}{\cdot}:\R^{|\A|}\times\R^{|\A|}\rightarrow\R$ is the Kullback-Leibler (KL) divergence:
\begin{equation}\label{eq:KL}
    \forall f,g\in \R^{|\A|}:\quad \KL{f}{g}\coloneqq\sum_{a\in\A}f(a)\log\left(\frac{f(a)}{g(a)}\right)\,.
\end{equation}

Given a state distribution $\rho$ and an optimal policy $\pi^\star$, we define a state-action measure $\tilde{d}^\star$ as
\begin{equation}\label{eq:tilde_d_star}
    \tilde{d}^\star(s,a)\coloneqq d_\star(s)\cdot \text{Unif}_\A(a)=\frac{d_\star(s)}{|\A|}.
\end{equation}

The following theorem guarantees that for any fixed policy $\pi$ and state-action distribution $\nu\in\Delta(\S\times\A)$, the Q-Sampler algorithm (cf.~Algorithm~\ref{alg:sampler}) samples $(s,a)$ from $\tilde d_\nu^\pi$ and gives an unbiased estimate $\widehat Q^\pi(s,a)$ of $Q^\pi(s,a)$, whose proof can be found in~\citet[Lemma~4]{yuan2022linear}.
\begin{lm}[Lemma~4 in~\citet{yuan2022linear}]\label{thm:sampler}
Consider the output $(s_h,a_h)$ and $\widehat Q^\pi(s_h,a_h)$ of Algorithm~\ref{alg:sampler}. It follows that
\begin{align*}
    \E[h+1]&=\frac{1}{1-\gamma}\,,\\
    P(s_h=s,a_h=a)&=\tilde d^\pi_\nu(s,a)\,,\\
    \E\left[\widehat Q^\pi(s_h,a_h)|s_h,a_h\right]&=Q^\pi(s_h,a_h)\,.
\end{align*}
\end{lm}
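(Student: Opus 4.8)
The plan is to treat the three claims separately, all of them hinging on one structural observation: the Bernoulli($\gamma$) variables governing the two \textsc{while} loops are drawn independently of the sampled trajectory $(s_0,a_0),(s_1,a_1),\dots$ generated by $(s_0,a_0)\sim\nu$, $a_{j+1}\sim\pi(\cdot\,|\,s_{j+1})$ and $s_{j+1}\sim P(\cdot\,|\,s_j,a_j)$. For the first identity $\E[h+1]=\frac{1}{1-\gamma}$, I would note that the index $h$ returned by the first loop equals the number of consecutive successes of i.i.d.\ Bernoulli($\gamma$) trials before the first failure, so $\mathbb{P}(h=k)=(1-\gamma)\gamma^k$ for $k\geq 0$. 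Summing the resulting geometric series gives $\E[h]=\gamma/(1-\gamma)$, hence $\E[h+1]=1/(1-\gamma)$.

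Second, for the visitation identity I would condition on the value of $h$. Because $h$ is independent of the trajectory, $\mathbb{P}(s_h=s,a_h=a)=\sum_{k=0}^\infty \mathbb{P}(h=k)\,\mathbb{P}(s_k=s,a_k=a)$. Substituting $\mathbb{P}(h=k)=(1-\gamma)\gamma^k$ and writing $\mathbb{P}(s_k=s,a_k=a)=\E_{(s_0,a_0)\sim\nu}[\mathbb{P}(s_k=s,a_k=a\,|\,s_0,a_0)]$, the sum matches the definition of $\tilde d_\nu^\pi(s,a)$ in \eqref{eq:sa_vis_extend} term by term.

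Third, for unbiasedness I would analyze the second loop conditionally on $(s_h,a_h)$. Letting $m$ denote the number of successes in that loop, the output is $\widehat Q^\pi(s_h,a_h)=r(s_h,a_h)+\sum_{j=1}^m r(s_{h+j},a_{h+j})$, where $m$ is again geometric and independent of the continuation trajectory, with tail probability $\mathbb{P}(m\geq j)=\gamma^j$. Rewriting the random-length sum as $\sum_{j=1}^m(\cdot)=\sum_{j=1}^\infty \mathbbm{1}[m\geq j]\,(\cdot)$ and taking expectations, independence turns each term into $\gamma^j\,\E[r(s_{h+j},a_{h+j})\,|\,s_h,a_h]$; the Markov property identifies this conditional continuation reward with that of a fresh trajectory started at $(s_0,a_0)=(s_h,a_h)$, so that summing over $j\geq 0$ (including the $j=0$ term $r(s_h,a_h)$) reproduces exactly the discounted series defining $Q^\pi(s_h,a_h)$ in \eqref{eq:Q}.

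The main obstacle is more a matter of care than of genuine difficulty: one must make the independence and Markov arguments fully rigorous, verifying that the geometric stopping times carry no information about the trajectory (so the conditioning factorizes cleanly) and that the tail probabilities $\gamma^j$ are precisely what regenerate the discount weights appearing in the definitions of $\tilde d_\nu^\pi$ and $Q^\pi$. Once this independence structure is pinned down, everything reduces to summation of geometric series.
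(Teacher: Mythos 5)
Your proof is correct and follows essentially the same route as the proof the paper relies on (the paper does not reprove this lemma but defers to \citet[Lemma~4]{yuan2022linear}, whose argument is exactly this one): the geometric law $\mathbb{P}(h=k)=(1-\gamma)\gamma^k$ of the Bernoulli-driven stopping times, their independence from the trajectory, and the identification of the tail probabilities $\gamma^j$ with the discount weights in the definitions of $\tilde d_\nu^\pi$ and $Q^\pi$. Your closing caveat is the right one to discharge---independence of the stopping times from the chain justifies both the factorization and the interchange of expectation and sum (by nonnegativity/boundedness of rewards)---and with that noted, the argument is complete.
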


To present the convergence results of~\alg, we further introduce the following notation, where $t\in\NN$ represents the iteration step in \alg:
\begin{subequations}
\begin{align}
    \hat{\vw}^{(t)}&\coloneqq \frac{1}{N}\sum_{n=1}^{N}\vw_n^{(t)}, \label{eq:hat_w}\\
    \bar\vxi^{(t)}&\coloneqq \frac{1}{N}\sum_{n=1}^N \vxi_n^{(t)} ,\label{eq:bar^{(t)}heta}\\
    \bar f^{(t)} &\coloneqq f_{\bar\xi^{(t)}},\label{eq:bar_pi}\\
    f_n^{(t)}&\coloneqq f_{\xi_n^{(t)}},\label{eq:qi_n}\\
    \vw_{\star,n}^{(t)}&\in\arg\min_\vw\ell \left(\vw, Q_n^{(t)},\tilde{d}_n^{(t)}\right),\label{eq:w_star_n}\\
    \hat \vw_\star^{(t)} &\coloneqq \frac{1}{N}\sum_{n=1}^N \vw_{\star,n}^{(t)}.\label{eq:bar_w_star}
\end{align}
\end{subequations}
For convenience of narration, we introduce the following bounded statistical error assumption.
\begin{asmp}[Bounded statistical error]\label{asmp:eps_stat}
    For all $n\in[N]$, there exists $\varepsilon_{\text{stat}}^n>0$ such that for all $t\in\NN$ in Algorithm~\ref{alg:actor_critic}, we have
    \begin{equation}\label{eq:eps_stat}
        \E\left[\ell \left(\vw_n^{(t)}, Q_n^{(t)}, \tilde d_n^{(t)}\right)-\ell \left(\vw^{(t)}_{\star,n}, Q_n^{(t)}, \tilde d_n^{(t)}\right)\right]\leq \staterror^n.
    \end{equation}
\end{asmp}


When solving the regression problem with sampling based approaches, we can expect $\staterror^n=\mathcal{O}(1/K)$, where $K$ is the iteration number of Algorithm~\ref{alg:critic}.
\begin{thm}[Convergence rate of  Critic (Algorithm~\ref{alg:critic})]\label{thm:rate_critic}
    For Algorithm~\ref{alg:critic}, let $\vw_0=\vzero$ and $\beta=\frac{1}{2C_\phi}$. Then under Assumption~\ref{asmp:psd}, we have
    \begin{equation}\label{eq:rate_critic}
        \E\left[\ell \left(\vw_
        {\text{out}}, Q_{\xi}, \tilde d_\xi\right)\right]-\ell \left(\vw^\star, Q_{\xi}, \tilde d_\xi\right)\leq \frac{4}{K}\left(\frac{\sqrt{2p}}{1-\gamma}\left(\frac{C_\phi^2}{\mu (1-\gamma)}+1\right)+\frac{C_\phi^2}{\mu (1-\gamma)^2}\right)^2,
    \end{equation}
    where $\vw^\star\in\arg\min_{\vw}\ell \left(\vw, Q^\xi,\tilde{d}_\xi\right)$.
\end{thm}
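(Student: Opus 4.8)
The plan is to recognize that, for the fixed policy $f_\xi$ used inside \critic, the objective $\vw\mapsto\ell(\vw,Q_\xi,\tilde d_\xi)$ (cf.~\eqref{eq:l}) is a quadratic least-squares functional and that Algorithm~\ref{alg:critic} is exactly averaged stochastic gradient descent on it, so the statement reduces to a Polyak--Ruppert-type $\mathcal O(1/K)$ guarantee for least-squares SGD. Writing $\Sigma\coloneqq\E_{(s,a)\sim\tilde d_\xi}[\phi\phi^\top]$ and $b\coloneqq\E_{(s,a)\sim\tilde d_\xi}[Q_\xi\phi]$, we have $\ell(\vw)=\vw^\top\Sigma\vw-2\vw^\top b+\mathrm{const}$, whose minimizer $\vw^\star$ solves $\Sigma\vw^\star=b$ and whose suboptimality takes the closed form $\ell(\vw)-\ell(\vw^\star)=\norm{\vw-\vw^\star}_\Sigma^2$. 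Two structural facts will be used throughout: first, by \eqref{eq:d_facts} we have $\tilde d_\xi\ge(1-\gamma)\nu$ pointwise, so Assumption~\ref{asmp:psd} upgrades to $\Sigma\succeq(1-\gamma)\mu\mI$, making $\ell$ strongly convex and $\Sigma$ invertible with $\norm{\Sigma^{-1}}_2\le\frac{1}{(1-\gamma)\mu}$; second, since rewards lie in $[0,1]$, both $Q_\xi\le\frac{1}{1-\gamma}$ and, from the geometric rollout in Algorithm~\ref{alg:sampler}, $\E[\widehat Q^2]\le\frac{2}{(1-\gamma)^2}$, which also yields $\norm{\vw^\star}\le\norm{\Sigma^{-1}}_2\norm{b}\le\frac{C_\phi}{(1-\gamma)^2\mu}$.

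First I would verify unbiasedness: by Lemma~\ref{thm:sampler} the pair $(s_k,a_k)$ produced by \sampler{} is distributed as $\tilde d_\xi$ with $\E[\widehat Q\mid s_k,a_k]=Q_\xi(s_k,a_k)$, so the estimator \eqref{eq:sg_LQ} obeys $\E[\widehat\nabla_w\ell\mid\widetilde\vw_k]=\nabla\ell(\widetilde\vw_k)=2\Sigma(\widetilde\vw_k-\vw^\star)$, as recorded in \eqref{eq:unbias_gd}. Setting $\delta_k\coloneqq\widetilde\vw_k-\vw^\star$ and letting $\zeta_k$ denote the martingale-difference noise $\widehat\nabla_w\ell(\widetilde\vw_k)-\nabla\ell(\widetilde\vw_k)$, the update rearranges into the key telescoping identity $2\Sigma\delta_k=\frac1\beta(\delta_k-\delta_{k+1})-\zeta_k$. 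Summing over $k=1,\dots,K$ and inverting $\Sigma$ gives the averaged error in closed form,
\[
\bar\delta\coloneqq\frac1K\sum_{k=1}^K\delta_k=\frac{1}{2K}\Sigma^{-1}\left(\frac1\beta(\delta_1-\delta_{K+1})-\sum_{k=1}^K\zeta_k\right),
\]
and since the output suboptimality equals $\ell(\vw_{\text{out}})-\ell(\vw^\star)=\norm{\bar\delta}_\Sigma^2$ with $\norm{\Sigma^{-1}v}_\Sigma=\norm{v}_{\Sigma^{-1}}$, Minkowski's inequality in the $\Sigma^{-1}$-norm yields
\[
\sqrt{\E\norm{\bar\delta}_\Sigma^2}\le\frac{1}{2K}\left(\frac1\beta\sqrt{\E\norm{\delta_1-\delta_{K+1}}_{\Sigma^{-1}}^2}+\sqrt{\E\norm{\textstyle\sum_{k=1}^K\zeta_k}_{\Sigma^{-1}}^2}\right),
\]
cleanly separating a transient term driven by the initialization $\widetilde\vw_0=\vzero$ from a variance term.

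Then I would bound the two terms. For the variance term, the martingale property $\E[\zeta_k\mid\mathcal F_{k-1}]=0$ kills cross-terms, so $\E\norm{\sum_k\zeta_k}_{\Sigma^{-1}}^2=\sum_k\E\norm{\zeta_k}_{\Sigma^{-1}}^2$, and each summand is controlled by $\E\norm{\zeta_k}_{\Sigma^{-1}}^2\le 4\,\E[(\widetilde\vw_k^\top\phi_k-\widehat Q_k)^2\,\phi_k^\top\Sigma^{-1}\phi_k]$. Splitting $(\widetilde\vw_k^\top\phi_k-\widehat Q_k)^2\le2(\widetilde\vw_k^\top\phi_k)^2+2\widehat Q_k^2$ and using the dimension identity $\E[\phi^\top\Sigma^{-1}\phi]=\Tr(\Sigma^{-1}\Sigma)=p$, the return part contributes the factor $\frac{\sqrt{2p}}{1-\gamma}$ (via $\E[\widehat Q^2]\le\frac{2}{(1-\gamma)^2}$), while the model part contributes $\frac{\sqrt{2p}}{1-\gamma}\cdot\frac{C_\phi^2}{\mu(1-\gamma)}$ once $\norm{\widetilde\vw_k}$ is controlled uniformly in $k$; summing $K$ terms supplies a $\sqrt K$ that, against the $1/K$ prefactor, produces the advertised $1/\sqrt K$ scaling. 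The transient term I would bound with $\frac1\beta=2C_\phi$ and the crude a priori estimates $\norm{\delta_1}_{\Sigma^{-1}},\norm{\delta_{K+1}}_{\Sigma^{-1}}\lesssim\norm{\vw^\star}/\sqrt{(1-\gamma)\mu}$, giving the $\frac{C_\phi^2}{\mu(1-\gamma)^2}$ summand of $A$. Jensen's inequality then passes from the individual iterates to the average $\vw_{\text{out}}=\frac1K\sum_k\widetilde\vw_k$, and squaring the bound $\sqrt{\E\norm{\bar\delta}_\Sigma^2}\le\frac{2}{\sqrt K}A$ delivers the stated $\frac{4}{K}A^2$.

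The main obstacle is securing the $\mathcal O(1/K)$ rate under a \emph{constant} step size: a black-box strongly-convex SGD analysis leaves an irreducible $\mathcal O(\beta)$ noise floor coming from the variance of the Monte-Carlo return $\widehat Q$, and eliminating it forces one to exploit the exact quadratic (least-squares) structure through the telescoping identity and Polyak averaging rather than a generic descent bound. The accompanying technical nuisance is keeping $\E\norm{\zeta_k}_{\Sigma^{-1}}^2$ finite and dimension-explicit, which requires (i) a uniform-in-$k$ stability bound on $\widetilde\vw_k$, established by induction using $\beta=\frac{1}{2C_\phi}$ and strong convexity, and (ii) the trace computation that surfaces the factor $p$, all while tracking how $\norm{\Sigma^{-1}}_2\le\frac{1}{(1-\gamma)\mu}$ and the return moments combine into the constant $A$.
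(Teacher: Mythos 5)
Your route is genuinely different from the paper's. The paper proves Theorem~\ref{thm:rate_critic} by a black-box invocation of Theorem~1 of \citet{bach2013non} (reproduced as Theorem~\ref{thm:bach}): it verifies the four assumptions there (i.i.d.\ observations, finite moments and invertible covariance, zero-mean residual \emph{at the optimum}, and the moment dominations $\E[\Delta\Delta^\top]\preceq\sigma^2\E[\phi\phi^\top]$, $\E[\norm{\phi}_2^2\phi\phi^\top]\preceq C_\phi^2\E[\phi\phi^\top]$), computes $\E[\widehat Q^2\mid s,a]\le\frac{2}{(1-\gamma)^2}$ and $\norm{\vw^\star}_2\le\frac{C_\phi}{\mu(1-\gamma)^2}$, and plugs into the cited bound $\frac{2}{K}(\sigma\sqrt p+R\norm{\vw_0-\vw^\star}_2)^2$. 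You instead re-derive the averaged-SGD least-squares rate from scratch via the telescoping identity $2\Sigma\delta_k=\frac1\beta(\delta_k-\delta_{k+1})-\zeta_k$ and martingale orthogonality in the $\Sigma^{-1}$-norm; the supporting facts you use (unbiasedness via Lemma~\ref{thm:sampler}, the conditional second-moment bound on $\widehat Q$, the bound on $\norm{\vw^\star}_2$, the trace identity $\E[\phi^\top\Sigma^{-1}\phi]=p$) are exactly the same as the paper's. What your approach buys is self-containedness. What the citation buys is sharper constants and, importantly, \emph{no reliance on strong convexity}: the Bach--Moulines rate has no dependence on $\lambda_{\min}(\Sigma)$, and $\mu$ enters the paper's proof only through $\norm{\vw^\star}_2$ and the residual variance at $\vw^\star$, whereas your argument leans on $\Sigma\succeq(1-\gamma)\mu\mI$ at several load-bearing points.

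There are three concrete issues you would have to resolve, the first of which means your sketch does not prove the inequality \eqref{eq:rate_critic} as stated. (a) \emph{Constants.} Your transient term is $\frac{1}{2K\beta}\normbig{\delta_0-\delta_K}_{\Sigma^{-1}}$, and converting the $\Sigma^{-1}$-norm to the Euclidean norm costs a factor $((1-\gamma)\mu)^{-1/2}$, so you obtain $\frac{C_\phi^2}{\mu(1-\gamma)^2}\cdot\frac{1}{\sqrt{(1-\gamma)\mu}}$ rather than the stated $\frac{C_\phi^2}{\mu(1-\gamma)^2}$. Likewise, your noise term $\zeta_k$ is centered at the \emph{iterate}, not at the optimum, so $\E\norm{\zeta_k}^2_{\Sigma^{-1}}$ contains a piece of order $pC_\phi^2\,\E\norm{\delta_k}_2^2$; after your uniform stability bound (which gives $\E\norm{\delta_k}_2^2\gtrsim\norm{\vw^\star}_2^2$), this puts an extra $\sqrt p$ on the $C_\phi\norm{\vw^\star}_2$ contribution, which the paper's constant does not have. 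The result is an $\mathcal O(1/K)$ bound with strictly worse polynomial prefactors --- a correct theorem, but not the theorem. (b) \emph{Stepsize.} Your stability induction requires $\beta$ at most the inverse smoothness of the quadratic, i.e.\ $\beta\le\frac{1}{2C_\phi^2}$, not $\frac{1}{2C_\phi}$; with the stepsize as stated the contraction fails whenever $C_\phi>1$. (The paper itself is inconsistent here --- the theorem says $\beta=\frac{1}{2C_\phi}$ while its proof uses $\frac{1}{2C_\phi^2}$, and matching Bach--Moulines's $\eta=\frac{1}{4R^2}$ against the factor-$2$ gradient in \eqref{eq:sg_LQ} actually forces $\beta=\frac{1}{8C_\phi^2}$ --- but your induction genuinely depends on the $1/C_\phi^2$ scaling, so you cannot inherit the looseness.) (c) \emph{Indexing.} Your telescoped sum over $k=1,\dots,K$ references $\delta_{K+1}$, an iterate Algorithm~\ref{alg:critic} never computes; the recursion holds for $k=0,\dots,K-1$, so you must either shift the average or absorb the off-by-one term, since the output is $\vw_{\text{out}}=\frac1K\sum_{k=1}^K\widetilde{\vw}_k$.
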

The proof of Theorem~\ref{thm:rate_critic} is postponed to Appendix~\ref{sec_app:proof_rate_critic}.

The following lemma provide a (very pessimistic) upper bound of $C_\nu$ in Assumption~\ref{asmp:bound_transfer}.
\begin{lm}[Upper bound of $C_\nu$]\label{lm:ub_C_nu}
    If $\nu(s,a)>0$ for all state-action pairs $(s,a)\in\S\times\A$, then we have 
    $$C_\nu\leq \frac{1}{(1-\gamma)^2\nu^2_{\min}}.$$
\end{lm}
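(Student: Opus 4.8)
The plan is to reduce the claim to a pointwise estimate, exploiting the elementary lower bound on the state-action visitation distribution already recorded in~\eqref{eq:d_facts}, together with the trivial fact that $h^\pi$ is a probability distribution and hence bounded by $1$ entrywise.

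First I would invoke~\eqref{eq:d_facts}, which guarantees $\tilde d_n^{(t)}(s,a) = \tilde d_\nu^{f_{\xi_n^{(t)}}}(s,a) \ge (1-\gamma)\nu(s,a)$ for every $(s,a)\in\S\times\A$. Under the hypothesis that $\nu(s,a)>0$ for all state-action pairs, set $\nu_{\min}\coloneqq\min_{(s,a)\in\S\times\A}\nu(s,a)>0$; this immediately upgrades the previous inequality to the uniform lower bound $\tilde d_n^{(t)}(s,a) \ge (1-\gamma)\nu_{\min}$, valid for all $n\in[N]$ and $t\in\NN$.

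Next, since $h^\pi$ is a state-action visitation distribution (hence a probability measure on $\S\times\A$), we have $h^\pi(s,a)\le 1$ pointwise. Combining the two bounds yields the pointwise ratio estimate $\frac{h^\pi(s,a)}{\tilde d_n^{(t)}(s,a)} \le \frac{1}{(1-\gamma)\nu_{\min}}$, and therefore $\big(\frac{h^\pi(s,a)}{\tilde d_n^{(t)}(s,a)}\big)^2 \le \frac{1}{(1-\gamma)^2\nu_{\min}^2}$ for every $(s,a)$. Because this upper bound is a constant independent of $(s,a)$, taking the expectation over $(s,a)\sim\tilde d_n^{(t)}$ preserves it, giving $\E_{(s,a)\sim\tilde d_n^{(t)}}\big[(h^\pi/\tilde d_n^{(t)})^2\big] \le \frac{1}{(1-\gamma)^2\nu_{\min}^2}$ uniformly in $n$, $t$, and the policy $\pi$. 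This is precisely the bound on the constant $C_\nu$ appearing in Assumption~\ref{asmp:bound_transfer}.

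Since the argument proceeds by a worst-case pointwise estimate rather than a genuinely averaged one, there is no real obstacle here; the only points requiring care are (i) correctly passing from $\nu(s,a)$ to $\nu_{\min}$ in the denominator, and (ii) verifying that the resulting bound is uniform over $n$, $t$, and $\pi$, so that it legitimately upper-bounds the single constant $C_\nu$ demanded by the assumption. This also explains why the bound is described as ``very pessimistic'': replacing $h^\pi(s,a)$ by its crude ceiling $1$ discards all structure of the visitation distribution.
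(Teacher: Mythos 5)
Your proof is correct and follows essentially the same route as the paper's: both arguments combine the pointwise bound $h^\pi(s,a)\le 1$ with the lower bound $\tilde d_n^{(t)}(s,a)\ge(1-\gamma)\nu(s,a)\ge(1-\gamma)\nu_{\min}$ from~\eqref{eq:d_facts} to bound the ratio uniformly, and then pass to the expectation (the paper phrases this as $\sqrt{\E[(\cdot)^2]}\le\max(\cdot)$, which is the same worst-case estimate). Your additional remarks on uniformity in $n$, $t$, and $\pi$ are a fine clarification but do not change the substance.
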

\begin{proof}
    We only need to note that
    $$\sqrt{\E_{(s.a)\sim\tilde d^{(t)}}\left[\left(\frac{h^{(t)}(s,a)}{\tilde d_n^{(t)}(s,a)}\right)^2\right]}\leq \max_{(s,a)\in\S\times\A}\frac{h^{(t)}(s,a)}{\tilde d_n^{(t)}(s,a)}\leq \frac{1}{(1-\gamma)\nu_{\min}}\,,$$
    where the last inequality follows from~\eqref{eq:d_facts}.
\end{proof}

We give some key lemmas which will be used in our proof of Theorem~\ref{thm:FAC_convergence_informal}.

\begin{lm}[consensus properties]\label{lm:consensus}
    For all $t\in\NN$, we have
    \begin{align}
        \Bar{\vxi}^{(t+1)}&=\Bar{\vxi}^{(t)}+\alpha\hat{\vw}^{(t)},\label{eq:update_theta_mean}\\
        \frac{1}{N}\vone^\top\vh^{(t)}&=\frac{1}{N}\sum_{n=1}^N\vh_n^{(t)}=\hat{\vw}^{(t)}.\label{eq:qroperty_GT}
    \end{align}
\end{lm}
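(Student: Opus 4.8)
The plan is to derive both identities from a single structural fact: under Assumption~\ref{asmp:mixing matrix}, the mixing matrix $\mW$ is doubly stochastic, so the row-averaging operator $\frac{1}{N}\vone_N^\top(\cdot)$ annihilates the mixing step. Concretely, for any matrix $\mM\in\R^{N\times p}$ one has $\frac{1}{N}\vone_N^\top\mW\mM=\frac{1}{N}\vone_N^\top\mM$, because $\vone_N^\top\mW=\vone_N^\top$. Thus left-multiplying any of the updates in Algorithm~\ref{alg:actor_critic} by $\frac{1}{N}\vone_N^\top$ simply removes the leading $\mW$, after which the averages $\hat\vw^{(t)}$ and $\Bar\vxi^{(t)}$ appear directly. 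I would state this commutation identity once and then invoke it for both parts.

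For the gradient-tracking identity \eqref{eq:qroperty_GT}, I would proceed by induction on $t$ (equivalently, by telescoping). Write $g^{(t)}\coloneqq\frac{1}{N}\vone_N^\top\vh^{(t)}$. Applying the averaging operator to the update \eqref{eq:GT} and using the commutation identity gives $g^{(t)}=\frac{1}{N}\vone_N^\top(\vh^{(t-1)}+\vw^{(t)}-\vw^{(t-1)})=g^{(t-1)}+\hat\vw^{(t)}-\hat\vw^{(t-1)}$. The base case uses the initialization $\vh^{(-1)}=\vw^{(-1)}=\vzero$, so that $g^{(-1)}=\vzero$ and $\hat\vw^{(-1)}=\vzero$; summing the recursion from $s=0$ to $t$ telescopes the differences and yields $g^{(t)}=\hat\vw^{(t)}$, which is exactly \eqref{eq:qroperty_GT}. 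The zero initialization is what makes the residual terms cancel cleanly, so I would flag it as the one hypothesis that must genuinely be used.

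For the actor identity \eqref{eq:update_theta_mean}, I would apply the averaging operator to the actor update \eqref{eq:actor_update}: $\Bar\vxi^{(t+1)}=\frac{1}{N}\vone_N^\top\mW(\vxi^{(t)}+\alpha\vh^{(t)})=\frac{1}{N}\vone_N^\top(\vxi^{(t)}+\alpha\vh^{(t)})=\Bar\vxi^{(t)}+\alpha g^{(t)}$, again by the commutation identity and linearity. Substituting the already-established identity $g^{(t)}=\hat\vw^{(t)}$ gives $\Bar\vxi^{(t+1)}=\Bar\vxi^{(t)}+\alpha\hat\vw^{(t)}$, as claimed. This fixes the order of the argument: \eqref{eq:qroperty_GT} must be proved first, since it is used in the proof of \eqref{eq:update_theta_mean}.

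There is no real obstacle here; both claims are purely algebraic consequences of double stochasticity together with the telescoping structure of gradient tracking. The only points requiring care are (i) recording that the zero initialization of $\vh^{(-1)},\vw^{(-1)}$ is essential for the base case of the telescoping argument, and (ii) respecting the dependency order so that the tracking identity is in hand before averaging the actor update.
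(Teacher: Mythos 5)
Your proposal is correct and follows essentially the same route as the paper's proof: establish \eqref{eq:qroperty_GT} by induction on the tracking update \eqref{eq:GT} using double stochasticity of $\mW$ and the zero initialization $\vh^{(-1)}=\vw^{(-1)}=\vzero$, then obtain \eqref{eq:update_theta_mean} by averaging the actor update \eqref{eq:actor_update} and invoking \eqref{eq:qroperty_GT}. The only difference is that you spell out the commutation identity and the telescoping explicitly, which the paper leaves implicit.
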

\begin{proof}
    \eqref{eq:qroperty_GT} could be obtained directly by using mathematical induction and update rule \eqref{eq:GT} (note that $\frac{1}{N}\vone^\top\vh^{(-1)}=\hat{\vw}^{(-1)}=\vzero$, see line~2 of Algorithm~\ref{alg:actor_critic}), and \eqref{eq:update_theta_mean} could be obtained by averaging both sides of \eqref{eq:actor_update} and using \eqref{eq:qroperty_GT}.
\end{proof}

\begin{lm}[Young's inequalities]
Let $\{\vx_1,\cdots,\vx_m\}$ be a set of $m$ vectors in $\R^l$. Then for any $\zeta>0$, we have
\begin{align}
    \norm{\vx_i+\vx_j}_2^2&\leq (1+\zeta)\norm{\vx_i}_2^2+(1+1/\zeta)\norm{\vx_j}_2^2,\label{eq:young_2}\\
    \norm{\sum_{i=1}^m \vx_i}_2^2&\leq m\sum_{i=1}^m \norm{\vx_i}_2^2.\label{eq:young_mul}
\end{align}
\end{lm}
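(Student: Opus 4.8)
The plan is to obtain both bounds by expanding the squared Euclidean norm via its inner-product representation $\norm{\cdot}_2^2=\langle\cdot,\cdot\rangle$ and then controlling the resulting cross terms with elementary scalar inequalities; no tools beyond the inner-product structure of $\R^l$ are required.

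For the first inequality I would expand
\[
\norm{\vx_i+\vx_j}_2^2=\norm{\vx_i}_2^2+2\vx_i^\top\vx_j+\norm{\vx_j}_2^2,
\]
and then control the cross term by completing the square with a weight $\zeta>0$: since $0\le\norm{\sqrt{\zeta}\,\vx_i-\zeta^{-1/2}\vx_j}_2^2=\zeta\norm{\vx_i}_2^2-2\vx_i^\top\vx_j+\zeta^{-1}\norm{\vx_j}_2^2$, we obtain $2\vx_i^\top\vx_j\le\zeta\norm{\vx_i}_2^2+\zeta^{-1}\norm{\vx_j}_2^2$. Substituting this back yields precisely $(1+\zeta)\norm{\vx_i}_2^2+(1+1/\zeta)\norm{\vx_j}_2^2$, and the argument is valid for every $\zeta>0$.

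For the second inequality I would invoke convexity of the map $\vx\mapsto\norm{\vx}_2^2$. Applying Jensen's inequality to the uniform average $\frac1m\sum_{i=1}^m\vx_i$ gives
\[
\norm{\frac{1}{m}\sum_{i=1}^m\vx_i}_2^2\le\frac{1}{m}\sum_{i=1}^m\norm{\vx_i}_2^2,
\]
and multiplying through by $m^2$ delivers $\norm{\sum_{i=1}^m\vx_i}_2^2\le m\sum_{i=1}^m\norm{\vx_i}_2^2$. Equivalently, one may expand $\norm{\sum_i\vx_i}_2^2=\sum_{i,j}\vx_i^\top\vx_j$ and bound each summand by $\tfrac12(\norm{\vx_i}_2^2+\norm{\vx_j}_2^2)$, which sums to the same conclusion; I would present whichever is shorter.

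There is no substantive obstacle in this lemma — both inequalities are standard and each follows in two or three lines. The only minor point worth attention is retaining the free parameter $\zeta$ explicitly in the first bound rather than fixing it to a constant, since the downstream FedNAC analysis tunes $\zeta$ when decoupling the consensus and optimization error terms; I would therefore state the bound for arbitrary $\zeta>0$ exactly as above.
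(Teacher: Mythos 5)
Your proof is correct on both counts: the completing-the-square bound $2\vx_i^\top\vx_j\le\zeta\norm{\vx_i}_2^2+\zeta^{-1}\norm{\vx_j}_2^2$ gives \eqref{eq:young_2} for every $\zeta>0$, and either the Jensen argument or the pairwise bound $\vx_i^\top\vx_j\le\tfrac12(\norm{\vx_i}_2^2+\norm{\vx_j}_2^2)$ gives \eqref{eq:young_mul}. The paper itself states this lemma without proof, treating both inequalities as standard facts, so your two- or three-line derivation simply supplies the omitted routine argument and is entirely consistent with how the lemma is used downstream (in particular, keeping $\zeta$ free is indeed what the FedNAC analysis requires when it later sets $\zeta=\frac{1-\sigma^2}{2}$).
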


\begin{lm}[Lipschitzness of $Q$-function with function approximation]\label{lm:L-Q}
    Assume that $r(s,a)\in[0,1], \forall (s,a)\in\S\times\A$. For any $\vxi$, $\vxi'\in\R^p$, we have 
    \begin{equation}\label{eq:Lip_Q}
        \forall (s,a)\in\S\times\A:\quad |Q^{f_{\xi'}}(s,a)-Q^{f_\xi}(s,a)|\leq \underbrace{\frac{2C_\phi\gamma(1+\gamma)}{(1-\gamma)^2}}_{\coloneqq L_Q}\norm{\vxi'-\vxi}_2\,.
    \end{equation}
\end{lm}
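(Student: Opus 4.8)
The plan is to establish the Lipschitz bound in two stages: (i) show that the softmax policy $f_{\vxi}$ is Lipschitz in the $\ell_1$ sense with respect to its parameter $\vxi$, and (ii) propagate this sensitivity through the Bellman recursion for the $Q$-function by a self-consistent (contraction) argument. Throughout, write $\pi\coloneqq f_{\vxi}$, $\pi'\coloneqq f_{\vxi'}$ and $\vv\coloneqq \vxi'-\vxi$, and recall that the assumption $r(s,a)\in[0,1]$ forces $\norm{Q^{\pi}}_\infty\le \frac{1}{1-\gamma}$ for any policy $\pi$.

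\textbf{Step 1 (policy Lipschitzness in $\ell_1$).} Fix a state $s$ and interpolate along $\vxi_\lambda\coloneqq \vxi+\lambda\vv$ for $\lambda\in[0,1]$. Differentiating the softmax parameterization \eqref{eq:param_fa} gives, for every $a$,
\begin{equation*}
\frac{d}{d\lambda}f_{\vxi_\lambda}(a|s)=f_{\vxi_\lambda}(a|s)\Big(\phi(s,a)^\top\vv-\textstyle\sum_{a'}f_{\vxi_\lambda}(a'|s)\,\phi(s,a')^\top\vv\Big).
\end{equation*}
Using the bound $|\phi(s,a)^\top\vv|\le \norm{\phi(s,a)}_2\norm{\vv}_2\le C_\phi\norm{\vv}_2$ together with the triangle inequality, one obtains $\sum_a\big|\frac{d}{d\lambda}f_{\vxi_\lambda}(a|s)\big|\le 2C_\phi\norm{\vv}_2$ uniformly in $\lambda$. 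Integrating over $\lambda\in[0,1]$ then yields $\norm{\pi'(\cdot|s)-\pi(\cdot|s)}_1\le 2C_\phi\norm{\vv}_2$ for every $s$.

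\textbf{Step 2 (propagation through the Bellman recursion).} Let $\Delta\coloneqq \sup_{s,a}|Q^{\pi'}(s,a)-Q^{\pi}(s,a)|$, which is finite by the boundedness of the $Q$-functions. Since $\pi$ and $\pi'$ share the same reward and transition kernel, the immediate reward cancels and $Q^{\pi'}(s,a)-Q^{\pi}(s,a)=\gamma\,\E_{s'\sim P(\cdot|s,a)}[V^{\pi'}(s')-V^{\pi}(s')]$. I then split the value gap at each $s'$ into a $Q$-shift term and a policy-shift term,
\begin{equation*}
V^{\pi'}(s')-V^{\pi}(s')=\sum_a \pi'(a|s')\big[Q^{\pi'}(s',a)-Q^{\pi}(s',a)\big]+\sum_a\big[\pi'(a|s')-\pi(a|s')\big]Q^{\pi}(s',a),
\end{equation*}
bounding the first sum by $\Delta$ (a convex combination of terms each at most $\Delta$) and the second by $\norm{Q^{\pi}}_\infty\,\norm{\pi'(\cdot|s')-\pi(\cdot|s')}_1\le \frac{2C_\phi}{1-\gamma}\norm{\vv}_2$ via Step 1. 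Combining gives the self-consistent inequality $\Delta\le \gamma\Delta+\frac{2\gamma C_\phi}{1-\gamma}\norm{\vv}_2$, i.e. $\Delta\le \frac{2\gamma C_\phi}{(1-\gamma)^2}\norm{\vv}_2\le L_Q\norm{\vv}_2$, which is the claim (in fact with room to spare, since $L_Q$ carries an extra factor $1+\gamma$).

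The main obstacle is the self-referential structure of Step 2: one must first certify $\Delta<\infty$ (guaranteed by $\norm{Q^{\pi}}_\infty\le \frac{1}{1-\gamma}$) before rearranging $\Delta\le \gamma\Delta+\cdots$, and the decomposition of the value gap must be arranged so that exactly one $\gamma$-contraction is extracted per recursion while the policy-shift residual stays controlled by the $\ell_1$ estimate of Step 1. The auxiliary $\ell_1$ Lipschitz bound is the other delicate point; it is cleanest to derive it through the path-integral/Jacobian computation above, rather than through a Pinsker-type route, which would introduce a square-root dependence and a worse constant.
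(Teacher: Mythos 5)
Your proof is correct, and it takes a genuinely different route from the paper's for the key propagation step. The paper interpolates $\vxi^{(t)}=\vxi+t(\vxi'-\vxi)$ and differentiates the value function through its resolvent representation $V^{\xi^{(t)}}(s)=e_s^\top(\mI-\gamma\mP_t)^{-1}r_t$, bounding the two resulting terms via $\|(\mI-\gamma\mP_t)^{-1}\|_1\le 1/(1-\gamma)$ and the same softmax-Jacobian $\ell_1$ estimate you derive in your Step 1 (the paper obtains it as $\sum_a |df_{\xi^{(t)}}(a|s)/dt|\le 2C_\phi\|\vxi'-\vxi\|_2$ rather than integrating it into an $\ell_1$ bound on the policies themselves); it then passes from $V$ to $Q$ via $Q=r+\gamma PV$, picking up the final factor of $\gamma$. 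Your Step 2 replaces all of this matrix calculus with a fixed-point contraction argument on $\Delta\coloneqq\sup_{s,a}|Q^{\pi'}(s,a)-Q^{\pi}(s,a)|$: the decomposition of $V^{\pi'}-V^{\pi}$ into a $Q$-shift term (absorbed into $\gamma\Delta$) and a policy-shift term (controlled by your $\ell_1$ bound times $\|Q^\pi\|_\infty\le\frac{1}{1-\gamma}$) is valid, and your explicit verification that $\Delta<\infty$ before rearranging $\Delta\le\gamma\Delta+\frac{2\gamma C_\phi}{1-\gamma}\|\vv\|_2$ closes the only logical gap such self-consistent arguments have. Your route is more elementary --- it avoids the product-rule differentiation of the resolvent and the cited operator-norm lemma --- and it even yields the sharper constant $\frac{2\gamma C_\phi}{(1-\gamma)^2}$, improving on $L_Q$ by the factor $1+\gamma$. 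What the paper's derivative-based template buys in exchange is reusability: it is the same machinery used to prove the soft-$Q$ Lipschitz bound (Lemma~\ref{lm:difference_soft_Q}), where the entropy-regularized ``reward'' $-\tau\log\pi_\theta$ itself depends on the policy and a clean Bellman-contraction decomposition of your type would be messier to execute.
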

\begin{proof}
    See Appendix~\ref{sec_app:pf_lm_L_Q}.
\end{proof}

For each iteration step $t$ in Algorithm~\ref{alg:actor_critic}, we let $\bar\vxi^{(t)}\coloneqq \frac{1}{N}\sum_{n=1}^N\vxi^{(t)}_n=\frac{1}{N}\vxi^{(t)\top}\vone_N$. We define
\begin{align}
    \Omega_1^{(t)}&\coloneqq\E\norm{\vxi^{(t)}-\vone_N\Bar{\vxi}^{(t)\top}}_\F^2,\label{eq:Omega1}\\
    \Omega_2^{(t)}&\coloneqq\E\norm{\vh^{(t)}-\vone_N\hat{\vw}^{(t)\top}}_\F^2,\label{eq:Omega2}
\end{align}

We let 
\begin{align}
    \meanstaterror\coloneqq& \frac{1}{N}\sum_{n=1}^N\staterror^n\,,\label{eq:mean_stat_error}\\
    \meanapproxerror\coloneqq& \frac{1}{N}\sum_{n=1}^N\approxerror^n\,,\label{eq:mean_approx_error}
\end{align}
and define $\delta^{(t)}\coloneqq V^\star-\bar V^{(t)}(\rho)$, where $\bar V^{(t)}$ is shorthand for $V^{\bar f^{(t)}}$. We give the following performance improvement lemma.
\begin{lm}[Performance improvement of \alg]\label{lm:performance_improvement_fednac}
    Fix a state distribution $\rho$, then we have
    \begin{align}\label{eq:improve_fnac}
        \vartheta_\rho \delta^{(t+1)} + \frac{D_\star^{(t+1)}}{(1-\gamma)\alpha}&\leq\vartheta_\rho\delta^{(t)} +\frac{D_\star^{(t)}}{(1-\gamma)\alpha}-\delta^{(t)}\notag\\
        &+\frac{2\sqrt{C_\nu}(\vartheta_\rho +1)}{1-\gamma}\left(\sqrt{\bar\varepsilon_{stat}}+\sqrt{2\left(\bar\varepsilon_{\text{approx}}+\frac{L_Q^2}{N}\norm{\vxi^{(t)}-\vone_N\bar\vxi^{(t)\top}}_\F^2\right)}\right).
    \end{align}
\end{lm}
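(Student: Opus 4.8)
The plan is to reduce everything to the dynamics of the \emph{average} policy $\bar f^{(t)}=f_{\bar\vxi^{(t)}}$ and then run a mirror-descent/NPG argument in which the consensus discrepancy is treated as one more perturbation of the critic. By Lemma~\ref{lm:consensus} the averaged actor parameter obeys $\bar\vxi^{(t+1)}=\bar\vxi^{(t)}+\alpha\hat{\vw}^{(t)}$, so under the softmax parameterization~\eqref{eq:param_fa} the average policy follows the NPG update $\log\bar f^{(t+1)}(a|s)=\log\bar f^{(t)}(a|s)+\alpha\,\phi(s,a)^\top\hat{\vw}^{(t)}-\log\bar Z^{(t)}(s)$, with normalizer $\bar Z^{(t)}(s)=\sum_{a}\bar f^{(t)}(a|s)\exp(\alpha\,\phi(s,a)^\top\hat{\vw}^{(t)})$ and with $\widehat Q^{(t)}(s,a):=\phi(s,a)^\top\hat{\vw}^{(t)}$ serving as an \emph{approximate} global Q-function. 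First I would record the exact soft-max KL identity $D_\star^{(t)}-D_\star^{(t+1)}=\alpha\,\E_{s\sim d_\star}\langle\pi^\star(\cdot|s),\widehat Q^{(t)}(s,\cdot)\rangle-\E_{s\sim d_\star}[\log\bar Z^{(t)}(s)]$, which follows from $D_\star^{(t)}-D_\star^{(t+1)}=\E_{s\sim d_\star}\E_{a\sim\pi^\star}\log\frac{\bar f^{(t+1)}(a|s)}{\bar f^{(t)}(a|s)}$, together with the performance-difference identity $(1-\gamma)\delta^{(t)}=\E_{s\sim d_\star}\langle\pi^\star(\cdot|s)-\bar f^{(t)}(\cdot|s),\bar Q^{(t)}(s,\cdot)\rangle$, where $\bar Q^{(t)}:=Q^{\bar f^{(t)}}$ is the \emph{true} global Q-function and $\langle\bar f^{(t)},\bar Q^{(t)}\rangle=\bar V^{(t)}(s)$.

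Second, I would substitute $\bar Q^{(t)}$ for $\widehat Q^{(t)}$ in the KL identity (paying a critic-error term $\alpha\E_{s\sim d_\star}\langle\pi^\star,\widehat Q^{(t)}-\bar Q^{(t)}\rangle$) and combine with the performance-difference identity to obtain the exact relation $(1-\gamma)\delta^{(t)}=\frac{D_\star^{(t)}-D_\star^{(t+1)}}{\alpha}+\E_{s\sim d_\star}\big[\tfrac{1}{\alpha}\log\bar Z^{(t)}(s)-\bar V^{(t)}(s)\big]-\E_{s\sim d_\star}\langle\pi^\star,\widehat Q^{(t)}-\bar Q^{(t)}\rangle$. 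The middle term is handled by the standard NPG value-improvement inequality: using $\log\bar Z^{(t)}\ge 0$ and $d_\rho^{(t+1)}\ge(1-\gamma)\rho$ from~\eqref{eq:d_facts}, the $d_\star$-average of $\log\bar Z^{(t)}$ is passed to the $d_\rho^{(t+1)}$-average at the cost of the mismatch factor $\big\|d_\star/d_\rho^{(t+1)}\big\|_\infty\le\frac{1}{1-\gamma}\|d_\star/\rho\|_\infty=\vartheta_\rho$ (cf.~\eqref{eq:vartheta_rho}), and the latter average equals $\alpha(1-\gamma)(\bar V^{(t+1)}(\rho)-\bar V^{(t)}(\rho))$ up to a second critic-error term. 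After dividing by $1-\gamma$ this yields, modulo critic errors, the clean Lyapunov descent $\delta^{(t)}\le\frac{D_\star^{(t)}-D_\star^{(t+1)}}{(1-\gamma)\alpha}+\vartheta_\rho\big(\bar V^{(t+1)}(\rho)-\bar V^{(t)}(\rho)\big)$, which rearranges exactly into the target inequality with zero error (using $\delta^{(t)}-\delta^{(t+1)}=\bar V^{(t+1)}(\rho)-\bar V^{(t)}(\rho)$); all that remains is to absorb the critic-error terms.

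Third — and this is the crux — I would bound the two critic-error terms, each of which is a $\pi^\star$- or $\bar f^{(t)}$-weighted average of $\widehat Q^{(t)}-\bar Q^{(t)}=\phi^\top\hat{\vw}^{(t)}-\bar Q^{(t)}$. I would telescope this gap (after taking expectation over the critic randomness, as in Assumptions~\ref{asmp:eps_approx} and~\ref{asmp:eps_stat}) into three pieces: the statistical error $\E[\ell(\vw_n^{(t)},Q_n^{(t)},\tilde d_n^{(t)})-\ell(\vw_{\star,n}^{(t)},Q_n^{(t)},\tilde d_n^{(t)})]\le\staterror^n$; the approximation error $\E[\ell(\vw_{\star,n}^{(t)},Q_n^{(t)},\tilde d_n^{(t)})]\le\approxerror^n$; and the \emph{consensus error}, which appears because each local critic $\vw_n^{(t)}$ targets the local Q-function $Q_n^{(t)}=Q_n^{f_{\xi_n^{(t)}}}$ of the local policy rather than that of $\bar f^{(t)}$, and which I would control by the Lipschitz estimate $|Q^{f_{\xi_n^{(t)}}}(s,a)-Q^{\bar f^{(t)}}(s,a)|\le L_Q\|\vxi_n^{(t)}-\bar\vxi^{(t)}\|_2$ of Lemma~\ref{lm:L-Q}, giving after averaging the term $\frac{L_Q^2}{N}\norm{\vxi^{(t)}-\vone_N\bar\vxi^{(t)\top}}_\F^2$. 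Each piece is an $L^2$ error under the on-policy sampling measure $\tilde d_n^{(t)}$, so I would change measure to the comparator and current distributions via Cauchy–Schwarz and the transfer-error bound of Assumption~\ref{asmp:bound_transfer}, which injects the $\sqrt{C_\nu}$ factor; the two change-of-measure applications (one for the $\pi^\star$/$d_\star$-weighted term, one for the $\bar f^{(t)}$-weighted term from the improvement step) combine into the $(\vartheta_\rho+1)$ factor. Finally, Jensen's inequality $\E|X|\le\sqrt{\E X^2}$ together with subadditivity of the square root converts the squared $L^2$ errors into the advertised $\sqrt{\meanstaterror}+\sqrt{2(\meanapproxerror+\tfrac{L_Q^2}{N}\norm{\vxi^{(t)}-\vone_N\bar\vxi^{(t)\top}}_\F^2)}$ form.

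The main obstacle I anticipate is precisely this third step. The difficulty is that the critic is fit to the \emph{local} policy's visitation $\tilde d_n^{(t)}$ and to the \emph{local} Q-function, whereas the descent inequality requires accuracy of the \emph{global} Q-function of the \emph{average} policy under the comparator measure $d_\star$; bridging this gap demands simultaneously invoking the transfer-error bound (Assumption~\ref{asmp:bound_transfer}) for the distribution shift and the Lipschitz bound (Lemma~\ref{lm:L-Q}) for the policy/Q mismatch, and then correctly apportioning the $\sqrt{C_\nu}$, $\vartheta_\rho$, and $(1-\gamma)$ factors between the comparator-weighted and current-weighted error terms. Keeping the square-root structure clean while tracking these constants is the delicate bookkeeping that ultimately pins down the stated coefficient $\frac{2\sqrt{C_\nu}(\vartheta_\rho+1)}{1-\gamma}$.
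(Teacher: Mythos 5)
Your plan follows the same route as the paper's proof: reduce to the averaged update $\bar\vxi^{(t+1)}=\bar\vxi^{(t)}+\alpha\hat\vw^{(t)}$ (Lemma~\ref{lm:consensus}), run a mirror-descent argument for the average policy with the surrogate $\widehat Q^{(t)}(s,a)=\phi(s,a)^\top\hat\vw^{(t)}$, invoke the performance-difference lemma (Lemma~\ref{lm:performance_difference}) together with a change of measure costing $\vartheta_\rho$, and split the critic error into statistical, approximation, and consensus pieces via Cauchy--Schwarz, Assumption~\ref{asmp:bound_transfer}, and Lemma~\ref{lm:L-Q}; your $(\vartheta_\rho+1)$ accounting even matches the paper's eight error terms $(\mathrm{I})$--$(\mathrm{IV})$, $(A)$--$(D)$. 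The only structural difference is that the paper packages the mirror-descent algebra into the three-point descent lemma (Lemma~\ref{lm:3_point}), whereas you unroll the softmax identities by hand; that part is cosmetic.

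However, one load-bearing step fails as stated: justifying the change of measure by ``$\log\bar Z^{(t)}\geq 0$.'' Under linear function approximation the surrogate $\phi^\top\hat\vw^{(t)}$ --- unlike a genuine Q-function with rewards in $[0,1]$ --- can be negative, so $\log\bar Z^{(t)}(s)=\log\sum_a\bar f^{(t)}(a|s)\exp\bigl(\alpha\phi(s,a)^\top\hat\vw^{(t)}\bigr)$ need not be nonnegative, and $\E_{s\sim d_\star}\bigl[\log\bar Z^{(t)}(s)\bigr]\leq\vartheta_\rho\,\E_{s\sim d^{(t+1)}}\bigl[\log\bar Z^{(t)}(s)\bigr]$ has no justification. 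Moreover, passing $\log\bar Z^{(t)}$ alone strands the companion $-\bar V^{(t)}(s)$ of your middle term under $d_\star$, while the performance-difference identity you need at the end requires it under $d^{(t+1)}$: one has
\begin{equation*}
\E_{s\sim d^{(t+1)}}\Bigl[\tfrac{1}{\alpha}\log\bar Z^{(t)}(s)\Bigr]
=(1-\gamma)\bigl(\bar V^{(t+1)}(\rho)-\bar V^{(t)}(\rho)\bigr)
+\E_{s\sim d^{(t+1)}}\bigl[\bar V^{(t)}(s)\bigr]
-\tfrac{1}{\alpha}\E_{s\sim d^{(t+1)}}\Bigl[\KL{\bar f^{(t+1)}(\cdot|s)}{\bar f^{(t)}(\cdot|s)}\Bigr]
+\text{critic error},
\end{equation*}
and the extra $\E_{d^{(t+1)}}[\bar V^{(t)}]$ is a non-error term that cannot cancel against $-\E_{d_\star}[\bar V^{(t)}]$, since the two sit under different measures. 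The root of the issue is that the tabular argument you are emulating uses \emph{advantages} (for which the normalizer is nonnegative after a small error correction), whereas the FedNAC update uses a Q-surrogate. The repair is to apply the change of measure to the pointwise-nonnegative combination
\begin{equation*}
\tfrac{1}{\alpha}\log\bar Z^{(t)}(s)-\bigl\langle\bar f^{(t)}(\cdot|s),\widehat Q^{(t)}(s,\cdot)\bigr\rangle
=\tfrac{1}{\alpha}\KL{\bar f^{(t)}(\cdot|s)}{\bar f^{(t+1)}(\cdot|s)}\;\geq\;0,
\end{equation*}
or directly to $\bigl\langle\widehat Q^{(t)}(s,\cdot),\bar f^{(t+1)}(\cdot|s)-\bar f^{(t)}(\cdot|s)\bigr\rangle\geq 0$, which is exactly what the paper extracts from the three-point lemma with comparator $q=\bar f^{(t)}$ (cf.~\eqref{eq:q=bar_f_t}). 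After the change of measure, decomposing $\widehat Q^{(t)}=\bar Q^{(t)}+(\widehat Q^{(t)}-\bar Q^{(t)})$ yields the value improvement plus two $d^{(t+1)}$-weighted critic-error terms, and the leftover $\langle\bar f^{(t)},\widehat Q^{(t)}-\bar Q^{(t)}\rangle$ under $d_\star$ supplies the second $d_\star$-weighted error term; this recovers your $(\vartheta_\rho+1)$ count and the constant $\frac{2\sqrt{C_\nu}(\vartheta_\rho+1)}{1-\gamma}$.
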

\begin{proof}
    See Appendix~\ref{sec_app:pf_lm_improve_fednac}.
\end{proof}

\begin{lm}[linear system]\label{lm:linear_sys_nac}
   For any $t\in\NN$, we let $\mathbf{\Omega}^{(t)}=(\Omega_1^{(t)},\Omega_2^{(t)})^\top$.
    Then for any $\zeta>0$, we have
    \begin{equation}\label{eq:matrix_fnac}
        \mathbf{\Omega}^{(t+1)}\leq \mC
        \mathbf{\Omega}^{(t)}+\vs,
    \end{equation}
    where 
    \begin{equation}\label{eq:C_matrix}
        \mC=(c_{ij})=
        \begin{pmatrix}
    (1+\zeta)\sigma^2 & \alpha^2(1+1/\zeta)\sigma^2\\
    (1+1/\zeta)\frac{96\sigma^2 L_Q^2}{(1-\gamma)\mu} & \sigma^2\left(1+\zeta+(1+1/\zeta)\frac{24L_Q^2\alpha^2}{(1-\gamma)\mu}\right)
\end{pmatrix},
    \end{equation}
    and
    \begin{equation}\label{eq:s}
        \vs=\begin{pmatrix}
            s_1\\
            s_2
        \end{pmatrix}
        =\begin{pmatrix}
            0\\
            (1+1/\zeta)\frac{6\sigma^2}{(1-\gamma)\mu}\left(N(\bar\varepsilon_{\text{stat}}+C_\nu \bar\varepsilon_{\text{approx}})+4L_Q^2\left(\frac{\alpha^2N\bar\varepsilon_{\text{stat}}}{(1-\gamma)\mu}+\frac{\alpha^2NC_\phi^2}{\mu^2(1-\gamma)^2}\right)\right)
        \end{pmatrix}.
    \end{equation}
\end{lm}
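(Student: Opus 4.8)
The plan is to propagate the two squared consensus errors $\Omega_1^{(t)}$ and $\Omega_2^{(t)}$ through one round of the \alg{} updates, using only the spectral contraction of $\mW$, Young's inequalities \eqref{eq:young_2}--\eqref{eq:young_mul}, strong convexity of the critic objective, and the boundedness of $\phi$ and of the $Q$-functions. I would first record the elementary fact that, by the double stochasticity of $\mW$ (Assumption~\ref{asmp:mixing matrix}), for any $\mX\in\R^{N\times p}$ with row-average $\bar\vx\coloneqq\frac1N\mX^\top\vone_N$ one has $\mW\mX-\vone_N\bar\vx^\top=(\mW-\frac1N\vone_N\vone_N^\top)(\mX-\vone_N\bar\vx^\top)$, hence $\norm{\mW\mX-\vone_N\bar\vx^\top}_\F\le\sigma\norm{\mX-\vone_N\bar\vx^\top}_\F$ by Definition~\ref{lm:W}. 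Applying this with $\mX=\vxi^{(t)}+\alpha\vh^{(t)}$, whose row-average equals $\bar\vxi^{(t)}+\alpha\hat\vw^{(t)}=\bar\vxi^{(t+1)}$ by Lemma~\ref{lm:consensus}, the actor update \eqref{eq:actor_update} gives $\vxi^{(t+1)}-\vone_N\bar\vxi^{(t+1)\top}=\mW\mX-\vone_N\bar\vx^\top$; the contraction followed by \eqref{eq:young_2} applied to the splitting $(\vxi^{(t)}-\vone_N\bar\vxi^{(t)\top})+\alpha(\vh^{(t)}-\vone_N\hat\vw^{(t)\top})$ then yields $\Omega_1^{(t+1)}\le(1+\zeta)\sigma^2\Omega_1^{(t)}+(1+1/\zeta)\alpha^2\sigma^2\Omega_2^{(t)}$, which is exactly the first row of $\mC$ (with $s_1=0$).

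Next I would run the same argument on the gradient-tracking update \eqref{eq:GT}: since the row-average of $\vh^{(t)}+\vw^{(t+1)}-\vw^{(t)}$ is $\hat\vw^{(t+1)}$ (again Lemma~\ref{lm:consensus}), the contraction and \eqref{eq:young_2} applied to $(\vh^{(t)}-\vone_N\hat\vw^{(t)\top})+(\mI-\frac1N\vone_N\vone_N^\top)(\vw^{(t+1)}-\vw^{(t)})$ give $\Omega_2^{(t+1)}\le(1+\zeta)\sigma^2\Omega_2^{(t)}+(1+1/\zeta)\sigma^2\,\E\norm{\vw^{(t+1)}-\vw^{(t)}}_\F^2$, where I used that $\mI-\frac1N\vone_N\vone_N^\top$ is a contraction. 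Everything then reduces to the one-step critic drift; matching coefficients shows the lemma is equivalent to proving $\E\norm{\vw^{(t+1)}-\vw^{(t)}}_\F^2\le\frac{96L_Q^2}{(1-\gamma)\mu}\Omega_1^{(t)}+\frac{24L_Q^2\alpha^2}{(1-\gamma)\mu}\Omega_2^{(t)}+\frac{6}{(1-\gamma)\mu}\big(N(\bar\varepsilon_{\text{stat}}+C_\nu\bar\varepsilon_{\text{approx}})+4L_Q^2(\frac{\alpha^2N\bar\varepsilon_{\text{stat}}}{(1-\gamma)\mu}+\frac{\alpha^2NC_\phi^2}{\mu^2(1-\gamma)^2})\big)$.

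To establish this drift bound I would split, for each agent $n$, $\vw_n^{(t+1)}-\vw_n^{(t)}=(\vw_n^{(t+1)}-\vw_{\star,n}^{(t+1)})+(\vw_{\star,n}^{(t+1)}-\vw_{\star,n}^{(t)})+(\vw_{\star,n}^{(t)}-\vw_n^{(t)})$ and invoke \eqref{eq:young_mul} with $m=3$. The two outer terms are purely statistical: because $\tilde d_n^{(t)}\ge(1-\gamma)\nu$ by \eqref{eq:d_facts}, Assumption~\ref{asmp:psd} upgrades to $\Sigma_{\tilde d_n^{(t)}}\succeq(1-\gamma)\mu\mI$, so $\ell(\cdot,Q_n^{(t)},\tilde d_n^{(t)})$ is $(1-\gamma)\mu$-strongly convex and Assumption~\ref{asmp:eps_stat} gives $\E\norm{\vw_n^{(t)}-\vw_{\star,n}^{(t)}}_2^2\le\staterror^n/((1-\gamma)\mu)$, summing to the $N\bar\varepsilon_{\text{stat}}$ contribution. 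For the middle (minimizer-drift) term I would use the normal equations $\Sigma_{\tilde d_n^{(t+1)}}(\vw_{\star,n}^{(t+1)}-\vw_{\star,n}^{(t)})=\E_{\tilde d_n^{(t+1)}}[\phi(Q_n^{(t+1)}-\phi^\top\vw_{\star,n}^{(t)})]$ and decompose $Q_n^{(t+1)}-\phi^\top\vw_{\star,n}^{(t)}=(Q_n^{(t+1)}-Q_n^{(t)})+(Q_n^{(t)}-\phi^\top\vw_{\star,n}^{(t)})$: the first piece is controlled by $L_Q\norm{\vxi_n^{(t+1)}-\vxi_n^{(t)}}_2$ via Lemma~\ref{lm:L-Q}, while the second is the approximation residual under policy $t$, small on $\tilde d_n^{(t)}$ by Assumption~\ref{asmp:eps_approx} and transferred to the shifted measure $\tilde d_n^{(t+1)}$ using the constant $C_\nu$ of Assumption~\ref{asmp:bound_transfer}, producing the $C_\nu\bar\varepsilon_{\text{approx}}$ term (which, crucially, carries no $L_Q^2$ factor). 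Finally I would expand $\norm{\vxi_n^{(t+1)}-\vxi_n^{(t)}}_2$ from \eqref{eq:actor_update} as a row of $(\mW-\mI)(\vxi^{(t)}-\vone_N\bar\vxi^{(t)\top})+\alpha\mW\vh^{(t)}$ (using $(\mW-\mI)\vone_N=\vzero$); its consensus parts feed back $\Omega_1^{(t)}$ and $\alpha^2\Omega_2^{(t)}$ through $\norm{\mW-\mI}_2\le2$ and the splitting of $\vh^{(t)}$, while its mean part $\alpha\vone_N\hat\vw^{(t)\top}$ is bounded using $\norm{\hat\vw_\star^{(t)}}_2\le C_\phi/((1-\gamma)^2\mu)$ plus statistical slack, which is the origin of the $\alpha^2NC_\phi^2/(\mu^2(1-\gamma)^2)$ and extra $\alpha^2N\bar\varepsilon_{\text{stat}}$ terms in $\vs$. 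Collecting the Young's constants reproduces the coefficients $6$, $24$, $96$, $4$.

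The hard part will be the minimizer-drift estimate $\E\norm{\vw_{\star,n}^{(t+1)}-\vw_{\star,n}^{(t)}}_2^2$, where the target $Q_n^{(t)}$ and the weighting $\tilde d_n^{(t)}$ move simultaneously with the policy: the target change is Lipschitz and easy, but the distribution change must be absorbed by a change-of-measure that is exactly what forces $C_\nu$ and $\bar\varepsilon_{\text{approx}}$ into $\vs$, and keeping the numerical constants consistent across the nested applications of \eqref{eq:young_2} and \eqref{eq:young_mul}, together with the norm bound on $\hat\vw_\star^{(t)}$, is the principal bookkeeping burden. Everything else is a mechanical combination of the $\mW$-contraction, Young's inequality, strong convexity of the critic loss, and boundedness of $\phi$ and $Q$.
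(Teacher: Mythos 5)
Your outline tracks the paper's proof essentially step for step: the first row of \eqref{eq:matrix_fnac} comes from the $\mW$-contraction plus Young's inequality \eqref{eq:young_2} applied to the actor update \eqref{eq:actor_update} (the paper's \eqref{eq:Omega_1_pre}); the second row from the same argument on the tracking update \eqref{eq:GT} together with the fact that removing the row-mean can only decrease the Frobenius norm (the paper's \eqref{eq:Omega2_1} and \eqref{eq:fact_w-w}); the critic drift from the three-term split into statistical error, minimizer drift, and statistical error with \eqref{eq:young_mul} (the paper's \eqref{eq:diff_w}); and the bound on $\norm{\vxi^{(t+1)}-\vxi^{(t)}}_\F^2$ from the decomposition into $(\mW-\mI)(\vxi^{(t)}-\vone_N\bar\vxi^{(t)\top})$, the tracking consensus error, and the mean part controlled via the minimizer-norm bound \eqref{eq:bound_l_minimizer} (the paper's \eqref{eq:bound_diff_theta}). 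The single point of divergence is the minimizer-drift estimate. The paper obtains $\norm{\vw_{\star,n}^{(t+1)}-\vw_{\star,n}^{(t)}}_2^2\leq\frac{1}{(1-\gamma)\mu}\,\ell\bigl(\vw_{\star,n}^{(t)},Q_n^{(t+1)},\tilde d_n^{(t+1)}\bigr)$ from strong convexity \eqref{eq:bound_w-w_star} and then bounds this loss by $2C_\nu\approxerror^n+2L_Q^2\norm{\vxi_n^{(t+1)}-\vxi_n^{(t)}}_2^2$ via change of measure and Lemma~\ref{lm:L-Q} (see \eqref{eq:l_t_n_star} and \eqref{eq:diff_w_star}). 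Your normal-equations route proves the same thing, but only if you keep the weighting: pulling $\norm{\phi}_2\leq C_\phi$ out of $\E_{\tilde d_n^{(t+1)}}\bigl[\phi\,(Q_n^{(t+1)}-\phi^\top\vw_{\star,n}^{(t)})\bigr]$ by plain Cauchy--Schwarz and then applying $\Sigma_{\tilde d_n^{(t+1)}}^{-1}\preceq\frac{1}{(1-\gamma)\mu}\mI$ inflates the squared drift by a factor of order $\frac{C_\phi^2}{(1-\gamma)\mu}$, which is at least $p$ (since $p\mu\leq\Tr\Sigma_\nu\leq C_\phi^2$), so the specific entries of $\mC$ and $\vs$ in \eqref{eq:C_matrix}--\eqref{eq:s} would not be recovered, contrary to your claim that the constants $6$, $24$, $96$, $4$ come out. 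The repair is either to argue via the loss difference as the paper does, or to measure the normal-equation residual in the $\Sigma_{\tilde d_n^{(t+1)}}^{-1/2}$-weighted norm, which gives $\norm{\vw_{\star,n}^{(t+1)}-\vw_{\star,n}^{(t)}}_2\leq\frac{1}{\sqrt{(1-\gamma)\mu}}\bigl(\ell(\vw_{\star,n}^{(t)},Q_n^{(t+1)},\tilde d_n^{(t+1)})\bigr)^{1/2}$ and hence the paper's constants. With that repair, every term you assign to $\mC$ and $\vs$ --- including the correct observation that $C_\nu\meanapproxerror$ enters without an $L_Q^2$ factor while the $C_\phi^2$ and extra $\meanstaterror$ contributions originate from the mean part of the actor step --- coincides with the paper's accounting.
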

\begin{proof}
    See Appendix~\ref{sec_app:proof_lm_linear_sys}.
\end{proof}

%
Now we are ready to give the formal version of Theorem~\ref{thm:FAC_convergence_informal} and its proof.

\begin{thm}[Convergence rate of FedNAC (formal)]\label{thm:FAC_convergence_formal}
    Let $\vxi_1^{(0)}=\cdots=\vxi_N^{(0)}$ in \alg~(Algorithm~\ref{alg:actor_critic}), let the $\vw^{(0)}=\vzero$  and the critic stepsize $\beta=\frac{1}{2C_\phi}$ in Algorithm~\ref{alg:critic}. Then under Assumptions~\ref{asmp:mixing matrix}, \ref{asmp:psd}, \ref{asmp:eps_approx} and \ref{asmp:bound_transfer}, when the actor stepsize satisfies
    \begin{equation}\label{eq:actor_stepsize_ub}
        \alpha\leq \alpha_1\coloneqq\frac{(1-\sigma^2)^3\sqrt{(1-\gamma)\mu}}{768\sqrt{6}\sigma L_Q}\,,
    \end{equation}
    where $L_Q$ is defined in Lemma~\ref{lm:L-Q}, we have
    \begin{align}
    &V^{\star}(\rho)-\frac{1}{T}\sum_{t=0}^{T-1}\E\left[\bar V^{(t)}(\rho)\right]\notag\\
    &\leq \frac{D_\star^{(0)}+\alpha\vartheta_\rho}{T(1-\gamma)\alpha}+\frac{1}{T}\cdot \frac{512\sqrt{6}C_\phi\sqrt{C_\nu}(\vartheta_\rho+1)\sigma\alpha}{(1-\sigma^2)^{3/2}(1-\gamma)^3\sqrt{N}}\sqrt{\Omega_2^{(0)}}\notag\\
    &\quad+\left[\frac{2\sqrt{C_\nu}(\vartheta_\rho +1)}{1-\gamma}+\sqrt{1+\frac{64C_\phi^2 \alpha^2}{(1-\gamma)^5\mu}}\cdot \frac{3072\sqrt{3}C_\phi\sqrt{C_\nu}(\vartheta_\rho+1)\sigma^2\alpha}{(1-\sigma^2)^{3}(1-\gamma)^{7/2}\sqrt{\mu}}\right]\notag\\
    &\qquad \cdot \frac{2}{(1-\gamma)^2\sqrt{K}}\left((\sqrt{2p}+1)C_\phi^2+\sqrt{2p}\mu(1-\gamma)\right)\notag\\
    &\quad+\left[\frac{2\sqrt{2C_\nu}(\vartheta_\rho +1)}{1-\gamma}+\frac{3072\sqrt{3}C_\phi C_\nu(\vartheta_\rho+1)\sigma^2\alpha}{(1-\sigma^2)^{3}(1-\gamma)^{7/2}\sqrt{\mu}}\right]\sqrt{\bar\varepsilon_{\text{approx}}} +\frac{6144\sqrt{2}\sigma^2 C_\nu(\vartheta_\rho+1)C_\phi^3\alpha^2}{(1-\gamma)^{13/2}\mu^{3/2}(1-\sigma^2)^3}.
    \label{eq:rate_detail}
\end{align}
Moreover, the consensus errors could be upper bounded by
\begin{equation}\label{eq:consensus_error_fnac}
    \E\norm{\vxi^{(t)}-\vone_N\Bar{\vxi}^{(t)\top}}_\F^2\leq \left(\frac{49}{64}\sigma^2+\frac{15}{64}\right)^t\E\norm{\vh^{(0)}-\vone_N\hat{\vw}^{(0)\top}}_\F^2+\frac{64\delta(\alpha,K)}{15(1-\sigma^2)}\,,
\end{equation}
where
\begin{equation}\label{eq:delta_alpha_K}
    \delta(\alpha,K)\coloneqq\frac{18\sigma^2 N}{(1-\sigma^2)(1-\gamma)\mu}\left(\bar\varepsilon_{\text{stat}}+C_\nu \bar\varepsilon_{\text{approx}}\right)+\frac{72\sigma^2L_Q^2 N}{(1-\gamma)^3\mu^3(1-\sigma^2)}\left((1-\gamma)\mu\bar\varepsilon_{\text{stat}}+C_\phi^2\right)\alpha^2\,,
\end{equation}
and 
$$\bar\varepsilon_{stat}\leq\frac{4}{(1-\gamma)^4K}\left((\sqrt{2p}+1)C_\phi^2+\sqrt{2p}\mu(1-\gamma)\right)^2\,.$$
\end{thm}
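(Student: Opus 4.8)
The plan is to chain together the one-step performance-improvement inequality (Lemma~\ref{lm:performance_improvement_fednac}), the consensus-error recursion (Lemma~\ref{lm:linear_sys_nac}), and the critic guarantee (Theorem~\ref{thm:rate_critic}), then telescope and balance the resulting error terms against the actor step size $\alpha$. First I would introduce the Lyapunov function $\Phi^{(t)}\coloneqq\vartheta_\rho\delta^{(t)}+\frac{D_\star^{(t)}}{(1-\gamma)\alpha}$, so that Lemma~\ref{lm:performance_improvement_fednac} reads $\Phi^{(t+1)}\le \Phi^{(t)}-\delta^{(t)}+\frac{2\sqrt{C_\nu}(\vartheta_\rho+1)}{1-\gamma}\bigl(\sqrt{\meanstaterror}+\sqrt{2\meanapproxerror}+\frac{\sqrt{2}L_Q}{\sqrt N}\sqrt{\Omega_1^{(t)}}\bigr)$, where I have applied $\sqrt{a+b}\le\sqrt a+\sqrt b$ to split the last square root and recalled $\Omega_1^{(t)}=\E\norm{\vxi^{(t)}-\vone_N\bar\vxi^{(t)\top}}_\F^2$. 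Rearranging for $\delta^{(t)}$ and averaging over $t=0,\dots,T-1$ collapses the $\Phi$-terms into the telescoping contribution $\frac{\Phi^{(0)}-\Phi^{(T)}}{T}\le\frac{\Phi^{(0)}}{T}$ (using $\Phi^{(T)}\ge 0$ and $\delta^{(0)}\le\frac1{1-\gamma}$ to produce the first term of~\eqref{eq:rate_detail}), and leaves the two error floors $\sqrt{\meanstaterror},\sqrt{\meanapproxerror}$ together with the time-averaged consensus term $\frac1T\sum_{t}\sqrt{\Omega_1^{(t)}}$.

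The crux is controlling $\frac1T\sum_t\sqrt{\Omega_1^{(t)}}$ through the linear recursion $\mathbf{\Omega}^{(t+1)}\le\mC\mathbf{\Omega}^{(t)}+\vs$ of Lemma~\ref{lm:linear_sys_nac}. The key observation is that with the free parameter chosen as $\zeta=\frac{15(1-\sigma^2)}{64\sigma^2}$, the $(1,1)$ entry of $\mC$ becomes exactly $\frac{49}{64}\sigma^2+\frac{15}{64}=\sigma^2+\frac{15}{64}(1-\sigma^2)$, and under the step-size ceiling $\alpha\le\alpha_1$ the remaining $\alpha$-dependent entries are small enough that $\mC$ has spectral radius at most $\frac{49}{64}\sigma^2+\frac{15}{64}<1$. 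I would certify this by exhibiting a positive test vector $\vp$ with $\mC\vp\le\bigl(\frac{49}{64}\sigma^2+\frac{15}{64}\bigr)\vp$; the bound $\alpha\le\alpha_1$ is precisely what makes the off-diagonal factor $\frac{96\sigma^2 L_Q^2}{(1-\gamma)\mu}(1+1/\zeta)$ and the $\alpha^2$ terms absorbable. Unrolling $\mathbf{\Omega}^{(t)}\le\mC^t\mathbf{\Omega}^{(0)}+\sum_{s=0}^{t-1}\mC^s\vs$, using $\Omega_1^{(0)}=0$ (consensual actor initialization), and combining the geometric decay with the steady-state bound $(\mI-\mC)^{-1}\vs$ yields~\eqref{eq:consensus_error_fnac}, i.e. $\Omega_1^{(t)}\le\bigl(\frac{49}{64}\sigma^2+\frac{15}{64}\bigr)^t\Omega_2^{(0)}+\frac{64\delta(\alpha,K)}{15(1-\sigma^2)}$ with $\delta(\alpha,K)$ as in~\eqref{eq:delta_alpha_K}.

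With the consensus bound established, I would take square roots (again via $\sqrt{a+b}\le\sqrt a+\sqrt b$) and sum the geometric series to obtain $\frac1T\sum_t\sqrt{\Omega_1^{(t)}}\lesssim\frac1T\cdot\frac{\sqrt{\Omega_2^{(0)}}}{1-\sqrt{\frac{49}{64}\sigma^2+\frac{15}{64}}}+\sqrt{\frac{64\delta(\alpha,K)}{15(1-\sigma^2)}}$, then substitute this expression and the explicit form of $\delta(\alpha,K)$ back into the telescoped performance bound. At this point the right-hand side depends only on $\Phi^{(0)}$, $\Omega_2^{(0)}$, $\meanstaterror$, $\meanapproxerror$, and powers of $\alpha$; the final task is to replace $\meanstaterror$ by the critic rate of Theorem~\ref{thm:rate_critic}, $\meanstaterror\le\frac{4}{(1-\gamma)^4K}\bigl((\sqrt{2p}+1)C_\phi^2+\sqrt{2p}\mu(1-\gamma)\bigr)^2$, and to regroup the $\frac1T$, $\alpha$, $\frac1{\sqrt K}$, and $\sqrt{\meanapproxerror}$ contributions into the advertised bound~\eqref{eq:rate_detail}. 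The stated $\varepsilon$-scaling of $T$ and $K$ then follows by balancing the dominant $\frac{\sigma\alpha}{(1-\sigma^2)^{3/2}}$ consensus term, the $\frac1{\sqrt K}$ critic term, and the $\frac1{T\alpha}$ optimization term.

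The main obstacle is the spectral analysis in the second step: proving $\rho(\mC)\le\frac{49}{64}\sigma^2+\frac{15}{64}$ under $\alpha\le\alpha_1$. This hinges on the delicate choice of $\zeta$ and a careful accounting of how the critic Lipschitz constant $L_Q$, the feature lower bound $\mu$, and the mixing gap $1-\sigma^2$ enter the off-diagonal and $(2,2)$ entries of $\mC$ — exactly the interplay that pins down the $\frac{(1-\sigma^2)^3\sqrt{(1-\gamma)\mu}}{\sigma L_Q}$ form of $\alpha_1$. Unlike the tabular FedNPG analysis, here the heterogeneous, on-policy critic noise enters through $\vs$ and the transfer coefficient $C_\nu$, so these dependencies must be tracked explicitly through the steady-state term rather than folded into generic constants.
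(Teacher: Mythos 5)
Your scaffolding (performance-improvement lemma, consensus recursion, critic bound, telescoping) is the right skeleton, but there is a genuine gap in how you feed the consensus error back into the performance bound, and it prevents you from reaching \eqref{eq:rate_detail}. The paper does \emph{not} telescope the plain potential $\vartheta_\rho\E[\delta^{(t)}]+\frac{\E[D_\star^{(t)}]}{(1-\gamma)\alpha}$ and then substitute the consensus bound \eqref{eq:consensus_error_fnac}; it augments the Lyapunov function with a weighted consensus term $\vq^\top\sqrt{\mathbf{\Omega}^{(t)}}$, with $\vq$ chosen so that $\vq^\top(\sqrt{\mC}-\mI)+\bigl(\tfrac{2L_Q\sqrt{2C_\nu}(\vartheta_\rho+1)}{(1-\gamma)\sqrt N},\,0\bigr)=(0,0)$ — the exact analogue of \eqref{eq:magic} in the FedNPG analysis. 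The payoff is that $q_2\propto\sigma\alpha/(1-\sigma^2)^{5/2}$, so the steady-state consensus noise enters the final bound only through $q_2\sqrt{s_2}\propto\sigma^2\alpha^2$, which is precisely the last term of \eqref{eq:rate_detail}. In your route the corresponding floor is $\tfrac{2\sqrt2\,L_Q\sqrt{C_\nu}(\vartheta_\rho+1)}{(1-\gamma)\sqrt N}\sqrt{\tfrac{64\delta(\alpha,K)}{15(1-\sigma^2)}}$, and since $\delta(\alpha,K)$ contains a term $\propto C_\phi^2\alpha^2$, this floor is \emph{linear} in $\alpha$. Under the ceiling $\alpha\le\alpha_1$ it exceeds the paper's term by a factor of order $(1-\sigma^2)^2/(\sigma\alpha)\gg 1$, so your argument can only prove a strictly weaker bound; downstream, balancing $1/(T\alpha)$ against $\alpha$ instead of $\alpha^2$ degrades the rate from $T^{-2/3}$ to $T^{-1/2}$ and breaks the claimed $T$, $K$, and sample-complexity scalings. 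The root cause is that you bound $\Omega_1^{(t)}$ via the vector-norm recursion (i.e., via \eqref{eq:consensus_error_fnac}), discarding the key structure $s_1=0$ and $c_{12}=\alpha^2(1+1/\zeta)\sigma^2$: an entrywise unrolling gives the steady state $[(\mI-\mC)^{-1}\vs]_1=\frac{c_{12}s_2}{(1-c_{11})(1-c_{22})-c_{12}c_{21}}\propto\alpha^2 s_2$, whose square root restores the $\alpha$-damping. So your plan is repairable — either adopt the paper's $\vq$-weighted cancellation, or unroll the $2\times 2$ recursion componentwise rather than in norm — but as written it does not yield the theorem.

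Two further problems in the spectral step. Your choice $\zeta=\frac{15(1-\sigma^2)}{64\sigma^2}$ makes $c_{11}$ \emph{exactly} $\frac{49}{64}\sigma^2+\frac{15}{64}$; but for a nonnegative $2\times 2$ matrix the Perron root satisfies $\rho(\mC)=\tfrac12\bigl(c_{11}+c_{22}+\sqrt{(c_{11}-c_{22})^2+4c_{12}c_{21}}\bigr)\ge\max\{c_{11},c_{22}\}$, with strict inequality whenever $c_{12}c_{21}>0$, so $\rho(\mC)>\frac{49}{64}\sigma^2+\frac{15}{64}$ and no positive test vector $\vp$ with $\mC\vp\le\bigl(\frac{49}{64}\sigma^2+\frac{15}{64}\bigr)\vp$ can exist. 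The paper avoids this by taking $\zeta=\frac{1-\sigma^2}{2}$, so $c_{11}<\frac{1+\sigma^2}{2}$ and $c_{22}\le\frac{3+\sigma^2}{4}$ leave slack, and then bounds $\rho(\mC)\le c_{22}+\sqrt{c_{12}c_{21}}$ through the characteristic polynomial. Relatedly, the explicit constants $18$ and $72$ in \eqref{eq:delta_alpha_K} come from $1+1/\zeta\le 3/(1-\sigma^2)$ under the paper's $\zeta$; with your $\zeta$ the constants in $\vs$, and hence in the stated $\delta(\alpha,K)$, would not match, so even a repaired argument must use the paper's choice (or re-derive every constant) to establish the theorem as written.
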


\begin{rmk}[Sample and communication complexity]\label{rmk:complexity}
When $\sigma>0$ and 
$$\alpha=\frac{\sqrt{\mu}(D_\star^{(0)})^{1/3}}{6144^{1/3}2^{1/6}C_\nu^{1/3}(1+\vartheta_\rho)^{1/3}C_\phi}\cdot\frac{(1-\gamma)^{11/6}(1-\sigma^2)}{T^{1/3}\sigma^{2/3}},$$
it follows from Theorem~\ref{thm:FAC_convergence_formal} that
\begin{align}
    &V^{\star}(\rho)-\frac{1}{T}\sum_{t=0}^{T-1}\E\left[\bar V^{(t)}(\rho)\right]\notag\\
    &\leq \frac{3^{1/3}\cdot 2^{29/6}(D_\star^{(0)})^{2/3}C_\nu^{1/3}(1+\vartheta_\rho)^{1/3}C_\phi\sigma^{2/3}}{T^{2/3}(1-\gamma)^{17/6}(1-\sigma^2)\sqrt{\mu}}
    +\frac{\vartheta_\rho}{(1-\gamma)T}+\frac{2^{17/3}3^{1/6}C_\nu^{1/6}(1+\vartheta_\rho)^{2/3}\sigma^{1/3}\sqrt{\mu}(D_\star^{(0)})^{1/3}}{T^{4/3}(1-\sigma^2)^{1/2}(1-\gamma)^{7/6}\sqrt{N}}\notag\\
    &\quad+\left[\frac{2\sqrt{C_\nu}(\vartheta_\rho +1)}{1-\gamma}+\sqrt{1+\frac{(D_\star^{(0)})^{2/3}(1-\sigma^2)^2}{3^{3/2}\cdot 4 C_\nu^{2/3}(1-\gamma)^{4/3}(1+\vartheta_\rho)^{1/3}T^{2/3}\sigma^{4/3}}}\cdot \frac{2^{37/6}\cdot 3^{7/6}C_\nu^{1/6}(\vartheta_\rho+1)^{2/3}\sigma^{4/3}(D_\star^{(0)})^{1/3}}{(1-\sigma^2)^{2}(1-\gamma)^{5/3}T^{1/3}}\right]\notag\\
    &\qquad \cdot \frac{2}{(1-\gamma)^2\sqrt{K}}\left((\sqrt{2p}+1)C_\phi^2+\sqrt{2p}\mu(1-\gamma)\right)\notag\\
    &\quad+\left[\frac{2\sqrt{2C_\nu}(\vartheta_\rho +1)}{1-\gamma}+\frac{2^{37/6}\cdot 3^{7/6}C_\nu^{1/6}(\vartheta_\rho+1)^{2/3}\sigma^{4/3}(D_\star^{(0)})^{1/3}}{(1-\sigma^2)^{2}(1-\gamma)^{5/3}T^{1/3}}\right]\sqrt{\bar\varepsilon_{\text{approx}}}\,.
    \label{eq:opt_rate_detail_fnac}
\end{align}

Consequently, we need 
$$T\gtrsim\left\{\frac{\sigma}{\varepsilon^{3/2}(1-\gamma)^{17/4}(1-\sigma^2)^{3/2}},\frac{1}{\varepsilon(1-\gamma)},\frac{\sigma^{1/4}}{\varepsilon^{3/4}(1-\sigma^2)^{3/8}(1-\gamma)^{7/8}N^{3/8}},\frac{\sigma^4}{(1-\gamma)^2(1-\gamma^2)^6}\right\}$$
and
$$K=\mathcal{O}\left(\frac{1}{(1-\gamma)^6\varepsilon^2}\right)$$
such that $V^{\star}(\rho)-\frac{1}{T}\sum_{t=0}^{T-1}\E\left[\bar V^{(t)}(\rho)\right]\lesssim \varepsilon+\frac{\bar\varepsilon_{\text{approx}}}{1-\gamma}$. In Algorithm~\ref{alg:sampler}, each trajectory has the expected length $1/(1-\gamma)$. Consider only the term where $\varepsilon$ dominates, FedNAC requires $\mathcal{O}\left(\frac{1}{(1-\gamma)^{45/4}\varepsilon^{7/2}(1-\sigma^2)^{3/2}}\right)$ samples for each agent and $\mathcal{O}\left(\frac{1}{\varepsilon^{3/2}(1-\gamma)^{17/4}(1-\sigma^2)^{3/2}}\right)$ rounds of communication.

On the other end, 
when $\sigma=0$, \eqref{eq:rate_detail} becomes:
\begin{align}\label{eq:rate_0}
    V^{\star}(\rho)-\frac{1}{T}\sum_{t=0}^{T-1}\E\left[\bar V^{(t)}(\rho)\right]&\leq\frac{D_\star^{(0)}+\alpha\vartheta_\rho}{T(1-\gamma)\alpha}
    +\frac{4\sqrt{C_\nu}(\vartheta_\rho +1)}{(1-\gamma)^3\sqrt{K}}\left((\sqrt{2p}+1)C_\phi^2+\sqrt{2p}\mu(1-\gamma)\right)\notag\\
    &\quad+\frac{2\sqrt{2C_\nu}(\vartheta_\rho +1)}{1-\gamma}\sqrt{\bar\varepsilon_{\text{approx}}},
\end{align}
Consequently, for any fixed $\alpha>0$, when $\sigma=0$ or close to $0$, with $T=\mathcal{O}\left(\frac{1}{(1-\gamma)\varepsilon}\right)$ and $K=\mathcal{O}\left(\frac{1}{(1-\gamma)^6\varepsilon^2}\right)$, \alg~requires $KT/(1-\gamma)=\mathcal{O}\left(\frac{1}{(1-\gamma)^8\varepsilon^3}\right)$ samples for each agent and $T=\mathcal{O}\left(\frac{1}{(1-\gamma)\varepsilon}\right)$ rounds of communication such that $V^{\star}(\rho)-\frac{1}{T}\sum_{t=0}^{T-1}\E\left[\bar V^{(t)}(\rho)\right]\lesssim \varepsilon+\frac{\bar\varepsilon_{\text{approx}}}{1-\gamma}$. 
 
\end{rmk}

\subsection{Proof of Theorem~\ref{thm:FAC_convergence_formal}}
We suppose Assumptions~\ref{asmp:mixing matrix}, \ref{asmp:eps_stat}, \ref{asmp:psd}, \ref{asmp:eps_approx} and \ref{asmp:bound_transfer} holds. By Lemma~\ref{lm:linear_sys_nac} and nonnegativity of each entry of $\mC$, $\vs$ and $\mathbf{\Omega}^{(t)}$ where $t\in\NN$, it's easy to see that
\begin{equation}\label{eq:matrix_fnac_sqrt}
    \sqrt{\mathbf{\Omega}^{(t+1)}}\leq \sqrt{\mC}\sqrt{\mathbf{\Omega}^{(t)}}+\sqrt{\vs},
\end{equation}
where $\sqrt{\cdot}$ is exerted element-wise.

In addition, taking expectation on both sides of \eqref{eq:improve_fnac} and using the act that
$$\E\left[\sqrt{2\left(\bar\varepsilon_{\text{approx}}+\frac{L_Q^2}{N}\norm{\vxi^{(t)}-\vone_N\bar\vxi^{(t)\top}}_\F^2\right)}\right]\leq\sqrt{2\bar\varepsilon_{\text{approx}}}+\sqrt{\frac{2L_Q^2}{N}\Omega_1^{(t)}},$$
we have
\begin{align}\label{eq:improve_2}
        \vartheta_\rho \E[\delta^{(t+1)}] + \frac{\E [D_\star^{(t+1)}]}{(1-\gamma)\alpha}&\leq\vartheta_\rho\E[\delta^{(t)}] +\frac{\E [D_\star^{(t)}]}{(1-\gamma)\alpha}-\E[\delta^{(t)}]\notag\\
        &+\frac{2\sqrt{C_\nu}(\vartheta_\rho +1)}{1-\gamma}\left(\sqrt{\bar\varepsilon_{\text{stat}}}+\sqrt{2\bar\varepsilon_{\text{approx}}}+\sqrt{\frac{2L_Q^2}{N}\Omega_1^{(t)}}\right).
    \end{align}

We define the Lyapunov function $\Phi^{(t)}$ as follows:
\begin{equation}\label{eq:Lya}
    \Phi^{(t)}\coloneqq \vartheta_\rho\E[\delta^{(t)}] +\frac{\E [D_\star^{(t)}]}{(1-\gamma)\alpha} + \vq^\top\sqrt{\mathbf{\Omega}^{(t)}},
\end{equation}
where
\begin{equation}\label{eq:q}
    \vq=\begin{pmatrix}
        q_1\\
        q_2
    \end{pmatrix}
    =\begin{pmatrix}
        \frac{2L_Q\sqrt{2C_\nu}(\vartheta_\rho +1)}{(1-\gamma)\sqrt{N}}\cdot\frac{1}{1-\sqrt{1+\zeta}\sigma-\sqrt{(1+1/\zeta)c_{21}}\sigma\alpha/(1-\sqrt{c_{22}})}\\
        \frac{2L_Q\sqrt{2C_\nu}(\vartheta_\rho +1)}{(1-\gamma)\sqrt{N}}\cdot\frac{\sqrt{1+1/\zeta}\sigma\alpha}{(1-\sqrt{1+\zeta}\sigma)(1-\sqrt{c_{22}})-\sqrt{(1+1/\zeta)c_{21}}\sigma\alpha}
    \end{pmatrix}.
\end{equation}

It's straightforward to verify that when $\zeta=\frac{1-\sigma^2}{2}$, we have the entries in $\boldsymbol{C}$ (cf.~\eqref{eq:C_matrix}) satisfies
\begin{align}
    c_{11}&<\frac{1+\sigma^2}{2},\label{eq:c11}\\
    c_{12}&\leq\frac{3\sigma^2\alpha^2}{1-\sigma^2}.\label{eq:c12}
\end{align}

Moreover, from $\alpha\leq \frac{\sqrt{(1-\gamma)\mu}(1-\sigma^2)}{12\sqrt{2}\sigma L_Q}$ we deduce
\begin{equation}\label{eq:a22}
    c_{22}\leq \frac{3+\sigma^2}{4},
\end{equation}
which gives
\begin{equation}\label{eq:1-sqrt_a22}
    1-\sqrt{c_{22}}\geq 1-\sqrt{\frac{3+\sigma^2}{4}}\geq \frac{1-\sigma^2}{8},
\end{equation}

Also note that $\alpha\leq \frac{(1-\sigma^2)^3\sqrt{(1-\gamma)\mu}}{768\sqrt{6}\sigma^2 L_Q}$ yields
$$\sqrt{(1+1/\zeta)c_{21}}\sigma\alpha\leq \frac{(1-\sqrt{1+\zeta}\sigma)(1-\sqrt{c_{22}})}{2}.$$
which together with \eqref{eq:1-sqrt_a22} and the fact 
$1-\sqrt{1+\zeta}\sigma\geq \frac{1-\sigma^2}{4}$
indicates $q_1,q_2>0$ and that
\begin{align}
    q_1&\leq\frac{16\sqrt{2}L_Q\sqrt{C_\nu}(\vartheta_\rho+1)}{(1-\sigma^2)(1-\gamma)\sqrt{N}},\\
    q_2&\leq\frac{128\sqrt{6}L_Q\sqrt{C_\nu}(\vartheta_\rho+1)\sigma\alpha}{(1-\sigma^2)^{5/2}(1-\gamma)\sqrt{N}}.\label{eq:q2}
\end{align}


Thus by \eqref{eq:matrix_fnac_sqrt} and \eqref{eq:improve_2} we have
\begin{align}
    \Phi^{(t+1)}&=\vartheta_\rho \E[\delta^{(t+1)}] + \frac{\E [D_\star^{(t+1)}]}{(1-\gamma)\alpha}+\vq^\top\sqrt{\mathbf{\Omega}^{(t+1)}}\notag\\
    &\leq\vartheta_\rho\E[\delta^{(t)}] +\frac{\E [D_\star^{(t)}]}{(1-\gamma)\alpha}-\E[\delta^{(t)}]+\vq^\top\left(\sqrt{\mC}\sqrt{\mathbf{\Omega}^{(t)}}+\sqrt{\vs}\right)\notag\\
    &\qquad+\frac{2\sqrt{C_\nu}(\vartheta_\rho +1)}{1-\gamma}\left(\sqrt{\bar\varepsilon_{\text{stat}}}+\sqrt{2\bar\varepsilon_{\text{approx}}}+\sqrt{\frac{2L_Q^2}{N}\Omega_1^{(t)}}\right)\notag\\
    &=\Phi^{(t)}+\left(\underbrace{\vq^\top (\sqrt{\mC}-\mI)+\left(\frac{2L_Q\sqrt{2C_\nu}(\vartheta_\rho +1)}{(1-\gamma)\sqrt{N}},0\right)}_{=(0,0)}\right)\sqrt{\mathbf{\Omega}^{(t)}}\notag\\
    &\qquad+\frac{2\sqrt{C_\nu}(\vartheta_\rho +1)}{1-\gamma}\left(\sqrt{\bar\varepsilon_{\text{stat}}}+\sqrt{2\bar\varepsilon_{\text{approx}}}\right)+q_2\sqrt{s_2}-\E[\delta^{(t)}],
\end{align}
which gives 
\begin{equation}
   \E[\delta^{(t)}]\leq \Phi^{(t)}- \Phi^{(t+1)} +\frac{2\sqrt{C_\nu}(\vartheta_\rho +1)}{1-\gamma}\left(\sqrt{\bar\varepsilon_{\text{stat}}}+\sqrt{2\bar\varepsilon_{\text{approx}}}\right)+q_2\sqrt{s_2}.
\end{equation}
Summing the above inequality over $t=0,1,\cdots,T-1$ and divide both sides by $T$, we have
\begin{equation}
    \frac{1}{T}\sum_{t=0}^{T-1}\E[\delta^{(t)}]\leq \frac{\Phi^{(0)}-\Phi^{(t)}}{T}+\frac{2\sqrt{C_\nu}(\vartheta_\rho +1)}{1-\gamma}\left(\sqrt{\bar\varepsilon_{\text{stat}}}+\sqrt{2\bar\varepsilon_{\text{approx}}}\right)+q_2\sqrt{s_2}.
\end{equation}

Since
\begin{equation}\label{eq:b2}
    s_2\leq \frac{18\sigma^2 N}{(1-\sigma^2)(1-\gamma)\mu}\left(\bar\varepsilon_{\text{stat}}+C_\nu \bar\varepsilon_{\text{approx}}\right)+\frac{72\sigma^2L_Q^2 N}{(1-\gamma)^3\mu^3(1-\sigma^2)}\left((1-\gamma)\mu\bar\varepsilon_{\text{stat}}+C_\phi^2\right)\alpha^2,
\end{equation}
and
\begin{equation}\label{eq:qhi0}
    \Phi^{(0)}-\Phi^{(t)}\leq\Phi^{(0)}\leq\frac{\vartheta_\rho}{1-\gamma}+\frac{\E[D_\star^{(0)}]}{(1-\gamma)\alpha}+\frac{16\sqrt{2}L_Q\sqrt{C_\nu}(\vartheta_\rho+1)}{(1-\sigma^2)(1-\gamma)\sqrt{N}}\left(\sqrt{\Omega_1^{(0)}}+\frac{8\sqrt{3}\sigma\alpha}{\sqrt{1-\sigma^2}}\sqrt{\Omega_2^{(0)}}\right),
\end{equation}
we have (recall that $L_Q=\frac{2C_\phi\gamma(1+\gamma)}{(1-\gamma)^2}\leq\frac{4C_\phi}{(1-\gamma)^2}$)
\begin{align}
    &V^{\star}(\rho)-\frac{1}{T}\sum_{t=0}^{T-1}\E\left[\bar V^{(t)}(\rho)\right]\notag\\
    &\leq \frac{D_\star^{(0)}+\alpha\vartheta_\rho}{T(1-\gamma)\alpha}+\frac{1}{T}\cdot \frac{64\sqrt{2}C_\phi\sqrt{C_\nu}(\vartheta_\rho+1)}{(1-\sigma^2)(1-\gamma)^3\sqrt{N}}\left(\sqrt{\Omega_1^{(0)}}+\frac{8\sqrt{3}\sigma\alpha}{\sqrt{1-\sigma^2}}\sqrt{\Omega_2^{(0)}}\right)\notag\\
    &\quad+\left[\frac{2\sqrt{C_\nu}(\vartheta_\rho +1)}{1-\gamma}+\sqrt{\frac{18\sigma^2 N}{(1-\sigma^2)(1-\gamma)\mu}+\frac{1152\sigma^2 C_\phi^2 N\alpha^2}{(1-\gamma)^6\mu^2(1-\sigma^2)}}\cdot \frac{512\sqrt{6}C_\phi\sqrt{C_\nu}(\vartheta_\rho+1)\sigma\alpha}{(1-\sigma^2)^{5/2}(1-\gamma)^3\sqrt{N}}\right]\sqrt{\bar\varepsilon_{\text{stat}}}\notag\\
    &\quad+\left[\frac{2\sqrt{2C_\nu}(\vartheta_\rho +1)}{1-\gamma}+\sqrt{\frac{18\sigma^2 NC_\nu}{(1-\sigma^2)(1-\gamma)\mu}}\cdot \frac{512\sqrt{6}C_\phi\sqrt{C_\nu}(\vartheta_\rho+1)\sigma\alpha}{(1-\sigma^2)^{5/2}(1-\gamma)^3\sqrt{N}}\right]\sqrt{\bar\varepsilon_{\text{approx}}}\notag\\
    &\quad+\frac{6144\sqrt{2}\sigma^2\sqrt{C_\nu}(\vartheta_\rho+1)C_\phi^3\alpha^2}{(1-\gamma)^{13/2}\mu^{3/2}(1-\sigma^2)^3}.\label{eq:rate_pre}
\end{align}

By Theorem~\ref{thm:rate_critic} we know that $\sqrt{\bar\varepsilon_{\text{stat}}}$ could be upper bounded as follows:
\begin{equation}\label{eq:bound_l_minimizer_eps_stat}
    \sqrt{\bar\varepsilon_{stat}}\leq\frac{2}{(1-\gamma)^2\sqrt{K}}\left((\sqrt{2p}+1)C_\phi^2+\sqrt{2p}\mu(1-\gamma)\right).
\end{equation}

\eqref{eq:rate_detail} follows from plugging \eqref{eq:bound_l_minimizer_eps_stat} into \eqref{eq:rate_pre} and noting that when $\vxi_1^{(0)}=\cdots=\vxi_N^{(0)}$, $\Omega_1^{(0)}=0$.

\paragraph{Bounding the consensus errors.} Similar to Step~4 in Appendix~\ref{sec:pf_lm_linear_sys}, to bound the consensus error $\norm{\log f_n^{(t)}-\log\bar f^{(t)}}_\infty$ for all $n\in[N]$, we first upper bound the eigenvalue of $\rho(\mC)$---the spectral norm of $\mC$.

The characteristic polynomial of $\mC$ is
\begin{align*}
    f(\lambda)&=(\lambda-c_{11})(\lambda-c_{22})-c_{12}c_{21}\\
    &=\lambda^2-(c_{11}+c_{22})\lambda+c_{11}c_{22}-c_{12}c_{21}\,,
\end{align*}
which gives
\begin{align}
    \rho(\mC)&\leq\frac{c_{11}+c_{22}+\sqrt{(c_{11}+c_{22})^2-4(c_{11}c_{12}-c_{12}c_{21})}}{2}\notag\\
    &=\frac{c_{11}+c_{22}+\sqrt{(c_{22}-c_{11})^2+4c_{12}c_{21}}}{2}\notag\\
    &\leq\frac{c_{11}+c_{22}+c_{22}-c_{11}+2\sqrt{c_{12}c_{21}}}{2}\notag\\
    &=c_{22}+\sqrt{c_{12}c_{21}}\notag\\
    &\leq\frac{3+\sigma^2}{4}+\frac{\sqrt{3}\sigma\alpha}{\sqrt{1-\sigma^2}}\cdot\frac{12\sqrt{2}L_Q\sigma}{\sqrt{1-\sigma^2}(1-\gamma)\mu}\notag\\
    &\leq\frac{3+\sigma^2}{4}+\frac{\sigma(1-\sigma^2)^2}{64}\notag\\
    &\leq\frac{49+15\sigma^2}{64}<1\,,\label{eq:lambda(C)}
\end{align}
where the third inequality uses \eqref{eq:c12}, \eqref{eq:a22}, and the fourth inequality uses \eqref{eq:actor_stepsize_ub}.

Therefore, similar to \eqref{eq:consensus_error_pre}, when $\alpha\leq\alpha_1$, we have
\begin{align}
    \norm{\mathbf{\Omega}^{(t)}}_2
    &\leq\left(\frac{49}{64}\sigma+\frac{15}{64}\right)^t\norm{\mathbf{\Omega}^{(0)}}_2+\frac{64s_2}{15(1-\sigma^2)}\,.\label{eq:consensus_error_pre_fnac}
\end{align}
Combining the above inequality with \eqref{eq:b2}, and \eqref{eq:bound_l_minimizer_eps_stat}, 
we obtain \eqref{eq:consensus_error_fnac}.



\subsection{Proof of Theorem~\ref{thm:rate_critic}}\label{sec_app:proof_rate_critic}
The proof of Theorem~\ref{thm:rate_critic} could be found in Appendix C.5 in~\citet{yuan2022linear}. We present it for completeness. To prove Theorem~\ref{thm:rate_critic}, we need the following Theorem~\ref{thm:bach}.
\begin{thm}[Theorem 1 in \citet{bach2013non}]\label{thm:bach}
    Consider the following assumptions:
    \begin{enumerate}[label=(\roman*)]
        \item \label{(i)} The observations $(\va_k,\vb_k)\in\R^p\times\R^p$ are independent and identically distributed.
        \item \label{(ii)} $\E\left[\norm{\va_k}^2\right]$\footnote{Here $\norm{\cdot}$ could be any norm in $\R^p$.} and $\E\left[\norm{\vb_k}^2\right]$ are finite. The covariance $\E\left[\va_k\va_k^\top\right]$ is invertible.
        \item \label{(iii)} The global minimum of $g(w)=\frac{1}{2}\E\left[\langle \vw,\va_k\rangle^2-2\langle \vw,\vb_k\rangle\right]$ is attained at a certain $\vw^\star\in\R^p$. Let $\Delta_k=\vb_k-\langle \vw^\star,\va_k\rangle \va_k$ denote the residual. We have $\E[\Delta_k]=0$.
        \item \label{(iv)} $\exists R>0$ and $\sigma>0$ such that $\E\left[\Delta_k\Delta_k^\top\right]\leq \sigma^2\E\left[\va_k\va_k^\top\right]$ and $\E\left[\norm{\va_k}^2 \va_k\va_k^\top\right]\leq R^2\E\left[\va_k\va_k^\top\right]$.
        \end{enumerate}
Consider the stochastic gradient recursion
$$w_{k+1}=w_k-\eta\left(\langle w_k,a_k\rangle a_k-b_k\right)$$
started from $w_0\in\R^p$. Let $w_{\text{out}}=\frac{1}{K}\sum_{k=1}^K w_k$. When $\eta=\frac{1}{4R^2}$, we have
\begin{equation}\label{eq:opt_gap}
    \E\left[g(w_{\text{out}})-g(w^\star)\right]\leq\frac{2}{K}(\sigma\sqrt{p}+R\norm{w_0-w^\star})^2.
\end{equation}
\end{thm}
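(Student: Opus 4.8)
The plan is to prove Theorem~\ref{thm:bach} by the bias–variance decomposition that underlies the averaged least-mean-squares analysis, reducing everything to second-moment bounds for a single linear stochastic recursion.

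\textbf{Reduction to a centered recursion.} Write $H:=\E[\va_k\va_k^\top]$, which is invertible by \ref{(ii)}. Since $g(\vw)=\tfrac12\vw^\top H\vw-\vw^\top\E[\vb_k]$, condition \ref{(iii)} gives $H\vw^\star=\E[\vb_k]$, equivalently $\E[\Delta_k]=0$, and a direct computation yields the excess-risk identity
\[
g(\vw)-g(\vw^\star)=\tfrac12(\vw-\vw^\star)^\top H(\vw-\vw^\star)=\tfrac12\norm{\vw-\vw^\star}_H^2,
\]
where $\norm{\vx}_H^2:=\vx^\top H\vx$. Setting $\vtheta_k:=\vw_k-\vw^\star$, subtracting $\vw^\star$ from the recursion, and using $\langle\vw_k,\va_k\rangle\va_k=\va_k\va_k^\top\vw_k$, we obtain the centered linear recursion
\[
\vtheta_{k+1}=(I-\eta\va_k\va_k^\top)\vtheta_k+\eta\Delta_k.
\]
With $\bar\vtheta_K:=\tfrac1K\sum_{k=1}^K\vtheta_k=\vw_{\mathrm{out}}-\vw^\star$, the goal reduces to bounding $\E[g(\vw_{\mathrm{out}})-g(\vw^\star)]=\tfrac12\E\norm{\bar\vtheta_K}_H^2$.

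\textbf{Bias–variance split.} The recursion is affine in the pair $(\vtheta_0,\{\Delta_j\})$, so I would decompose $\vtheta_k=\vtheta_k^{\mathrm b}+\vtheta_k^{\mathrm v}$, where the \emph{bias} sequence obeys $\vtheta_0^{\mathrm b}=\vtheta_0$ with the noise switched off, $\vtheta_{k+1}^{\mathrm b}=(I-\eta\va_k\va_k^\top)\vtheta_k^{\mathrm b}$, and the \emph{variance} sequence obeys $\vtheta_0^{\mathrm v}=0$, $\vtheta_{k+1}^{\mathrm v}=(I-\eta\va_k\va_k^\top)\vtheta_k^{\mathrm v}+\eta\Delta_k$. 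Applying Minkowski's inequality (in $L^2$ of the probability space) to the random $H$-seminorm gives
\[
\sqrt{\E\norm{\bar\vtheta_K}_H^2}\le\sqrt{\E\norm{\bar\vtheta_K^{\mathrm b}}_H^2}+\sqrt{\E\norm{\bar\vtheta_K^{\mathrm v}}_H^2}.
\]
It therefore suffices to show $\E\norm{\bar\vtheta_K^{\mathrm v}}_H^2\le 4\sigma^2p/K$ and $\E\norm{\bar\vtheta_K^{\mathrm b}}_H^2\le 4R^2\norm{\vw_0-\vw^\star}^2/K$; squaring the sum and halving then produces exactly $\tfrac12\E\norm{\bar\vtheta_K}_H^2\le\tfrac2K(\sigma\sqrt p+R\norm{\vw_0-\vw^\star})^2$.

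\textbf{The two moment bounds.} For each part I would use the conditional identity $\eta H\vtheta_k=\vtheta_k-\E[\vtheta_{k+1}\mid\mathcal F_{k-1}]$ (with $\mathcal F_{k-1}$ the history up to step $k$, and $\E[\Delta_k]=0$), which after averaging telescopes $\eta H\bar\vtheta_K$ into boundary terms plus a martingale sum; the averaging is what turns the mere non-expansiveness of $I-\eta\va_k\va_k^\top$ into $1/K$ decay without any appeal to a lower eigenvalue of $H$. For the variance term, the injected noise has conditional covariance $\eta^2\E[\Delta_k\Delta_k^\top]\preceq\eta^2\sigma^2H$ by \ref{(iv)}; propagating this through the recursion, and using $\eta=1/(4R^2)$ together with $\E[\norm{\va_k}^2\va_k\va_k^\top]\preceq R^2H$ to keep the fourth-moment terms dominated by $H$, one controls $\E[\vtheta_k^{\mathrm v}(\vtheta_k^{\mathrm v})^\top]$ and arrives at the stated $O(\sigma^2 p/K)$ bound. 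For the bias term, the noise-free telescoping $\eta\va_k\va_k^\top\vtheta_k^{\mathrm b}=\vtheta_k^{\mathrm b}-\vtheta_{k+1}^{\mathrm b}$ combined with the same $\preceq R^2H$ control gives $\E\norm{\bar\vtheta_K^{\mathrm b}}_H^2=O(R^2\norm{\vtheta_0}^2/K)$.

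\textbf{Main obstacle.} The delicate part is the second-moment analysis of the linear recursion \emph{without} strong convexity: the operator $I-\eta\va_k\va_k^\top$ is only non-expansive along directions where $H$ has small eigenvalues, so the $1/K$ rate cannot come from contraction and must be extracted entirely from the averaging/telescoping step. Making this rigorous requires carefully bounding the fourth-moment cross terms that appear when expanding $\E\norm{(I-\eta\va_k\va_k^\top)\vtheta}_H^2$, which is precisely where the choice $\eta=1/(4R^2)$ and the operator bound $\E[\norm{\va_k}^2\va_k\va_k^\top]\preceq R^2H$ in \ref{(iv)} are used to ensure every error term stays controlled by the $H$-seminorm, yielding a constant that is robust to both dimension and conditioning.
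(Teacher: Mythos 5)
You should first note what the paper itself does with this statement: it is not proved anywhere in the paper. It is imported verbatim as Theorem~1 of \citet{bach2013non} and used as a black box in the proof of Theorem~\ref{thm:rate_critic}. So the only "paper proof" to compare against is the citation. Against the actual Bach--Moulines proof, your proposal reconstructs the correct architecture: centering the recursion to $\theta_{k+1}=(I-\eta a_ka_k^\top)\theta_k+\eta\Delta_k$, the bias/variance splitting, Minkowski in the $H$-seminorm, and the two target bounds $4R^2\norm{\theta_0}^2/K$ and $4\sigma^2 p/K$, whose combination does reproduce \eqref{eq:opt_gap}. As a plan, this is faithful to the known argument.

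However, it is a plan, not a proof: the two moment bounds are precisely the content of the theorem, and they are left unproven, with the sketch for the variance term being the place where a naive execution fails. The bias bound can indeed be closed elementarily: expanding $\norm{\theta_{k+1}^{\mathrm{b}}}^2$, using $\E[\norm{a}^2aa^\top]\preceq R^2H$ and $\eta=1/(4R^2)$ gives $\E[\norm{\theta_{k+1}^{\mathrm{b}}}^2\mid\theta_k^{\mathrm{b}}]\leq\norm{\theta_k^{\mathrm{b}}}^2-\eta\norm{\theta_k^{\mathrm{b}}}_H^2$, and telescoping plus Jensen yields the $O(R^2\norm{\theta_0}^2/K)$ rate. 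But the same energy argument applied to the variance sequence only shows that $\E[\theta_k^{\mathrm{v}}(\theta_k^{\mathrm{v}})^\top]$ is bounded by its stationary value, which is $\Theta(\eta)$ and does \emph{not} decay in $k$; Jensen then gives a constant bound for $\E\norm{\bar\theta_K^{\mathrm{v}}}_H^2$, not $O(\sigma^2p/K)$. The $1/K$ rate for the variance must come from cancellation across iterates, i.e., from the cross terms $\E[\theta_i^{\mathrm{v}\top}H\theta_j^{\mathrm{v}}]$. The alternative route you gesture at (telescoping $\eta H\bar\theta_K$ into a martingale average plus boundary terms) does produce the main term $\frac1K\sum_k\Delta_k$, whose $H^{-1}$-weighted second moment is $\frac1K\Tr\big(H^{-1}\E[\Delta\Delta^\top]\big)\leq\sigma^2p/K$ by assumption (iv) --- this is exactly where $\sigma\sqrt{p}$ comes from --- but it also produces the boundary term $\theta_K^{\mathrm{v}}/(\eta K)$, which must then be measured in the $H^{-1}$-weighted norm. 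Controlling that term cannot go through $\norm{H^{-1}}_2$, since no quantitative lower bound on $H$ is assumed and the final bound is conditioning-free; handling it is the multi-page core of the appendix of \citet{bach2013non} and is entirely missing here. So either carry out that covariance analysis in full, or do what the paper does and cite the result.
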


In the proof of Theorem~\ref{thm:rate_critic} we'll show that for Algorithm~\ref{alg:critic}, the assumptions in Theorem~\ref{thm:bach} are all satisfied and thus we can use the result~\eqref{eq:opt_gap}.
\begin{proof}[Proof of Theorem~\ref{thm:rate_critic}]
    We let $a_k$ and $b_k$ in Theorem~\ref{thm:bach} be $\phi(s,a)$ and $\widehat Q_\xi\phi(s,a)$ in Algorithm~\ref{alg:critic}, respectively. And we let $\norm{\cdot}=\norm{\cdot}_2$ in Theorem~\ref{thm:bach}. Since the observations $\left(\phi(s,a),\widehat Q_\xi(s,a)\phi(s,a)\right)\in\R^p\times\R^p$ are i.i.d., \ref{(i)} is satisfied.

  As we assume $\norm{\phi(s,a)}_2\leq C_\phi$, $\E\left[\norm{\phi(s,a)}_2^2\right]$ is finite. From Assumption~\ref{asmp:psd} we know that $\E\left[\phi(s,a)\phi(s,a)^\top\right]$ is invertible. 
    
    Let $H$ be the length of trajectory for estimating $\widehat Q_\xi(s,a)$. Then $\left(\widehat Q_\xi(s,a)\right)^2$ is bounded by
    \begin{align}
        \E\left[\left(\widehat Q_\xi(s,a)\right)^2\right]&=\E_{(s,a)\sim\tilde d_\nu^{\pi_\xi}}\left[\sum_{\tau=0}^\infty Pr(H=\tau)\E\left[\left(\sum_{t=0}^\tau r(s_t,a_t)\right)^2\bigg|H=\tau,s_0=s,a_0=a\right]\right]\notag\\
        &=\E_{(s,a)\sim\tilde d_\nu^{\pi_\xi}}\left[(1-\gamma)\sum_{\tau=0}^\infty \gamma^\tau\E\left[\left(\sum_{t=0}^\tau r(s_t,a_t)\right)^2\bigg|H=\tau,s_0=s,a_0=a\right]\right]\notag\\
        &\leq\E_{(s,a)\sim\tilde d_\nu^{\pi_\xi}}\left[(1-\gamma)\sum_{\tau=0}^\infty \gamma^\tau(\tau+1)^2\right]\leq\frac{2}{(1-\gamma)^2}\,,
    \end{align}
    from which we deduce $\E\left[\norm{\widehat Q_\xi(s,a)\phi(s,a)}^2_2\right]\leq C_\phi^2\E\left[\widehat Q_\xi(s,a)^2\right]$ is bounded. Thus \ref{(ii)} holds.

    Furthermore, we introduce the residual
    \begin{equation}\label{eq:res}
        \Delta\coloneqq\left(\widehat Q_\xi(s,a)-\phi(s,a)^\top w^\star\right)\phi(s,a)\,,
    \end{equation}
    then from Lemma~7 in~\citet{yuan2022linear} we know that $\E[\Delta]=\frac{1}{2}\nabla_w\ell(w,\widehat Q_\xi,d_\nu^{\pi_\xi})=0$, which gives \ref{(iii)}.

    To verify \ref{(iv)}, we let $R=C_\phi$ in Theorem~\ref{thm:bach}, then $\E\left[\norm{\phi(s,a)}_2^2\phi(s,a)\phi(s,a)^\top\right]\leq C_\phi^2 \E\left[\phi(s,a)\phi(s,a)^\top\right]$. Also note that
    \begin{align}
        w^\star&=\left(\E_{(s,a)\sim\tilde d_\nu^{\pi_\xi}}\left[\phi(s,a)\phi(s,a)^\top\right]\right)^\dag \E_{(s,a)\sim\tilde d_\nu^{\pi_\xi}}\left[\widehat Q_\xi(s,a)\phi(s,a)\right]\notag\\
        &\leq\frac{1}{1-\gamma}\left(\E_{(s,a)\sim\nu}\left[\phi(s,a)\phi(s,a)^\top\right]\right)^\dag \E_{(s,a)\sim\tilde d_\nu^{\pi_\xi}}\left[\widehat Q_\xi(s,a)\phi(s,a)\right]\,,
    \end{align}
    from which we deduce
    \begin{equation}\label{eq:bound_l_minimizer}
        \norm{w^\star}_2\leq\frac{B}{\mu(1-\gamma)^2}\,.
    \end{equation}
    \begin{align}
        \E\left[\left(\widehat Q_\xi(s,a)-\phi(s,a)^\top w^\star\right)^2|s,a\right]&=\E\left[\left(\widehat Q_\xi(s,a)\right)^2|s,a\right]-2Q_\xi(s,a)\phi(s,a)^\top w^\star+(\phi(s,a)^\top w^\star)^2\\
        &\leq\frac{2}{(1-\gamma)^2}+\frac{2C_\phi^2}{\mu(1-\gamma)^3}+\frac{C_\phi^4}{\mu^2(1-\gamma)^4}\notag\\
        &\leq\frac{2}{(1-\gamma)^2}\left(\frac{C_\phi^2}{\mu(1-\gamma)}+1\right)^2\,.
    \end{align}
    The above expression implies
    \begin{align}\label{eq:var_delta}
        \E\left[\Delta\Delta^\top\right]
        &=\E_{(s,a)\sim\tilde d_\nu^{\pi_\xi}}\left[\left(\widehat Q_\xi(s,a)-\phi(s,a)^\top w^\star\right)^2 \phi(s,a)\phi(s,a)^\top\big| s,a\right]\notag\\
        &=\E_{(s,a)\sim\tilde d_\nu^{\pi_\xi}}\left[\E\left[\left(\widehat Q_\xi(s,a)-\phi(s,a)^\top w^\star\right)^2\big| s,a\right] \phi(s,a)\phi(s,a)^\top\right]\notag\\
        &\leq\left(\underbrace{\frac{\sqrt{2}}{1-\gamma}\left(\frac{C_\phi^2}{\mu(1-\gamma)}+1\right)}_{\sigma}\right)\E[\phi(s,a)\phi(s,a)^\top]\,.
    \end{align}
    Therefore, \ref{(iv)} is verified.

    Thus by~\eqref{eq:opt_gap}, with stepsize $\beta=\frac{1}{2C_\phi^2}$, initialization $w_0=0$ and $K$ steps of critic updates, we have
    \begin{align*}
         \E\left[\ell\left(w_
        {\text{out}}, \widehat Q_{\xi}, \tilde d_\xi\right)\right]-\ell\left(w^\star, \widehat Q_{\xi}, \tilde d_\xi\right)
        &\leq\frac{4}{K}\left(\sigma\sqrt{p}+C_\phi\norm{w^\star}_2\right)^2\\
        &\leq\frac{4}{K}\left(\frac{\sqrt{2p}}{1-\gamma}\left(\frac{C_\phi^2}{\mu (1-\gamma)}+1\right)+\frac{C_\phi^2}{\mu (1-\gamma)^2}\right)^2,
    \end{align*}
    which gives \eqref{eq:rate_critic}.
\end{proof}

\section{Proof of key lemmas for FedNPG}\label{sec_app:proofs}

\subsection{Proof of Lemma \ref{lm:linear_sys_ent}}
\label{sec:pf_lm_lin_sys_ent}
Before proceeding, we summarize several useful properties of the auxiliary sequences (cf.~\eqref{eq:def_vxi} and \eqref{eq:def_avg_xi}), whose proof is postponed to Appendix~\ref{sec_app:lm_aux}.

\begin{lm}[Properties of auxiliary sequences $\{\overline{\xi}^{(t)}\}$ and $\{\vxi^{(t)}\}$]\label{lm:auxiliary_seq}

$\{\overline{\xi}^{(t)}\}$ and $\{\vxi^{(t)}\}$ have the following properties:
\begin{enumerate}
    \item $\vxi^{(t)}$ can be viewed as an unnormalized version of $\vpi^{(t)}$, i.e.,
    \begin{equation}
        \pi_n^{(t)}(\cdot|s)=\frac{\xi_n^{(t)}(s,\cdot)}{\normbig{\xi_n^{(t)}(s,\cdot)}_1}\,,\,\,\forall n\in[N], \, s\in\S\,.\label{eq:prop_auxiliary}
    \end{equation}
    \item For any $t\geq 0$, $\log \overline{\xi}^{(t)}$ keeps track of the average of $\log \vxi^{(t)}$, i.e.,
    \begin{equation}\label{eq:xi_avg_property}
        \frac{1}{N}\vone_N^\top\log \vxi^{(t)}=\log \overline{\xi}^{(t)}\,.
    \end{equation}
    It follows that
    \begin{align}
        \forall s\in\S,\,t\geq 0:\quad \overline{\pi}^{(t)}(\cdot|s)&=\frac{\overline{\xi}^{(t)}(s,\cdot)}{\normbig{\overline{\xi}^{(t)}(s,\cdot)}_1}.\label{eq:prop_auxiliary_avg}
    \end{align}
\end{enumerate}
\end{lm}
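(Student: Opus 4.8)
The plan is to prove both claims by induction on $t$, exploiting the fact that the auxiliary recursion for $\vxi^{(t)}$ in \eqref{eq:def_vxi} is a carbon copy of the policy recursion \eqref{eq:update_marl} up to an additive term depending only on the state $s$ (and not the action $a$). Writing $\alpha=1-\frac{\eta\tau}{1-\gamma}$ and noting $\frac{\eta}{1-\gamma}=\frac{1-\alpha}{\tau}$, the policy update reads $\log\vpi^{(t+1)}(a|s)=\mW(\alpha\log\vpi^{(t)}(a|s)+\frac{1-\alpha}{\tau}\mT^{(t)}(s,a))-\log\vz^{(t)}(s)$, whereas the auxiliary update is identical except that it drops the normalization $\log\vz^{(t)}(s)$. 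This structural match is the engine of the whole argument.

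For the first property \eqref{eq:prop_auxiliary}, I would maintain the stronger inductive hypothesis that $\log\xi_n^{(t)}(s,a)=\log\pi_n^{(t)}(a|s)+b_n^{(t)}(s)$ for some offset $b_n^{(t)}(s)$ independent of $a$; equivalently $\xi_n^{(t)}(s,\cdot)\propto\pi_n^{(t)}(\cdot|s)$. The base case $t=0$ is immediate from \eqref{eq:xi_0}, since the prefactor multiplying $\vpi^{(0)}(a|s)$ there is common to all agents and depends only on $s$. For the inductive step, I would substitute the hypothesis into the componentwise form of the $\vxi$ recursion; mixing by $\mW$ sends the $a$-independent offsets $\{b_m^{(t)}(s)\}$ to another $a$-independent quantity $\alpha\sum_m w_{nm}b_m^{(t)}(s)$, and the remaining terms are exactly those appearing in $\log\pi_n^{(t+1)}(a|s)$. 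Hence $\log\xi_n^{(t+1)}(s,a)$ and $\log\pi_n^{(t+1)}(a|s)$ again differ by an $a$-independent offset, which establishes \eqref{eq:prop_auxiliary} upon normalizing over $a$.

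For the second property \eqref{eq:xi_avg_property}, I would apply the averaging operator $\frac{1}{N}\vone_N^\top$ to the $\vxi$ recursion and use double stochasticity $\vone_N^\top\mW=\vone_N^\top$ (Assumption~\ref{asmp:mixing matrix}) to eliminate the mixing, obtaining $\frac{1}{N}\vone_N^\top\log\vxi^{(t+1)}=\alpha\cdot\frac{1}{N}\vone_N^\top\log\vxi^{(t)}+\frac{1-\alpha}{\tau}\cdot\frac{1}{N}\vone_N^\top\mT^{(t)}$. Comparing with the definition \eqref{eq:update_auxiliary_avg} of $\log\overline{\xi}^{(t+1)}$, the two recursions coincide provided (i) the inductive hypothesis $\frac{1}{N}\vone_N^\top\log\vxi^{(t)}=\log\overline{\xi}^{(t)}$ holds, and (ii) the gradient-tracking identity $\frac{1}{N}\vone_N^\top\mT^{(t)}=\widehat{Q}_\tau^{(t)}$ holds. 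The latter I would establish as a short standalone induction: averaging the tracking update \eqref{eq:Q_tracking} and using double stochasticity shows $\frac{1}{N}\vone_N^\top(\mT^{(t)}-\mQ_\tau^{(t)})$ is constant in $t$, and it vanishes at $t=0$ because $\mT^{(0)}=\mQ_\tau^{(0)}$. The base case for \eqref{eq:xi_avg_property} reduces to unwinding \eqref{eq:xi_0}, the initialization in \eqref{eq:def_avg_xi}, and the geometric-mean form of $\overline{\pi}^{(0)}$. Finally, \eqref{eq:prop_auxiliary_avg} drops out by combining the two properties, since the average of the $a$-independent offsets is again $a$-independent while $\overline{\pi}^{(t)}(\cdot|s)$ is already normalized.

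There is no genuinely hard step here; the argument is a bookkeeping induction driven by the structural parallel between the $\vxi$ and $\vpi$ recursions. The only place demanding care is tracking the several $a$-independent normalization constants (the $\log\vz^{(t)}$ terms, the offsets $b_n^{(t)}$, and the $t=0$ prefactor) so that a state-dependent normalizer is never conflated with genuine action-dependence, and ensuring the gradient-tracking identity is proved before it is invoked in the second property.
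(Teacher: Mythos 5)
Your proposal is correct and takes essentially the same route as the paper's proof: an induction driven by the fact that the $\vxi$-recursion is the policy recursion minus an action-independent normalizer, double stochasticity of $\mW$ to average the recursion, and a standalone induction for the tracking identity $\frac{1}{N}\vone_N^\top\mT^{(t)}=\widehat{Q}_\tau^{(t)}$ from $\mT^{(0)}=\mQ_\tau^{(0)}$. The only cosmetic difference is the finish: you deduce \eqref{eq:prop_auxiliary_avg} by directly combining \eqref{eq:prop_auxiliary} with \eqref{eq:xi_avg_property}, whereas the paper separately derives the recursion for $\log\overline{\pi}^{(t+1)}$ and notes it differs from that of $\log\overline{\xi}^{(t+1)}$ by a state-dependent constant; both are valid one-line conclusions.
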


\begin{lm}[\mbox{\cite[Appendix.~A.2]{cen2022fast}}]\label{lm:log_policy}
    For any vector $\theta=[\theta_a]_{a\in\A}\in\R^{|\A|}$, we denote by $\pi_{\theta}\in\R^{|\A|}$ the softmax transform of $\theta$ such that
    \begin{equation}\label{eq:pi_theta}
        \pi_\theta(a)=\frac{\exp(\theta_a)}{\sum_{a'\in\A}\exp(\theta_{a'})}\,,\quad a\in\A\,.
    \end{equation}    
    For any $\theta_1,\theta_2\in\R^{|\A|}$, we have
    \begin{align}
        \big|\log(\norm{\exp(\theta_1)}_1)-\log(\norm{\exp(\theta_2)}_1)\big| & \leq \norm{\theta_1-\theta_2}_\infty\,,\label{eq:log_exp_sum}\\
        \norm{\log\pi_{\theta_1}-\log\pi_{\theta_2}}_\infty & \leq 2\norm{\theta_1-\theta_2}_\infty\,.\label{eq:log_policy}
    \end{align}
\end{lm}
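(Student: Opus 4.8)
The plan is to establish the first inequality directly by an elementary monotonicity argument, and then obtain the second as an immediate consequence via the triangle inequality. Throughout I use that $\exp(\cdot)$ is applied coordinate-wise, so that $\norm{\exp(\theta)}_1=\sum_{a\in\A}\exp(\theta_a)$ since every entry is positive.

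For the first inequality, I would fix $\theta_1,\theta_2$ and note that for each coordinate $a\in\A$ we have $\theta_{1,a}\le \theta_{2,a}+\norm{\theta_1-\theta_2}_\infty$, hence $\exp(\theta_{1,a})\le \exp\!\big(\norm{\theta_1-\theta_2}_\infty\big)\exp(\theta_{2,a})$. Summing over $a\in\A$ and taking logarithms yields
\[
\log\norm{\exp(\theta_1)}_1-\log\norm{\exp(\theta_2)}_1\le \norm{\theta_1-\theta_2}_\infty .
\]
Swapping the roles of $\theta_1$ and $\theta_2$ gives the matching lower bound, and together they establish \eqref{eq:log_exp_sum}. (Equivalently, one may observe that the log-sum-exp map $\theta\mapsto\log\norm{\exp(\theta)}_1$ has gradient equal to the softmax $\pi_\theta$, whose $\ell_1$-norm is $1$, so by Hölder's inequality it is $1$-Lipschitz with respect to $\norm{\cdot}_\infty$; I would present the elementary version above to keep the argument self-contained.)

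For the second inequality, the key is the decomposition $\log\pi_\theta(a)=\theta_a-\log\norm{\exp(\theta)}_1$, valid for every $a\in\A$. Subtracting the expressions for $\theta_1$ and $\theta_2$ and applying the triangle inequality gives, for each $a$,
\[
\big|\log\pi_{\theta_1}(a)-\log\pi_{\theta_2}(a)\big|\le \big|\theta_{1,a}-\theta_{2,a}\big|+\Big|\log\norm{\exp(\theta_1)}_1-\log\norm{\exp(\theta_2)}_1\Big| .
\]
Bounding the first term by $\norm{\theta_1-\theta_2}_\infty$ and the second term by the already-proved bound \eqref{eq:log_exp_sum}, and then taking the maximum over $a\in\A$, produces the factor of $2$ in \eqref{eq:log_policy}.

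I do not anticipate any substantial obstacle: both bounds are standard facts about the softmax/log-sum-exp map, and the whole argument reduces to the 1-Lipschitzness of log-sum-exp in the $\ell_\infty$ norm. The only care needed is to carry out the elementary estimate symmetrically in $\theta_1,\theta_2$ so as to obtain the absolute value in \eqref{eq:log_exp_sum}, and to invoke the identity $\log\pi_\theta=\theta-\log\norm{\exp(\theta)}_1\vone$ coordinate-wise before applying the triangle inequality.
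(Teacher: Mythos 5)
Your proof is correct and follows essentially the same route as the argument in the cited reference \citep[Appendix~A.2]{cen2022fast}, which this paper invokes rather than reproving: the coordinate-wise bound $\exp(\theta_{1,a})\leq \exp(\norm{\theta_1-\theta_2}_\infty)\exp(\theta_{2,a})$ gives the $1$-Lipschitzness of log-sum-exp in the $\ell_\infty$ norm, and the identity $\log\pi_\theta=\theta-\log(\norm{\exp(\theta)}_1)\vone$ plus the triangle inequality yields the factor of $2$. No gaps; your symmetric treatment of $\theta_1,\theta_2$ to get the absolute value in \eqref{eq:log_exp_sum} is exactly the care the standard proof takes.
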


\paragraph{Step 1: bound $u^{(t+1)}(s,a)=\normbig{\log\vxi^{(t+1)}(s,a)-\log\overline\xi^{(t+1)}(s,a)\vone_N}_2$.}
By \eqref{eq:update_auxiliary} and \eqref{eq:update_auxiliary_avg} we have
\begin{align}
    u^{(t+1)}(s,a)&=\normbig{\log\vxi^{(t+1)}(s,a)-\log\overline\xi^{(t+1)}(s,a)\vone_N}_2\notag\\
    &=\norm{\alpha\Big(\mW\log\vxi^{(t)}(s,a)-\log\overline{\xi}^{(t)}(s,a)\vone_N\Big)+(1-\alpha)\Big(\mW\mT^{(t)}(s,a)-\widehat{Q}_\tau^{(t)}(s,a)\vone_N\Big)/\tau}_2\notag\\
    &\leq\sigma\alpha\normbig{\log\vxi^{(t)}(s,a)-\log\overline{\xi}^{(t)}(s,a)\vone_N}_2+\frac{1-\alpha}{\tau}\sigma\normbig{\mT^{(t)}(s,a)-\widehat{Q}_\tau^{(t)}(s,a)\vone_N}_2\notag\\
    &\leq\sigma\alpha\normbig{u^{(t)}}_\infty+\frac{1-\alpha}{\tau}\sigma\normbig{v^{(t)}}_\infty,\label{eq:B}
\end{align}
where the penultimate step results from the averaging property of $\mW$ (property~\eqref{eq:property_W}). Taking maximum over $(s,a) \in \S\times\A$ establishes the bound on $\Omega_1^{(t+1)}$ in \eqref{eq:matrix}.


\paragraph{Step 2: bound $v^{(t+1)}(s,a)=\normbig{\mT^{(t+1)}(s,a)-\widehat{Q}_\tau^{(t+1)}(s,a)\vone_N}_2$.} By \eqref{eq:Q_tracking} we have
\begin{align}
    &\normbig{\mT^{(t+1)}(s,a)-\widehat{Q}_\tau^{(t+1)}(s,a)\vone_N}_2\notag\\
    &=\norm{\mW\left(\mT^{(t)}(s,a)+\mQ_\tau^{(t+1)}(s,a)-\mQ_\tau^{(t)}(s,a)\right)-\widehat{Q}_\tau^{(t+1)}(s,a)\vone_N}_2\notag\\
    &=\norm{\left(\mW\mT^{(t)}(s,a)-\widehat{Q}_\tau^{(t)}(s,a)\vone_N\right)+\mW\left(\mQ_\tau^{(t+1)}(s,a)-\mQ_\tau^{(t)}(s,a)\right)+\left(\widehat{Q}_\tau^{(t)}(s,a)-\widehat{Q}_\tau^{(t+1)}(s,a)\right)\vone_N}_2\notag\\
    &\leq\sigma\normbig{\mT^{(t)}(s,a)-\widehat{Q}_\tau^{(t)}(s,a)\vone_N}_2+\sigma\norm{\left(\mQ_\tau^{(t+1)}(s,a)-\mQ_\tau^{(t)}(s,a)\right)+\left(\widehat{Q}_\tau^{(t)}(s,a)-\widehat{Q}_\tau^{(t+1)}(s,a)\right)\vone_N}_2\notag\\
    &\leq\sigma\normbig{\mT^{(t)}(s,a)-\widehat{Q}_\tau^{(t)}(s,a)\vone_N}_2+\sigma\normbig{\mQ_\tau^{(t+1)}(s,a)-\mQ_\tau^{(t)}(s,a)}_2\,,\label{eq:diff_T_Q}
\end{align}
where the penultimate step uses property~\eqref{eq:property_W}, and the last step is due to
\begin{align*}
    &\norm{\left(\mQ_\tau^{(t+1)}(s,a)-\mQ_\tau^{(t)}(s,a)\right)+\left(\widehat{Q}_\tau^{(t)}(s,a)-\widehat{Q}_\tau^{(t+1)}(s,a)\right)\vone_N}_2^2\\
    &=\normbig{\mQ_\tau^{(t+1)}(s,a)-\mQ_\tau^{(t)}(s,a)}_2^2+N\big(\widehat{Q}_\tau^{(t)}(s,a)-\widehat{Q}_\tau^{(t+1)}(s,a)\big)^2\\
    &\qquad -2\sum_{n=1}^N\left(Q_{\tau,n}^{\pi_n^{(t+1)}}(s,a)-Q_{\tau,n}^{\pi_n^{(t)}}(s,a)\right)\left(\widehat{Q}_\tau^{(t+1)}(s,a)-\widehat{Q}_\tau^{(t)}(s,a)\right)\\
    &=\normbig{\mQ_\tau^{(t+1)}(s,a)-\mQ_\tau^{(t)}(s,a)}_2^2-N\big(\widehat{Q}_\tau^{(t)}(s,a)-\widehat{Q}_\tau^{(t+1)}(s,a)\big)^2\\
    &\leq \normbig{\mQ_\tau^{(t+1)}(s,a)-\mQ_\tau^{(t)}(s,a)}_2^2\,.
\end{align*}




\paragraph{Step 3: bound $\normbig{Q_\tau^\star -\tau \log \overline{\xi}^{(t+1)}}_\infty$.} We decompose the term of interest as
\begin{align*}
    Q_\tau^\star -\tau \log \overline{\xi}^{(t+1)}&=Q_\tau^\star -\tau\alpha \log \overline{\xi}^{(t)} - (1-\alpha)\widehat{Q}_\tau^{(t)}\notag\\
    &=\alpha(Q_\tau^\star -\tau \log \overline{\xi}^{(t)})+(1-\alpha)(Q_\tau^\star-\overline{Q}_\tau^{(t)})+(1-\alpha)(\overline{Q}_\tau^{(t)}-\widehat{Q}_\tau^{(t)}),
\end{align*}
which gives
\begin{equation}\label{eq:pre_A}
    \normbig{Q_\tau^\star -\tau \log \overline{\xi}^{(t+1)}}_\infty\leq \alpha\normbig{Q_\tau^\star -\tau \log \overline{\xi}^{(t)}}_\infty+(1-\alpha)\normbig{Q_\tau^\star-\overline{Q}_\tau^{(t)}}_\infty+(1-\alpha)\normbig{\overline{Q}_\tau^{(t)}-\widehat{Q}_\tau^{(t)}}_\infty\,.
\end{equation}
Note that we can upper bound $\normbig{\overline{Q}_\tau^{(t)}-\widehat{Q}_\tau^{(t)}}_\infty$ by
\begin{align}
    \normbig{\overline{Q}_\tau^{(t)} -\widehat{Q}_\tau^{(t)}}_\infty
    &= \norm{\frac{1}{N}\sum_{n=1}^N Q_{\tau,n}^{\pi_n^{(t)}}-\frac{1}{N}\sum_{n=1}^N Q_{\tau,n}^{\overline\pi^{(t)}}}_\infty \notag\\
    &\leq\frac{1}{N}\sum_{n=1}^N\normbig{Q_{\tau,n}^{\pi_n^{(t)}} - Q_{\tau,n}^{\overline\pi^{(t)}}}_\infty\notag\\
    &\leq \frac{M}{N}\sum_{n=1}^N \normbig{\log\xi_n^{(t)}-\log\overline\xi^{(t)}}_\infty \le M\normbig{u^{(t)}}_\infty.\label{eq:diff_Q_tau_overline}
\end{align}
The last step is due to $\big|\log\xi_n^{(t)}(s,a)-\log\overline\xi^{(t)}(s,a)\big|\leq u^{(t)}(s,a)$, while the penultimate step results from writing
\begin{align*}
    \overline{\pi}^{(t)}(\cdot|s)&=\softmax\left({\log\overline{\xi}^{(t)}(s,\cdot)}\right)\,,\\
    \pi_n^{(t)}(\cdot|s)&=\softmax\left({\log\xi_n^{(t)}(s,\cdot)}\right)\,,
\end{align*}
and applying the following lemma.
\begin{lm}[Lipschitz constant of soft Q-function]\label{lm:difference_soft_Q}
    Assume that $r(s,a)\in[0,1], \forall (s,a)\in\S\times\A$ and $\tau\geq 0$. For any $\theta$, $\theta'\in\R^{|\S||\A|}$, we have 
    \begin{equation}\label{eq:difference_soft_Q}
        \norm{Q_\tau^{\pi_{\theta'}}-Q_\tau^{\pi_\theta}}_\infty\leq \underbrace{\frac{1+\gamma+2\tau(1-\gamma)\log|\A|}{(1-\gamma)^2}\cdot\gamma}_{ =: M}\norm{\theta'-\theta}_\infty\,.
    \end{equation}
\end{lm}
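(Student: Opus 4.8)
The plan is to peel off one Bellman backup so as to reduce the claim about soft $Q$-functions to a Lipschitz bound on soft value functions, and then to split that value bound into a ``reward part'' and an ``entropy part'' that produce, respectively, the two summands of $M=\tfrac{\gamma(1+\gamma)}{(1-\gamma)^2}+\tfrac{2\gamma\tau\log|\A|}{1-\gamma}$. First I would use the soft Bellman relation \eqref{eq:Q_V}: because the instantaneous reward $r(s,a)$ is shared by both policies, it cancels in the difference, giving
\[
Q_\tau^{\pi_{\theta'}}(s,a)-Q_\tau^{\pi_\theta}(s,a)=\gamma\,\E_{s'\sim P(\cdot|s,a)}\big[V_\tau^{\pi_{\theta'}}(s')-V_\tau^{\pi_\theta}(s')\big],
\]
so that $\norm{Q_\tau^{\pi_{\theta'}}-Q_\tau^{\pi_\theta}}_\infty\le\gamma\,\norm{V_\tau^{\pi_{\theta'}}-V_\tau^{\pi_\theta}}_\infty$. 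This isolates the overall factor $\gamma$ in $M$ and leaves me to prove $\norm{V_\tau^{\pi_{\theta'}}-V_\tau^{\pi_\theta}}_\infty\le\big(\tfrac{1+\gamma}{(1-\gamma)^2}+\tfrac{2\tau\log|\A|}{1-\gamma}\big)\norm{\theta'-\theta}_\infty$. Using the decomposition $V_\tau^{\pi}=V^{\pi}+\tau\,\mathcal H(\cdot,\pi)$ from \eqref{eq:V_reg_s}, I would bound the two pieces separately.

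For the reward part $\norm{V^{\pi_{\theta'}}-V^{\pi_\theta}}_\infty$ I would reuse the self-bounding Bellman argument behind Lemma~\ref{lm:L-Q} (the $\tau=0$ case): expanding the vanilla Bellman recursion for $Q^{\pi}$ and subtracting yields a fixed-point inequality $\norm{Q^{\pi_{\theta'}}-Q^{\pi_\theta}}_\infty\le\gamma\norm{Q^{\pi_{\theta'}}-Q^{\pi_\theta}}_\infty+\gamma\max_{s}\big|\sum_a(\pi_{\theta'}-\pi_\theta)(a|s)\,Q^{\pi_\theta}(s,a)\big|$, hence a bound of order $\tfrac{\gamma}{1-\gamma}\max_s\norm{\pi_{\theta'}(\cdot|s)-\pi_\theta(\cdot|s)}_1\cdot\norm{Q^{\pi_\theta}}_\infty$ with $\norm{Q^{\pi_\theta}}_\infty\le\tfrac{1}{1-\gamma}$. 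Converting the policy $\ell_1$-difference to $\norm{\theta'-\theta}_\infty$ through the softmax Jacobian (or via \eqref{eq:log_policy}) delivers the first summand $\tfrac{\gamma(1+\gamma)}{(1-\gamma)^2}$ of $M$.

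For the entropy part I would treat $\mathcal H(\cdot,\pi)$ (cf.~\eqref{eq:entropy_reg_s}) as a value function whose per-state ``reward'' is the Shannon entropy $-\sum_a\pi(a|s)\log\pi(a|s)\in[0,\log|\A|]$, so that $\norm{\mathcal H(\cdot,\pi)}_\infty\le\tfrac{\log|\A|}{1-\gamma}$. The analogous self-bounding recursion splits $\mathcal H(\cdot,\pi_{\theta'})-\mathcal H(\cdot,\pi_\theta)$ into (i) a per-state entropy difference $\sum_a(\pi_\theta\log\pi_\theta-\pi_{\theta'}\log\pi_{\theta'})(a|s)$ and (ii) a visitation-weighted term in which $\mathcal H(\cdot,\pi_\theta)$ is bounded by $\log|\A|/(1-\gamma)$ and the policy difference is again controlled by $\norm{\theta'-\theta}_\infty$; the $\log|\A|$ factor in $M$ enters exactly here. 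I expect the genuine obstacle to be precisely term (i): the quantity $\sum_a(\pi_{\theta'}-\pi_\theta)(a|s)\log\pi_\theta(a|s)$ involves $\log\pi_\theta$, which is unbounded, so a naive $\ell_1\cdot\ell_\infty$ estimate fails. The resolution I would use is the softmax identity $\log\pi_\theta(a|s)=\theta(s,a)-\log\sum_{a'}\exp\theta(s,a')$, whose normalization term is independent of $a$ and therefore cancels against $\sum_a(\pi_{\theta'}-\pi_\theta)(a|s)=0$; combining this cancellation with the KL-type identity for $\sum_a(\pi_{\theta'}\log\pi_{\theta'}-\pi_\theta\log\pi_\theta)$, the bound $\norm{\log\pi_{\theta'}-\log\pi_\theta}_\infty\le 2\norm{\theta'-\theta}_\infty$ from Lemma~\ref{lm:log_policy}, and the uniform entropy bound $H\le\log|\A|$, controls term (i) linearly in $\norm{\theta'-\theta}_\infty$. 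Assembling the reward and entropy contributions and multiplying by $\gamma$ yields the stated constant $M$.
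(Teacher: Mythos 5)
Your skeleton agrees with the paper's proof: peel off one soft Bellman backup via \eqref{eq:Q_V} to reduce everything to $\norm{V_\tau^{\pi_{\theta'}}-V_\tau^{\pi_\theta}}_\infty$ (whence the prefactor $\gamma$), then split $V_\tau^\pi=V^\pi+\tau\mathcal{H}(\cdot,\pi)$ via \eqref{eq:V_reg_s} and bound the two pieces separately. The difference is in the execution: the paper interpolates $\xi^{(t)}=\theta+t(\theta'-\theta)$, differentiates in $t$, and applies the mean value theorem, and its constants come from the centering that the softmax Jacobian provides for free --- $\big|r(s,a)-\pi^\top r(s,\cdot)\big|\le 1$ in the reward step (cf.\ \eqref{eq:l_inf_drt}), and $\norm{H^{(t)}(s)\log\pi_{\xi^{(t)}}(\cdot|s)}_1\le -2\sum_a\pi_{\xi^{(t)}}\log\pi_{\xi^{(t)}}\le 2\log|\A|$ in the entropy step (cf.\ \eqref{eq:derivative_t}), where the $\pi$-weighting tames the unboundedness of $\log\pi$.

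Your discrete fixed-point route, carried out honestly, does not reach the stated $M$; there are two concrete shortfalls. First (repairable): for the reward part, $\frac{\gamma}{1-\gamma}\max_s\norm{\pi_{\theta'}(\cdot|s)-\pi_\theta(\cdot|s)}_1\norm{Q^{\pi_\theta}}_\infty\le\frac{2\gamma}{(1-\gamma)^2}\norm{\theta'-\theta}_\infty$, and since the decomposition of $V_\tau$ needs the $V$-difference rather than the $Q$-difference you must also add $\max_s\big|\sum_a(\pi_{\theta'}-\pi_\theta)(a|s)Q^{\pi_\theta}(s,a)\big|\le\frac{2}{1-\gamma}\norm{\theta'-\theta}_\infty$, for a total of $\frac{2}{(1-\gamma)^2}>\frac{1+\gamma}{(1-\gamma)^2}$; so your claim of recovering the summand $\frac{\gamma(1+\gamma)}{(1-\gamma)^2}$ does not follow from the sketch. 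It would follow (with room to spare) if you exploited $\sum_a(\pi_{\theta'}-\pi_\theta)(a|s)=0$ to center $Q^{\pi_\theta}(s,\cdot)$ at the midpoint of its range, but that step is absent. Second (structural): in the entropy part, your own term (ii) contributes, after solving the fixed point, $\frac{1}{1-\gamma}\cdot\gamma\cdot 2\norm{\theta'-\theta}_\infty\cdot\frac{\log|\A|}{1-\gamma}=\frac{2\gamma\log|\A|}{(1-\gamma)^2}\norm{\theta'-\theta}_\infty$, so your bound on the $\mathcal{H}$-difference is of order $\frac{\log|\A|}{(1-\gamma)^2}$, whereas the entropy summand of $M$, namely $\frac{2\gamma\tau\log|\A|}{1-\gamma}$, requires a Lipschitz constant of $\frac{2\log|\A|}{1-\gamma}$ for $\mathcal{H}$. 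No centering removes this extra factor $\frac{1}{1-\gamma}$: it is intrinsic to routing the policy's effect on future visitation through $\norm{\mathcal{H}}_\infty\le\frac{\log|\A|}{1-\gamma}$. (For comparison, the paper's Step~1 bounds only the per-state entropy change along the interpolation path and carries no visitation term at all, which is precisely how it lands on $\frac{2\log|\A|}{1-\gamma}$.) Finally, your handling of term (i) has a hole: after the normalization cancels, $\sum_a(\pi_{\theta'}-\pi_\theta)(a|s)\log\pi_\theta(a|s)=\sum_a(\pi_{\theta'}-\pi_\theta)(a|s)\theta(s,a)$, and an $\ell_1$--$\ell_\infty$ estimate still fails because $\theta(s,\cdot)$ itself is unbounded; the workable control is exactly the paper's $\pi$-weighted Jacobian bound combined with the mean value theorem, and purely discrete manipulations instead pick up a spurious factor $e^{2\norm{\theta'-\theta}_\infty}$.
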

Plugging \eqref{eq:diff_Q_tau_overline} into \eqref{eq:pre_A} gives
\begin{equation}\label{eq:A}
    \normbig{Q_\tau^\star -\tau \log \overline{\xi}^{(t+1)}}_\infty\leq \alpha\normbig{Q_\tau^\star -\tau \log \overline{\xi}^{(t)}}_\infty+(1-\alpha)\normbig{Q_\tau^\star-\overline{Q}_\tau^{(t)}}_\infty+(1-\alpha)M\normbig{u^{(t)}}_\infty\,.
\end{equation}

\paragraph{Step 4: bound $\normbig{\mQ_\tau^{(t+1)}(s,a)-\mQ_\tau^{(t)}(s,a)}_2$.} 

Let $w^{(t)}:\S\times\A\rightarrow\R$ be defined as
\begin{equation}\label{eq:r}
    \forall (s,a)\in\S\times\A:\quad w^{(t)}(s,a)\coloneqq\normbig{\log\vxi^{(t+1)}(s,a)-\log\vxi^{(t)}(s,a)-(1-\alpha)V_\tau^\star(s)\vone_N/\tau}_2\,.
\end{equation}
Again, we treat $w^{(t)}$ as vectors in $\R^{|\S||\A|}$ whenever it is clear from context.
For any $(s,a)\in\S\times\A$ and $n\in[N]$, by Lemma~\ref{lm:difference_soft_Q} it follows that
\begin{align}
 \left|Q_{\tau,n}^{\pi_n^{(t+1)}}(s,a)-Q_{\tau,n}^{\pi_n^{(t)}}(s,a)\right|  
    & \leq  M\max_{s\in\S}\normbig{\log\xi_n^{(t+1)}(s,\cdot)-\log\xi_n^{(t)}(s,\cdot)-(1-\alpha)V_\tau^\star(s)\vone_{|\A|}/\tau}_\infty\notag\\
  &  \leq  M \max_{s\in\S} \max_{a\in\A}w^{(t)}(s,a) \leq M \normbig{w^{(t)}}_\infty\,,\label{eq:Q_to_pi}
\end{align}
and consequently
\begin{equation}\label{eq:bound_by_r_3}
    \normbig{\mQ_\tau^{(t+1)}(s,a)-\mQ_\tau^{(t)}(s,a)}_2\leq M\sqrt{N}\normbig{w^{(t)}}_\infty\,.
\end{equation}
It boils down to control $\norm{w^{(t)}}_\infty$.
To do so, we first note that for each $(s,a)\in\S\times\A$, we have
\begin{align}
    &w^{(t)}(s,a)\notag\\
    &=\normbig{\mW\left(\alpha\log\vxi^{(t)}(s,a)+(1-\alpha)\mT^{(t)}(s,a)/\tau\right)-\log\vxi^{(t)}(s,a)-(1-\alpha)V_\tau^\star(s)\vone_N/\tau}_2\notag\\
    &\overset{(a)}{=}\norm{\alpha(\mW-\mI_N)\left(\log\vxi^{(t)}(s,a)-\log\overline\xi^{(t)}(s,a)\vone_N\right)+(1-\alpha)\left(\mW\mT^{(t)}(s,a)/\tau-\log\vxi^{(t)}(s,a)-V_\tau^\star(s)\vone_N/\tau\right)}_2\notag\\
    &\overset{(b)}{\leq} 2\alpha \normbig{\log\vxi^{(t)}(s,a)-\log\overline\xi^{(t)}(s,a)\vone_N}_2+\frac{1-\alpha}{\tau}\normbig{\mW\mT^{(t)}(s,a)-\tau\log\vxi^{(t)}(s,a)-V_\tau^\star(s)\vone_N}_2\label{eq:bound_w}
\end{align}
where (a) is due to the doubly stochasticity property of $\mW$ and (b) is from the fact $\|\mW-\mI_N\|_2 \leq 2$. We further bound the second term as follows:
\begin{align}
    &\norm{\mW\mT^{(t)}(s,a)-\tau\log\vxi^{(t)}(s,a)-V_\tau^\star(s)\vone_N}_2\notag\\
    &= \norm{\mW\mT^{(t)}(s,a)-\tau\log\vxi^{(t)}(s,a)-\big(Q_\tau^\star(s,a)-\tau\log\pi_\tau^\star(a|s)\big)\vone_N}_2\notag\\
    &\leq \normbig{\mW\mT^{(t)}(s,a)-Q_\tau^\star(s,a)\vone_N}_2+ \tau\normbig{\log\vxi^{(t)}(s,a)-\log\pi_\tau^\star(a|s)\vone_N}_2\notag\\
    &\leq \normbig{\mW\mT^{(t)}(s,a)-\widehat{Q}_\tau(s,a)\vone_N}_2 + \normbig{\widehat{Q}_\tau(s,a)\vone_N - Q_\tau^\star(s,a)\vone_N}_2 \notag\\
    &\qquad+\tau\normbig{\log\vxi^{(t)}(s,a)-\log\overline{\pi}^{(t)}(a|s)\vone_N}_2 + \tau\normbig{\log\overline{\pi}^{(t)}(a|s)\vone_N-\log\pi_\tau^\star(a|s)\vone_N}_2 \notag\\
    &= \sigma\normbig{\mT^{(t)}(s,a)-\widehat{Q}_\tau^{(t)}(s,a)\vone_N}_2+\sqrt{N}\big|\widehat{Q}_\tau^{(t)}(s,a)-Q_\tau^\star(s,a)\big|\notag\\
    &\qquad+\tau\normbig{\log\vxi^{(t)}(s,a)-\log\overline\pi^{(t)}(a|s)\vone_N}_2+\tau\sqrt{N}\big|\log\overline\pi^{(t)}(a|s)-\log\pi_\tau^\star(a|s)\big|\,.\label{eq:bound_intermediate}
\end{align}
Here, the first step results from the following relation established in \cite{nachum2017bridging}:
\begin{equation}\label{eq:relation_soft}
    \forall (s,a)\in\S\times\A:\quad V_\tau^\star(s)=-\tau\log\pi_\tau^\star(a|s)+Q_\tau^\star(s,a)\,,
\end{equation}
which also leads to
\begin{align}
    \normbig{\log\overline\pi^{(t)}-\log\pi_\tau^\star}_\infty\leq \frac{2}{\tau} \normbig{Q_\tau^\star -\tau \log \overline{\xi}^{(t)}}_\infty\,\label{eq:bound_pi_star}
\end{align}
by Lemma~\ref{lm:log_policy}. For the remaining terms in \eqref{eq:bound_intermediate}, we have
\begin{align}
     \big|\widehat{Q}_\tau^{(t)}(s,a)-Q_\tau^\star(s,a)\big|\leq \normbig{\widehat Q_\tau^{(t)}-\overline Q_\tau^{(t)}}_\infty+\normbig{\overline Q_\tau^{(t)}-Q_\tau^\star}_\infty\,,\label{eq:bound_intermediate_1}
\end{align}
and
\begin{align}
    \normbig{\log\vxi^{(t)}(s,a)-\log\overline\pi^{(t)}(a|s)\vone_N}_2
    =&\sqrt{\sum_{n=1}^N\left(\log\xi_n^{(t)}(s,a)-\log\overline\pi^{(t)}(a|s)\right)^2}\notag\\
    \leq&\sqrt{\sum_{n=1}^N 2\normbig{\log\xi_n^{(t)}-\log\overline\xi^{(t)}}_\infty^2}\notag\\
    \leq&\sqrt{\sum_{n=1}^N 2\normbig{u^{(t)}}_\infty^2}
    =\sqrt{2N}\normbig{u^{(t)}}_\infty\,,\label{eq:bound_intermediate_2}
\end{align}
where the first inequality again results from Lemma~\ref{lm:log_policy}. Plugging \eqref{eq:bound_pi_star}, \eqref{eq:bound_intermediate_1}, \eqref{eq:bound_intermediate_2} into \eqref{eq:bound_intermediate} and using the definition of $u^{(t)},v^{(t)}$, we arrive at
\begin{align*}
    w^{(t)}(s,a)&\leq \left(2\alpha+(1-\alpha)\cdot \sqrt{2N}\right) \normbig{u^{(t)}}_\infty+\frac{1-\alpha}{\tau}\normbig{v^{(t)}}_\infty+\frac{1-\alpha}{\tau}\cdot\sqrt{N}\left( \normbig{\widehat Q_\tau^{(t)}-\overline Q_\tau^{(t)}}_\infty+\normbig{\overline Q_\tau^{(t)}-Q_\tau^\star}_\infty\right)\\
    &\qquad+\frac{1-\alpha}{\tau}\cdot 2\sqrt{N} \normbig{Q_\tau^\star -\tau \log \overline{\xi}^{(t)}}_\infty\,.
\end{align*}
Using previous display, we can write \eqref{eq:bound_by_r_3} as
\begin{align}
        &\normbig{\mQ_\tau^{(t+1)}(s,a)-\mQ_\tau^{(t)}(s,a)}_2\notag\\
        &\leq M\sqrt{N}\bigg\{\left(2\alpha+(1-\alpha)\cdot \sqrt{2N}\right) \normbig{u^{(t)}}_\infty+\frac{1-\alpha}{\tau}\sigma\normbig{v^{(t)}}_\infty\notag\\
        &\qquad+\frac{1-\alpha}{\tau}\cdot\sqrt{N}\left(M\normbig{u^{(t)}}_\infty+\normbig{\overline Q_\tau^{(t)}-Q_\tau^\star}_\infty\right)+\frac{1-\alpha}{\tau}\cdot 2\sqrt{N} \normbig{Q_\tau^\star -\tau \log \overline{\xi}^{(t)}}_\infty\bigg\}\,.
        \label{eq:diff_Q_delta_t}
\end{align}
Combining \eqref{eq:diff_T_Q} with the above expression \eqref{eq:diff_Q_delta_t}, we get
\begin{align}\label{eq:C}
    \normbig{v^{(t+1)}}_\infty  & \leq\sigma\left(1+\frac{\eta M\sqrt{N}}{1-\gamma}\sigma\right)\normbig{v^{(t)}}_\infty+\sigma M\sqrt{N}\Bigg\{\left(2\alpha+(1-\alpha)\cdot \sqrt{2N}+\frac{1-\alpha}{\tau}\cdot\sqrt{N}M\right) \normbig{u^{(t)}}_\infty \nonumber \\
        &\qquad +\frac{1-\alpha}{\tau}\cdot\sqrt{N}\normbig{\overline Q_\tau^{(t)}-Q_\tau^\star}_\infty+\frac{1-\alpha}{\tau}\cdot 2\sqrt{N} \normbig{Q_\tau^\star -\tau \log \overline{\xi}^{(t)}}_\infty\Bigg\}\,.
\end{align}

\paragraph{Step 5: bound $\normbig{\overline{Q}_\tau^{(t+1)}-Q_\tau^\star}_\infty$.}
For any state-action pair $(s,a)\in\S\times\A$, we observe that
\begin{align}
    & Q_\tau^\star(s,a)-\overline{Q}_\tau^{(t+1)}(s,a)\notag\\
    &=r(s,a)+\gamma \exlim{s'\sim P(\cdot|s,a)}{V_\tau^\star(s')}
    -\left(r(s,a)+\gamma \exlim{s'\sim P(\cdot|s,a)}{V_\tau^{\overline\pi^{(t+1)}}(s')}\right)\notag\\
    &=\gamma \exlim{s'\sim P(\cdot|s,a)}{\tau\log\left(\norm{\exp\left(\frac{Q_\tau^\star (s',\cdot)}{\tau}\right)}_1\right)}-\gamma\exlim{s'\sim P(\cdot|s,a),\atop a'\sim\overline\pi^{(t+1)}(\cdot|s')}{\overline Q_\tau^{(t+1)}(s',a')-\tau\log\overline\pi^{(t+1)}(a'|s')}\,,\label{eq:diff_Q_star_Q_overline_1}
\end{align}
where the first step invokes the definition of $Q_\tau$ (cf. \eqref{eq:Q_V}), and the second step is due to the following expression of $V_\tau^\star$ established in \cite{nachum2017bridging}:
\begin{equation}\label{eq:V_tau_star}
    V_\tau^\star(s)=\tau\log\left(\norm{\exp\left(\frac{Q_\tau^\star (s,\cdot)}{\tau}\right)}_1\right)\,.
\end{equation}
To continue, note that by \eqref{eq:prop_auxiliary_avg} and \eqref{eq:update_auxiliary_avg} we have
\begin{align}
    \log\overline\pi^{(t+1)}(a|s) & =\log\overline\xi^{(t+1)}(s,a)-\log\left(\normbig{\overline\xi^{(t+1)}(s,\cdot)}_1\right)\notag\\
    & =\alpha\log\overline\xi^{(t)}(s,a)+(1-\alpha)\frac{\widehat Q_\tau^{(t)}(s,a)}{\tau}-\log\left(\normbig{\overline\xi^{(t+1)}(s,\cdot)}_1\right)\,.\label{eq:pi_avg_intermediate}
\end{align}
Plugging \eqref{eq:pi_avg_intermediate} into \eqref{eq:diff_Q_star_Q_overline_1} and \eqref{eq:diff_Q_delta_t} establishes the bounds on 
\begin{align}
    Q_\tau^\star(s,a)-\overline{Q}_\tau^{(t+1)}(s,a)&=\gamma \exlim{s'\sim P(\cdot|s,a)}{\tau\log\left(\norm{\exp\left(\frac{Q_\tau^\star (s',\cdot)}{\tau}\right)}_1\right)-\tau\log\left(\norm{\overline\xi^{(t+1)}(s',\cdot)}_1\right)}\notag\\
    &\qquad-\gamma\mathop{\mathbb{E}}\limits_{s'\sim P(\cdot|s,a),\atop a'\sim\overline\pi^{(t+1)}(\cdot|s')}\Bigg[\overline Q_\tau^{(t+1)}(s',a')-\tau\underbrace{\left(\alpha\log\overline\xi^{(t)}(s',a')+(1-\alpha)\frac{\widehat Q_\tau^{(t)}(s',a')}{\tau}\right)}_{=\log\overline\xi^{(t+1)}(s',a')}\Bigg]\label{eq:diff_Q_star_Q_overline_2}
\end{align}
for any $(s,a)\in\S\times\A$. 
In view of property \eqref{eq:log_exp_sum}, the first term on the right-hand side of \eqref{eq:diff_Q_star_Q_overline_2} can be bounded by
\begin{equation*}
    \tau\log\left(\norm{\exp\left(\frac{Q_\tau^\star (s',\cdot)}{\tau}\right)}_1\right)-\tau\log\left(\normbig{\overline\xi^{(t+1)}(s',\cdot)}_1\right)\leq \normbig{Q_\tau^\star-\tau\log\overline\xi^{(t+1)}}_\infty\,.
\end{equation*}
Plugging the above expression into \eqref{eq:diff_Q_star_Q_overline_2}, we have
\begin{equation*}
    0\leq Q_\tau^\star(s,a)-\overline{Q}_\tau^{(t+1)}(s,a)
    \leq \gamma \normbig{Q_\tau^\star-\tau\log\overline\xi^{(t+1)}}_\infty-\gamma\min_{s,a}\left(\overline Q_\tau^{(t+1)}(s,a)-\tau\log\overline\xi^{(t+1)}(s,a)\right)\,,
\end{equation*}
which gives
\begin{equation}\label{eq:E}
    \normbig{Q_\tau^\star-\overline{Q}_\tau^{(t+1)}}_\infty
    \leq \gamma \normbig{Q_\tau^\star-\tau\log\overline\xi^{(t+1)}}_\infty+\gamma\max\Big\{0, -\min_{s,a}\left(\overline Q_\tau^{(t+1)}(s,a)-\tau\log\overline\xi^{(t+1)}(s,a)\right)\Big\}\,.
\end{equation}
Plugging the above inequality into \eqref{eq:A} and \eqref{eq:C} establishes the bounds on $\Omega_3^{(t+1)}$ and $\Omega_2^{(t+1)}$ in \eqref{eq:matrix}, respectively.
\paragraph{Step 6: bound $-\min_{s,a}\big(\overline Q_\tau^{(t+1)}(s,a)-\tau\log\overline\xi^{(t+1)}(s,a)\big)$.}
We need the following lemma which is adapted from Lemma~1 in \cite{cen2022fast}:
\begin{lm}[Performance improvement of FedNPG with entropy regularization]\label{lm:performance_improvement}
    Suppose $0<\eta\leq (1-\gamma)/\tau$. For any state-action pair $(s_0,a_0)\in\S\times\A$, one has
    \begin{align}
        \overline V_\tau^{(t+1)}(s_0)-\overline V_\tau^{(t)}(s_0)&\geq\frac{1}{\eta}\underset{s\sim d_{s_0}^{\overline\pi^{(t+1)}}}{\E}\left[\alpha \KL{\overline{\pi}^{(t+1)}(\cdot|s_0)}{\overline{\pi}^{(t)}(\cdot|s_0)}+\KL{\overline{\pi}^{(t)}(\cdot|s_0)}{\overline{\pi}^{(t+1)}(\cdot|s_0)}\right]\notag\\
        &\qquad-\frac{2}{1-\gamma}\normbig{\widehat Q_\tau^{(t)}-\overline Q_\tau^{(t)}}_\infty\,,\label{eq:V_improvement}\\
        \overline Q_\tau^{(t+1)}(s_0,a_0)-\overline Q_\tau^{(t)}(s_0,a_0) & \geq -\frac{2\gamma}{1-\gamma}\normbig{\widehat Q_\tau^{(t)}-\overline Q_\tau^{(t)}}_\infty\,.\label{eq:Q_improvement}
    \end{align}
\end{lm}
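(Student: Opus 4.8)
The plan is to adapt the proof of the centralized performance improvement lemma (Lemma~1 in \cite{cen2022fast}) to the federated setting. The key observation is that, combining \eqref{eq:update_auxiliary_avg} with the normalization identity \eqref{eq:prop_auxiliary_avg} and the shift-invariance of the softmax, the averaged policy obeys
\begin{equation*}
\overline{\pi}^{(t+1)}(a|s)\propto \big(\overline{\pi}^{(t)}(a|s)\big)^{\alpha}\exp\Big(\tfrac{\eta}{1-\gamma}\widehat{Q}_\tau^{(t)}(s,a)\Big),
\end{equation*}
which is precisely the entropy-regularized NPG update \eqref{eq:update_npg} for $\overline{\pi}^{(t)}$, except that the averaged local soft Q-function $\widehat{Q}_\tau^{(t)}$ (cf.~\eqref{eq:hat_Q_tau_t}) replaces the true soft Q-function $\overline{Q}_\tau^{(t)}=Q_\tau^{\overline{\pi}^{(t)}}$ of the averaged policy. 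Consequently the exact argument of \cite{cen2022fast} can be replayed, provided the mismatch $\overline{Q}_\tau^{(t)}-\widehat{Q}_\tau^{(t)}$ is carried along as a controlled perturbation.

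To establish \eqref{eq:V_improvement}, I would first invoke the soft performance difference lemma for the pair $\overline{\pi}^{(t+1)}$, $\overline{\pi}^{(t)}$, expressing $\overline{V}_\tau^{(t+1)}(s_0)-\overline{V}_\tau^{(t)}(s_0)$ as $\tfrac{1}{1-\gamma}\,\E_{s\sim d_{s_0}^{\overline{\pi}^{(t+1)}}}[\,\Delta(s)\,]$, where $\Delta(s)=\sum_a \overline{\pi}^{(t+1)}(a|s)\big(\overline{Q}_\tau^{(t)}(s,a)-\tau\log\overline{\pi}^{(t+1)}(a|s)\big)-\overline{V}_\tau^{(t)}(s)$ and $\overline{V}_\tau^{(t)}(s)$ is itself expanded through $\overline{\pi}^{(t)}$ via \eqref{eq:V_Q}. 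The next step is to substitute $\overline{Q}_\tau^{(t)}=\widehat{Q}_\tau^{(t)}+(\overline{Q}_\tau^{(t)}-\widehat{Q}_\tau^{(t)})$ inside $\Delta(s)$: the $\widehat{Q}_\tau^{(t)}$-part is rewritten through the log-linear form of the update above, namely $\tfrac{\eta}{1-\gamma}\widehat{Q}_\tau^{(t)}(s,a)=\log\overline{\pi}^{(t+1)}(a|s)-\alpha\log\overline{\pi}^{(t)}(a|s)+\log Z^{(t)}(s)$, after which the state-dependent normalization $\log Z^{(t)}(s)$ cancels (both policies sum to one) and, using $\tfrac{1-\gamma}{\eta}(1-\alpha)=\tau$, the surviving log-policy differences and entropy terms collapse exactly into $\tfrac{1-\gamma}{\eta}\big(\alpha\KL{\overline{\pi}^{(t+1)}(\cdot|s)}{\overline{\pi}^{(t)}(\cdot|s)}+\KL{\overline{\pi}^{(t)}(\cdot|s)}{\overline{\pi}^{(t+1)}(\cdot|s)}\big)$, reproducing the exact-case identity. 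The residual contribution is $\sum_a\big(\overline{\pi}^{(t+1)}(a|s)-\overline{\pi}^{(t)}(a|s)\big)\big(\overline{Q}_\tau^{(t)}(s,a)-\widehat{Q}_\tau^{(t)}(s,a)\big)$, whose magnitude is at most $2\normbig{\widehat{Q}_\tau^{(t)}-\overline{Q}_\tau^{(t)}}_\infty$ since the two policies differ by at most $2$ in $\ell_1$; after the $\tfrac{1}{1-\gamma}$ visitation normalization this yields exactly the additive error $-\tfrac{2}{1-\gamma}\normbig{\widehat{Q}_\tau^{(t)}-\overline{Q}_\tau^{(t)}}_\infty$.

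The bound \eqref{eq:Q_improvement} then follows immediately. Since $\overline{Q}_\tau^{(t+1)}$ and $\overline{Q}_\tau^{(t)}$ are the soft Q-functions of $\overline{\pi}^{(t+1)}$ and $\overline{\pi}^{(t)}$ under the same global reward, the soft Bellman relation \eqref{eq:Q_V} gives $\overline{Q}_\tau^{(t+1)}(s_0,a_0)-\overline{Q}_\tau^{(t)}(s_0,a_0)=\gamma\,\E_{s'\sim P(\cdot|s_0,a_0)}\big[\overline{V}_\tau^{(t+1)}(s')-\overline{V}_\tau^{(t)}(s')\big]$; lower-bounding the inner value gap via \eqref{eq:V_improvement} and discarding the non-negative KL terms produces the stated $-\tfrac{2\gamma}{1-\gamma}$ factor.

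The main obstacle will be the algebraic reassembly in the second step: verifying that, once the approximate update is inserted, the entropy terms, the normalization $\log Z^{(t)}(s)$, and the expansion of $\overline{V}_\tau^{(t)}$ through $\overline{\pi}^{(t)}$ combine \emph{exactly} into the two asymmetric KL divergences with the correct coefficients $\alpha$ and $1$, with no cross terms dropped. A secondary care point is getting the orientation of each $\KL{\cdot}{\cdot}$ right (which of $\overline{\pi}^{(t)}$, $\overline{\pi}^{(t+1)}$ sits in each slot) and confirming that the perturbation is absorbed with constant exactly $2/(1-\gamma)$ rather than a looser bound.
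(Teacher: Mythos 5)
Your proposal is correct, and I verified the algebraic reassembly you flagged as the main risk: substituting $\tfrac{\eta}{1-\gamma}\widehat{Q}_\tau^{(t)}(s,a)=\log\overline{\pi}^{(t+1)}(a|s)-\alpha\log\overline{\pi}^{(t)}(a|s)+\log Z^{(t)}(s)$ into your $\Delta(s)$ makes the two $\log Z^{(t)}(s)$ contributions cancel (each policy sums to one), and the surviving terms are exactly $\tfrac{1-\gamma}{\eta}\,\alpha\,\KL{\overline{\pi}^{(t+1)}(\cdot|s)}{\overline{\pi}^{(t)}(\cdot|s)}+\tfrac{1-\gamma}{\eta}\KL{\overline{\pi}^{(t)}(\cdot|s)}{\overline{\pi}^{(t+1)}(\cdot|s)}$, with the residual bounded by $2\normbig{\widehat Q_\tau^{(t)}-\overline Q_\tau^{(t)}}_\infty$; after the $\tfrac{1}{1-\gamma}$ visitation normalization this is \eqref{eq:V_improvement}, and \eqref{eq:Q_improvement} follows from \eqref{eq:Q_V} exactly as you say (note that discarding the KL terms requires $\alpha\geq 0$, i.e., the hypothesis $\eta\leq(1-\gamma)/\tau$). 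The only structural difference from the paper's proof (Appendix~\ref{sec:pf_lm_perf_imprv}) is organizational: you invoke a regularized performance-difference identity up front and then do state-wise algebra, whereas the paper, following \cite{cen2022fast}, never states such an identity --- it rearranges the update rule \eqref{eq:update_pi_avg}, expands $\overline V_\tau^{(t)}$ through \eqref{eq:V_Q}, and recursively unrolls the resulting relation along the trajectory of $\overline\pi^{(t+1)}$, which amounts to re-deriving the soft performance-difference lemma inline. Your route is more modular, but note that the paper only provides the \emph{unregularized} performance-difference lemma (Lemma~\ref{lm:performance_difference}); the entropy-regularized version you rely on is not available off the shelf there, so a complete write-up would either prove it via the standard telescoping argument (viewing $V_\tau^{\pi'}$ as the value function of $\pi'$ under adjusted rewards $r-\tau\log\pi'$, as the paper itself remarks at the end of its proof) or fall back on the paper's recursion --- at which point the two proofs become the same computation.
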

\begin{proof}
    See Appendix~\ref{sec:pf_lm_perf_imprv}.
\end{proof}

Using \eqref{eq:Q_improvement}, we have
\begin{align}
    &\overline Q_\tau^{(t+1)}(s,a)-\tau\left(\alpha\log\overline\xi^{(t)}(s,a)+(1-\alpha)\frac{\widehat Q_\tau^{(t)}(s,a)}{\tau}\right)\notag\\
    &\geq \overline Q_\tau^{(t)}(s,a)-\tau\left(\alpha\log\overline\xi^{(t)}(s,a)+(1-\alpha)\frac{\widehat Q_\tau^{(t)}(s,a)}{\tau}\right)-\frac{2\gamma}{1-\gamma}\normbig{\widehat Q_\tau^{(t)}-\overline Q_\tau^{(t)}}_\infty\notag\\
    &\geq \alpha\left(\overline Q_\tau^{(t)}(s,a)-\tau\log\overline\xi^{(t)}(s,a)\right)-\frac{2\gamma+\eta\tau}{1-\gamma}\normbig{\widehat Q_\tau^{(t)}-\overline Q_\tau^{(t)}}_\infty\,,\label{eq:geq}
\end{align}
which gives
\begin{align}
    &-\min_{s,a}\left(\overline Q_\tau^{(t+1)}(s,a)-\tau\log\overline\xi^{(t+1)}(s,a)\right)\notag\\
    &\leq -\alpha\min_{s,a}\left(\overline Q_\tau^{(t)}(s,a)-\tau\log\overline\xi^{(t)}(s,a)\right)+\frac{2\gamma+\eta\tau}{1-\gamma}M\normbig{u^{(t)}}_\infty\notag\\
    &\leq \alpha\max\Big\{0, \min_{s,a}\left(\overline Q_\tau^{(t)}(s,a)-\tau\log\overline\xi^{(t)}(s,a)\right)\Big\}+\frac{2\gamma+\eta\tau}{1-\gamma}M\normbig{u^{(t)}}_\infty\,.\label{eq:D}
\end{align}
This establishes the bounds on $\Omega_4^{(t+1)}$ in \eqref{eq:matrix}.



\subsection{Proof of Lemma~\ref{lm:bound_rho}}
\label{sec:pf_lm_bound_rho}


Let $f(\lambda)$ denote the characteristic function. In view of some direct calculations, we obtain 
\begin{equation}\label{eq:f_lambda}
\begin{split}
    f(\lambda)&=(\lambda-\alpha)\bigg\{ \underbrace{(\lambda-\sigma\alpha)(\lambda-\sigma(1+\sigma b\eta))(\lambda-(1-\alpha)\gamma-\alpha)}_{=: f_0(\lambda)}\\
    &\qquad-\frac{\eta\sigma^2}{1-\gamma}\underbrace{\left[S(\lambda-(1-\alpha)\gamma-\alpha)+\gamma cdM\eta+(1-\alpha)(2+\gamma)Mc\eta\right]}_{=: f_1(\lambda)}\bigg\}\\
    &\qquad-\frac{\tau\eta^3\gamma}{(1-\gamma)^2}\cdot 2cdM\sigma^2\,,
\end{split}
\end{equation}
where, for the notation simplicity, we let
\begin{subequations}
\begin{align}
    b&\coloneqq\frac{M\sqrt{N}}{1-\gamma}\,,\label{eq:b}\\
    c&\coloneqq\frac{MN}{1-\gamma}=\sqrt{N}b\,,\label{eq:c}\\
    d&\coloneqq \frac{2\gamma+\eta\tau}{1-\gamma}\,.\label{eq:d}
\end{align}
\end{subequations}
Note that among all these new notation we introduce, $S$, $d$ are dependent of $\eta$. To decouple the dependence, we give their upper bounds as follows
        \begin{align}
            d_0&\coloneqq\frac{1+\gamma}{1-\gamma}\geq d\,,\label{eq:d0}\\
            S_0&\coloneqq M\sqrt{N}\left(2+\sqrt{2N}+\frac{M\sqrt{N}}{\tau}\right)\geq S\,,\label{eq:S0}
        \end{align}
        where \eqref{eq:d0} follows from $\eta\leq (1-\gamma)/\tau$, and \eqref{eq:S0} uses the fact that $\alpha\leq 1$ and $1-\alpha\leq 1$.

    Let 
    \begin{equation}\label{eq:lambda_star}
        \lambda^\star\coloneqq \max\Big\{\frac{3+\sigma}{4}, \frac{1+(1-\alpha)\gamma+\alpha}{2}\Big\}\,.
    \end{equation}

    Since $\mA(\rho)$ is a nonnegative matrix, by Perron-Frobenius Theorem (see \cite{horn2012matrix}, Theorem 8.3.1), $\rho(\eta)$ is an eigenvalue of $\mA(\rho)$. So to verify \eqref{eq:rho_bound}, it suffices to show that $f(\lambda)>0$ for any $\lambda\in[\lambda^\star,\infty)$. To do so, in the following we first show that $f(\lambda^\star)>0$, and then we prove that $f$ is non-decreasing on $[\lambda^\star,\infty)$.

    \begin{itemize}
        \item \textit{Showing $f(\lambda^\star)>0$.}
        We first lower bound $f_0(\lambda^\star)$. Since
        $\lambda^\star\geq \frac{3+\sigma}{4}$,
        we have
        \begin{equation}\label{eq:f0_term2}
            \lambda^\star-\sigma(1+\sigma b\eta)\geq \frac{1-\sigma}{4}\,,
        \end{equation}
        
        and from $\lambda^\star\geq \frac{1+(1-\alpha)\gamma+\alpha}{2}$ we deduce
        \begin{equation}\label{eq:f0_term3}
            \lambda^\star-(1-\alpha)\gamma-\alpha\geq \frac{(1-\gamma)(1-\alpha)}{2}
        \end{equation}
        and
        \begin{equation}\label{eq:1st_term}
            \lambda^\star>\frac{1+\alpha}{2}\,,
        \end{equation}
        which gives
        \begin{equation}\label{eq:f0_term1}
            \lambda^\star-\sigma\alpha\geq \frac{1+\alpha}{2}-\sigma\alpha\,.
        \end{equation}

        Combining \eqref{eq:f0_term1}, \eqref{eq:f0_term2}, \eqref{eq:f0_term3}, we have that
        \begin{equation}\label{eq:f0_bound}
            f_0(\lambda^\star)\geq \frac{1-\sigma}{8}\left(\frac{1+\alpha}{2}-\sigma\alpha \right)\eta\tau\,.
        \end{equation}
        
        To continue, we upper bound $f_1(\lambda^\star)$ as follows. 
    \begin{align}
            f_1(\lambda^\star) & \leq S\tau\eta+\gamma cdM\eta+\frac{2+\gamma}{1-\gamma}cM\tau\eta^2 \nonumber \\
       &  =\eta\left(\tau\left(S+\frac{2+\gamma}{1-\gamma}Mc\eta\right)+\gamma cdM\right)\,.\label{eq:f1_bound}
    \end{align}

        Plugging \eqref{eq:f0_bound},\eqref{eq:f1_bound} into \eqref{eq:f_lambda} and using \eqref{eq:1st_term}, we have
        \begin{align*}
            f(\lambda^\star)
            & > \frac{1-\alpha}{2}\left(f_0(\lambda^\star)-\frac{\eta\sigma^2}{1-\gamma}f_1(\lambda^\star)\right)
            -\frac{\tau\eta^3\gamma}{(1-\gamma)^2}\cdot 2cdM\sigma^2\\
           & \geq\frac{\tau\eta^2}{2(1-\gamma)}\left[\frac{1-\sigma}{8}\tau\left(1-\sigma+(1-\alpha)(\sigma-\frac{1}{2})\right)-\frac{\eta\sigma^2}{1-\gamma}\left(\tau\left(S+\frac{2+\gamma}{1-\gamma}Mc\eta\right)+5\gamma cdM\right)\right]\\
           & =\frac{\tau\eta^2}{2(1-\gamma)}\left[\frac{(1-\sigma)^2}{8}\tau-\frac{\eta}{1-\gamma}\left(S\tau\sigma^2+\frac{2+\gamma}{1-\gamma}Mc\sigma^2 \tau\eta+\tau^2\left(\frac{1}{2}-\sigma^2\right)\cdot\frac{1-\sigma}{8}+5\gamma cdM\sigma^2\right)\right]\\
            &\geq\frac{\tau\eta^2}{2(1-\gamma)}\left[\frac{(1-\sigma)^2}{8}\tau-\frac{\eta}{1-\gamma}\left(S_0\tau\sigma^2+\frac{(1-\sigma)^2}{16}\tau^2+\left(2+\gamma+5\gamma d_0\right)cM\sigma^2\right)\right]\geq 0\,,
        \end{align*}
        where the penultimate inequality uses 
        $\frac{1}{2}-\sigma\leq\frac{1-\sigma}{2}$,
        and the last inequality follows from the definition of $\zeta$ (cf. \eqref{eq: zeta}).
        
        \item \textit{Proving $f$ is non-decreasing on $[\lambda^\star,\infty)$.}
    Note that
    \begin{equation*}
        \eta\leq \zeta\leq \frac{(1-\gamma)(1-\sigma)^2}{8S_0\sigma^2}\,,
    \end{equation*}
    thus we have
    \begin{align*}
        \forall \lambda\geq\lambda^\star:\quad f_0'(\lambda)-\frac{\eta\sigma^2}{1-\gamma}f_1'(\lambda)\geq (\lambda-\sigma\alpha)(\lambda-\sigma(1+\sigma b\eta))-\frac{\eta}{1-\gamma}S\sigma^2\geq 0\,,
    \end{align*}
    which indicates that $f_0-f_1$ is non-decreasing on $[\lambda^\star,\infty)$. Therefore, $f$ is non-decreasing on $[\lambda^\star,\infty)$.
    \end{itemize}

\subsection{Proof of Lemma~\ref{lm:linear_sys_ent_approx}}\label{sec_app:lm_linear_sys_ent_approx}

Note that bounding $u^{(t+1)}(s,a)$ is identical to the proof in Appendix~\ref{sec:pf_lm_lin_sys_ent} and shall be omitted.
The rest of the proof also follows closely that of Lemma~\ref{lm:linear_sys_ent}, and we only highlight the differences due to approximation error for simplicity. 

\paragraph{Step 2: bound $v^{(t+1)}(s,a)=\normbig{\mT^{(t+1)}(s,a)-\widehat{q}_\tau^{(t+1)}(s,a)\vone_N}_2$.} Let $\vq_\tau^{(t)}  \coloneqq \Big( q_{\tau,1}^{\pi_1^{(t)}},\cdots, q_{\tau,N}^{\pi_N^{(t)}}\Big)^\top.$ Similar to \eqref{eq:diff_T_Q} we have
\begin{align}
    &\normbig{\mT^{(t+1)}(s,a)-\widehat{q}_\tau^{(t+1)}(s,a)\vone_N}_2\notag\\
    &\leq\sigma\normbig{\mT^{(t)}(s,a)-\widehat{q}_\tau^{(t)}(s,a)\vone_N}_2+\sigma\normbig{\vq_\tau^{(t+1)}(s,a)-\vq_\tau^{(t)}(s,a)}_2\notag\\
    &\leq \sigma\normbig{\mT^{(t)}(s,a)-\widehat{q}_\tau^{(t)}(s,a)\vone_N}_2+\sigma\normbig{\mQ_\tau^{(t+1)}(s,a)-\mQ_\tau^{(t)}(s,a)}_2+2\sigma\norm{\ve}_2. \label{eq:diff_T_Q_inexact}
\end{align}

\paragraph{Step 3: bound $\normbig{Q_\tau^\star -\tau \log \overline{\xi}^{(t+1)}}_\infty$.}
In the context of inexact updates, \eqref{eq:pre_A} writes
\begin{equation*}
    \normbig{Q_\tau^\star -\tau \log \overline{\xi}^{(t+1)}}_\infty\leq \alpha\normbig{Q_\tau^\star -\tau \log \overline{\xi}^{(t)}}_\infty+(1-\alpha)\normbig{Q_\tau^\star-\overline{Q}_\tau^{(t)}}_\infty+(1-\alpha)\normbig{\overline{Q}_\tau^{(t)}-\widehat{q}_\tau^{(t)}}_\infty\,.
\end{equation*}
For the last term, following a similar argument in \eqref{eq:diff_Q_tau_overline} leads to
\begin{align*}
    \normbig{\overline{Q}_\tau^{(t)} -\widehat{q}_\tau^{(t)}}_\infty
   &  = \norm{\frac{1}{N}\sum_{n=1}^N Q_{\tau,n}^{\pi_n^{(t)}}-\frac{1}{N}\sum_{n=1}^N Q_{\tau,n}^{\overline\pi^{(t)}}}_\infty + \norm{\frac{1}{N}\sum_{n=1}^N \left(Q_{\tau,n}^{\pi_n^{(t)}}-q_{\tau,n}^{\pi_n^{(t)}}\right)}_\infty\notag\\
  &  \leq M\cdot\frac{1}{N}\sum_{n=1}^N\normbig{\log\xi_n^{(t)}-\log\overline\xi^{(t)}}_\infty+\frac{1}{N}\sum_{n=1}^N e_n\notag\\
  &  \leq M\normbig{u^{(t)}}_\infty+\norm{\ve}_\infty\,.
\end{align*}
Combining the above two inequalities, we obtain
\begin{equation}\label{eq:A_inexact}
    \normbig{Q_\tau^\star -\tau \log \overline{\xi}^{(t+1)}}_\infty\leq \alpha\normbig{Q_\tau^\star -\tau \log \overline{\xi}^{(t)}}_\infty+(1-\alpha)\normbig{Q_\tau^\star-\overline{Q}_\tau^{(t)}}_\infty+(1-\alpha)\left(M\normbig{u^{(t)}}_\infty+\normbig{\ve}_\infty\right)\,.
\end{equation}

\paragraph{Step 4: bound $\norm{\mQ_\tau^{(t+1)}(s,a)-\mQ_\tau^{(t)}(s,a)}_2$.} 
We remark that the bound established in \eqref{eq:bound_by_r_3} still holds in the inexact setting, with the same definition for $w^{(t)}$:
\begin{equation}\label{eq:bound_by_r_3_inexact}
    \norm{\mQ_\tau^{(t+1)}(s,a)-\mQ_\tau^{(t)}(s,a)}_2\leq M\sqrt{N}\norm{w^{(t)}}_\infty\,.
\end{equation}
To deal with the approximation error, we rewrite \eqref{eq:bound_intermediate} as
\begin{align}
    &\norm{\mW\mT^{(t)}(s,a)-\tau\log\vxi^{(t)}(s,a)-V_\tau^\star(s)\vone_N}_2\notag\\
    &= \norm{\mW\mT^{(t)}(s,a)-\tau\log\vxi^{(t)}(s,a)-\big(Q_\tau^\star(s,a)-\tau\log\pi_\tau^\star(a|s)\big)\vone_N}_2\notag\\
    &\leq \normbig{\mW\mT^{(t)}(s,a)-Q_\tau^\star(s,a)\vone_N}_2+ \tau\normbig{\log\vxi^{(t)}(s,a)-\log\pi_\tau^\star(a|s)\vone_N}_2\notag\\
    &\leq \normbig{\mW\mT^{(t)}(s,a)-\widehat{q}_\tau(s,a)\vone_N}_2 + \normbig{\widehat{q}_\tau(s,a)\vone_N - Q_\tau^\star(s,a)\vone_N}_2 \notag\\
    &\qquad+\tau\normbig{\log\vxi^{(t)}(s,a)-\log\overline{\pi}^{(t)}(a|s)\vone_N}_2 + \tau\normbig{\log\overline{\pi}^{(t)}(a|s)\vone_N-\log\pi_\tau^\star(a|s)\vone_N}_2 \notag\\
    &\leq \sigma\norm{\mT^{(t)}(s,a)-\widehat{q}_\tau^{(t)}(s,a)\vone_N}_2+\sqrt{N}\big|\widehat{q}_\tau^{(t)}(s,a)-Q_\tau^\star(s,a)\big|\notag\\
    &\qquad+\tau\norm{\log\vxi^{(t)}(s,a)-\log\overline\pi^{(t)}(a|s)\vone}_2+\tau\sqrt{N}\big|\log\overline\pi^{(t)}(a|s)-\log\pi_\tau^\star(a|s)\big|\,,\label{eq:bound_intermediate_inexact}
\end{align}
where the second term can be upper-bounded by
\begin{align}
     \big|\widehat{q}_\tau^{(t)}(s,a)-Q_\tau^\star(s,a)\big|&\leq \normbig{\widehat Q_\tau^{(t)}-\overline Q_\tau^{(t)}}_\infty+\normbig{\overline Q_\tau^{(t)}-Q_\tau^\star}_\infty+\norm{\widehat{q}_\tau^{(t)}(s,a)-\widehat{Q}_\tau^{(t)}(s,a)}_\infty\,\notag\\
     &\leq \normbig{\widehat Q_\tau^{(t)}-\overline Q_\tau^{(t)}}_\infty+\normbig{\overline Q_\tau^{(t)}-Q_\tau^\star}_\infty+\norm{\ve}_\infty.\label{eq:bound_intermediate_1_inexact}
\end{align}
Combining \eqref{eq:bound_intermediate_1_inexact}, \eqref{eq:bound_intermediate_inexact} and the established bounds in \eqref{eq:bound_w}, \eqref{eq:bound_pi_star}, \eqref{eq:bound_intermediate_2} leads to
\begin{align*}
    w^{(t)}(s,a)\leq &\left(2\alpha+(1-\alpha)\cdot \sqrt{2N}\right) \norm{u^{(t)}}_\infty+\frac{1-\alpha}{\tau}\norm{v^{(t)}}_\infty\\
    &+\frac{1-\alpha}{\tau}\cdot\sqrt{N}\left( \norm{\widehat Q_\tau^{(t)}-\overline Q_\tau^{(t)}}_\infty+\norm{\overline Q_\tau^{(t)}-Q_\tau^\star}_\infty+\norm{\ve}_\infty\right)
    +\frac{1-\alpha}{\tau}\cdot 2\sqrt{N} \norm{Q_\tau^\star -\tau \log \overline{\xi}^{(t)}}_\infty\,.
\end{align*}
Combining the above inequality with \eqref{eq:bound_by_r_3_inexact} and \eqref{eq:diff_T_Q_inexact} gives
\begin{equation}\label{eq:C_inexact}
\begin{split}
    \normbig{v^{(t+1)}}_\infty&\leq \sigma\left(1+\frac{\eta M\sqrt{N}}{1-\gamma}\sigma\right)\normbig{v^{(t)}}_\infty+\sigma M\sqrt{N}\Bigg\{\left(2\alpha+(1-\alpha)\cdot \sqrt{2N}+\frac{1-\alpha}{\tau}\cdot\sqrt{N}M\right) \normbig{u^{(t)}}_\infty\\
        &+\frac{1-\alpha}{\tau}\cdot\sqrt{N}\left(\normbig{\overline Q_\tau^{(t)}-Q_\tau^\star}_\infty+\normbig{\ve}_\infty\right)+\frac{1-\alpha}{\tau}\cdot 2\sqrt{N} \normbig{Q_\tau^\star -\tau \log \overline{\xi}^{(t)}}_\infty\Bigg\}+2\sigma\sqrt{N}\norm{\ve}_\infty\,.
\end{split}
\end{equation}

\paragraph{Step 5: bound $\norm{\overline{Q}_\tau^{(t+1)}-Q_\tau^\star}_\infty$.}
    It is straightforward to verify that \eqref{eq:E} applies to the inexact updates as well:
    \begin{equation*}
        \norm{Q_\tau^\star-\overline{Q}_\tau^{(t+1)}}_\infty
        \leq \gamma \norm{Q_\tau^\star-\tau\log\overline\xi^{(t+1)}}_\infty+\gamma\left(-\min_{s,a}\left(\overline Q_\tau^{(t+1)}(s,a)-\tau\log\overline\xi^{(t+1)}(s,a)\right)\right)\,.
    \end{equation*}
    Plugging the above inequality into \eqref{eq:A_inexact} and \eqref{eq:C_inexact} establishes the bounds on $\Omega_3^{(t+1)}$ and $\Omega_2^{(t+1)}$ in \eqref{eq:matrix_inexact}, respectively.
    \paragraph{Step 6: bound $-\min_{s,a}\big(\overline Q_\tau^{(t+1)}(s,a)-\tau\log\overline\xi^{(t+1)}(s,a)\big)$.} We obtain the following lemma by interpreting the approximation error $\ve$ as part of the consensus error $\norm{\widehat Q_\tau^{(t)}-\overline Q_\tau^{(t)}}_\infty$ in Lemma~\ref{lm:performance_improvement}.
    \begin{lm}[inexact version of Lemma~\ref{lm:performance_improvement}]\label{lm:performance_improvement_inexact}
        Suppose $0<\eta\leq (1-\gamma)/\tau$. For any state-action pair $(s_0,a_0)\in\S\times\A$, one has
        \begin{align}
            \overline V_\tau^{(t+1)}(s_0)-\overline V_\tau^{(t)}(s_0) & \geq\frac{1}{\eta}\underset{s\sim d_{s_0}^{\overline\pi^{(t+1)}}}{\E}\left[\alpha \KL{\overline{\pi}^{(t+1)}(\cdot|s_0)}{\overline{\pi}^{(t)}(\cdot|s_0)}+\KL{\overline{\pi}^{(t)}(\cdot|s_0)}{\overline{\pi}^{(t+1)}(\cdot|s_0)}\right]\notag\\
            &\qquad -\frac{2}{1-\gamma}\left(\norm{\widehat Q_\tau^{(t)}-\overline Q_\tau^{(t)}}_\infty+\norm{\ve}_\infty\right)\,,\label{eq:V_improvement_inexact}\\
            \overline Q_\tau^{(t+1)}(s_0,a_0)-\overline Q_\tau^{(t)}(s_0,a_0) & \geq -\frac{2\gamma}{1-\gamma}\left(\norm{\widehat Q_\tau^{(t)}-\overline Q_\tau^{(t)}}_\infty+\norm{\ve}_\infty\right)\,.\label{eq:Q_improvement_inexact}
        \end{align}
    \end{lm}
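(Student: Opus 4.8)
The plan is to mirror the proof of the exact performance improvement lemma (Lemma~\ref{lm:performance_improvement}, established in Appendix~\ref{sec:pf_lm_perf_imprv}) step by step, tracking the single place where the analysis depends on the quantity that drives the average-policy update. Recall from \eqref{eq:prop_auxiliary_avg} and \eqref{eq:update_auxiliary_avg} that in the exact case the average policy obeys $\log\overline\xi^{(t+1)}(s,a)=\alpha\log\overline\xi^{(t)}(s,a)+(1-\alpha)\widehat Q_\tau^{(t)}(s,a)/\tau$, so $\overline\pi^{(t+1)}$ is an approximate soft-NPG step driven by $\widehat Q_\tau^{(t)}$ rather than by the true soft Q-function $\overline Q_\tau^{(t)}=Q_\tau^{\overline\pi^{(t)}}$ of the current average policy. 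The entire penalty $-\tfrac{2}{1-\gamma}\normbig{\widehat Q_\tau^{(t)}-\overline Q_\tau^{(t)}}_\infty$ (and its $\gamma$-scaled counterpart) in \eqref{eq:V_improvement}--\eqref{eq:Q_improvement} arises precisely from replacing $\widehat Q_\tau^{(t)}$ by $\overline Q_\tau^{(t)}$ inside the one-step descent estimate.

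In the inexact setting the only structural change is that the average update \eqref{eq:avg_xi_t+1_e} is driven by $\widehat q_\tau^{(t)}$ in place of $\widehat Q_\tau^{(t)}$, i.e.\ $\log\overline\xi^{(t+1)}(s,a)=\alpha\log\overline\xi^{(t)}(s,a)+(1-\alpha)\widehat q_\tau^{(t)}(s,a)/\tau$; the KL-divergence terms depend only on the realized policies $\overline\pi^{(t)},\overline\pi^{(t+1)}$, and the soft value/Q-functions $\overline V_\tau^{(t)},\overline Q_\tau^{(t)}$ are genuine objects of those policies, hence untouched by the estimation error. I would therefore rerun the identical chain of inequalities, and at the single step where the driving term is compared against $\overline Q_\tau^{(t)}$, bound the mismatch by the triangle inequality
\begin{equation*}
\big\|\widehat q_\tau^{(t)}-\overline Q_\tau^{(t)}\big\|_\infty \le \big\|\widehat q_\tau^{(t)}-\widehat Q_\tau^{(t)}\big\|_\infty + \big\|\widehat Q_\tau^{(t)}-\overline Q_\tau^{(t)}\big\|_\infty,
\end{equation*}
controlling the first term via $\normbig{\widehat q_\tau^{(t)}-\widehat Q_\tau^{(t)}}_\infty=\big\|\tfrac1N\sum_{n=1}^N(q_{\tau,n}^{\pi_n^{(t)}}-Q_{\tau,n}^{\pi_n^{(t)}})\big\|_\infty\le\tfrac1N\sum_{n=1}^N e_n\le\norm{\ve}_\infty$, using the definition \eqref{eq:delta_n} of $e_n$. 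Substituting $\normbig{\widehat Q_\tau^{(t)}-\overline Q_\tau^{(t)}}_\infty\mapsto\normbig{\widehat Q_\tau^{(t)}-\overline Q_\tau^{(t)}}_\infty+\norm{\ve}_\infty$ everywhere the penalty appears in the exact proof then reproduces \eqref{eq:V_improvement_inexact} and \eqref{eq:Q_improvement_inexact} verbatim, with no new constants.

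The one point requiring care --- and the main obstacle --- is verifying that $\widehat Q_\tau^{(t)}$ (now $\widehat q_\tau^{(t)}$) enters the exact argument \emph{only} as the driver of the policy update, and nowhere else that would demand re-deriving a bound. Concretely, I would audit each invocation of the soft sub-optimality and performance-difference identities in Appendix~\ref{sec:pf_lm_perf_imprv} to confirm they are phrased in terms of $\overline Q_\tau^{(t)},\overline V_\tau^{(t)},\overline\pi^{(t)}$, all true objects of the realized policies and thus independent of the estimation error, so that the error is confined to the single comparison above. Once this localization is confirmed, the substitution is mechanical, which is exactly why the inexact statement merely augments the pre-existing consensus-error term with $\norm{\ve}_\infty$.
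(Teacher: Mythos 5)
Your proposal is correct and takes essentially the same route as the paper: the paper's own justification is precisely to rerun the proof of Lemma~\ref{lm:performance_improvement} with the average update now driven by $\widehat q_\tau^{(t)}$, interpreting the estimation error as part of the consensus error via the triangle inequality $\normbig{\widehat q_\tau^{(t)}-\overline Q_\tau^{(t)}}_\infty \leq \normbig{\widehat Q_\tau^{(t)}-\overline Q_\tau^{(t)}}_\infty+\norm{\ve}_\infty$, exactly as you describe. Your "localization" check --- that $\widehat Q_\tau^{(t)}$ enters the exact argument only through the policy-update rule \eqref{eq:update_pi_avg}, while all other quantities are true objects of the realized policies --- is the implicit content of the paper's one-line proof, so there is no gap.
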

Using \eqref{eq:Q_improvement_inexact}, we have
    \begin{align}
        &\overline Q_\tau^{(t+1)}(s,a)-\tau\left(\alpha\log\overline\xi^{(t)}(s,a)+(1-\alpha)\frac{\widehat Q_\tau^{(t)}(s,a)}{\tau}\right)\notag\\
        & \geq \overline Q_\tau^{(t)}(s,a)-\tau\left(\alpha\log\overline\xi^{(t)}(s,a)+(1-\alpha)\frac{\widehat Q_\tau^{(t)}(s,a)}{\tau}\right)-\frac{2\gamma}{1-\gamma}\left(\norm{\widehat Q_\tau^{(t)}-\overline Q_\tau^{(t)}}_\infty+\norm{\ve}_\infty\right)\notag\\
       &  \geq\alpha\left(\overline Q_\tau^{(t)}(s,a)-\tau\log\overline\xi^{(t)}(s,a)\right)-\frac{2\gamma+\eta\tau}{1-\gamma}\norm{\widehat Q_\tau^{(t)}-\overline Q_\tau^{(t)}}_\infty-\frac{2\gamma}{1-\gamma}\norm{\ve}_\infty\,,\label{eq:geq_inexact}
    \end{align}
    which gives
    \begin{equation}\label{eq:D_inexact}
    \begin{split}
        &-\min_{s,a}\left(\overline Q_\tau^{(t+1)}(s,a)-\tau\log\overline\xi^{(t+1)}(s,a)\right)\\
       &  \leq-\alpha\min_{s,a}\left(\overline Q_\tau^{(t)}(s,a)-\tau\log\overline\xi^{(t)}(s,a)\right)+\frac{2\gamma+\eta\tau}{1-\gamma}M\norm{u^{(t)}}_\infty+\frac{2\gamma}{1-\gamma}\norm{\ve}_\infty\,.
    \end{split}
    \end{equation}

\subsection{Proof of Lemma \ref{lm:linear_sys}}
\label{sec:pf_lm_linear_sys}

\paragraph{Step 1: bound $u^{(t+1)}(s,a)=\norm{\log\vxi^{(t+1)}(s,a)-\log\overline\xi^{(t+1)}(s,a)\vone_N}_2$.}
Following the same strategy in establishing \eqref{eq:B}, we have
\begin{align}
    &\norm{\log\vxi^{(t+1)}(s,a)-\log\overline\xi^{(t+1)}(s,a)\vone_N}_2\notag\\
    &=\norm{\left(\mW\log\vxi^{(t)}(s,a)-\log\overline{\xi}^{(t)}(s,a)\vone_N\right)+\frac{\eta}{1-\gamma}\left(\mW\mT^{(t)}(s,a)-\widehat{Q}^{(t)}(s,a)\vone_N\right)}_2\notag\\
   & \leq\sigma\norm{\log\vxi^{(t)}(s,a)-\log\overline{\xi}^{(t)}(s,a)\vone_N}_2+\frac{\eta}{1-\gamma}\sigma\norm{\mT^{(t)}(s,a)-\widehat{Q}^{(t)}(s,a)\vone_N}_2\,,\label{eq:bound_u_0}
\end{align}
or equivalently
\begin{equation}\label{eq:B_0}
    \normbig{u^{(t+1)}}_\infty\leq \sigma\normbig{u^{(t)}}_\infty+\frac{\eta}{1-\gamma}\sigma\normbig{v^{(t)}}_\infty\,.
\end{equation}

\paragraph{Step 2: bound $v^{(t+1)}(s,a)=\normbig{\mT^{(t+1)}(s,a)-\widehat{Q}^{(t+1)}(s,a)\vone_N}_2$.} In the same vein of establishing \eqref{eq:diff_T_Q}, we have
\begin{align}
    &\normbig{\mT^{(t+1)}(s,a)-\widehat{Q}^{(t+1)}(s,a)\vone_N}_2\notag\\
    &\leq\sigma\normbig{\mT^{(t)}(s,a)-\widehat{Q}^{(t)}(s,a)\vone_N}_2+\sigma\normbig{\mQ^{(t+1)}(s,a)-\mQ^{(t)}(s,a)}_2\,,\label{eq:diff_T_Q_0}
\end{align}
The term $\norm{\mQ^{(t+1)}(s,a)-\mQ^{(t)}(s,a)}_2$ can be bounded in a similar way in \eqref{eq:bound_by_r_3}:
\begin{equation}\label{eq:bound_by_r_3_0}
    \normbig{\mQ^{(t+1)}(s,a)-\mQ^{(t)}(s,a)}_2\leq \frac{(1+\gamma)\gamma}{(1-\gamma)^2}\sqrt{N}\normbig{w_0^{(t)}}_\infty\,,
\end{equation}
where the coefficient $\frac{(1+\gamma)\gamma}{(1-\gamma)^2}$ comes from $M$ in Lemma~\ref{lm:difference_soft_Q} when $\tau = 0$, and $w_0^{(t)}\in\R^{|\S||\A|}$ is defined as
\begin{equation}\label{eq:w_0}
    \forall (s,a)\in\S\times\A:\quad w_0^{(t)}(s,a)\coloneqq\norm{\log\vxi^{(t+1)}(s,a)-\log\vxi^{(t)}(s,a)-\frac{\eta}{1-\gamma}V^\star(s)\vone_N}_2\,.
\end{equation}
It remains to bound $\normbig{w_0^{(t)}}_\infty$.
Towards this end, we rewrite \eqref{eq:bound_w} as
\begin{align}
    &w_0^{(t)}(s,a)\notag\\
    &=\normbig{\mW\left(\log\vxi^{(t)}(s,a)+\frac{\eta}{1-\gamma}\mT^{(t)}(s,a)\right)-\log\vxi^{(t)}(s,a)-\frac{\eta}{1-\gamma}V^\star(s)\vone_N}_2\notag\\
    &=\norm{(\mW-\mI)\left(\log\vxi^{(t)}(s,a)-\log\overline\xi^{(t)}(s,a)\vone_N\right)+\frac{\eta}{1-\gamma}\left(\mW\mT^{(t)}(s,a)-V^\star(s)\vone_N\right)}_2\notag\\
    &\leq 2 \normbig{\log\vxi^{(t)}(s,a)-\log\overline\xi^{(t)}(s,a)\vone_N}_2+\frac{\eta}{1-\gamma}\normbig{\mW\mT^{(t)}(s,a)-V^\star(s)\vone_N}_2\notag\\
    &\leq 2 \normbig{\log\vxi^{(t)}(s,a)-\log\overline\xi^{(t)}(s,a)\vone_N}_2+\frac{\eta}{1-\gamma}\normbig{\mW\mT^{(t)}(s,a)-\widehat{Q}^{(t)}(s,a)\vone_N}_2+\frac{\eta}{1-\gamma}\cdot\sqrt{N}\big|\widehat{Q}^{(t)}(s,a)-V^\star(s)\big|\,. \label{eq:bound_intermediate_0}
\end{align}
Note that it holds for all $(s,a)\in\S\times\A$: 
\begin{equation*}
    \big|\widehat{Q}^{(t)}(s,a)-V^\star(s)\big|\leq \frac{1}{1-\gamma}
\end{equation*}    
since $\widehat{Q}^{(t)}(s,a)$ and $V^\star(s)$ are both in $[0,1/(1-\gamma)]$. This along with \eqref{eq:bound_intermediate_0} gives
\begin{align*}
    w_0^{(t)}(s,a) &\leq 2\normbig{u^{(t)}}_\infty+\frac{\eta}{1-\gamma}\normbig{v^{(t)}}_\infty+\frac{\eta\sqrt{N}}{(1-\gamma)^2}\,.
\end{align*}

Combining the above inequality with \eqref{eq:bound_by_r_3_0} and \eqref{eq:diff_T_Q_0}, we arrive at
\begin{equation}\label{eq:C_0}
    \begin{split}
        \normbig{v^{(t+1)}}_\infty & \leq\sigma\left(1+\frac{(1+\gamma)\gamma\sqrt{N}\eta}{(1-\gamma)^3}\sigma\right)\normbig{v^{(t)}}_\infty
        +\frac{(1+\gamma)\gamma}{(1-\gamma)^2}\sqrt{N}\sigma\Bigg\{2\normbig{u^{(t)}}_\infty
        +\frac{\eta}{(1-\gamma)^2}\cdot\sqrt{N}\Bigg\}\,.
    \end{split}
\end{equation}

\paragraph{Step 3: establish the descent equation.} 
The following lemma characterizes the improvement in $\phi^{(t)}(\eta)$ for every iteration of Algorithm~\ref{alg:DNPG_exact}, with the proof postponed to Appendix~\ref{sec:pf_lm_perf_imprv_0}.
\begin{lm}[Performance improvement of exact FedNPG]\label{lm:performance_improvement_0}
For all starting state distribution $\rho\in\Delta(\S)$, we have the iterates of FedNPG satisfy
    \begin{equation}\label{eq:V_improvement_0}
        \phi^{(t+1)}(\eta)\leq \phi^{(t)}(\eta)+\frac{2\eta}{(1-\gamma)^2}\normbig{\widehat Q^{(t)}-\overline Q^{(t)}}_\infty-\eta\left(V^{\star}(\rho)-\overline{V}^{(t)}(\rho)\right)\,,
    \end{equation}
    where 
    \begin{equation}\label{eq:phi}
        \phi^{(t)}(\eta)\coloneqq \ex{s\sim d_\rho^{\pi^\star}}{\KL{\pi^\star(\cdot|s)}{\overline\pi^{(t)}(\cdot|s)}}-\frac{\eta}{1-\gamma}\overline{V}^{(t)}(d_\rho^{\pi^\star})\,,\quad \forall t\geq 0\,.
    \end{equation}
\end{lm}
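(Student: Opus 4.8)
The plan is to reduce the statement to an \emph{inexact} centralized NPG descent analysis for the averaged policy $\overline\pi^{(t)}$, treating the gap $\widehat Q^{(t)}-\overline Q^{(t)}$ as the evaluation error. First I would record that, by the auxiliary-sequence tracking property \eqref{eq:xi_avg_property} together with the averaged update \eqref{eq:avg_zeta_t+1}, the geometric-mean policy obeys the multiplicative rule
\begin{equation}
\overline\pi^{(t+1)}(a|s)=\frac{\overline\pi^{(t)}(a|s)\exp\!\big(\tfrac{\eta}{1-\gamma}\widehat Q^{(t)}(s,a)\big)}{Z^{(t)}(s)},\qquad Z^{(t)}(s)=\sum_{a'}\overline\pi^{(t)}(a'|s)\exp\!\Big(\tfrac{\eta}{1-\gamma}\widehat Q^{(t)}(s,a')\Big).
\end{equation}
Thus $\overline\pi^{(t)}$ runs a vanilla NPG iteration driven by the tracked pseudo-Q-function $\widehat Q^{(t)}$ rather than the true global Q-function $\overline Q^{(t)}=Q^{\overline\pi^{(t)}}$, and the whole difficulty is to propagate the mismatch $\|\widehat Q^{(t)}-\overline Q^{(t)}\|_\infty$ through the standard potential argument.

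The core computation would expand $\phi^{(t+1)}(\eta)-\phi^{(t)}(\eta)$. Using the exact one-step KL identity
\begin{equation}
\KL{\pi^\star(\cdot|s)}{\overline\pi^{(t)}(\cdot|s)}-\KL{\pi^\star(\cdot|s)}{\overline\pi^{(t+1)}(\cdot|s)}=\frac{\eta}{1-\gamma}\langle\pi^\star(\cdot|s),\widehat Q^{(t)}(s,\cdot)\rangle-\log Z^{(t)}(s),
\end{equation}
together with the performance difference lemma $V^\star(\rho)-\overline V^{(t)}(\rho)=\frac{1}{1-\gamma}\ex{s\sim d_\rho^{\pi^\star}}{\langle\pi^\star(\cdot|s),\overline Q^{(t)}(s,\cdot)\rangle-\overline V^{(t)}(s)}$, I would add $\eta\big(V^\star(\rho)-\overline V^{(t)}(\rho)\big)$ to $\phi^{(t+1)}-\phi^{(t)}$. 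The terms involving $\overline V^{(t)}(d_\rho^{\pi^\star})$ cancel exactly against the value term built into the potential $\phi^{(t)}$, leaving
\begin{equation}
\phi^{(t+1)}-\phi^{(t)}+\eta\big(V^\star(\rho)-\overline V^{(t)}(\rho)\big)\le\frac{\eta}{1-\gamma}\big\|\widehat Q^{(t)}-\overline Q^{(t)}\big\|_\infty+\underbrace{\ex{s\sim d_\rho^{\pi^\star}}{\log Z^{(t)}(s)}-\frac{\eta}{1-\gamma}\overline V^{(t+1)}(d_\rho^{\pi^\star})}_{=:T},
\end{equation}
where the first term uses $\langle\pi^\star,\overline Q^{(t)}-\widehat Q^{(t)}\rangle\le\|\overline Q^{(t)}-\widehat Q^{(t)}\|_\infty$ and $\overline V^{(t)}(d_\rho^{\pi^\star})=\ex{s\sim d_\rho^{\pi^\star}}{\overline V^{(t)}(s)}$.

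It then remains to bound the residual $T$. Here I would use the nonnegativity of $\KL{\overline\pi^{(t+1)}(\cdot|s)}{\overline\pi^{(t)}(\cdot|s)}$, which via the update rule yields the log-partition upper bound $\log Z^{(t)}(s)\le\frac{\eta}{1-\gamma}\langle\overline\pi^{(t+1)}(\cdot|s),\widehat Q^{(t)}(s,\cdot)\rangle$. Combining this with $\overline V^{(t+1)}(s)=\langle\overline\pi^{(t+1)}(\cdot|s),\overline Q^{(t+1)}(s,\cdot)\rangle$ gives
\begin{equation}
T\le\frac{\eta}{1-\gamma}\ex{s\sim d_\rho^{\pi^\star}}{\big\langle\overline\pi^{(t+1)}(\cdot|s),\;\widehat Q^{(t)}(s,\cdot)-\overline Q^{(t+1)}(s,\cdot)\big\rangle},
\end{equation}
and I would split $\widehat Q^{(t)}-\overline Q^{(t+1)}=(\widehat Q^{(t)}-\overline Q^{(t)})+(\overline Q^{(t)}-\overline Q^{(t+1)})$. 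The first piece is at most $\|\widehat Q^{(t)}-\overline Q^{(t)}\|_\infty$, and the second is controlled by the monotone-improvement bound $\overline Q^{(t+1)}-\overline Q^{(t)}\ge-\frac{2\gamma}{1-\gamma}\|\widehat Q^{(t)}-\overline Q^{(t)}\|_\infty$ (the $\tau=0$ analogue of \eqref{eq:Q_improvement}), so $T\le\frac{\eta(1+\gamma)}{(1-\gamma)^2}\|\widehat Q^{(t)}-\overline Q^{(t)}\|_\infty$. Adding the two contributions gives $\frac{\eta}{1-\gamma}+\frac{\eta(1+\gamma)}{(1-\gamma)^2}=\frac{2\eta}{(1-\gamma)^2}$, exactly the claimed coefficient. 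The hard part will be establishing this vanilla $Q$-improvement bound in the inexact federated setting: it follows from the corresponding $V$-improvement estimate, proved by applying the performance difference lemma between the consecutive averaged policies $\overline\pi^{(t+1)}$ and $\overline\pi^{(t)}$ and carefully bounding the contribution of the pseudo-Q gap $\widehat Q^{(t)}-\overline Q^{(t)}$ through the nonnegative KL cross-terms (paralleling Lemma~\ref{lm:performance_improvement}); everything else is routine NPG bookkeeping, and no step-size restriction is needed since each inequality above holds for all $\eta>0$.
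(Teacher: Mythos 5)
Your proposal is correct: the coefficient bookkeeping closes exactly, since $\frac{\eta}{1-\gamma}+\frac{\eta(1+\gamma)}{(1-\gamma)^2}=\frac{2\eta}{(1-\gamma)^2}$, and every intermediate claim (the multiplicative update of $\overline\pi^{(t)}$ driven by $\widehat Q^{(t)}$, the one-step KL identity, the bound $\log Z^{(t)}(s)\le\frac{\eta}{1-\gamma}\langle\overline\pi^{(t+1)}(\cdot|s),\widehat Q^{(t)}(s,\cdot)\rangle$ from $\KL{\overline\pi^{(t+1)}(\cdot|s)}{\overline\pi^{(t)}(\cdot|s)}\ge 0$, and the absence of any step-size restriction in the vanilla case) is valid. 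The skeleton matches the paper's proof in Appendix~B.8 — mirror-descent view of the averaged policy, performance difference lemma (Lemma~\ref{lm:performance_difference}), and a consecutive-iterates improvement bound — but your disposal of the log-partition residual is genuinely different. The paper bounds $V^\star(\rho)-\overline V^{(t)}(\rho)$ directly (its \eqref{eq:V_diff_opt_0}) and then controls the expectation of the shifted log-partition via the improvement inequality \eqref{eq:delta_V_0_ineq}, which requires transferring an estimate under the visitation measure $d^{\overline\pi^{(t+1)}}_{\mu}$ down to $\mu=d_\rho^{\pi^\star}$ using $d^{\overline\pi^{(t+1)}}_{\mu}\ge(1-\gamma)\mu$ together with nonnegativity of $\log\widehat z^{(t)}(s)+\frac{\eta}{1-\gamma}\normbig{\delta^{(t)}}_\infty$. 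You instead expand $\phi^{(t+1)}-\phi^{(t)}$ exactly (so the $\overline V^{(t)}$ terms cancel identically), and close with the Q-improvement bound $\overline Q^{(t+1)}-\overline Q^{(t)}\ge-\frac{2\gamma}{1-\gamma}\normbig{\widehat Q^{(t)}-\overline Q^{(t)}}_\infty$, which you correctly reduce, via the Bellman relation $\overline Q^{(t+1)}(s,a)-\overline Q^{(t)}(s,a)=\gamma\,\mathbb{E}_{s'\sim P(\cdot|s,a)}\big[\overline V^{(t+1)}(s')-\overline V^{(t)}(s')\big]$, to the uniform V-improvement bound that is exactly the paper's \eqref{eq:delta_V_0_bound} and is provable by the PDL-plus-Jensen argument you sketch. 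So both routes rest on the same hard sub-lemma (one-step improvement of the averaged policy with error $\frac{2}{1-\gamma}\normbig{\delta^{(t)}}_\infty$); yours buys an exact potential cancellation, only the weaker (uniform) form of the improvement bound, and transparent accounting of where each factor of $\frac{1}{1-\gamma}$ enters, while the paper's version avoids the detour through $\overline Q^{(t+1)}$ and the Bellman conversion, working instead entirely at the level of value functions under $d_\rho^{\pi^\star}$.
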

It remains to control the term $\normbig{\overline{Q}^{(t)} -\widehat{Q}^{(t)}}_\infty$. Similar to \eqref{eq:diff_Q_tau_overline}, 
for all $t\geq 0$, we have
\begin{align}
    \normbig{\overline{Q}^{(t)} -\widehat{Q}^{(t)}}_\infty
 &   = \norm{\frac{1}{N}\sum_{n=1}^N Q_{n}^{\pi_n^{(t)}}-\frac{1}{N}\sum_{n=1}^N Q_{n}^{\overline\pi^{(t)}}}_\infty\notag\\
 &   \overset{(a)}{\leq} \frac{(1+\gamma)\gamma}{(1-\gamma)^2}\cdot\frac{1}{N}\sum_{n=1}^N\normbig{\log\xi_n^{(t)}-\log\overline\xi^{(t)}}_\infty\notag\\
  &  \overset{(b)}{\leq} \frac{(1+\gamma)\gamma}{(1-\gamma)^2}\normbig{u^{(t)}}_\infty\,,\label{eq:diff_Q_overline}
\end{align}
where (a) invokes Lemma~\ref{lm:difference_soft_Q} with $\tau=0$ and (b) stems from the definition of $u^{(t)}$. This along with \eqref{eq:V_improvement_0} gives
\begin{equation*}
    \phi^{(t+1)}(\eta)\leq \phi^{(t)}(\eta)+\frac{2(1+\gamma)\gamma}{(1-\gamma)^4}\eta\normbig{u^{(t)}}_\infty-\eta\left({V}^{\star}(\rho)-\overline{V}^{(t)}(\rho)\right)\,.
\end{equation*}

\paragraph{Step 4: bound the consensus error.} To bound the consensus error $\norm{\log\pi_n^{(t)}-\log\overline\pi^{(t)}}_\infty$ for all $n\in[N]$, we first upper bound the spectral norm of $\mB(\eta)$ which we denote as $\rho(\mB(\eta))$. Since $\mB(\eta)$ is a nonnegative matrix, by Perron-Frobenius Theorem, $\rho(\mB(\eta))$ is an eigenvalue of $\mB(\eta)$. Hence, we only need to upper bound the eigenvalue of $\rho(\mB(\eta))$.

The characteristic polynomial of $\mB(\eta)$ is
\begin{align*}
    f(\lambda)& =(\lambda-\sigma)\left(\lambda-\sigma\left(1+\frac{(1+\gamma)\gamma\sqrt{N}\eta}{(1-\gamma)^3}\sigma\right)\right)-\frac{\eta J}{1-\gamma}\sigma^2\\
    &=\lambda^2-\left(2+\frac{(1+\gamma)\gamma\sqrt{N}\eta}{(1-\gamma)^3}\sigma\right)\sigma\lambda+\left(1+\frac{(1+\gamma)\gamma\sqrt{N}\eta}{(1-\gamma)^3}\sigma-\frac{\eta J}{1-\gamma}\right)\sigma^2\,.
\end{align*}
This gives
\begin{align}
    \rho(\mB(\eta)) &\leq \frac{\sigma}{2}\left[\left(2+\frac{(1+\gamma)\gamma\sqrt{N}\eta}{(1-\gamma)^3}\sigma\right)+\sqrt{\left(2+\frac{(1+\gamma)\gamma\sqrt{N}\eta}{(1-\gamma)^3}\sigma\right)^2-4\left(1+\frac{(1+\gamma)\gamma\sqrt{N}\eta}{(1-\gamma)^3}\sigma\right)+4\frac{\eta J}{1-\gamma}}\right]\notag\\
    &\leq\frac{\sigma}{2}\left[\left(2+\frac{(1+\gamma)\gamma\sqrt{N}\eta}{(1-\gamma)^3}\sigma\right)+\sqrt{\left(\frac{(1+\gamma)\gamma\sqrt{N}\eta}{(1-\gamma)^3}\sigma\right)^2+4\frac{\eta J}{1-\gamma}}\right]\notag\\
    &\leq\sigma\left[1+\frac{(1+\gamma)\gamma\sqrt{N}\eta}{(1-\gamma)^3}\sigma+\sqrt{\frac{\eta J}{1-\gamma}}\right]\,.\label{eq:lambda(B)}
\end{align}
Note that when $\eta\leq\eta_1$, we have (recall that $J=\frac{2(1+\gamma)\gamma}{(1-\gamma)^2}\sqrt{N}$):
$$\frac{(1+\gamma)\gamma\sqrt{N}\eta}{(1-\gamma)^3}\sigma\leq\frac{(1-\sigma)^2}{8}\,, \quad\mbox{and} \qquad
 \frac{\eta J}{1-\gamma}\leq \frac{(1-\sigma)^2}{4\sigma}\,.$$
Plugging the above two expressions into \eqref{eq:lambda(B)} yields 
\begin{align*}
    \rho(\mB(\eta)) &\leq\sigma\left(1+(1-\sigma)^2/8+(1-\sigma)/(2\sqrt{\sigma})\right)\\
   & \leq \sigma\left(1+(1-\sigma)/(8\sigma)+(1-\sigma)/(2\sigma)\right)=\frac{3}{8}\sigma+\frac{5}{8}<1\,.
\end{align*}

Therefore, when $\eta\leq\eta_1$, we have
\begin{align}
    \norm{\mathbf{\Omega}^{(t)}}_2&\leq \rho(\mB(\eta))\norm{\mathbf{\Omega}^{(t-1)}}_2+d_2(\eta)\notag\\
    &\leq \cdots \leq \rho^t(\mB(\eta))\norm{\mathbf{\Omega}^{(0)}}_2+\sum_{i=0}^{t-1}\rho^i(\mB(\eta))\frac{(1+\gamma)\gamma N\sigma}{(1-\gamma)^4}\eta\notag\\
    &\leq \rho^t(\mB(\eta))\norm{\mathbf{\Omega}^{(0)}}_2 + \frac{2N\sigma}{(1-\gamma)^4(1-\rho(\mB(\eta)))}\eta\notag\\
    &\leq \left(\frac{3}{8}\sigma+\frac{5}{8}\right)^t\norm{\mathbf{\Omega}^{(0)}}_2+\frac{16N\sigma}{3(1-\gamma)^4(1-\sigma)}\eta\,.\label{eq:consensus_error_pre0}
\end{align}
Combining the above inequality with the following fact:
\begin{equation*}
    \forall n\in[N]:\quad \norm{\log\pi_n^{(t)}-\log\overline\pi^{(t)}}_\infty\leq 2\norm{\log\xi_n^{(t)}-\log\overline\xi^{(t)}}_\infty\leq\Omega_1^{(t)}\leq \norm{\mathbf{\Omega}^{(t)}}_2,
\end{equation*}
where the first inequality uses \eqref{eq:log_policy}, we obtain \eqref{eq:consensus_error_vanilla}.

\subsection{Proof of Lemma~\ref{lm:linear_sys_inexact}}\label{sec:pf_lm_linear_sys_inexact}
The bound on $u^{(t+1)}(s,a)$ is already established in Step 1 in Appendix~\ref{sec:pf_lm_lin_sys_ent} and shall be omitted. As usual we only highlight the key differences with the proof of Lemma \ref{lm:linear_sys} due to approximation error.

\paragraph{Step 1: bound $v^{(t+1)}(s,a)=\norm{\mT^{(t+1)}(s,a)-\widehat{q}^{(t+1)}(s,a)\vone_N}_2$.} Let $\vq^{(t)}  \coloneqq \Big( q_{1}^{\pi_1^{(t)}},\cdots, q_{N}^{\pi_N^{(t)}}\Big)^\top$. From \eqref{eq:Q_tracking_0_inexact}, we have  
\begin{align}
    &\norm{\mT^{(t+1)}(s,a)-\widehat{q}^{(t+1)}(s,a)\vone_N}_2\notag\\
   & =\norm{\mW\left(\mT^{(t)}(s,a)+\vq^{(t+1)}(s,a)-\vq^{(t)}(s,a)\right)-\widehat{q}^{(t+1)}(s,a)\vone_N}_2\notag\\
   & =\norm{\left(\mW\mT^{(t)}(s,a)-\widehat{q}^{(t)}(s,a)\vone_N \right)+\mW\left(\vq^{(t+1)}(s,a)-\vq^{(t)}(s,a)\right)+\left(\widehat{q}^{(t)}(s,a)-\widehat{q}^{(t+1)}(s,a)\right)\vone_N }_2\notag\\
   & \leq\sigma\normbig{\mT^{(t)}(s,a)-\widehat{q}^{(t)}(s,a)\vone_N }_2+\sigma\norm{\left(\vq^{(t+1)}(s,a)-\vq^{(t)}(s,a)\right)+\left(\widehat{q}^{(t)}(s,a)-\widehat{q}^{(t+1)}(s,a)\right)\vone_N}_2\notag\\
   & \leq\sigma\normbig{\mT^{(t)}(s,a)-\widehat{q}^{(t)}(s,a)\vone_N}_2+\sigma\normbig{\vq^{(t+1)}(s,a)-\vq^{(t)}(s,a)}_2\notag\\
  &  \leq\sigma\normbig{\mT^{(t)}(s,a)-\widehat{q}^{(t)}(s,a)\vone_N}_2+\sigma\normbig{\mQ^{(t+1)}(s,a)-\mQ^{(t)}(s,a)}_2 + 2\sigma\sqrt{N}\norm{\ve}_\infty\,.\label{eq:diff_T_Q_0_inexact}
\end{align}
Note that \eqref{eq:bound_by_r_3_0} still holds for inexact FedNPG:
\begin{equation}\label{eq:bound_by_r_3_0_inexact}
    \norm{\mQ^{(t+1)}(s,a)-\mQ^{(t)}(s,a)}_2\leq \frac{(1+\gamma)\gamma}{(1-\gamma)^2}\sqrt{N}\norm{w_0^{(t)}}_\infty,
\end{equation}
where $w_0^{(t)}$ is defined in \eqref{eq:w_0}. We rewrite \eqref{eq:bound_intermediate_0}, the bound on $w_0^{(t)}(s,a)$, as
\begin{align}
     w_0^{(t)}(s,a)  &  \leq 2 \normbig{\log\vxi^{(t)}(s,a)-\log\overline\xi^{(t)}(s,a)\vone_N}_2 \nonumber \\
     & \qquad +\frac{\eta}{1-\gamma}\normbig{\mT^{(t)}(s,a)-\widehat{q}^{(t)}(s,a)\vone_N}_2+\frac{\eta\sigma}{1-\gamma}\cdot\sqrt{N}\big|\widehat{q}^{(t)}(s,a)-V^\star(s)\big|\,.\label{eq:bound_intermediate_0_inexact}
\end{align}
With the following bound 
$$
    \forall (s,a)\in\S\times\A:\quad \big|\widehat{q}^{(t)}(s,a)-V^\star(s)\big|\leq \normbig{\widehat q^{(t)}-\overline Q^{(t)}}_\infty+\frac{1}{1-\gamma}
$$
in mind, we write \eqref{eq:bound_intermediate_0} as
\begin{align*}
    w_0^{(t)}(s,a) & \leq 2\normbig{u^{(t)}}_\infty+\frac{\eta\sigma}{1-\gamma}\normbig{v^{(t)}}_\infty+\frac{\eta}{1-\gamma}\cdot\sqrt{N}\left( \normbig{\widehat q^{(t)}-\overline q^{(t)}}_\infty+\frac{1}{1-\gamma}\right)\,.
\end{align*}
Putting all pieces together, we obtain
\begin{equation}\label{eq:C_0_inexact}
    \begin{split}
        \normbig{v^{(t+1)}}_\infty & \leq\sigma\left(1+\frac{(1+\gamma)\gamma\sqrt{N}\eta}{(1-\gamma)^3}\sigma\right)\normbig{v^{(t)}}_\infty\\
        &\qquad +\frac{(1+\gamma)\gamma}{(1-\gamma)^2}\sqrt{N}\sigma\Bigg\{2\normbig{u^{(t)}}_\infty
        +\frac{\eta\sqrt{N}}{(1-\gamma)^2} +\frac{\eta\sqrt{N}}{1-\gamma}\norm{\ve}_\infty\Bigg\} \\
        & \qquad +2\sigma\sqrt{N}\norm{\ve}_\infty\,.
    \end{split}
\end{equation}

\paragraph{Step 2: establish the descent equation.} Note that Lemma~\ref{lm:performance_improvement_0} directly applies by replacing $\widehat{Q}^{(t)}$ with $\widehat{q}^{(t)}$:
\begin{equation*}
    \phi^{(t+1)}(\eta)\leq \phi^{(t)}(\eta)+\frac{2\eta}{(1-\gamma)^2}\norm{\widehat q^{(t)}-\overline Q^{(t)}}_\infty-\eta\left(V^{\star}(\rho)-\overline{V}^{(t)}(\rho)\right)\,.
\end{equation*}
To bound the middle term, for all $t\geq 0$, we have
\begin{align}
    \norm{\overline{Q}^{(t)} -\widehat{q}^{(t)}}_\infty
   & = \norm{\frac{1}{N}\sum_{n=1}^N Q_{n}^{\pi_n^{(t)}}-\frac{1}{N}\sum_{n=1}^N Q_{n}^{\overline\pi^{(t)}}}_\infty+\frac{1}{N}\norm{\sum_{n=0}^N\left(q_{n}^{\pi_n^{(t)}}-Q_{n}^{\pi_n^{(t)}}\right)}_\infty\notag\\
  &  \leq \frac{(1+\gamma)\gamma}{(1-\gamma)^2}\cdot\frac{1}{N}\sum_{n=1}^N\norm{\log\xi_n^{(t)}-\log\overline\xi^{(t)}}_\infty+\frac{1}{N}\sum_{n=1}^N e_n\notag\\
&    \leq \frac{(1+\gamma)\gamma}{(1-\gamma)^2}\norm{u^{(t)}}_\infty+\norm{\ve}_\infty\,.\label{eq:diff_Q_overline_inexact}
\end{align}
Hence, \eqref{eq:A0_inexact} is established by combining the above two inequalities.

\paragraph{Step 3: bound the consensus error.} Similar to~\eqref{eq:consensus_error_pre0}, here we have
\begin{align}
    \norm{\mathbf{\Omega}^{(t)}}_2 &\leq \rho(\mB(\eta))\norm{\mathbf{\Omega}^{(t-1)}}_2+(d_2(\eta)+c_2(\eta))\notag\\
    &\leq \cdots \leq \rho^t(\mB(\eta))\norm{\mathbf{\Omega}^{(0)}}_2+\sum_{i=0}^{t-1}\rho^i(\mB(\eta))\left(\frac{(1+\gamma)\gamma N\sigma}{(1-\gamma)^4}\eta+\sqrt{N}\sigma\left(\frac{(1+\gamma)\gamma \eta \sqrt{N}}{(1-\gamma)^3}+2\right)\norm{\ve}_\infty\right)\notag\\
   & \leq \rho^t(\mB(\eta))\norm{\mathbf{\Omega}^{(0)}}_2 + \frac{2}{1-\rho(\mB(\eta))}\left(\frac{N\sigma}{(1-\gamma)^4}\eta+\sqrt{N}\sigma\left(\frac{\eta \sqrt{N}}{(1-\gamma)^3}+1\right)\norm{\ve}_\infty\right)\notag\\
   & \leq \left(\frac{3}{8}\sigma+\frac{5}{8}\right)^t\norm{\mathbf{\Omega}^{(0)}}_2+\frac{16}{3(1-\sigma)}\left(\frac{N\sigma}{(1-\gamma)^4}\eta+\sqrt{N}\sigma\left(\frac{\eta \sqrt{N}}{(1-\gamma)^3}+1\right)\norm{\ve}_\infty\right)\,,\label{eq:consensus_error_pre}
\end{align}
which indicates~\eqref{eq:consensus_error_0_inexact}.


\subsection{Proof of Lemma~\ref{lm:auxiliary_seq}}\label{sec_app:lm_aux}
    The first claim is easily verified as $\log \xi_n^{(t)}(s,\cdot)$ always deviate from $\log \pi_n^{(t)}(\cdot|s)$ by a global constant shift, as long as it holds for $t=0$:
    \begin{align*}
        \log \xi_n^{(t+1)}(s,\cdot) &= \sum_{n'=1}^{N}[W]_{n,n'} \left(\alpha\log \xi_{n'}^{(t)}(s,\cdot) + (1-\alpha)T_n^{(t)}(s,\cdot)/\tau\right)\\
        &=\alpha\sum_{n'=1}^{N}[W]_{n,n'} \left(\alpha\Big(\log \pi_{n'}^{(t)}(s,\cdot) + c_{n'}^{(t)}(s)\vone_{|\A|}\Big) + (1-\alpha)T_n^{(t)}(s,\cdot)/\tau\right)\\
        &=\alpha\sum_{n'=1}^{N}[W]_{n,n'} \left(\alpha\log \pi_{n'}^{(t)}(s,\cdot) + (1-\alpha)T_n^{(t)}(s,\cdot)/\tau\right) - \log z_{n}^{(t)}(s)\vone_{|\A|} + c_n^{(t+1)}(s)\vone_{|\A|}\\
        &=\log \pi_n^{(t+1)}(\cdot|s) + c_n^{(t+1)}(s)\vone_{|\A|},
    \end{align*}
    where $z_n^{(t)}$ is the normalization term (cf. line 5, Algorithm~\ref{alg:EDNPG_exact}) and $\{c_n^{(t)}(s)\}$ are some constants.
    To prove the second claim,
    $\forall t\geq 0, \forall (s,a)\in\S\times\A$, let
    \begin{equation}\label{eq:def_avg_T}
        \overline{T}^{(t)}(s,a)\coloneqq\frac{1}{N}\vone^\top\mT^{(t)}(s,a)\,.
    \end{equation}

    Taking inner product with $\frac{1}{N}\vone$ for both sides of \eqref{eq:Q_tracking} and using the double stochasticity property of $\mW$, we get
    \begin{equation}\label{eq:avg_T}
        \overline{T}^{(t+1)}(s,a)=\overline{T}^{(t)}(s,a)+\widehat{Q}_\tau^{(t+1)}(s,a)-\widehat{Q}_\tau^{(t)}(s,a)\,.
    \end{equation}
    By the choice of $\mT^{(0)}$ (line~2 of Algorithm~\ref{alg:EDNPG_exact}), we have $\overline{T}^{(0)}=\widehat{Q}_\tau^{(0)}$ and hence by induction
    \begin{equation}\label{eq:avg_T=Q}
        \forall t\geq 0: \quad \overline{T}^{(t)}=\widehat{Q}_\tau^{(t)}\,.
    \end{equation}
    This implies
    \begin{align*}
        \log\overline{\xi}^{(t+1)}(s,a) - \alpha \log\overline{\xi}^{(t)}(s,a) &= (1-\alpha) \widehat{Q}_\tau^{(t)}(s,a)/\tau\\
        &= (1-\alpha) \overline{T}^{(t)}(s,a)/\tau\\
        &= \frac{1}{N}\vone^\top\log\vxi^{(t+1)}(s,a) - \alpha\frac{1}{N}\vone^\top\log\vxi^{(t)}(s,a).
    \end{align*}
    Therefore, to prove \eqref{eq:xi_avg_property}, it suffices to verify the claim for $t=0$:
    \begin{align*}
        \frac{1}{N}\vone^\top\log\vxi^{(0)}(s,a)&=\log \norm{\exp \left(Q_\tau^\star(s,\cdot)/\tau\right)}_1+\frac{1}{N}\vone^\top\log\vpi^{(0)}(a|s)-\log\norm{\exp \left(\frac{1}{N}\sum_{n=1}^N\log\pi_n^{(0)}(\cdot|s)\right)}_1\\
        &=\log \norm{\exp \left(Q_\tau^\star(s,\cdot)/\tau\right)}_1+\log\overline{\pi}^{(0)}(a|s)=\log\overline{\xi}^{(0)}(s,a)\,.
    \end{align*} 
    By taking logarithm over both sides of the definition of $\overline{\pi}^{(t+1)}$ (cf. \eqref{eq:update_marl}), we get
    \begin{align}
        \log \overline{\pi}^{(t+1)}(a|s) = \alpha \log \overline{\pi}^{(t)}(a|s) + (1-\alpha)\widehat{Q}^{(t)}(s,a)/\tau - z^{(t)}(s)
        \label{eq:update_pi_avg}
    \end{align}
    for some constant $z^{(t)}(s)$, which deviate from the update rule of $\log \overline{\xi}^{(t+1)}$ by a global constant shift and hence verifies \eqref{eq:prop_auxiliary_avg}.

\subsection{Proof of Lemma~\ref{lm:difference_soft_Q}}


For notational simplicity, we let $Q_\tau^{\xi'}$ and  $Q_\tau^{\xi}$ denote $Q_\tau^{\pi_{\xi'}}$ and  $Q_\tau^{\pi_{\xi}}$, respectively.
From \eqref{eq:Q_V} we immediately know that to bound $\norm{Q_\tau^{\xi'}-Q_\tau^{\xi}}_\infty$, it suffices to control $\big|V_\tau^\xi(s)-V_\tau^{\xi'}(s)\big|$ for each $s\in \mathcal{S}$. By \eqref{eq:V_reg_s} we have
\begin{equation}\label{eq:ineq_V_tau}
    \big|V_\tau^\xi(s)-V_\tau^{\xi'}(s)\big|\leq \big|V^\xi(s)-V^{\xi'}(s)\big|+\tau\big|\mathcal{H}(s,\pi_\xi)-\mathcal{H}(s,\pi_{\xi'})\big|\,,
\end{equation}
so in the following we bound both terms in the RHS of \eqref{eq:ineq_V_tau}.

\paragraph{Step 1: bounding $\big|\mathcal{H}(s,\pi_\xi)-\mathcal{H}(s,\pi_{\xi'})\big|$.} We first bound $\big|\mathcal{H}(s,\pi_\xi)-\mathcal{H}(s,\pi_{\xi'})\big|$ using the idea in the proof of Lemma~14 in \cite{mei2020global}. We let 
\begin{equation}\label{eq:theta_t}
    \xi^{(t)}=\xi+t(\xi'-\xi)\,,\quad \forall t\in \R\,,
\end{equation}
and let $h^{(t)}\in\R^{|\S|}$ be
\begin{equation}\label{eq:h_t}
    \forall s\in\S:\quad h^{(t)}(s)\coloneqq -\sum_{a\in\A}\pi_{\xi^{(t)}}(a|s)\log\pi_{\xi^{(t)}}(a|s)\,.
\end{equation}
Note that $\norm{h^{(t)}}_\infty\leq\log|\A|$.
We also denote $H^{(t)}:\S\rightarrow \R^{|\A|\times|\A|}$ by:
\begin{equation}\label{eq:H_t}
   \forall s\in\S:\quad H^{(t)}(s)\coloneqq\frac{\partial \pi_\xi(\cdot|s)}{\partial \xi}\bigg|_{\xi=\xi^{(t)}}=\diag\{\pi_{\xi^{(t)}}(\cdot|s)\}-\pi_{\xi^{(t)}}(\cdot|s)\pi_{\xi^{(t)}}(\cdot|s)^\top\,,
\end{equation}
then we have
\begin{align}\label{eq:derivative_t}
    \forall s\in\S:\quad \left|\frac{d h^{(t)}(s)}{dt}\right|&=\left|\left\langle\frac{\partial h^{(t)}(s)}{\partial \xi^{(t)}(\cdot|s)},\xi'(s,\cdot)-\xi(s,\cdot)\right\rangle\right|\notag\\
    &=\left|\left\langle H^{(t)}(s)\log \pi_{\xi^{(t)}}(\cdot|s),\xi'(s,\cdot)-\xi(s,\cdot)\right\rangle\right|\notag\\
    & \leq\norm{H^{(t)}(s)\log \pi_{\xi^{(t)}}(\cdot|s)}_1\norm{\xi'(s,\cdot)-\xi(s,\cdot)}_\infty\,,
\end{align}
where
$\frac{\partial h^{(t)}(s)}{\partial \xi^{(t)}(\cdot|s)}$ stands for
$\frac{\partial h^{(t)}(s)}{\partial \xi(\cdot|s)}\big|_{\xi=\xi^{(t)}}$.
The first term in \eqref{eq:derivative_t} is further upper bounded as
\begin{align*}
    \norm{H^{(t)}(s)\log \pi_{\xi^{(t)}}(\cdot|s)}_1&=\sum_{a\in\A}\pi_{\xi^{(t)}}(a|s)\left|\log\pi_{\xi^{(t)}}(a|s)-\pi_{\xi^{(t)}}(\cdot|s)^\top \log\pi_{\xi^{(t)}}(\cdot|s) \right|\\
  &  \leq\sum_{a\in\A}\pi_{\xi^{(t)}}(a|s)\left(\left|\log\pi_{\xi^{(t)}}(a|s)\right|+\left|\pi_{\xi^{(t)}}(\cdot|s)^\top \log\pi_{\xi^{(t)}}(\cdot|s) \right|\right)\\
    &=-2\sum_{a\in\A}\pi_{\xi^{(t)}}(a,s)\log\pi_{\xi^{(t)}}(a|s)\leq 2\log|\A|\,.
\end{align*}

By Lagrange mean value theorem, there exists $t\in(0,1)$ such that
$$\left|h_1(s)-h_0(s)\right|=\left|\frac{d h^{(t)}(s)}{dt}\right|\leq 2\log|\A|\norm{\xi'(s,\cdot)-\xi(s,\cdot)}_\infty\,,$$
where the inequality follows from \eqref{eq:derivative_t} and the above inequality. 
Combining \eqref{eq:entropy_reg_s} with the above inequality, we arrive at
\begin{equation}\label{eq:H_difference}
    \big|\mathcal{H}(s,\pi_\xi)-\mathcal{H}(s,\pi_{\xi'})\big|\leq \frac{2\log|\A|}{1-\gamma}\norm{\xi'-\xi}_\infty\,.
\end{equation}

\paragraph{Step 2: bounding $\big|V^\xi(s)-V^{\xi'}(s)\big|$.} Similar to the previous proof, we bound $\big|V^\xi(s)-V^{\xi'}(s)\big|$ by bounding $\left|\frac{dV^{{\xi^{(t)}}}}{dt}(s)\right|$. By Bellman's consistency equation, the value function of $\pi_{\xi^{(t)}}$ is given by
$$V^{\xi^{(t)}}(s)=\sum_{a\in\A}\pi_{\xi^{(t)}}(a|s)r(s,a)+\gamma\sum_{a}\pi_{\xi_\alpha}(a|s)\sum_{s'\in\S}\mathcal{P}(s'|s,a)V^{\xi^{(t)}}(s')\,,$$
which can be represented in a matrix-vector form as
\begin{equation}\label{eq:V_t_matrix_form}
    V^{\xi^{(t)}_\star}(s)=e_s^\top \mM_t r_t\,,
\end{equation}
where $e_s\in \R^{|\S|}$ is a one-hot vector whose $s$-th entry is 1,
\begin{equation}\label{eq:M_t}
    \mM_t\coloneqq(\mI-\gamma\mP_t)^{-1}\,,
\end{equation}
with $\mP_t\in\R^{|\S|\times|\S|}$ denoting the induced state transition matrix by $\pi_{\xi^{(t)}}$
\begin{equation}\label{eq:P_t}
    \mP_t(s,s')=\sum_{a\in\A}\pi_{\xi^{(t)}}(a|s)\mathcal{P}(s'|s,a)\,,
\end{equation}
and $r_t\in\R^{|\S|}$ is given by
\begin{equation}\label{eq:r_t}
    \forall s\in\S:\quad r_t(s)\coloneqq \sum_{a\in\A}\pi_{\xi^{(t)}}(a|s)r(s,a)\,.
\end{equation}
Taking derivative w.r.t. $t$ in \eqref{eq:V_t_matrix_form}, we obtain \citep{petersen2008matrix}
\begin{equation}\label{eq:V_t_derivative_matrix}
    \frac{dV^{\xi^{(t)}}(s)}{dt}=\gamma\cdot e_s^\top \mM_t\frac{d\mP_t}{dt}\mM_t r_t+e_s^\top \mM_t\frac{dr_t}{dt}\,.
\end{equation}
We now calculate each term respectively. 
\begin{itemize}
\item For the first term, it follows that
\begin{align}
    \left|\gamma\cdot e_s^\top \mM_t\frac{d\mP_t}{dt}\mM_t r_t\right| & \leq \gamma\norm{\mM_t\frac{d\mP_t}{dt}\mM_t r_t}_\infty\notag\\
  &  \leq \frac{\gamma}{1-\gamma}\norm{\frac{d\mP_t}{dt}\mM_t r_t}_\infty\notag\\
  &  \leq \frac{2\gamma}{1-\gamma}
    \norm{\mM_t r_t}_\infty\norm{\xi'-\xi}_\infty \label{eq:P_t_derivative_bound} \\
&   \leq \frac{2\gamma}{(1-\gamma)^2}
    \norm{r_t}_\infty\norm{\xi'-\xi}_\infty\notag\\
 &   \leq \frac{2\gamma}{(1-\gamma)^2}
    \norm{\xi'-\xi}_\infty\,.\label{eq:V_t_derivative_bound1}
\end{align}
where the second and fourth lines use the fact $\|\mM_t \|_1 \leq 1/(1-\gamma)$ \citep[Lemma 10]{li2020breaking}, and the last line follow from 
$$    \norm{r_t}_\infty=\max_{s\in\S} \left |\sum_{a\in\A}\pi_{\xi^{(t)}}(a|s)r(s,a) \right |\leq 1. $$ 
We defer the proof of \eqref{eq:P_t_derivative_bound} to the end of proof.

\item For the second term, it follows that 
\begin{align}
    \left|e_s^\top \mM_t\frac{dr_t}{dt}\right| \leq \frac{1}{1-\gamma}\norm{\frac{dr_t}{dt}}_\infty \leq \frac{1}{1-\gamma}\norm{\xi'-\xi}_\infty\,.\label{eq:V_t_derivative_bound2}
\end{align}
where the first inequality follows again from $\|\mM_t \|_1 \leq 1/(1-\gamma)$, and the second inequality follows from
\begin{align}
   \norm{\frac{dr_t}{dt}}_\infty =\max_{s\in \S} \left|\frac{ dr_t(s)}{dt}\right|&= \max_{s\in \S}  \left|\left\langle\frac{\partial \pi_{\xi^{(t)}}(\cdot|s)^\top r(s,\cdot)}{\partial \xi^{(t)}(s,\cdot)}, \xi'(s,\cdot)-\xi(s,\cdot)\right\rangle\right|\notag\\
    & \leq \max_{s\in \S}   \norm{\frac{\partial \pi_{\xi^{(t)}}(\cdot|s)^\top}{\partial \xi^{(t)}(s,\cdot)}r(s,\cdot)}_1\norm{\xi'(s,\cdot)-\xi(s,\cdot)}_\infty\notag\\
    &= \max_{s\in \S}  \left(\sum_{a\in\A}\pi_{\xi^{(t)}}(a|s)\left|r(s,a)-\pi_{\xi^{(t)}}(\cdot|s)^\top r(s,\cdot)\right|\right)\norm{\xi'(s,\cdot)-\xi(s,\cdot)}_\infty\notag\\
    &  \leq  \max_{s\in \S}  \underbrace{\max_{a\in\A}\left|r(s,a)-\pi_{\xi^{(t)}}(\cdot|s)^\top r(s,\cdot)\right|}_{\leq 1 \text{ since } r(s,a)\in[0,1]}\norm{\xi'(s,\cdot)-\xi(s,\cdot)}_\infty\notag\\
   & \leq  \max_{s\in \S}  \norm{\xi'(s,\cdot)-\xi(s,\cdot)}_\infty =\norm{\xi'-\xi}_\infty.\label{eq:l_inf_drt}
\end{align}
 
\end{itemize}





 Plugging the above two inequalities into \eqref{eq:V_t_derivative_matrix} and using Lagrange mean value theorem, we have
 \begin{equation}\label{eq:difference_V}
     \big|V^\xi(s)-V^{\xi'}(s)\big|\leq \frac{1+\gamma}{(1-\gamma)^2}\norm{\xi'-\xi}_\infty\,.
 \end{equation}
 
\paragraph{Step 3: sum up.} Combining \eqref{eq:difference_V}, \eqref{eq:H_difference} and \eqref{eq:ineq_V_tau}, we have
\begin{equation}\label{eq:difference_V_tau}
    \forall s\in\mathcal{S}:\quad \big|V_\tau^\xi(s)-V_\tau^{\xi'}(s)\big|\leq \frac{1+\gamma+2\tau(1-\gamma)\log|\A|}{(1-\gamma)^2}\norm{\log\pi-\log\pi'}_\infty\,.
\end{equation}

Combining \eqref{eq:difference_V_tau} and \eqref{eq:Q_V}, \eqref{eq:difference_soft_Q} immediately follows.

\paragraph{Proof of \eqref{eq:P_t_derivative_bound}.}
For any vector $x\in\R^{|\S|}$, we have
$$\left[\frac{d\mP_t}{dt}x\right]_{s}=\sum_{s'\in\S}\sum_{a\in\A}\frac{d\pi_{\xi^{(t)}}(a|s)}{dt}\mathcal{P}(s'|s,a)x(s')\,,$$
from which we can bound the $l_\infty$ norm as
\begin{align}
    \norm{\frac{d\mP_t}{dt}x}_\infty & \leq  \max_s\sum_{a\in\A}\sum_{s'\in\S}\mathcal{P}(s'|s,a)\left|\frac{d\pi_{\xi^{(t)}}(a|s)}{dt}\right|\norm{x}_\infty\notag\\
    &= \max_s\sum_{a\in\A}\left|\frac{d\pi_{\xi^{(t)}}(a|s)}{dt}\right|\norm{x}_\infty\notag\\
 &   \leq  2\norm{\xi'-\xi}_\infty\norm{x}_\infty\,\label{eq:l_inf_dPt}
\end{align}
as desired, where the last line follows from the following fact:
\begin{align*}
    \sum_{a\in\A}\left|\frac{d\pi_{\xi^{(t)}}(a|s)}{dt}\right|&=\sum_{a\in\A}\left|\left\langle\frac{\partial\pi_{\xi^{(t)}}(a|s)}{\partial\xi^{(t)}}, \xi'-\xi\right\rangle\right|\\
    &=\sum_{a\in\A}\left|\left\langle\frac{\partial\pi_{\xi^{(t)}}(a|s)}{\partial\xi^{(t)}(s,\cdot)}, \xi'(s,\cdot)-\xi(s,\cdot)\right\rangle\right|\\
    &=\sum_{a\in\A}\pi_{\xi^{(t)}}(a|s)\left|\left(\xi'(s,a)-\xi(s,a)\right)-\pi_{\xi^{(t)}}(\cdot|s)^\top\left(\xi'(s,\cdot)-\xi(s,\cdot)\right)\right|\\
    &\leq\max_a\left|\xi'(s,a)-\xi(s,a)\right|+\left|\pi_{\xi^{(t)}}(\cdot|s)^\top\left(\xi'(s,\cdot)-\xi(s,\cdot)\right)\right|\\
 &   \leq 2\norm{\xi'-\xi}_\infty\,.
\end{align*}

\subsection{Proof of Lemma~\ref{lm:performance_improvement}}
\label{sec:pf_lm_perf_imprv}
        To simplify the notation, we denote
        \begin{equation}\label{eq:delta}
            \delta^{(t)}\coloneqq \widehat{Q}^{(t)}_\tau-\overline{Q}^{(t)}_\tau\,.
        \end{equation}
        We first rearrange the terms of \eqref{eq:update_pi_avg} and obtain
        \begin{equation}
            -\tau\log\overline{\pi}^{(t)}(a|s)+\left(\overline{Q}^{(t)}_\tau(s,a)+\delta^{(t)}(s,a)\right)=\frac{1-\gamma}{\eta}\left(\log\overline{\pi}^{(t+1)}(a|s)-\log\overline{\pi}^{(t)}(a|s)\right)+\frac{1-\gamma}{\eta}z^{(t)}(s)\,.\label{eq:lm_pi_1}
        \end{equation}
    This in turn allows us to express $\overline V_\tau^{(t)}(s_0)$ for any $s_0\in\S$ as follows
    \begin{align}
        \overline V_\tau^{(t)}(s_0)&=\underset{a_0\sim\overline\pi^{(t)}(\cdot|s_0)}{\E}\left[-\tau\log\overline\pi^{(t)}(a_0|s_0)+\overline{Q}^{(t)}_\tau(s_0,a_0)\right]\notag\\
        &=\underset{a_0\sim\overline\pi^{(t)}(\cdot|s_0)}{\E}\left[\frac{1-\gamma}{\eta}z^{(t)}(s_0)\right]+\underset{a_0\sim\overline\pi^{(t)}(\cdot|s_0)}{\E}\left[\frac{1-\gamma}{\eta}\left(\log\overline{\pi}^{(t+1)}(a_0|s_0)-\log\overline{\pi}^{(t)}(a_0|s_0)\right)-\delta^{(t)}(s_0,a_0)\right]\notag\\
        &=\frac{1-\gamma}{\eta}z^{(t)}(s_0)-\frac{1-\gamma}{\eta}\KL{\overline{\pi}^{(t)}(\cdot|s_0)}{\overline{\pi}^{(t+1)}(\cdot|s_0)}-\underset{a_0\sim\overline\pi^{(t)}(\cdot|s_0)}{\E}\left[\delta^{(t)}(s_0,a_0)\right]\notag\\
        &=\underset{a_0\sim\overline\pi^{(t+1)}(\cdot|s_0)}{\E}\left[\frac{1-\gamma}{\eta}z^{(t)}(s_0)\right]-\frac{1-\gamma}{\eta}\KL{\overline{\pi}^{(t)}(\cdot|s_0)}{\overline{\pi}^{(t+1)}(\cdot|s_0)}-\underset{a_0\sim\overline\pi^{(t)}(\cdot|s_0)}{\E}\left[\delta^{(t)}(s_0,a_0)\right]\,,\label{eq:lm_pi_2}
    \end{align}
    where the first identity makes use of \eqref{eq:V_Q}, the second line follows from \eqref{eq:lm_pi_1}. Invoking \eqref{eq:V_Q} again to rewrite the $z(s_0)$ appearing in the first term of \eqref{eq:lm_pi_2}, we reach
    \begin{align}
        &\overline V_\tau^{(t)}(s_0)\notag\\
        &=\underset{a_0\sim\overline\pi^{(t+1)}(\cdot|s_0)}{\E}\left[-\tau\log\overline{\pi}^{(t+1)}(a_0|s_0)+\overline{Q}^{(t)}_\tau(s_0,a_0)+\left(\tau-\frac{1-\gamma}{\eta}\right)\left(\log\overline{\pi}^{(t+1)}(a_0|s_0)-\log\overline{\pi}^{(t)}(a|s)\right)\right]\notag\\
        &\qquad-\frac{1-\gamma}{\eta}\KL{\overline{\pi}^{(t)}(\cdot|s_0)}{\overline{\pi}^{(t+1)}(\cdot|s_0)}-\underset{a_0\sim\overline\pi^{(t)}(\cdot|s_0)}{\E}\left[\delta^{(t)}(s_0,a_0)\right]+\underset{a_0\sim\overline\pi^{(t+1)}(\cdot|s_0)}{\E}\left[\delta^{(t)}(s_0,a_0)\right]\notag\\
        &= \underset{a_0\sim\overline\pi^{(t+1)}(\cdot|s_0), \atop s_1\sim P(\cdot|s_0,a_0)}{\E}\left[-\tau\log\overline{\pi}^{(t+1)}(a_0|s_0)+r(s_0,a_0)+\gamma\overline{V}^{(t)}_\tau(s_0)\right]\notag\\
        &\qquad-\left(\frac{1-\gamma}{\eta}-\tau\right)\KL{\overline{\pi}^{(t+1)}(\cdot|s_0)}{\overline{\pi}^{(t)}(\cdot|s_0)}-\frac{1-\gamma}{\eta}\KL{\overline{\pi}^{(t)}(\cdot|s_0)}{\overline{\pi}^{(t+1)}(\cdot|s_0)}\notag\\
        &\qquad-\underset{a_0\sim\overline\pi^{(t)}(\cdot|s_0)}{\E}\left[\delta^{(t)}(s_0,a_0)\right]+\underset{a_0\sim\overline\pi^{(t+1)}(\cdot|s_0)}{\E}\left[\delta^{(t)}(s_0,a_0)\right]\,.\label{eq:lm_pi_3}
    \end{align}

    Note that for any $(s_0,a_0)\in\S\times \A$, we have
    \begin{align}
        &-\underset{a_0\sim\overline\pi^{(t)}(\cdot|s_0)}{\E}\left[\delta^{(t)}(s_0,a_0)\right]+\underset{a_0\sim\overline\pi^{(t+1)}(\cdot|s_0)}{\E}\left[\delta^{(t)}(s_0,a_0)\right]\notag\\
        &=\sum_{a_0\in\A}\left(\overline\pi^{(t+1)}(a_0|s_0)-\overline\pi^{(t)}(a_0|s_0)\right)\delta^{(t)}(s_0,a_0)\notag\\
        &\leq \normbig{\overline\pi^{(t+1)}(\cdot|s_0)-\overline\pi^{(t)}(\cdot|s_0)}_1\normbig{\delta^{(t)}}_\infty\leq 2 \normbig{\delta^{(t)}}_\infty\,.\label{eq:bound_delta}
    \end{align}

    To finish up, applying \eqref{eq:lm_pi_3} recursively to expand $\overline V_\tau^{(t)}(s_i)$, $i\geq1$ and making use of \eqref{eq:bound_delta}, we arrive at
    \begin{align}
        &\overline V_\tau^{(t)}(s_0)\notag\\
        &\leq \sum_{i=1}^\infty\gamma^i \cdot 2\norm{\delta^{(t)}}_\infty+\underset{a_i\sim\overline\pi^{(t+1)}(\cdot|s_i), \atop s_{i+1}\sim P(\cdot|s_i,a_i),\forall i\geq 0}{\E}\Bigg[\sum_{i=1}^\infty\gamma^i\left\{r(s_i,a_i)-\tau\log\overline\pi^{(t+1)}(a_i|s_i)\right\}\notag\\
        &\qquad-\sum_{i=1}^\infty\gamma^i\left\{\left(\frac{1-\gamma}{\eta}-\tau\right)\KL{\overline{\pi}^{(t+1)}(\cdot|s_i)}{\overline{\pi}^{(t)}(\cdot|s_i)}+\frac{1-\gamma}{\eta}\KL{\overline{\pi}^{(t)}(\cdot|s_i)}{\overline{\pi}^{(t+1)}(\cdot|s_i)}\right\}\Bigg]\notag\\
        &= \frac{2}{1-\gamma}\norm{\delta^{(t)}}_\infty+\overline V_\tau^{(t+1)}(s_0)\notag\\
        &\qquad-\underset{s\sim d_{s_0}^{\overline\pi^{(t+1)}}}{\E}\left[\left(\frac{1}{\eta}-\frac{\tau}{1-\gamma}\right)\KL{\overline{\pi}^{(t+1)}(\cdot|s_i)}{\overline{\pi}^{(t)}(\cdot|s_i)}+\frac{1}{\eta}\KL{\overline{\pi}^{(t)}(\cdot|s_i)}{\overline{\pi}^{(t+1)}(\cdot|s_i)}\right]\,,\label{eq:lm_ip_final}
    \end{align}
    where the third line follows since $\overline V_\tau^{(t+1)}$ can be viewed as the value function of $\overline\pi^{(t+1)}$ with adjusted rewards $\overline r^{(t+1)}(s,a)\coloneqq r(s,a)-\tau\log\overline\pi^{(t+1)}(s|a)$. And \eqref{eq:V_improvement} follows immediately from the above inequality \eqref{eq:lm_ip_final}. By \eqref{eq:Q_V} we can easily see that \eqref{eq:Q_improvement} is a consequence of \eqref{eq:V_improvement}.

\subsection{Proof of Lemma~\ref{lm:performance_improvement_0}}

\label{sec:pf_lm_perf_imprv_0}

We first introduce the famous performance difference lemma which will be used in our proof.
\begin{lm}[Performance difference lemma]\label{lm:performance_difference}
    For any policy $\pi,\pi'\in\Delta(\A)^\S$ and $\rho\in\Delta(\S)$, we have
    \begin{align}
        V^\pi(\rho)-V^{\pi'}(\rho)&=\frac{1}{1-\gamma}\E_{(s,a)\sim \bar d^\pi}\left[A^{\pi'}(s,a)\right]\label{eq:V_diff_A}\\
        &=\frac{1}{1-\gamma}\E_{s\sim d^\pi}\left[\langle Q^{\pi'}(s), \pi(s)-\pi'(s)\rangle\right].\label{eq:V_diff_Q}
    \end{align}
\end{lm}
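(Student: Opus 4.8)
The plan is to establish the first identity \eqref{eq:V_diff_A} via the classical telescoping argument of Kakade and Langford, and then to derive \eqref{eq:V_diff_Q} as an immediate algebraic consequence. Throughout, write $A^{\pi'}(s,a) \coloneqq Q^{\pi'}(s,a) - V^{\pi'}(s)$ for the advantage function of $\pi'$, and recall that every series below converges absolutely since $r \in [0,1]$ and $\gamma \in [0,1)$, which justifies the interchanges of summation and expectation.

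First I would fix an initial state $s_0 = s$ and consider a trajectory $\{(s_t,a_t)\}_{t\ge 0}$ generated by following $\pi$, i.e., $a_t \sim \pi(\cdot|s_t)$ and $s_{t+1}\sim P(\cdot|s_t,a_t)$. The key step is to insert a telescoping sequence of $V^{\pi'}$ terms: starting from $V^\pi(s) = \E_\pi[\sum_{t=0}^\infty \gamma^t r(s_t,a_t)]$, I would write
\[
V^\pi(s) - V^{\pi'}(s) = \E_\pi\Big[\sum_{t=0}^\infty \gamma^t\big(r(s_t,a_t) + \gamma V^{\pi'}(s_{t+1}) - V^{\pi'}(s_t)\big)\Big],
\]
using that $\E_\pi[\sum_{t=0}^\infty \gamma^t(\gamma V^{\pi'}(s_{t+1}) - V^{\pi'}(s_t))] = -V^{\pi'}(s_0)$ telescopes. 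Taking the conditional expectation over $s_{t+1}\sim P(\cdot|s_t,a_t)$ and invoking $Q^{\pi'}(s_t,a_t) = r(s_t,a_t) + \gamma\,\E_{s_{t+1}}[V^{\pi'}(s_{t+1})]$ collapses each summand to $Q^{\pi'}(s_t,a_t) - V^{\pi'}(s_t) = A^{\pi'}(s_t,a_t)$. Recognizing the resulting discounted sum $\sum_{t=0}^\infty \gamma^t\,\E[A^{\pi'}(s_t,a_t)]$ as $\frac{1}{1-\gamma}\E_{(s',a')\sim \bar d^\pi_s}[A^{\pi'}(s',a')]$ — by the very definition of the state-action visitation distribution in \eqref{eq:sa_vis} — and finally averaging over $s\sim\rho$ yields \eqref{eq:V_diff_A} (with the understanding that $\bar d^\pi$ abbreviates $\bar d^\pi_\rho$).

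To obtain \eqref{eq:V_diff_Q}, I would factor $\bar d^\pi_\rho(s,a) = d^\pi_\rho(s)\pi(a|s)$ (cf.\ \eqref{eq:sa_vis}) and evaluate the inner expectation over actions: since $\E_{a\sim\pi(\cdot|s)}[A^{\pi'}(s,a)] = \langle Q^{\pi'}(s), \pi(s)\rangle - V^{\pi'}(s)$ and $V^{\pi'}(s) = \langle Q^{\pi'}(s), \pi'(s)\rangle$, the per-state advantage equals $\langle Q^{\pi'}(s), \pi(s)-\pi'(s)\rangle$, which gives the second identity. This argument is entirely elementary; there is no substantial obstacle, the only point requiring minor care being the justification of the telescoping and the sum-expectation interchange, both of which follow from absolute convergence in the bounded-reward, discounted setting.
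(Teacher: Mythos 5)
Your proof is correct. The paper itself gives no argument for this lemma---it simply defers to \citet[Lemma~3]{yuan2022linear}---so your telescoping derivation supplies exactly the content that the citation outsources. Your argument is the classical Kakade--Langford one: insert the telescoping sum $\sum_{t\ge 0}\gamma^t\bigl(\gamma V^{\pi'}(s_{t+1})-V^{\pi'}(s_t)\bigr)=-V^{\pi'}(s_0)$ along a trajectory of $\pi$, collapse each summand to $A^{\pi'}(s_t,a_t)$ via the Bellman identity $Q^{\pi'}(s,a)=r(s,a)+\gamma\,\E_{s'\sim P(\cdot|s,a)}[V^{\pi'}(s')]$, and recognize the discounted occupancy measure from \eqref{eq:sa_vis}; the second identity then follows from the factorization $\bar d^\pi_\rho(s,a)=d^\pi_\rho(s)\pi(a|s)$ together with $V^{\pi'}(s)=\langle Q^{\pi'}(s),\pi'(s)\rangle$. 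All interchanges of limit and expectation are indeed licensed by boundedness of rewards and $\gamma<1$, and the vanishing tail $\gamma^{T+1}\E[V^{\pi'}(s_{T+1})]\to 0$ justifying the telescoping is implicit but immediate for the same reason. The one benefit of your route over the paper's is self-containedness; it matches, in substance, the proof in the reference the paper points to.
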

\begin{proof}
    See Lemma~3 in \citet{yuan2022linear}.
\end{proof}

    For all $t\geq 0$, we define the advantage function $\overline{A}^{(t)}$ as:
    \begin{equation}\label{eq:advantage_avg}
        \forall (s,a)\in\S\times\A:\quad\overline{A}^{(t)}(s,a)\coloneqq \overline{Q}^{(t)}(s,a)-\overline{V}^{(t)}(s)\,.
    \end{equation}
    Then for Alg.~\ref{alg:DNPG_exact}, the update rule of $\overline{\pi}$ (Eq.~\eqref{eq:update_pi_avg}) can be written as
    \begin{equation}\label{eq:update_pi_avg_0}
        \log\overline{\pi}^{(t+1)}(a|s)=\log\overline{\pi}^{(t)}(a|s)+\frac{\eta}{1-\gamma}\left(\overline A^{(t)}(s,a)+\delta^{(t)}(s,a)\right)-\log\widehat{z}^{(t)}(s)\,,
    \end{equation}
    where $\delta^{(t)}$ is defined in \eqref{eq:delta} and
    \begin{align}
        \log\widehat{z}^{(t)}(s)&=\log\sum_{a'\in\A}\overline{\pi}^{(t)}(a'|s)\exp\left\{\frac{\eta}{1-\gamma}\left(\overline A^{(t)}(s,a')+\delta^{(t)}(s,a')\right)\right\}\notag\\
        &\geq \sum_{a'\in\A}\overline{\pi}^{(t)}(a'|s)\log\exp\left\{\frac{\eta}{1-\gamma}\left(\overline A^{(t)}(s,a')+\delta^{(t)}(s,a')\right)\right\}\notag\\
        &= \frac{\eta}{1-\gamma}\sum_{a'\in\A}\overline{\pi}^{(t)}(a'|s)\left(\overline A^{(t)}(s,a')+\delta^{(t)}(s,a')\right)\notag\\
        &= \frac{\eta}{1-\gamma}\sum_{a'\in\A}\overline{\pi}^{(t)}(a'|s)\delta^{(t)}(s,a')\geq -\frac{\eta}{1-\gamma}\norm{\delta^{(t)}}_\infty\,,\label{eq:log_hat_z}
    \end{align}
    where the first inequality follows by Jensen's inequality on the concave function $\log x$ and the last equality uses $\sum_{a'\in\A}\overline{\pi}^{(t)}(a'|s)\overline A^{(t)}(s,a')=0$.

    For all starting state distribution $\mu$, we use $d^{(t+1)}$ as shorthand for $d_\mu^{\overline\pi^{(t+1)}}$, the performance difference lemma (Lemma~\ref{lm:performance_difference}) implies:
    \begin{align}
        &\overline{V}^{(t+1)}(\mu)-\overline{V}^{(t)}(\mu)\notag\\
        &=\frac{1}{1-\gamma}\mathbb{E}_{s\sim d^{(t+1)}}\sum_{a\in\A}\overline\pi^{(t+1)}(a|s)\left(\overline A^{(t)}(s,a)+\delta^{(t)}(s,a)\right)-\frac{1}{1-\gamma}\mathbb{E}_{s\sim d^{(t+1)}}\mathbb{E}_{a\sim\overline\pi^{(t+1)}(\cdot|s)}\left[\delta^{(t)}(s,a)\right]\notag\\
        &=\frac{1}{\eta}\mathbb{E}_{s\sim d^{(t+1)}} \sum_{a\in\A}\overline\pi^{(t+1)}(a|s)\log\frac{\overline\pi^{(t+1)}(a|s)\widehat z^{(t)}(s)}{\overline\pi^{(t)}(a|s)}-\frac{1}{1-\gamma}\mathbb{E}_{s\sim d^{(t+1)}}\mathbb{E}_{a\sim\overline\pi^{(t+1)}(\cdot|s)}\left[\delta^{(t)}(s,a)\right]\notag\\
        &= \frac{1}{\eta}\mathbb{E}_{s\sim d^{(t+1)}} \KL{\overline\pi^{(t+1)}(\cdot|s)}{\overline\pi^{(t)}(\cdot|s)}+\frac{1}{\eta}\mathbb{E}_{s\sim d^{(t+1)}}\log\widehat z^{(t)}(s)-\frac{1}{1-\gamma}\mathbb{E}_{s\sim d^{(t+1)}}\mathbb{E}_{a\sim\overline\pi^{(t+1)}(\cdot|s)}\left[\delta^{(t)}(s,a)\right]\notag\\
        &\geq \frac{1}{\eta}\mathbb{E}_{s\sim d^{(t+1)}}\left(\log\widehat z^{(t)}(s)+\frac{\eta}{1-\gamma}\normbig{\delta^{(t)}}_\infty\right)-\frac{2}{1-\gamma}\normbig{\delta^{(t)}}_\infty\,,\notag
    \end{align}
    from which we can see that
    \begin{equation}\label{eq:delta_V_0_bound}
        \overline{V}^{(t+1)}(\mu)-\overline{V}^{(t)}(\mu)\geq -\frac{2}{1-\gamma}\normbig{\delta^{(t)}}_\infty\,,
    \end{equation}
    where we use \eqref{eq:log_hat_z}, and that
    \begin{equation}\label{eq:delta_V_0_ineq}
        \overline{V}^{(t+1)}(\mu)-\overline{V}^{(t)}(\mu)\geq \frac{1-\gamma}{\eta}\mathbb{E}_{s\sim \mu}\left(\log\widehat z^{(t)}(s)+\frac{\eta}{1-\gamma}\normbig{\delta^{(t)}}_\infty\right)-\frac{2}{1-\gamma}\normbig{\delta^{(t)}}_\infty\,,
    \end{equation}
    which follows from $d^{(t+1)}=d_\mu^{\overline\pi^{(t+1)}}\geq (1-\gamma)\mu$ and the fact that $\log\widehat z^{(t)}(s)+\frac{\eta}{1-\gamma}\normbig{\delta^{(t)}}_\infty\geq 0$ (by \eqref{eq:log_hat_z}).

    For any fixed $\rho$, we use $d^\star$ as shorthand for $d_\rho^{\pi^\star}$. By the performance difference lemma (Lemma~\ref{lm:performance_difference}),
    \begin{align}
        &V^\star(\rho)-\overline V^{(t)}(\rho)\notag\\
        &=\frac{1}{1-\gamma}\mathbb{E}_{s\sim d^\star}\sum_{a\in\A}\pi^\star(a|s)\left(\overline A^{(t)}(s,a)+\delta^{(t)}(s,a)\right)-\frac{1}{1-\gamma}\mathbb{E}_{s\sim d^\star}\mathbb{E}_{a\sim\pi^\star(\cdot|s)}\left[\delta^{(t)}(s,a)\right]\notag\\
        &= \frac{1}{\eta}\mathbb{E}_{s\sim d^\star} \sum_{a\in\A}\pi^\star(a|s)\log\frac{\overline\pi^{(t+1)}(a|s)\widehat z^{(t)}(s)}{\overline\pi^{(t)}(a|s)}-\frac{1}{1-\gamma}\mathbb{E}_{s\sim d^\star}\mathbb{E}_{a\sim\pi^\star(\cdot|s)}\left[\delta^{(t)}(s,a)\right]\notag\\
        &= \frac{1}{\eta}\mathbb{E}_{s\sim d^\star} \left(\KL{\pi^\star(\cdot|s)}{\overline\pi^{(t)}(\cdot|s)}-\KL{\pi^\star(\cdot|s)}{\overline\pi^{(t+1)}(\cdot|s)}+\log\widehat z^{(t)}(s)\right)-\frac{1}{1-\gamma}\mathbb{E}_{s\sim d^\star}\mathbb{E}_{a\sim\pi^\star(\cdot|s)}\left[\delta^{(t)}(s,a)\right]\notag\\
        &\leq \frac{1}{\eta}\mathbb{E}_{s\sim d^\star} \left(\KL{\pi^\star(\cdot|s)}{\overline\pi^{(t)}(\cdot|s)}-\KL{\pi^\star(\cdot|s)}{\overline\pi^{(t+1)}(\cdot|s)}+\left(\log\widehat z^{(t)}(s)+\frac{\eta}{1-\gamma}\normbig{\delta^{(t)}}_\infty\right)\right)\,,\label{eq:V_diff_opt_0}
    \end{align}
    where we use \eqref{eq:update_pi_avg_0} in the second equality.

    By applying \eqref{eq:delta_V_0_ineq} with $\mu=d^\star$ as the initial state distribution, we have 
    \begin{equation*}
        \frac{1}{\eta}\mathbb{E}_{s\sim \mu}\Big(\log\widehat z^{(t)}(s)+\frac{\eta}{1-\gamma}\normbig{\delta^{(t)}}_\infty\Big)\leq \frac{1}{1-\gamma}\Big(\overline{V}^{(t+1)}(d^\star)-\overline{V}^{(t)}(d^\star)\Big)
        +\frac{2}{(1-\gamma)^2}\normbig{\delta^{(t)}}_\infty\,.
    \end{equation*}
    
    Plugging the above equation into \eqref{eq:V_diff_opt_0}, we obtain
    \begin{align*}
        V^\star(\rho)-\overline V^{(t)}(\rho)&\leq \frac{1}{\eta}\mathbb{E}_{s\sim d^\star} \left(\KL{\pi^\star(\cdot|s)}{\overline\pi^{(t)}(\cdot|s)}-\KL{\pi^\star(\cdot|s)}{\overline\pi^{(t+1)}(\cdot|s)}\right)\\
        &\qquad+\frac{1}{1-\gamma}\left(\overline{V}^{(t+1)}(d^\star)-\overline{V}^{(t)}(d^\star)\right)
        +\frac{2}{(1-\gamma)^2}\normbig{\delta^{(t)}}_\infty\,,
    \end{align*}
    which gives Lemma~\ref{lm:performance_improvement_0}.

\section{Proof of key lemmas for FedNAC}

\subsection{Proof of Lemma~\ref{lm:L-Q}}\label{sec_app:pf_lm_L_Q} 

  For notational simplicity we let $V^\xi,V^{\xi'}$ denote $V^{f_\xi},V^{f_{\xi'}}$, respectively. Same as in Lemma~\ref{lm:difference_soft_Q}, we define $\xi^{(t)}=\xi+t(\xi'-\xi)$ and define $\mP_t, \mM_t, r_t$ by replacing $\pi_{\xi^{(t)}}$ with $f_{\xi^{(t)}}$ in \eqref{eq:P_t},\eqref{eq:M_t} and \eqref{eq:r_t}, respectively. 
  Define $$\bar\phi_\xi(s,a)=\phi(s,a)-\E_{a'\sim f_{\xi^{(t)}}}[\phi(s,a')],$$ then we have
  \begin{equation}\label{eq:df_xi}
      \frac{\partial f_\xi(a|s)}{\partial \xi}=f_\xi(a|s)\bar\phi_\xi(s,a)\,.
  \end{equation}
  Analogous to \eqref{eq:l_inf_dPt}, we have
  \begin{align}
    \norm{\frac{d\mP_t}{dt}x}_\infty & \leq  \max_s\sum_{a\in\A}\sum_{s'\in\S}\mathcal{P}(s'|s,a)\left|\frac{d\pi_{\xi^{(t)}}(a|s)}{dt}\right|\norm{x}_\infty\notag\\
    &= \max_s\sum_{a\in\A}\left|\frac{d\pi_{\xi^{(t)}}(a|s)}{dt}\right|\norm{x}_\infty\notag\\
 &   \leq  2C_\phi\norm{\xi'-\xi}_2\norm{x}_\infty\,,\notag
\end{align}
where the last line  is due to
\begin{align*}
    \sum_{a\in\A}\left|\frac{df_{\xi^{(t)}}(a|s)}{dt}\right|&=\sum_{a\in\A}\left|\left\langle\frac{\partial f_{\xi^{(t)}}(a|s)}{\partial\xi^{(t)}}, \xi'-\xi\right\rangle\right|\\
    &=\sum_{a\in\A}f_{\xi^{(t)}}(a|s)\left|\langle\bar\phi_\xi(s,a),\xi'-\xi\rangle\right|\\
    &\leq\sum_{a\in\A}f_{\xi^{(t)}}(a|s)\norm{\bar\phi_\xi(s,a)}_2\norm{\xi'-\xi}_2\\
 &   \leq 2C_\phi\norm{\xi'-\xi}_\infty\,.
\end{align*}

Same as \eqref{eq:V_t_derivative_matrix} in Lemma~\ref{lm:difference_soft_Q}, we have
\begin{equation}\label{eq:V_t_derivative_matrix_fa}
    \frac{dV^{\xi^{(t)}}(s)}{dt}=\gamma\cdot e_s^\top \mM_t\frac{d\mP_t}{dt}\mM_t r_t+e_s^\top \mM_t\frac{dr_t}{dt}\,.
\end{equation}
And similar to \eqref{eq:l_inf_drt}, we deduce
\begin{align}
   \norm{\frac{dr_t}{dt}}_\infty =\max_{s\in \S} \left|\frac{ dr_t(s)}{dt}\right|&= \max_{s\in \S}  \left|\left\langle\frac{\partial f_{\xi^{(t)}}(\cdot|s)^\top r(s,\cdot)}{\partial \xi^{(t)}}, \xi'-\xi\right\rangle\right|\notag\\
    &= \left|\langle\sum_{a\in\A}f_\xi(a|s)\bar\phi_\xi(s,a)r(s,a),\xi'-\xi\rangle\right|\notag\\
    &= \sum_{a\in\A}f_\xi(a|s)r(s,a)\left|\langle\bar\phi_\xi(s,a),\xi'-\xi\rangle\right|\notag\\
    &\leq 2C_\phi\norm{\xi'-\xi}_2\,,\notag
\end{align}
which gives
\begin{align}
    \left|e_s^\top \mM_t\frac{dr_t}{dt}\right| \leq \frac{1}{1-\gamma}\norm{\frac{dr_t}{dt}}_\infty \leq \frac{2C_\phi}{1-\gamma}\norm{\xi'-\xi}_2\,.\label{eq:V_t_derivative_bound2_fa}
\end{align}

Following the same steps in \eqref{eq:V_t_derivative_bound1}, we deduce
\begin{align}
    \left|\gamma\cdot e_s^\top \mM_t\frac{d\mP_t}{dt}\mM_t r_t\right| 
    \leq \frac{2\gamma C_\phi}{(1-\gamma)^2}
    \norm{\xi'-\xi}_2\,.\label{eq:V_t_derivative_bound1_fa}
\end{align}

Combining the above two expressions \eqref{eq:V_t_derivative_bound2_fa} and \eqref{eq:V_t_derivative_bound1_fa} with \eqref{eq:V_t_derivative_matrix_fa}, we deduce
\begin{equation}\label{eq:V_diff_fa}
    |V^\xi(s)-V^{\xi'}(s)|\leq \frac{2C_\phi (1+\gamma)}{(1-\gamma)^2}\norm{\xi'-\xi}_2\,,
\end{equation}
which implies
\begin{equation}
    \forall (s,a)\in\S\times\A:\quad |Q^\xi(s,a)-Q^{\xi'}(s,a)|\leq\frac{2C_\phi \gamma(1+\gamma)}{(1-\gamma)^2}\norm{\xi'-\xi}_2\,.
\end{equation}

\subsection{Proof of Lemma~\ref{lm:performance_improvement_fednac}}\label{sec_app:pf_lm_improve_fednac}
This proof is inspired by the proof of Theorem~1 in~\citet{yuan2022linear}. To give the proof, we first introduce the following three-point descent lemma.
\begin{lm}[Three-point descent lemma, Lemma 6 in \citet{xiao2022convergence}]\label{lm:3_point}
    Suppose that $\mathcal{C}\subset\R^m$ is a
closed convex set, $g:\mathcal{C}\rightarrow\R$ is a proper, closed, convex function, $D_h(\cdot,\cdot)$ is the Bregman divergence
generated by a function $h$ of Lengendre type and rint dom$h\cap\mathcal{C}\ne\emptyset$. For any $x\in \text{rint} \text{dom}h$, let
$$x^+\in\arg\min_{u\in\text{dom} h\cap \mathcal{C}}\{f(u)+D_h(u,x)\}\,,$$
then $x^+\in\text{dom} h\cap \mathcal{C}$ and for any $u\in\text{dom} h\cap \mathcal{C}$, it holds that
\begin{equation}\label{eq:3_point_ineq}
    f(x^+)+D_h(x^+,x)\leq f(u)+D_h(u,x)-D_h(u,x^+)\,.
\end{equation}
\end{lm}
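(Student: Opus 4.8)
The plan is to combine three standard ingredients: an exact algebraic identity relating the three Bregman divergences appearing in the claim, the first-order optimality condition satisfied by the minimizer $x^+$, and the subgradient inequality encoding convexity of the objective function (which I denote $f$, identifying it with the $g$ in the hypothesis).

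First I would record the \emph{three-point identity}: for every $u\in\mathrm{dom}\,h$,
\[
    D_h(u,x)-D_h(u,x^+)-D_h(x^+,x)=\langle \nabla h(x^+)-\nabla h(x),\, u-x^+\rangle .
\]
This is purely mechanical — expand each term using $D_h(a,b)=h(a)-h(b)-\langle\nabla h(b),\,a-b\rangle$ and cancel the $h(u)$, $h(x)$, and $h(x^+)$ contributions. It requires only that $\nabla h(x)$ and $\nabla h(x^+)$ be well defined, which is where the Legendre-type hypothesis enters.

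Next I would derive the optimality condition at $x^+$. Because $h$ is of Legendre type and $\mathrm{rint}\,\mathrm{dom}\,h\cap\mathcal{C}\neq\emptyset$, the essential-smoothness part of the Legendre property forces the minimizer of the convex map $u\mapsto f(u)+D_h(u,x)$ over $\mathrm{dom}\,h\cap\mathcal{C}$ to lie in $\mathrm{rint}\,\mathrm{dom}\,h$, so $\nabla h(x^+)$ exists and $x^+\in\mathrm{dom}\,h\cap\mathcal{C}$ as asserted. Writing the subdifferential of the objective as $\partial f(x^+)+\nabla h(x^+)-\nabla h(x)$, the first-order condition for constrained minimization over the convex set $\mathcal{C}$ yields a subgradient $s\in\partial f(x^+)$ with
\[
    \langle s+\nabla h(x^+)-\nabla h(x),\, u-x^+\rangle\geq 0\qquad\text{for all } u\in\mathrm{dom}\,h\cap\mathcal{C}.
\]

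Finally I would close the argument. Convexity of $f$ gives $f(u)\geq f(x^+)+\langle s,\,u-x^+\rangle$; lower-bounding $\langle s,\,u-x^+\rangle$ via the displayed optimality inequality by $-\langle\nabla h(x^+)-\nabla h(x),\,u-x^+\rangle$ and rewriting the latter through the three-point identity produces exactly $f(x^+)+D_h(x^+,x)\leq f(u)+D_h(u,x)-D_h(u,x^+)$. The one genuinely delicate point — and the main obstacle — is the optimality step in this nonsmooth, constrained, extended-real-valued setting: one must argue that $x^+$ avoids the boundary of $\mathrm{dom}\,h$ (so that $\nabla h(x^+)$ is available) and that the subdifferential sum rule for $f+D_h(\cdot,x)$ applies, both of which rest on the Legendre-type assumption together with the relative-interior condition; the remaining algebra is routine.
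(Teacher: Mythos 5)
Your proof is correct. Note, however, that the paper never proves this lemma itself: it is imported wholesale as Lemma~6 of \citet{xiao2022convergence} (a variant of the classical Chen--Teboulle three-point lemma), so there is no internal argument to compare against. Your route --- the Bregman three-point identity, the first-order optimality condition at $x^+$ (with the Legendre-type and relative-interior hypotheses guaranteeing that $x^+$ avoids the boundary of $\dom h$, so $\nabla h(x^+)$ exists and the subdifferential sum rule applies), followed by the subgradient inequality for the convex objective --- is exactly the standard proof used in that cited source, including its honest flagging of the one delicate step.
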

 
By the update rule~\eqref{eq:update_theta_mean} and the parameterization~\eqref{eq:param_fa} we know know that
$$\forall (s,a)\in\S\times\A:\quad\bar f^{(t+1)}(a|s)=\frac{1}{Z^{(t)}(s)}f^{(t)}(a|s)\exp\left(\alpha\phi^\top(s,a)\hat w^{(t)}\right),$$
where $Z^{(t)}(s)$ is a normalization coefficient to ensure $\sum_{a\in\A}f^{(t+1)}(s,a)=1$ for each $s\in\S$. Note that the above $\pi^{(t+1)}$ could also be obtained by a mirror descent update:
\begin{equation}\label{eq:mirror_descent}
    \forall s\in\S:\quad f^{(t+1)}(\cdot|s)=\arg\min_{g\in\Delta(\A)}\left\{-\alpha\langle\Phi(s)\hat w^{(t)},g\rangle+\KL{g}{f^{(t)}(\cdot|s)} \right\}\,,
\end{equation}
where $\Phi(s)\in\R^{|\A|\times p}$ is a matrix with rows $\phi^\top(s,a)\in\R^p$ for $a\in\A$.

We apply the three-point descent lemma (cf.~Lemma~\ref{lm:3_point}) with $\mathcal{C}=\Delta(\A)$, $f=-\alpha\langle\Phi(s)\hat w^{(t)},\cdot\rangle$ and $h:\Delta(\A)\rightarrow \R$ is the negative entropy with $h(q)=\sum_{a\in\A}q(a)\log q(a)$ and deduce that for any $q\in\Delta(\A)$, we have
$$-\alpha\langle\Phi(s)\hat w^{(t)},\bar f^{(t+1)}(\cdot|s)\rangle+D\left(\bar f^{(t+1)}(\cdot|s), \bar f^{(t)}(\cdot|s)\right)\leq -\alpha\langle\Phi(s)\hat w^{(t)}, q\rangle+D\left(q, \bar f^{(t)}(\cdot|s)\right)-D\left(q, \bar f^{(t+1)}(\cdot|s)\right)\,.$$
Rearranging terms and dividing both sides by $-\alpha$, we obtain
\begin{equation}\label{eq:intermediate_3_points}
    \langle\Phi(s)\hat w^{(t)},\bar f^{(t+1)}(\cdot|s)-q\rangle-\frac{1}{\alpha}D\left(\bar f^{(t+1)}(\cdot|s), \bar f^{(t)}(\cdot|s)\right)\geq -\frac{1}{\alpha}D\left(q, \bar f^{(t)}(\cdot|s)\right)+\frac{1}{\alpha}D\left(q, \bar f^{(t+1)}(\cdot|s)\right)\,.
\end{equation}

Let $q=\bar f^{(t)}(\cdot|s)$ and $\pi^\star(\cdot|s)$,resp., we have the following two inequalities:
\begin{align}
     \langle\Phi(s)\hat w^{(t)},\bar f^{(t+1)}(\cdot|s)-\bar f^{(t)}(\cdot|s)\rangle\geq \frac{1}{\alpha}D\left(\bar f^{(t+1)}(\cdot|s), \bar f^{(t)}(\cdot|s)\right)+&\frac{1}{\alpha}D\left(\bar f^{(t)}(\cdot|s), \bar f^{(t+1)}(\cdot|s)\right)\geq 0\,.\label{eq:q=bar_f_t}\\
     \langle\Phi(s)\hat w^{(t)},\bar f^{(t+1)}(\cdot|s)-\bar f^{(t)}(\cdot|s)\rangle+\langle\Phi(s)\hat w^{(t)},\bar f^{(t)}(\cdot|s)-\pi^\star(\cdot|s)\rangle&\geq -\frac{1}{\alpha}D\left(\pi^\star(\cdot|s), \bar f^{(t)}(\cdot|s)\right)+\frac{1}{\alpha}D\left(\pi^\star(\cdot|s), \bar f^{(t+1)}(\cdot|s)\right)\,.\label{eq:q=pi_star}
\end{align}

Taking expectation w.r.t. distribution $d^\star$ on both sides of \eqref{eq:q=pi_star}, we arrive at
\begin{equation}\label{eq:q=pi_star_exp}
    \E_{s\sim d^\star}\left[\langle\Phi(s)\hat w^{(t)},\bar f^{(t+1)}(\cdot|s)-\bar f^{(t)}(\cdot|s)\rangle\right]+\E_{s\sim d^\star}\left[\langle\Phi(s)\hat w^{(t)},\bar f^{(t)}(\cdot|s)-\pi^\star(\cdot|s)\rangle\right]\geq \frac{1}{\alpha}(D_\star^{(t+1)}-D_\star^{(t)})\,.
\end{equation}

To simplify the notation we let $\bar Q^{(t)}$ and $\bar V^{(t)}$ denote $Q^{\bar f^{(t)}}$ and $V^{\bar f^{(t)}}$, respectively. Note that the first expectation in the above expression \eqref{eq:q=pi_star_exp} could be upper bounded as follows:
\begin{align}
    &\E_{s\sim d^\star}\left[\langle\Phi(s)\hat w^{(t)},\bar f^{(t+1)}(\cdot|s)-\bar f^{(t)}(\cdot|s)\rangle\right]\notag\\
    &=\sum_{s\in\S}d^\star(s)\langle\Phi(s)\hat w^{(t)}, \bar f^{(t+1)}(\cdot|s)-\bar f^{(t)}(\cdot|s)\rangle\notag\\
    &=\sum_{s\in\S}\frac{d^\star(s)}{d^{\bar f^{(k+1)}}(s)}d^{\bar f^{(k+1)}}(s)\langle\Phi(s)\hat w^{(t)}, \bar f^{(t+1)}(\cdot|s)-\bar f^{(t)}(\cdot|s)\rangle\notag\\
    &\leq\vartheta_{\rho}\sum_{s\in\S}d^{\bar f^{(k+1)}}(s)\langle\Phi(s)\hat w^{(t)}, \bar f^{(t+1)}(\cdot|s)-\bar f^{(t)}(\cdot|s)\rangle\notag\\
    &=\vartheta_{\rho}\sum_{s\in\S}d^{\bar f^{(k+1)}}(s)\langle\bar Q^{(t)}(s,\cdot), \bar f^{(t+1)}(\cdot|s)-\bar f^{(t)}(\cdot|s)\rangle+\vartheta_{\rho}\sum_{s\in\S}d^{\bar f^{(k+1)}}(s)\langle\bar \Phi(s)\hat w^{(t)}-\bar Q^{(t)}(s,\cdot), \bar f^{(t+1)}(\cdot|s)-\bar f^{(t)}(\cdot|s)\rangle\notag\\
    &=\vartheta_\rho(1-\gamma)\left(\bar V^{(t+1)}(\rho)-\bar V^{(t)}(\rho)\right)+\vartheta_{\rho}\sum_{s\in\S}d^{\bar f^{(k+1)}}(s)\langle\bar \Phi(s)\hat w^{(t)}-\bar Q^{(t)}(s,\cdot), \bar f^{(t+1)}(\cdot|s)-\bar f^{(t)}(\cdot|s)\rangle\,,\label{eq:ub_exp_1}
\end{align}
where the first inequality uses \eqref{eq:d_facts} and the definition of $\vartheta_\rho$~\eqref{eq:vartheta_rho} and the last line follows from \eqref{eq:V_diff_Q} in Lemma~\ref{lm:performance_difference}. We separate the second term of the last line into four terms as follows:
\begin{align}
    &\sum_{s\in\S}d^{\bar f^{(t+1)}}(s)\langle\bar \Phi(s)\hat w^{(t)}-\bar Q^{(t)}(s,\cdot), \bar f^{(t+1)}(\cdot|s)-\bar f^{(t)}(\cdot|s)\rangle\notag\\
    &=\underbrace{\sum_{s\in\S}\sum_{a\in\A}d^{\bar f^{(t+1)}}(s)\bar f^{(t+1)}(a|s)\phi^\top(s,a)(\hat w^{(t)}-\hat w_\star^{(t)})}_{(I)}+\underbrace{\sum_{s\in\S}\sum_{a\in\A}d^{\bar f^{(t+1)}}(s)\bar f^{(t+1)}(a|s)\left(\phi^\top(s,a)\hat w_\star^{(t)}-\bar Q^{(t)}(s,a)\right)}_{(II)}\notag\\
    &+\underbrace{\sum_{s\in\S}\sum_{a\in\A}d^{\bar f^{(t+1)}}(s)\bar f^{(t)}(a|s)\phi^\top(s,a)(\hat w_\star^{(t)}-\hat w^{(t)})}_{(III)}
    + \underbrace{\sum_{s\in\S}\sum_{a\in\A}d^{\bar f^{(t+1)}}(s)\bar f^{(t)}(a|s)\left(\bar Q^{(t)}(s,a)-\phi^\top(s,a)\hat w_\star^{(t)}\right)}_{(IV)}\,.\label{eq:I-IV}
\end{align}

Applying again Lemma~\ref{lm:performance_difference}, we deduce the equivalent form of the second expectation in~\eqref{eq:q=pi_star_exp} as follows:
\begin{align}
    &\E_{s\sim d^\star}\left[\langle\Phi(s)\hat w^{(t)},\bar f^{(t)}(\cdot|s)-\pi^\star(\cdot|s)\rangle\right]\notag\\
  &  =\E_{s\sim d^\star}\left[\langle\bar Q^{(t)}(s,\cdot),\bar f^{(t)}(\cdot|s)-\pi^\star(\cdot|s)\rangle\right]+\E_{s\sim d^\star}\left[\langle\Phi(s)\hat w^{(t)}-\bar Q^{(t)}(s,\cdot),\bar f^{(t)}(\cdot|s)-\pi^\star(\cdot|s)\rangle\right]\notag\\
&    =(1-\gamma)\left(\bar V^{(t)}(\rho)-V^{\pi^\star}(\rho)\right)+\E_{s\sim d^\star}\left[\langle\Phi(s)\hat w^{(t)}-\bar Q^{(t)}(s,\cdot),\bar f^{(t)}(\cdot|s)-\pi^\star(\cdot|s)\rangle\right]\,,\label{eq:ub_exp_2}
\end{align}
where the second term of the last line could be decomposed into the following terms:
\begin{align}
    &\E_{s\sim d^\star}\left[\langle\Phi(s)\hat w^{(t)}-\bar Q^{(t)}(s,\cdot),\bar f^{(t)}(\cdot|s)-\pi^\star(\cdot|s)\rangle\right]\notag\\
  &  =\underbrace{\sum_{s\in\S}\sum_{a\in\A}d^{\star}(s)\bar f^{(t)}(a|s)\phi^\top(s,a)(\hat w^{(t)}-\hat w^{(t)}_\star)}_{(A)} 
    + \underbrace{\sum_{s\in\S}\sum_{a\in\A}d^{\star}(s)\bar f^{(t)}(a|s)\left(\phi^\top(s,a)\hat w_\star^{(t)}-\bar Q^{(t)}(s,a)\right)}_{(B)}\notag\\
    &+ \underbrace{\sum_{s\in\S}\sum_{a\in\A}d^{\star}(s)\pi^\star(a|s)\phi^\top(s,a)(\hat w_\star^{(t)}-\hat w^{(t)})}_{(C)}
    + \underbrace{\sum_{s\in\S}\sum_{a\in\A}d^{\star}(s)\pi^\star(a|s)\left(\bar Q^{(t)}(s,a)-\phi^\top(s,a)\hat w_\star^{(t)}\right)}_{(D)}\,.\label{eq:A-D}
\end{align}

Plugging \eqref{eq:I-IV}, \eqref{eq:A-D} into \eqref{eq:ub_exp_1} and \eqref{eq:ub_exp_2}, resp., and making use of \eqref{eq:q=pi_star_exp}, we have
\begin{align}\label{eq:8_terms}
    &\vartheta_\rho (1-\gamma) \left(\bar V^{(t+1)}(\rho)-\bar V^{(t)}(\rho)\right)+(1-\gamma) \left(\bar V^{(t)}(\rho)-V^{\pi^\star}(\rho)\right) \nonumber \\
    &+ \vartheta_\rho \bigg(\underbrace{\sum_{s\in\S}\sum_{a\in\A}d^{\bar f^{(t+1)}}(s)\bar f^{(t+1)}(a|s)\phi^\top(s,a)(\hat w^{(t)}-\hat w_\star^{(t)})}_{(\mathrm{I})}
    +\underbrace{\sum_{s\in\S}\sum_{a\in\A}d^{\bar f^{(t+1)}}(s)\bar f^{(t+1)}(a|s)\left(\phi^\top(s,a)\hat w_\star^{(t)}-\bar Q^{(t)}(s,a)\right)}_{(\mathrm{II})} \nonumber\\
    &+ \underbrace{\sum_{s\in\S}\sum_{a\in\A}d^{\bar f^{(t+1)}}(s)\bar f^{(t)}(a|s)\phi^\top(s,a)(\hat w_\star^{(t)}-\hat w^{(t)})}_{(\mathrm{III})}
    + \underbrace{\sum_{s\in\S}\sum_{a\in\A}d^{\bar f^{(t+1)}}(s)\bar f^{(t)}(a|s)\left(\bar Q^{(t)}(s,a)-\phi^\top(s,a)\hat w_\star^{(t)}\right)}_{(\mathrm{IV})}\bigg) \nonumber\\
    &+
    \underbrace{\sum_{s\in\S}\sum_{a\in\A}d^{\star}(s)\bar f^{(t)}(a|s)\phi^\top(s,a)(\hat w^{(t)}-\hat w^{(t)}_\star)}_{(A)} 
    + \underbrace{\sum_{s\in\S}\sum_{a\in\A}d^{\star}(s)\bar f^{(t)}(a|s)\left(\phi^\top(s,a)\hat w_\star^{(t)}-\bar Q^{(t)}(s,a)\right)}_{(B)} \nonumber\\
    &+ \underbrace{\sum_{s\in\S}\sum_{a\in\A}d^{\star}(s)\pi^\star(a|s)\phi^\top(s,a)(\hat w_\star^{(t)}-\hat w^{(t)})}_{(C)}
    + \underbrace{\sum_{s\in\S}\sum_{a\in\A}d^{\star}(s)\pi^\star(a|s)\left(\bar Q^{(t)}(s,a)-\phi^\top(s,a)\hat w_\star^{(t)}\right)}_{(D)} \nonumber \\
    &\geq \frac{1}{\alpha}(D_\star^{(t+1)}-D_\star^{(t)}).
\end{align}

Below we upper bound the terms $|(\mathrm{I})|$-$|(\mathrm{IV})|$ and $|(A)|$-$|(D)|$. 

For any $t\in\NN$ and $n\in[N]$, we define matrix $\Sigma_{\tilde d_n^{(t)}}\in\R^{p\times p}$ as 
\begin{equation}\label{eq:Sigma_d}
    \Sigma_{\tilde d_n^{(t)}}\coloneqq\E_{(s,a)\sim \tilde d_n^{(t)}}\left[\phi(s,a)\phi^\top(s,a)\right]\,,
\end{equation}
and  
\begin{align}
    \varepsilon_{\text{stat},n}^{(t)}&\coloneqq \ell\left(w_n^{(t)},Q_n^{(t)},\tilde d_n^{(t)}\right)-\ell\left(w^{(t)}_{\star,n},Q_n^{(t)},\tilde d_n^{(t)}\right)\,,\label{eq:eps_stat_t_n}\\
    \varepsilon_{\text{approx},n}^{(t)}&\coloneqq\ell\left(w_{\star,n}^{(t)}, Q_n^{(t)},\tilde{d}_n^{(t)}\right)\,,\label{eq:eps_approx_t_n}
\end{align}
then for all $n\in[N]$, by Assumption~\ref{asmp:eps_stat} and Assumption~\ref{asmp:eps_approx} we have
\begin{equation}\label{eq:exp_eps}
    \E\left[\varepsilon_{\text{stat},n}^{(t)}\right]\leq \staterror^n\,,\quad \text{and} \quad \E\left[\varepsilon_{\text{approx},n}^{(t)}\right]\leq \approxerror^n\,.
\end{equation}
We let $\bar \varepsilon_{\text{stat}}^{(t)}\coloneqq\frac{1}{N}\sum_{n=1}^N \varepsilon_{\text{stat},n}^{(t)}$ and $\bar \varepsilon_{\text{approx}}^{(t)}\coloneqq\frac{1}{N}\sum_{n=1}^N \varepsilon_{\text{approx},n}^{(t)}$. By Cauchy-Schwartz's inequality we have
\begin{align}
    |(\mathrm{I})|&\leq\frac{1}{N}\sum_{n=1}^N\sum_{(s,a)\in\S\times\A}d^{\bar f^{(t+1)}}(s)\bar f^{(t+1)}(a|s)|\phi^\top(s,a)(w_n^{(t)}- w_{\star,n}^{(t)})|\notag\\
    &\leq\frac{1}{N}\sum_{n=1}^N\sqrt{\sum_{(s,a)\in\S\times\A}\frac{\left(d^{\bar f^{(t+1)}}(s)\right)^2\left(\bar f^{(t+1)}(a|s)\right)^2}{\tilde d^{(t)}_n(s,a)}\cdot\sum_{(s,a)\in\S\times\A}\tilde d^{(t)}_n(s,a)\left(\phi^\top(s,a)(w^{(t)}_n-w_{\star,n}^{(t)})\right)^2}\notag\\
    &=\frac{1}{N}\sum_{n=1}^N\sqrt{\E_{(s,a)\sim\tilde d_n^{(t)}}\left[\left(\frac{\left(d^{\bar f^{(t+1)}}(s)\right)\left(\bar f^{(t+1)}(a|s)\right)}{\tilde d_n^{(t)}(s,a)}\right)^2\right]\norm{w_n^{(t)}-w^{(t)}_{\star,n}}_{\Sigma_{\tilde d_n^{(t)}}}^2}\notag\\
    &\leq\frac{1}{N}\sum_{n=1}^N\sqrt{C_\nu\norm{w_n^{(t)}-w^{(t)}_{\star,n}}_{\Sigma_{\tilde d_n^{(t)}}}^2}\notag\\
    &\leq\frac{1}{N}\sum_{n=1}^N\sqrt{C_\nu\varepsilon_{\text{stat},n}^{(t)}}\leq \sqrt{C_\nu\meanstaterror^{(t)}}\,,\label{eq:ub_I}
\end{align}
where the third inequality follows from Assumption~\ref{asmp:bound_transfer}, the last inequality uses Jensen's inequality, and the penultimate inequality by Assumption~\ref{asmp:eps_stat} and by noticing that for all $w\in\R^p$, we have
\begin{align}
    &\ell\left(w,Q_n^{(t)},\tilde d_n^{(t)}\right)-\ell\left(w^{(t)}_{\star,n},Q_n^{(t)},\tilde d_n^{(t)}\right)\notag\\
    &=\E_{(s,a)\sim\tilde d_n^{(t)}}\left[\left(\phi^\top(s,a)w-\phi^\top(s,a)w^{(t)}_{\star,n}+\phi^\top(s,a)w^{(t)}_{\star,n}-Q^{(t)}_n(s,a)\right)^2\right]-\ell\left(w^{(t)}_{\star,n},Q_n^{(t)},\tilde d_n^{(t)}\right)\notag\\
    &=\E_{(s,a)\sim\tilde d_n^{(t)}}\left[\left(\phi^\top(s,a)w-\phi^\top(s,a)w^{(t)}_{\star,n}\right)^2\right]+2\left(w-w^{(t)}_{\star,n}\right)^\top\E_{(s,a)\sim\tilde d_n^{(t)}}\left[\left(\phi^\top(s,a)w^{(t)}_{\star,n}-Q^{(t)}_n(s,a)\right)\phi(s,a)\right]\notag\\
    &=\norm{w-w^{(t)}_{\star,n}}_{\Sigma_{\tilde d_n^{(t)}}}+\left(w-w^{(t)}_{\star,n}\right)^\top\nabla_w\ell\left(w^{(t)}_{\star,n},Q_n^{(t)},\tilde d_n^{(t)}\right)\notag\\
    &\geq\norm{w-w^{(t)}_{\star,n}}_{\Sigma_{\tilde d_n^{(t)}}}\,,
\end{align}
where the last line follows from the first-order optimality condition   $w^{(t)}_{\star,n}\in\arg\min_w \ell\left(w,Q_n^{(t)},\tilde d_n^{(t)}\right)$:
$$\forall w\in\R^p:\quad \left(w-w^{(t)}_{\star,n}\right)^\top\nabla_w\ell\left(w^{(t)}_{\star,n},Q_n^{(t)},\tilde d_n^{(t)}\right)\geq 0.$$

Analogous to bounding $|(\mathrm{I})|$, by simply substituting $\bar f^{(t+1)}$ with $\bar f^{(t)}$ or $\pi^\star$ or  substituting $d^{\bar f^{(t+1)}}$ into $d^\star$, we obtain the same upper bound for $|(\mathrm{III})|$, $|(A)|$ and $|(C)|$, i.e.,
\begin{equation}\label{eq:ub_III/A/C}
    |(\mathrm{III})|, |(A)|, |(C)|\leq \sqrt{C_\nu\meanstaterror^{(t)}}\,. 
\end{equation}

Now we upper bound $|(\mathrm{II})|$ as follows:
\begin{align}
    |(\mathrm{II})|&\leq\frac{1}{N}\sum_{n=1}^N\sum_{(s,a)\in\S\times\A}d^{\bar f^{(t+1)}}(s)\bar f^{(t+1)}(a|s)\left(|\phi^\top(s,a)w_{\star,n}^{(t)}-Q_n^{(t)}(s,a)|+|Q_n^{(t)}(s,a)-\bar Q^{(t)}(s,a)|\right)\notag\\
    &\leq\frac{1}{N}\sum_{n=1}^N\sqrt{\sum_{(s,a)\in\S\times\A}\frac{\left(d^{\bar f^{(t+1)}}(s)\right)^2\left(\bar f^{(t+1)}(a|s)\right)^2}{\tilde d^{(t)}_n(s,a)}}\cdot\notag\\
    &\qquad \cdot \sqrt{2\sum_{(s,a)\in\S\times\A}\tilde d^{(t)}_n(s,a)\left(\left(\phi^\top(s,a)w_{\star,n}^{(t)}-Q_n^{(t)}(s,a)\right)^2+\left(Q_n^{(t)}(s,a)-\bar Q^{(t)}(s,a)\right)^2\right)}\notag\\
    &=\frac{1}{N}\sum_{n=1}^N\sqrt{\E_{(s,a)\sim\tilde d_n^{(t)}}\left[\left(\frac{\left(d^{\bar f^{(t+1)}}(s)\right)\left(\bar f^{(t+1)}(a|s)\right)}{\tilde d_n^{(t)}(s,a)}\right)^2\right]\cdot 2\left(\varepsilon_{\text{approx},n}^{(t)}+L_Q^2\norm{\xi_n^{(t)}-\bar\xi^{(t)}}_2^2\right)}\notag\\
    &\leq\sqrt{2C_\nu\left(_\mathrm{F}\bar \varepsilon_{\text{approx}}^{(t)}+\frac{L_Q^2}{N}\norm{\vxi^{(t)}-\vone\bar\xi^{(t)}}_\mathrm{F}^2\right)}\,,\label{eq:ub_II}
\end{align}
where $L_Q$ is defined in Lemma~\ref{lm:L-Q}, the second line uses Cauchy-Schwartz's inequality and Young's inequality~\eqref{eq:young_mul} and the last inequality uses Assumption~\ref{asmp:bound_transfer} and Jensen's inequality.

Analogous to bounding $|(\mathrm{II})|$, by simply substituting $\bar f^{(t+1)}$ with $\bar f^{(t)}$ or $\pi^\star$ or  substituting $d^{\bar f^{(t+1)}}$ into $d^\star$, we obtain the same upper bound for $|(\mathrm{IV})|$, $|(B)|$ and $|(D)|$, i.e.,
\begin{equation}\label{eq:ub_IV/B/D}
    |(\mathrm{IV})|, |(B)|, |(D)|\leq \sqrt{2C_\nu\left(\bar \varepsilon_{\text{approx}}^{(t)}+\frac{L_Q^2}{N}\norm{\vxi^{(t)}-\vone\bar\xi^{(t)}}_\mathrm{F}^2\right)}\,. 
\end{equation}

Plugging \eqref{eq:ub_I},\eqref{eq:ub_III/A/C},\eqref{eq:ub_II},\eqref{eq:ub_IV/B/D} into \eqref{eq:8_terms} and dividing both sides by $(1-\gamma)$ yield
\begin{equation*}
    \vartheta_\rho\left(\delta^{(t+1)}-\delta^{(t)}\right)+\delta^{(t)}\leq \frac{D^{(t)}_\star}{(1-\gamma)\alpha}-\frac{D^{(t+1)}_\star}{(1-\gamma)\alpha}+\frac{2\sqrt{C_\nu}(\vartheta+1)}{1-\gamma}\left(\sqrt{\bar \varepsilon_{\text{stat}}^{(t)}}+\sqrt{2\left(\bar \varepsilon_{\text{approx}}^{(t)}+\frac{L_Q^2}{N}\norm{\vxi^{(t)}-\vone\bar\xi^{(t)}}_\mathrm{F}^2\right)}\right)\,.
\end{equation*}
Taking expectation on both sides of the above expression and making use of the simple fact that
$$\E\left[\sqrt{x}\right]\leq \sqrt{\E[x]}\,,$$
we reach the conclusion \eqref{eq:improve_fnac}.

\subsection{Proof of Lemma~\ref{lm:linear_sys_nac}}\label{sec_app:proof_lm_linear_sys}
 
For any $\zeta>0$, by the actor update rule~\eqref{eq:actor_update} and \eqref{eq:update_theta_mean} we have that
  \begin{align}
      \norm{\vxi^{(t+1)}-\vone_N\bar\xi^{(t+1)\top}}_\mathrm{F}^ 2&=\norm{\mW(\vxi^{(t)}+\alpha\vh^{(t)})-\vone_N(\bar\xi^{(t)}+\alpha\hat w^{(t)})^\top}_\mathrm{F}^2\notag\\
      &\leq(1+\zeta)\sigma^2\norm{\vxi^{(t)}-\vone_N\bar\xi^{(t)\top}}_\mathrm{F}^2+\alpha^2(1+1/\zeta)\sigma^2\norm{\vh^{(t)}-\vone_N\hat w^{(t)\top}}_\mathrm{F}^2,\label{eq:Omega_1_pre}
  \end{align}
  where the last line follows from Young's inequality \eqref{eq:young_2} and \eqref{eq:property_W}.
  By the gradient tracking step \eqref{eq:GT} , Young's inequality~\eqref{eq:young_2} and \eqref{eq:property_W}, we have
  \begin{align}
      \norm{\vh^{(t+1)}-\vone\hat w^{(t+1)\top}}_\mathrm{F}^2&=\norm{\mW(\vh^{(t)}+\vw^{(t+1)}-\vw^{(t)})-\vone\hat w^{(t)\top} +\vone (\hat w^{(t)\top}-\hat w^{(t+1)\top})}_\mathrm{F}^2\notag\\
      &=\norm{\mW\vh^{(t)}-\vone\hat w^{(t)\top}+\mW(\vw^{(t+1)}-\vw^{(t)})-\vone(\hat w^{(t+1)\top}-\hat w^{(t)\top})}_\mathrm{F}^2\notag\\
      &\leq(1+\zeta)\sigma^2\norm{\vh^{(t)}-\vone_N\hat w^{(t)\top}}+ (1+1/\zeta)\sigma^2\norm{\vw^{(t+1)}-\vw^{(t)}-\vone(\hat w^{(t+1)\top}-\hat w^{(t)\top})}_\mathrm{F}^2\notag\\
      &\leq (1+\zeta)\sigma^2\norm{\vh^{(t)}-\vone_N\hat w^{(t)\top}}+ (1+1/\zeta)\sigma^2\norm{\vw^{(t+1)}-\vw^{(t)}}_\mathrm{F}^2,\label{eq:Omega2_1}
  \end{align}
  where the last inequality follows from the fact
  \begin{align}
    &  \norm{\vw^{(t+1)}-\vw^{(t)}-\vone(\hat w^{(t+1)\top}-\hat w^{(t)\top})}_\mathrm{F}^2\nonumber \\
    &=\norm{\vw^{(t+1)}-\vw^{(t)}}_\mathrm{F}^2+N\norm{\hat w^{(t+1)}-\hat w^{(t)}}_2^2-2\sum_{n=1}^N\langle w^{(t+1)}_n-w^{(t)}_n, \hat w^{(t+1)}-\hat w^{(t)}\rangle\notag\\
      &=\norm{\vw^{(t+1)}-\vw^{(t)}}_\mathrm{F}^2-N\norm{\hat w^{(t+1)}-\hat w^{(t)}}_2^2\notag\\
      &\leq\norm{\vw^{(t+1)}-\vw^{(t)}}_\mathrm{F}^2.\label{eq:fact_w-w}
  \end{align}

  Then for any $n\in[N]$, $t\in\NN$ and $w\in\R^p$, we have
  \begin{align}
      &\ell(w,Q^{(t)}_n,\tilde d^{(t)}_n)-\ell(w^{(t)}_{\star,n},Q^{(t)}_n,\tilde d^{(t)}_n)\notag\\
      &=\E_{(s,a)\sim\tilde d^{(t)}_n}\left[\left(\phi^\top(s,a)w-\phi^\top(s,a)w^{(t)}_{\star,n}+\phi^\top(s,a)w^{(t)}_{\star,n}-Q^{(t)}_n(s,a)\right)^2\right]-\ell(w^{(t)}_{\star,n},Q_n^{(t)},\tilde d^{(t)}_n)\notag\\
      &=\E_{(s,a)\sim\tilde d^{(t)}_n}\left[\left(\phi^\top(s,a)w-\phi^\top(s,a)w^{(t)}_{\star,n}\right)^2\right]+2(w-w^{(t)}_{\star,n})^\top\E_{(s,a)\sim\tilde d^{(t)}_n}\left[\left(\phi^\top(s,a)w^{(t)}_{\star,n}-Q^{(t)}_n(s,a)\right)\phi(s,a)\right]\notag\\
      &=\norm{w-w^{(t)}_{\star,n}}_{\Sigma_{\tilde d^{(t)}_n}}^2+(w-w^{(t)}_{\star,n})^\top\nabla_w\ell(w^{(t)}_{\star,n},Q_n^{(t)},\tilde d^{(t)}_n)\notag\\
      &\geq \norm{w-w^{(t)}_{\star,n}}_{\Sigma_{\tilde d^{(t)}_n}}^2\notag\\
      &\geq (1-\gamma)\mu \norm{w-w^{(t)}_{\star,n}}_2^2,\label{eq:bound_w-w_star}
  \end{align}
  where the penultimate line follows from the first-order optimality conditions for the optima $w^{(t)}_{\star,n}$:
  \begin{equation}
      \forall w\in\R^p:\quad(w-w^{(t)}_{\star,n})^\top\nabla_w\ell(w^{(t)}_{\star,n},Q_n^{(t)},\tilde d^{(t)}_n)\geq 0
  \end{equation}
  and the last line is by Assumption~\ref{asmp:psd} and \eqref{eq:d_facts}.

  Note that
  \begin{align}
      &\ell(w^{(t)}_{\star,n},Q^{(t+1)}_n,\tilde d^{(t+1)}_n)\notag\\
      &=\E_{(s,a)\sim \tilde d^{(t+1)}_n}\left[(\phi^\top(s,a)w^{(t)}_{\star,n}-Q^{(t+1)}_n(s,a))^2\right]\notag\\
      &\leq 2\sum_{(s,a)\in\S\times\A} \tilde d^{(t)}_n(s,a)\frac{\tilde d^{(t+1)}_n(s,a)}{\tilde d^{(t)}_n(s,a)}(\phi^\top(s,a)w^{(t)}_{\star,n}-Q^{(t)}_n(s,a))^2+2\E_{(s,a)\sim\tilde d^{(t+1)}_n}(Q^{(t+1)}_n(s,a)-Q^{(t)}_n(s,a))^2\notag\\
      &\leq 2C_\nu \E_{(s,a)\sim\tilde d^{(t)}_n}(\phi^\top(s,a)w^{(t)}_{\star,n}-Q^{(t)}_n(s,a))^2
      +2L_Q\norm{\xi^{(t+1)}_n-\xi^{(t)}_n}_2^2\notag\\
      &\leq 2C_\nu \approxerror^n+2L_Q^2\norm{\xi^{(t+1)}_n-\xi^{(t)}_n}_2^2,\label{eq:l_t_n_star}
  \end{align}
  where the second inequality uses Assumption~\ref{asmp:bound_transfer} and Lemma~\ref{lm:L-Q}, and the last line uses Assumption~\ref{asmp:eps_approx}.

  The above equation~\eqref{eq:l_t_n_star} together with \eqref{eq:bound_w-w_star} gives
  \begin{align}
      \norm{\vw^{(t+1)}_\star-\vw^{(t)}_\star}_\mathrm{F}^2=\sum_{n=1}^N\norm{w^{(t+1)}_{\star,n}-w^{(t)}_{\star,n}}_2^2\leq &\frac{1}{(1-\gamma)\mu}\sum_{n=1}^N\left(\ell(w^{(t)}_{\star,n},Q^{(t+1)}_n,\tilde d^{(t+1)}_n)-\ell(w^{(t+1)}_{\star,n},Q^{(t+1)}_n,\tilde d^{(t+1)}_n)\right)\notag\\
      &\leq \frac{2}{(1-\gamma)\mu}\left(C_\nu \sum_{n=1}^N\approxerror^n+L_Q^2\norm{\vxi^{(t+1)}-\vxi^{(t)}}_\mathrm{F}^2\right).\label{eq:diff_w_star}
  \end{align}
  where $\vw^{(t)}_\star\coloneqq(w^{(t)}_1,\cdots,w^{(t)}_N)^\top$, $\forall t$.

Also note that by Assumption~\ref{asmp:eps_stat} and \eqref{eq:bound_w-w_star} we have
\begin{equation}\label{eq:w-w_star}
    \forall t\in\NN:\quad \norm{\vw^{(t)}-\vw^{(t)}_\star}_\mathrm{F}^2\leq\frac{\sum_{n=1}^N\staterror^n}{(1-\gamma)\mu}.
\end{equation}
  Therefore, by \eqref{eq:l_t_n_star} and \eqref{eq:w-w_star} we have
  \begin{align}
      \norm{\vw^{(t+1)}-\vw^{(t)}}_\mathrm{F}^2&\leq 3\left(\norm{\vw^{(t+1)}-\vw^{(t+1)}_\star}_\mathrm{F}^2+\norm{\vw^{(t+1)}_\star-\vw^{(t)}_\star}_\mathrm{F}^2+\norm{\vw^{(t)}-\vw^{(t)}_\star}_\mathrm{F}^2\right)\notag\\
      &\leq \frac{6}{(1-\gamma)\mu}\left(N(C_\nu\bar\varepsilon_{\text{approx}}+\bar\varepsilon_{\text{stat}})+L_Q^2\norm{\vxi^{(t+1)}-\vxi^{(t)}}_\mathrm{F}^2\right).\label{eq:diff_w}
  \end{align}
  where the first inequality uses Young's inequality~\eqref{eq:young_mul}.

  Note that by the update rule \eqref{eq:actor_update}, the double stochasticity of the mixing matrix $\mW$ and the consensus property \eqref{eq:property_W} we have
  \begin{align}
      \norm{\vxi^{(t+1)}-\vxi^{(t)}}_\mathrm{F}^2&=\norm{\mW(\vxi^{(t)}+\alpha\vh^{(t)})-\vxi^{(t)}}_\mathrm{F}^2\notag\\
      &=\norm{(\mW-\mI)(\vxi^{(t)}-\vone_N\bar\xi^{(t)\top})+\alpha(\mW\vh^{(t)}-\vone_N\hat w^{(t)\top})+\alpha\vone(\hat w^{(t)}-\hat w^{(t)}_\star)^\top+\vone(\hat w^{(t)}_\star)^\top}_\mathrm{F}^2\notag\\
      &\leq 16\norm{\vxi^{(t)}-\vone_N\bar\xi^{(t)\top}}_\mathrm{F}^2+4\alpha^2\sigma^2\norm{\vh^{(t)}-\vone_N\hat w^{(t)\top}}_\mathrm{F}^2+4\alpha^2N\norm{\hat w^{(t)}-\hat w^{(t)}_\star}_2^2+4\alpha^2 N\norm{\hat w^{(t)}_\star}_2^2\notag\\
      &\leq 16\norm{\vxi^{(t)}-\vone_N\bar\xi^{(t)\top}}_\mathrm{F}^2+4\alpha^2\sigma^2\norm{\vh^{(t)}-\vone_N\hat w^{(t)\top}}_\mathrm{F}^2+4\alpha^2\sum_{n=1}^N\norm{w^{(t)}_n- w^{(t)}_{\star,n}}_2^2+4\alpha^2 \sum_{n=1}^N\norm{ w^{(t)}_{\star,n}}_2^2\notag\\
      \leq & 16\norm{\vxi^{(t)}-\vone_N\bar\xi^{(t)\top}}_\mathrm{F}^2+4\alpha^2\sigma^2\norm{\vh^{(t)}-\vone_N\hat w^{(t)\top}}_\mathrm{F}^2 + \frac{4\alpha^2 N\bar\varepsilon_{\text{stat}}}{(1-\gamma)\mu}+\frac{4\alpha^2 N C_\phi^2}{\mu^2(1-\gamma)^4},\label{eq:bound_diff_theta}
  \end{align}
  where the penultimate line uses Jensen's inequality and the last line follows from \eqref{eq:bound_w-w_star}, Assumption~\ref{asmp:eps_stat} and \eqref{eq:bound_l_minimizer}. 

  Combining \eqref{eq:bound_diff_theta} and \eqref{eq:diff_w} with \eqref{eq:Omega2_1}, we deduce
  \begin{align}
      \norm{\vh^{(t+1)}-\vone\hat w^{(t+1)\top}}_\mathrm{F}^2&\leq (1+1/\zeta)\frac{96\sigma^2 L_Q^2}{(1-\gamma)\mu}\norm{\vxi^{(t)}-\vone\bar \xi^{(t)\top}}_\mathrm{F}^2+\sigma^2\left(1+\zeta+(1+1/\zeta)\frac{24L_Q^2\alpha^2}{(1-\gamma)\mu}\right)\norm{\vh^{(t)}-\vone\hat w^{(t)\top}}_\mathrm{F}^2\notag\\
      &+(1+1/\zeta)\frac{6\sigma^2}{(1-\gamma)\mu}\left(N(\bar\varepsilon_{\text{stat}}+C_\nu \bar\varepsilon_{\text{approx}})+4L_Q^2\left(\frac{\alpha^2N\bar\varepsilon_{\text{stat}}}{(1-\gamma)\mu}+\frac{\alpha^2NC_\phi^2}{\mu^2(1-\gamma)^2}\right)\right).\label{eq:Omega_2_pre}
  \end{align}

  Finally, \eqref{eq:matrix_fnac} follows from taking expectations on both sides of \eqref{eq:Omega_1_pre} and \eqref{eq:Omega_2_pre}.

\end{document}